\newtheorem{Theorem}{Theorem}
\newtheorem{Definition}{Definition}
\newtheorem{Example}{Example}
\newtheorem{Corollary}{Corollary}
\newtheorem{Remark}{Remark}
\newtheorem{Proposition}{Proposition}
\newtheorem{Lemma}{Lemma}
\renewenvironment{proof}{
  \textit{\underline{Proof}: }}{\hfill$\Box$\medskip}
\newcommand{\ojo}[1]{#1}
\newcommand{\hugo}[1]{\mbox{$#1$}\xspace}
\newcommand{\tritri}{\scriptsize\hugo{\!\!\triangle}}
\newcommand{\argExp}{\hugo{+^{{\tritri}}}}
\newcommand{\OPnwaccc}{\hugo{-^{\omega}}}
\newcommand{\Arg}{\hugo{\mathcal{A}}}
\newcommand{\Brg}{\hugo{\mathcal{B}}}
\newcommand{\Crg}{\hugo{\mathcal{C}}}
\newcommand{\Drg}{\hugo{\mathcal{D}}}
\newcommand{\ie}{\emph{i.e.,~}}
\newcommand{\eg}{\emph{e.g.,~}}
\newcommand{\ifff}{{\sf iff}\xspace}
\newcommand{\interf}{\lambda^{-}}
\newcommand{\support}{\lambda^{+}}
\newcommand{\Lines}[1]{\hugo{\mathfrak{Lines}_{\scriptsize #1}}}
\newcommand{\exLines}[1]{\hugo{\mathfrak{ALines}_{\scriptsize #1}}}
\newcommand{\allTrees}[1]{\hugo{\mathfrak{Trees}_{\scriptsize #1}}}
\newcommand{\accTrees}[1]{\hugo{\mathfrak{ATrees}_{\scriptsize #1}}}
\newcommand{\bundle}{\hugo{\mathcal{S}_{\scriptsize \PP}(\Arg)}}
\newcommand{\bundleSet}[2]{\hugo{\mathcal{S}_{\scriptsize #2}(#1)}}
\newcommand{\htree}{\hugo{\mathcal{H}_{\scriptsize \PP}}}
\newcommand{\upsegm}[2]{\hugo{#1^{\uparrow}(#2)}}
\newcommand{\upsegmeq}[2]{\lambda^{\uparrow}_{#1}[#2]}
\newcommand{\upsegmeqP}[2]{\lambda'^{\uparrow}_{#1}[#2]}
\newcommand{\upsegmeqPP}[2]{#1^{\uparrow}[#2]}
\newcommand{\attFunct}{\hugo{\ell_{\scriptsize\PP}(\Arg)}}
\newcommand{\dtree}[2]{\hugo{\mathcal{T}_{\scriptsize #2}(#1)}}
\newcommand{\deactsel}{\hugo{\gamma}}
\newcommand{\incise}{\hugo{\sigma}}
\newcommand{\colinc}[2]{#1^{\scriptsize(#2)}}
\newcommand{\powerSet}[1]{\hugo{2^{\scriptsize#1}}}
\newcommand{\selcrit}{\mbox{$\prec_{\tiny\lambda}$}\xspace}
\newcommand{\selcritP}[1]{\mbox{$\prec_{\tiny#1}$}\xspace}
\newcommand{\selcritsub}[1]{\mbox{$\prec_{\tiny\lambda_{#1}}$}\xspace}
\newcommand{\lessAlter}{\hugo{\prec_{\scriptsize[\dtree{\Arg}{\tiny\PP}]}}}
\newcommand{\lesschg}{\hugo{\prec}}
\newcommand{\ALINES}[1]{\hugo{Att(#1)}}
\newcommand{\ALINESPP}{\hugo{Att(\dtree{\Arg}{\PP})}}
\newcommand{\ALINESPPprime}{\hugo{Att(\dtree{\Arg}{\PP'})}}
\newcommand{\incision}{\mbox{$\sigma$}}
\newcommand{\prolog}{{\sc prolog}}
\newcommand{\delp}{de.l.p.}
\newcommand{\DLP}{{\sc DeLP}}
\newcommand{\no}{\mbox{$\sim$}}
\newcommand{\PP}{\hugo{\mathcal{P}}}
\newcommand{\DD}{\mbox{$\Delta$}}
\newcommand{\SSet}{\mbox{$\Pi$}}
\newcommand{\SD}{\mbox{$(\SSet,\DD)$}}
\newcommand{\pair}[2]{\mbox{$(#1,#2)$}}
\newcommand{\SyA}{\mbox{\SSet\ $\cup$ \Arg}}
\newcommand{\Aalpha}{\mbox{\ensuremath{\langle \Arg,\alpha \rangle}}}
\newcommand{\Bbeta}{\ensuremath{\langle \Brg,\beta \rangle}}
\newcommand{\srule}[2]{\mbox{$#1\!\leftarrow#2$}}
\newcommand{\drule}[2]{\mbox{$#1\,\defleftarrow\!#2$}}
\newcommand{\defleftarrow}{{\raise.5pt\hbox{\scriptsize\defleft}}}
\newcommand{\defleft}{\mbox{\bf--\hspace{-1.1pt}\raise-.3pt\hbox{$\prec$} }}
\newcommand{\MT}[2]{\mbox{$\mathcal T_{\scriptsize #1}(#2)$}}
\newcommand{\Ar}[2]{\mbox{$\langle #1,#2 \rangle $}}
\newcommand{\Barg}{\mbox{${\mathcal B}$}}
\newcommand{\Carg}{\mbox{${\mathcal C}$}}
\newcommand{\Darg}{\mbox{${\mathcal D}$}}
\newcommand{\Bq}{\mbox{$\langle \Barg,\beta \rangle $}}
\newcommand{\Ah}{\mbox{$\langle \Arg,\alpha \rangle $}}
\newcommand{\Ap}{\mbox{$\langle \Arg',\alpha' \rangle $}}
\newcommand{\Dnode}{\mbox{$D$}}
\newcommand{\Unode}{\mbox{$U$}}
\newcommand{\OPwparp}{\hugo{*^{\omega}}}
\newcommand{\markline}{\mbox{${\sf m}$}}
\newcommand{\EXT}{\hugo{\mathfrak{Xargs_{\scriptsize\PP}}}}
\newcommand{\ARGS}{\hugo{\mathfrak{Args_{\scriptsize\PP}}}}
\newcommand{\ARGSP}[1]{\hugo{\mathfrak{Args_{\scriptsize #1}}}}
\newcommand{\Ld}{\hugo{\mathcal{L}^{\scriptsize d}}}
\newcommand{\Ls}{\hugo{\mathcal{L}^{\scriptsize s}}}
\newcommand{\alteration}[2]{\hugo{\Lambda_{\scriptsize#2}(#1)}}
\newcommand{\aware}[2]{\hugo{\Theta_{\scriptsize#2}(#1)}}
\newcommand{\open}{\hugo{\circledcirc_{\scriptsize[\PP,\Arg]}}}
\newcommand{\closed}{\hugo{\odot_{\scriptsize[\PP,\Arg]}}}
\newcommand{\closedContext}{\hugo{\widehat{\closed}}}
\newcommand{\openContext}{\hugo{\widehat{\open}}}
\newcommand{\Incisions}{\hugo{\Sigma_{\incise}}}
\begin{document}

\title{Dynamics of Knowledge in \DLP\ through Argument Theory Change}

\author[Moguillansky, Rotstein, Falappa, Garc\'ia and Simari]
	 {Mart\'in O. Moguillansky, Nicol\'as D. Rotstein, \\
	  {\normalsize\normalfont Marcelo A. Falappa, Alejandro J. Garc\'ia and Guillermo R. Simari}\\ \\
          National Research Council (CONICET), AI R\&D Lab (LIDIA) \\
	  Department of Computer Science and Engineering (DCIC) \\
	  Universidad Nacional del Sur (UNS), Argentina, \\
	  \email{\{mom, maf, ajg, grs\}@cs.uns.edu.ar, nicorotstein@gmail.com}}

\submitted{24 June 2010}
\revised{15 August 2011}
\accepted{14 November 2011}

\maketitle

\begin{abstract}
\footnote{To appear in Theory and Practice of Logic Programming (TPLP).}
This article is devoted to the study of methods to change defeasible logic programs (\delp s) which are the knowledge bases used by the Defeasible Logic Programming (\DLP) interpreter. \DLP\ is an argumentation formalism that allows to reason over potentially inconsistent \delp s. 
Argument Theory Change (ATC) studies certain aspects of belief revision in order to make them suitable for abstract argumentation systems. In this article, abstract arguments are rendered concrete by using the particular rule-based defeasible logic adopted by \DLP. 
The objective of our proposal is to define prioritized argument revision operators \textit{\`a la} ATC for \delp s, in such a way that the newly inserted argument ends up undefeated after the revision, thus warranting its conclusion. In order to ensure this warrant, the \delp\ has to be changed in concordance with a minimal change principle. To this end, we discuss different minimal change criteria that could be adopted. Finally, an algorithm is presented, implementing the argument revision operations.
\end{abstract}

\begin{keywords}
Knowledge Representation and Reasoning, Logic Programming, Belief Revision, Argumentation, Non-monotonic Reasoning.
\end{keywords}

\section{Introduction and Background}

The integration of Logic Programming and Defeasible Reasoning has
produced several Knowledge Representation and Reasoning
tools~\cite{GDS2009}.
Among them, Defeasible Logic Programming (\DLP)~\cite{GS04delp} has
been extensively studied and developed in several
applications like~\cite{BDIaaai,explanations07,2007:bookchap:argum,Thimm:2008a,BlackHunter09,Gomez2010},
among others. 
\DLP\ combines an extended Logic Programming representation language
with a dialectical procedure applied to the relevant arguments to
obtain the supported conclusions (see Section~\ref{DeLP}).

By choosing \DLP\ as our formalism, we follow a defeasible form of
reasoning over potentially inconsistent knowledge bases (KBs).
Here, the notion of warrant in argumentation plays the role of the consequence relation: the warranting process evaluates conflicting pieces of knowledge deciding which ones prevail despite the existence of beliefs in opposition. 
A warrant is also identified as the argument's acceptance criterion corresponding to the adopted
argumentative semantics. Among the most influential works on argumentation semantics, we may refer to those over graphs of arguments~\cite{Dung95}, and more recently \cite{baroni.semantics}.
However, the argumentation semantics we adopt follows the idea of dialectical argumentation~\cite{PraVree,Carlos}: arguments trees, namely \textit{dialectical trees}, are built from the argumentation framework with nodes as arguments and edges as attacks, which stand for
sources of inconsistency obtained from the KB. The use of dialectical trees allows to concentrate only on a specific query to build ``on demand'' only those arguments that are somehow related to the query. This kind of semantics allows to construct practical approaches, avoiding the analysis of the complete graph of arguments.


Belief revision~\cite{AGM85,Han99} studies the dynamics of knowledge, coping with the problem of how to change the information standing for the conceptualization of a modeled world, to reflect its evolution. \textit{Revisions}, as the most important change operations, concentrate on the incorporation of new beliefs and their interaction with older ones. A basic set of postulates is usually specified to characterize a rational behavior of the proposed change operations. Among them, minimal change and success have concentrated much research. \textit{Success} specifies the main objective of the change operation (in the case of revisions, the inference of the new belief to incorporate). On the other hand, \textit{minimal change} ensures that the least possible amount of information is modified in order to achieve success.

Argument Theory Change (ATC)~\cite{comma08} applies belief revision concepts to the field of abstract argumentation~\cite{Dung95}. (In abstract argumentation, the logic for arguments and their inner structure is abstracted away.) The main contribution provided by ATC is a revision operator at argument level that revises a theory by an argument seeking for its warrant. To such end, the theory --and thus the set of arguments obtained from it-- is modified in order to guarantee success: the new argument should be accepted by the argumentation semantics. Consequently, different criteria of minimal change should be considered to guarantee a rational behavior of the operator proposed. Among the most relevant uses of ATC, we may refer to hypothetical reasoning, dynamics in negotiation, persuasion, dialogues, strategies, planning, judicial contexts in law, and more.

In this article we apply ATC to handle KBs' dynamics. More specifically, we rely on ATC to make evolve potentially inconsistent KBs without consistency restoration. To such end, we reify the abstract theory of ATC into a particular sort of KBs: defeasible logic programs (\delp s), which are managed by the Defeasible Logic Programming (\DLP) formalism\footnote{The \DLP\ interpreter is available online at {\tt http://lidia.cs.uns.edu.ar/delp\_client}}\cite{GS04delp}. \DLP\ is a rule-based argumentation formalism in which arguments are built from a subset of rules to infer claims. A preliminary approach on the matter of revising \delp s through ATC was introduced in \cite{aaai08}.

The definition of a revision operator over a KB within the classic theory of belief revision usually involves the removal of beliefs from the KB in such a way that a new belief could be consistently incorporated afterwards. Consequently, the new belief ends up inferred by the resulting KB in a consistent manner. In some cases it is necessary to reverse the order in which the revision operator is defined~\cite{HAN05}. That is, the new belief is incorporated possibly introducing an inconsistent intermediate state to the KB, restoring the consistency afterwards by a series of belief removals. Such reversion is used to define the ATC's argument revision. This is necessary, given that in order to pursue warrant of the new argument, its interaction with other arguments from the worked argumentation system needs to be analyzed. Thus, we first incorporate the new argument into the worked set of arguments. However, we are not concerned about the appearance of an intermediate inconsistent state --as is the case in classic belief revision-- but on an intermediate state in which the acceptance of the new argument is not provided. In such a case, ATC provides the necessary elements to change the theory to finally accept the new argument.

Analogously to the usual definition of a revision operator, in which a belief is added to the KB, in the argumentative model of change that we propose, a belief is added to a \delp\ along with the argument that supports it. (An argument is said to support a belief, namely the claim, by considering a minimal set of beliefs inferring it.) In order to accept the new belief by the argumentation semantics, we pursue warrant of the argument supporting such belief. In consequence, the revision theory proposes additional modifications to the program (if necessary) altering the set of arguments that originally interfered with the warrant of the new argument.

The theory of change here proposed is inspired from the AGM model \cite{AGM85}. Due to the usage of argumentation as the base formalism, ATC has to deal with additional, inherent complications arising from the interaction of arguments throughout the warrant process. Chained removal of arguments and undesirable side effects bring about even more difficulty. Moreover, considering warrant as a notion of consequence, ATC also has to take into account non-monotonicity, which is not present when revision is performed over classical logic. Throughout the paper it will be clear that this fact implies a greater amount of theoretical elements, hence the conceptual and notational difficulty is consequently increased.

This article provides a practical approach towards implementing ATC through \DLP. To this end, a \prolog-like algorithm is given, which manipulates rules from a \delp\ following ATC definitions. 
The reader should be aware that this article does not pursue a full formalization according to the classical theory of belief revision. Thus, no representation theorems are to be defined here. Instead, we look at the process of change from the argumentation standpoint taking into account the usual principles of minimal change and success. The full axiomatic characterization of ATC exceeds the scope of this article, however, the interested reader could refer to \cite{jigpal} for ATC's full characterization applied for handling dynamics of knowledge/arguments in propositional bases/argumentation.

This article is organized as follows: \ojo{Section~\ref{sec.motivation} discusses some real examples in which ATC may bring an novel alternative to handle dynamics of knowledge in inconsistent bases.} Section~\ref{overviews} introduces the necessary background upon which our theory relies. Section~\ref{BR}, gives a very brief intuition about belief revision, and Section~\ref{DeLP} describes in a slightly deeper way the \DLP\ formalism. Section~\ref{sec.marking} provides a detailed study on the adopted argumentation semantics, \ie dialectical trees, and their characteristics regarding non-acceptance (rejection) of arguments. Section~\ref{sec.atc} details the complete ATC theory concentrating on its basic elements (Sect.~\ref{sec.atc.basic.elements}), identifying the portions of dialectical trees to be changed and putting particular attention on the minimal change principle (Sect.~\ref{sec.alteration} and Sect.~\ref{sec.atc.set.interrelation}), and providing the argument change operators (Sect.~\ref{sec.operators}). 
In Section~\ref{sec.principles}, some algorithms are given towards a full implementation of this theory upon \DLP. Related work is discussed in Section~\ref{sec:related.work}, and finally, Section~\ref{sec.conclusion} points out some final remarks.

\section{\ojo{Motivation}}\label{sec.motivation}

Dealing with inconsistencies is of utmost importance in areas like medicine and law. For instance, in law trials, two parties to a dispute present contradictory information in a tribunal, standing in favor or against the dispute (in criminal trials this is normally the presumption of innocence). The tribunal decision resolves afterwards the dispute upon presented evidence. This shows the need to consider some kind of paraconsistent semantics in order to appropriately reason over KBs containing contradictory information. 

For some settings, it will be also necessary to provide services for handling dynamics of knowledge with capabilities to tolerate inconsistencies from the KB under consideration. An interesting one arises in promulgation of laws. This usually involves a long process in which articles and principles from previous laws, and even evidence taken from the current state of affairs, may enter in conflict with articles composing the new law. Imagine a base containing knowledge about the complete legal system of a nation, including the National Constitution, the international law, and other political fundamental principles --such as the civil and penal codes, and other minor local laws. Such KB is required to evolve in a way that it incorporates the information conforming the new law, ensuring it to be constitutional. To this end, it is necessary to identify a set of articles and/or principles to be derogated, or amended, as part of the process of promulgation.

As an example, we will refer in a very brief manner to the Argentinean broadcasting media law reformed during 2009. The previous media law, promulgated by the latter \textit{de facto} regime, empowered the government to regulate the different media allowing total control of news. When democracy was restituted, the regulation of media was extended to private investment groups. As years went by, these groups took over majorities of types of media, conforming monopolies in some cases. This brought excessive power to groups with partial interests, allowing them to manipulate the social opinion about the actual government, and even to condition politicians, thus striking to national sovereignty. Article 161 of the new media law became one of the most controversial points, since it forces monopolistic enterprises to get rid of part of their assets in a maximum period of one year. Some enterprises warned that they would be forced to sell off their assets at very low prices. This violates article 17 of the National Constitution which speaks about private property rights. Moreover, some members of the Supreme Court think that article 161 recalls the control over the media exercised by totalitarian regimes, which would violate article 1 of the National Constitution. In fact, such situation could evolve to a distrust state on the principle of legal security. These are just some of the controversial points for which the new media law keeps being studied by the Argentinean parliament at the time of this submission.

Belief revision studies the dynamics of knowledge relying upon consistency restoration. That is, change is applied through change operations like \textit{revisions}, by ensuring a consistent resulting base. 
Observe however, that for the aforementioned case of promulgation of laws, it is mandatory to keep most inconsistencies from the original KB to make it evolve appropriately. This led us to investigate new approaches of belief revision which operate over paraconsistent semantics in order to avoid consistency restoration. Argument Theory Change (ATC) arises with such objective, applying belief revision to argumentation systems.

Among the most relevant uses of ATC, we may refer to hypothetical reasoning, dynamics in negotiation, persuasion, dialogues, strategies, planning, and more. For instance, in scheduling, consider the development of a company's task scheduler. Assume employee assignments are managed by an agent interpreting a KB. The central authority incorporates new tasks to the KB. An agent uses this information to decide to which employees should the new requirements be assigned. Argumentation could deal with such a problem since it would be necessary to reason over inconsistency: conflicts would appear between the relevance of tasks and employees availability. 
A new task with a high level of relevance could be sent to a specific employee for a matter of trust, provoking the reallocation of his previous tasks to other employees. ATC can be useful to implement the re-scheduling process by recognizing how new assignments are in conflict with preexisting ones.

It is important to mention that, unlike typical KB revision models in which a base is revised by a sentence, here we are concerned with the operation of revising a \delp\ by a given argument. This could be a good alternative to revise a KB by a piece of information of higher conceptual complexity. For instance, considering the example given before on promulgation of laws, an ATC model could revise the legal system by including an argument standing for the new law to be promulgated. In this case, arguments to be removed from the original argumentation system would contain different articles from preexisting laws. Thus, the new law is ensured to be constitutional by proposing to derogate other laws or amend them by removing specific articles which are part of arguments to be removed. Naturally, removals from the base are expected to be of less importance than the one for the new law. That is, laws to be amended should never correspond to the National Constitution unless the promulgation of the new law is expected to reform it. In case no minor laws arise to be amended, it is clear that the new one cannot be included as it is and thus, it necessarily needs to be modified to be accepted by the legal system.

\section{Overviews}\label{overviews}

In this section we give an overview of the necessary theoretical background for the ATC machinery. Firstly some elements of the classic belief revision theory are introduced, and afterwards we present the main definitions of \DLP.

\subsection{Belief Revision Overview}\label{BR}

\textit{Belief revision} studies the process of changing beliefs from an epistemic state to accept new information. An \textit{epistemic state} --to which change operations are applicable-- accounts for knowledge in the form of either a belief base or a belief set. A \textit{belief base} (\textit{knowledge base}) is an epistemic state represented by a set of sentences not necessarily closed under logical consequence. On the other hand, a \textit{belief set} is a set of sentences closed under logical consequence. In general, a belief set is infinite, being this the main reason of the impossibility to deal with this kind of sets in a computer. Instead, it is possible to characterize the properties that each change operation should satisfy on any finite representation of an epistemic state.

The classic change operations as seen in the AGM model of theory change \cite{AGM85} --named after their proponents Alchourr\'on, G\"ardenfors, and Makinson-- are known as expansions, contractions, and revisions. An \textit{expansion} incorporates a new belief without guaranteeing a consistent resulting epistemic state. A \textit{contraction} eliminates a belief $\alpha$, and other beliefs making possible its inference, from the epistemic state. Finally, a \textit{revision} adds a new belief $\alpha$ to the epistemic state guaranteeing a consistent outcome always that $\alpha$ is consistent as well. This means that the revision includes a new belief and possibly eliminates others in order to avoid inconsistencies. Revisions are usually defined by means of contractions and expansions: assuming a revision operator ``$*$'', a new belief $\alpha$, and an epistemic state $K$; the resulting epistemic state $K*\alpha$ is ensured to be consistent (unless $\alpha$ is inconsistent) through a contraction ``$-$'' by the complement of the new belief (\ie $\no\alpha$) and an expansion ``$+$'' by $\alpha$. If the epistemic state does not imply $\no\alpha$, then $\alpha$ can be incorporated without loss of consistency. This composition of sub-operations defines a revision operator through the \textit{Levi identity:} $K*\alpha = (K-\no\alpha)+\alpha$ \cite{Lev77,Gar81}. In~\cite{HAN05} the reverse of this identity was studied. This will be helpful for the definition of the model of change proposed here.

The changes applied to the epistemic state might conform to the \emph{minimal change principle}. This notion is one of the \textit{rationality principles of change}. Furthermore, these principles are formalized through a set of \textit{rationality postulates} which are used to guarantee that a change operation behaves in a rational manner. In this paper we concentrate on the specification of the model of change, and only two principles will be taken into account: minimal change and success --a successful revision operation refers to the primacy of new information. 

The AGM model of change specifies an array of theoretical tools to perform revision over belief sets. Nonetheless, since in this work we aim at performing change over defeasible logic programs --which can be seen as a kind of belief bases-- we preferred to rely on Hansson's kernel sets~\cite{HAN94} which were proposed to deal with practical approaches. A \textit{kernel set} is a minimal set of beliefs inferring $\alpha$ (namely, an \textit{$\alpha$-kernel}) from the epistemic state. The \textit{kernel contraction}~\cite{HAN94} --applicable both to belief bases and belief sets-- specifies an operator capable of selecting and eliminating beliefs from each $\alpha$-kernel in $K$, in order to avoid inferring $\alpha$. The relation between the notions of kernel sets and arguments will be clear in the following section.

\subsection{Defeasible Logic Programming Overview}\label{DeLP}

\textit{Defeasible Logic Programming} (\DLP) combines results of \textit{logic programming} and \textit{defeasible argumentation}. A brief description of \DLP\ is included below --for a detailed presentation see~\cite{GS04delp}.
\ojo{A defeasible logic program (or \delp\ for short) is a finite set of facts, strict rules and defeasible rules.} \textit{Facts} are ground literals representing atomic information or the negation of atomic information using the strong negation ``\no''. \textit{Strict rules} represent non-defeasible information noted as $\srule{\beta_0}{\beta_1, \ldots, \beta_n}$, where $\beta_{i}$ for $i \geq 0$ is a ground literal. \textit{Defeasible rules} represent tentative information noted as $\drule{\beta_0}{\beta_1, \ldots, \beta_n}$, where $\beta_{i}$ for $i\geq 0$ is a ground literal. The particular case of a defeasible rule with empty body is called a \emph{presumption}. 
A query is a ground literal that can be posed to a \delp\ to find out whether it is warranted.
The domain of all defeasible rules is denoted as \Ld, and that containing all the strict rules and facts, as \Ls.

When required, a \delp\ \PP\ is denoted \SD, distinguishing the subset \SSet\ of facts and strict
rules, and the subset \DD\ of defeasible rules (see Example~\ref{ex:delp}), \ojo{however, such a distinction is not mandatory for representing \delp s, being also correct the use of a common set of facts, strict rules, and defeasible rules.}

\begin{Example}\label{ex:delp}
Consider the \delp\ $\PP_{\ref{ex:delp}} = \pair{\Pi_{\ref{ex:delp}}}{\Delta_{\ref{ex:delp}}}$:
\begin{center}
$\Pi_{\ref{ex:delp}}$=$\left\{\begin{array}{c} t,z,\\(\srule{p}{t})
\end{array}\right\} \Delta_{\ref{ex:delp}}$=$\left\{\begin{array}{c}
(\drule{\no a}{y}), (\drule{y}{x}), (\drule{x}{z}), \\
(\drule{y}{p}), (\drule{a}{w}), (\drule{w}{y}), \\
(\drule{\no w}{t}), (\drule{\no x}{t}), (\drule{x}{p})
\end{array}\right\}$
\end{center}
\end{Example}

In \DLP, literals can be derived from rules and facts, being a \emph{defeasible derivation} one that uses, at least, a single defeasible rule; and a \emph{strict derivation} one that only uses strict rules or facts.
{\it Strong negation} is allowed in the head of program rules, and hence may be used to represent contradictory
knowledge. We say that two literals are contradictory if they are complementary with respect to strong negation (\eg $a$ and $\no a$). Hence, a set of rules and facts is contradictory if two contradictory literals can be derived from it.
It is important to note that the set \SSet\ (which is used to represent non-defeasible information) must possess certain internal coherence, and therefore, \SSet\ must be a non-contradictory set. 
However, from a program \SD, contradictory literals could be derived by using both kinds of rules, \eg from
\pair{\SSet_{\ref{ex:delp}}}{\DD_{\ref{ex:delp}}} of Example~\ref{ex:delp} it is possible to defeasibly derive $a$ and $\no a$. 

\DLP\ incorporates an argumentation formalism for the treatment of contradictory knowledge that can be defeasibly derived. This formalism allows the identification of contradictory pieces of knowledge, where a \emph{dialectical process} decides which information prevails as warranted. This dialectical process involves the construction and evaluation of \textit{arguments} that either support or interfere with the query under analysis. From a \delp\ $\PP=\SD$, an \emph{argument} \Ah\ is conformed by a minimal set $\Arg\subseteq\Delta$ of defeasible rules that, along with the set $\Pi$ of strict rules and facts from \PP, is not contradictory and derives a certain conclusion $\alpha$.

\begin{Definition}[Argument Structure]\label{def.argument}
Let $\PP = \SD$ be a \delp, \Ah\ is an \textbf{argument structure} for a ground literal $\alpha$ from \PP,
if \Arg\ is a minimal set of defeasible rules ($\Arg \subseteq \DD$) such that:

1. there exists a derivation for $\alpha$ from \SyA, and\label{def.argument.item.claim}

2. the set \SyA\ is non-contradictory.\label{def.argument.item.consistency}

The domain of all argument structures from \PP\ is denoted \ARGS.
If \Ah\ is an argument structure we will also say that \Arg\ is an argument, and that \Arg\ supports $\alpha$.

\end{Definition}


\begin{Example}\label{ex:arg}
From $\PP_{\ref{ex:delp}}$ in Example~\ref{ex:delp}, we can build the following arguments:\\
$\Ar{\Barg_1}{\no a}=\Ar{$\{\drule{\no a}{y},\drule{y}{x},\drule{x}{z}\}$}{\no a}$\\
$\Ar{\Barg_2}{\no a}=\Ar{$\{\drule{\no a}{y},\drule{y}{p}\}$}{\no a}$\\
$\Ar{\Barg_3}{a}=\Ar{$\{\drule{a}{w},\drule{w}{y},\drule{y}{p}\}$}{a}$\\
$\Ar{\Barg_4}{\no w}=\Ar{$\{\drule{\no w}{t}\}$}{\no w}$\\
$\Ar{\Barg_5}{\no x}=\Ar{$\{\drule{\no x}{t}\}$}{\no x}$ \\
$\Ar{\Barg_6}{x}=\Ar{$\{\drule{x}{p}\}$}{x}$
\end{Example}

A \DLP-query $\alpha$  succeeds, \ie it is warranted from a program \PP, if it is possible
to build an argument \Arg\ that supports $\alpha$ and \Arg\ is found to be \textit{undefeated} by the warrant procedure.
This process implements an exhaustive dialectical analysis that involves the construction and evaluation of arguments
that either support or interfere with the query under analysis.
That is, given an argument \Arg\ that supports $\alpha$,  the warrant procedure will evaluate if there
are other arguments that counter-argue or attack  \Arg\ or a sub-argument of \Arg\
(\Carg\ is a \emph{sub-argument} of \Arg\ if $\Carg \subseteq\Arg$).
\ojo{An argument \Bq\ is a \emph{defeater} (or \emph{counter-argument}) for \Ah\ at literal $\beta$ if $\Arg\cup\Brg\cup\Pi$ is contradictory; that is, 
if there exists a sub-argument \Ap\ of \Ah\ such that $\alpha'$ and $\beta$ disagree.}
Two literals disagree when there exist two contradictory literals that have a strict derivation from $\SSet \cup \{ \alpha', \beta\}$.
The literal $\alpha'$ is referred to as the counter-argument point and \Ap\ as the disagreement sub-argument.

\begin{Proposition}\label{prop.ext.empty}
For any \delp\ \PP and any argument $\Aalpha\in\ARGS$, if $\Arg=\emptyset$ then $\Aalpha$ has no defeaters from \PP.
\end{Proposition}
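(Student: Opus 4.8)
The plan is to show that an empty argument $\langle\emptyset,\alpha\rangle$ cannot be attacked, by unwinding the definition of defeater and deriving a contradiction with the argument structure conditions. First I would recall that, by Definition~\ref{def.argument}, since $\langle\emptyset,\alpha\rangle\in\ARGS$ is an argument structure, the set $\SSet\cup\emptyset = \SSet$ is non-contradictory, and $\alpha$ has a derivation from $\SSet$ alone — so $\alpha$ is a \emph{strict} consequence of $\SSet$.

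Next I would suppose, for contradiction, that there exists a defeater $\langle\Brg,\beta\rangle$ for $\langle\emptyset,\alpha\rangle$ at some literal. By the definition of defeater, this requires a disagreement sub-argument $\langle\Arg',\alpha'\rangle$ of $\langle\emptyset,\alpha\rangle$; but the only sub-argument of $\langle\emptyset,\alpha\rangle$ is $\langle\emptyset,\alpha\rangle$ itself (since $\Arg'\subseteq\emptyset$ forces $\Arg'=\emptyset$, and then $\alpha'=\alpha$ because the support of a sub-argument determines — via minimality and the derivation — the claim; here I would appeal to the fact that an argument with empty support has a strict derivation of its claim, so the claim is pinned down). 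Hence the counter-argument point must be $\alpha$ itself, and $\alpha$ and $\beta$ must disagree: there are two contradictory literals with a strict derivation from $\SSet\cup\{\alpha,\beta\}$. Since $\alpha$ itself already has a strict derivation from $\SSet$, this yields two contradictory literals with a strict derivation from $\SSet\cup\{\beta\}$, i.e.\ $\SSet\cup\{\beta\}$ is contradictory. On the other hand, the existence of the defeater $\langle\Brg,\beta\rangle$ as a bona fide argument structure requires $\SSet\cup\Brg$ to be non-contradictory and to derive $\beta$; combining, $\SSet\cup\Brg$ derives both $\beta$ and the contradictory pair, making $\SSet\cup\Brg$ contradictory — contradicting that $\langle\Brg,\beta\rangle$ is an argument structure.

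Alternatively, and perhaps more cleanly, I would phrase it directly through the ``$\Arg\cup\Brg\cup\SSet$ is contradictory'' formulation of defeater: with $\Arg=\emptyset$ this says $\Brg\cup\SSet$ is contradictory, which directly contradicts condition~2 of Definition~\ref{def.argument} for $\langle\Brg,\beta\rangle$. Either route gives the result; I would present the short version and mention the disagreement-based reasoning only to the extent needed to justify that the counter-argument point can only be $\alpha$.

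The main obstacle I anticipate is the small bookkeeping point that a sub-argument of $\langle\emptyset,\alpha\rangle$ must be $\langle\emptyset,\alpha\rangle$ itself — in particular that the claim is forced. This needs the observation that when the support is empty the claim has a \emph{strict} derivation from $\SSet$, so ``disagreement'' at that point already collapses into contradictoriness of $\SSet$ together with the other side; it is routine but worth stating explicitly so the two definitions of defeater line up. No genuinely hard step is expected beyond carefully matching the definition of defeater (in both its forms given in the excerpt) against the two clauses of the argument-structure definition.
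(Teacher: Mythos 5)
Your proposal is correct, and the ``short version'' you say you would present is precisely the paper's own proof: assume a defeater \Bbeta, note that with $\Arg=\emptyset$ the contradictoriness of $\Arg\cup\Brg\cup\SSet$ collapses to that of $\Brg\cup\SSet$, and conclude that \Brg violates condition~2 of Definition~\ref{def.argument}. The longer disagreement-sub-argument unwinding you sketch is sound but unnecessary extra bookkeeping relative to what the paper does.
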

\begin{proof}
%
Since $\Aalpha\in\ARGS$ and $\Arg = \emptyset$, from Def.~\ref{def.argument} we have that $\SSet$ derives $\alpha$. By \emph{reductio ad absurdum}, let us assume that there is an argument $\Bbeta\in\ARGS$ such that \Brg defeats \Arg, hence $\Arg\cup\Brg\cup\SSet$ is contradictory (see counter-argument). Afterwards, since $\Arg = \emptyset$, we have $\Arg\cup\Brg\cup\SSet = \Brg\cup\SSet$. This is absurd, since \Brg is not compliant with cond.~\ref{def.argument.item.consistency} in Def.~\ref{def.argument}. Finally, $\Aalpha$ has no defeaters from \PP.
\end{proof}

To establish if \Arg\ is a non-defeated argument, \emph{defeaters} for \Arg\ are considered.
A counter-argument \Darg\ is a \textit{proper} defeater for \Arg\ if \Darg\ is preferred to \Arg,
or it is a \emph{blocking} defeater if they either have the same strength or are incomparable with respect to the preference criterion used.
It is important to note that in \DLP\ the argument comparison criterion is modular and thus, the most
appropriate criterion for the domain that is being represented can be selected. 
Nevertheless, in the examples given in this paper we will abstract away from this criterion, since it introduces unnecessary complexity. 
Thus, preference between counter-arguments will be given explicitly by enumerating defeats between arguments. 

Since defeaters are arguments, there may exist defeaters for them,
and defeaters for these defeaters, and so on. This will determine sequences of arguments which are referred to as \emph{argumentation lines}.

\begin{Definition}[Argumentation Line]\label{def:argum:line}
Given a \delp\ \PP, and arguments $\Brg_1,\dots,$ $\Brg_n$ from \ARGS, an \textbf{argumentation line} $\lambda$ is any (non-empty) finite sequence $[\Brg_1, \ldots, \Brg_n]$ such that $\Brg_i$ is a defeater of $\Brg_{i-1}$, for $1<i \leq n$. We will say that $\lambda$ is \textbf{rooted in} $\Brg_1$, and  that $\Brg_n$ is the \textbf{leaf} of $\lambda$. 
\end{Definition}

Since argumentation lines are an exchange of opposing arguments, we could think of it as two parties engaged in a dispute, which we call \emph{pro} and \emph{con}.

\begin{Definition}[Set of Con (Pro) Arguments]
Given an argumentation line $\lambda$, the \textbf{set of con} (resp., \textbf{pro}) \textbf{arguments} $\interf$ (resp., $\support$) of $\lambda$ is the set containing all the arguments placed on \textbf{even} (resp., \textbf{odd}) positions in $\lambda$.
\end{Definition}

To avoid undesirable sequences that may represent circular or fallacious reasoning chains, in \DLP\ an argumentation line has to be \emph{acceptable}: it has to be finite, an argument cannot appear twice, there cannot be two consecutive blocking defeaters, and the set of pro (resp., con) arguments has to be non-contradictory, \ie the set of defeasible rules from the union of arguments inside the set of pro (resp., con), along with $\Pi$, does not yield a contradiction. The domain of all acceptable argumentation lines in \PP\ is denoted as \Lines{\PP}. An acceptable argumentation line in a program \PP\ is called \emph{exhaustive} if no more arguments from \PP\ can be added to the sequence without compromising the acceptability of the line. 
For more details on acceptability of argumentation lines, refer to \cite{GS04delp}. The domain of all acceptable and exhaustive argumentation lines in \PP\ is denoted as \exLines{\PP}.

\begin{Remark}\label{remark.lines}
Given a \delp\ \PP, $\exLines{\PP}\subseteq\Lines{\PP}$ holds.
\end{Remark}

\begin{Example}\label{ex:defeaters}
Consider the arguments from Ex.~\ref{ex:arg}, and the following defeat relations: $\Barg_3$ is a proper defeater for $\Barg_2$, and $\Barg_4$ properly defeats $\Barg_3$. From these three arguments we can build the sequence $[\Barg_2, \Barg_3, \Barg_4]$, which is an acceptable and exhaustive argumentation line: it is non-circular, finite, it does not include blocking defeats, $\Barg_2 \cup\Barg_4\cup\Pi_{\ref{ex:delp}}$ is non-contradictory, and no more defeaters can be attached to it.
\end{Example}

Next we introduce the notion of \emph{upper segment}, which identifies subsequences of an argumentation line, from the root to a specific argument in it, determining new (non-exhaustive) argumentation lines. This notion will be central in the argumentative model of change presented in this article.


\begin{Definition}[Upper Segment]\label{def:upper:segment}
Given a \delp\ \PP, and an acceptable argumentation line $\lambda\in\Lines{\PP}$ such that $\lambda = [\Brg_1,\ldots,\Brg_n]$; the \textbf{upper segment} of $\lambda$ wrt. $\Brg_i$ $(1\leq i\leq n)$ is defined as $\upsegmeq{}{\Brg_{i}}=[\Brg_1,\ldots, \Brg_{i}]$, while the \textbf{proper upper segment} of $\lambda$ wrt. $\Brg_i$ is defined as $\upsegm{\lambda}{\Brg_{i}}=[\Brg_1,\ldots, \Brg_{i-1}]$. The proper upper segment of $\lambda$ wrt. $\Brg_1$ is undefined, noted as $\upsegm{\lambda}{\Brg_1}= \epsilon$.
\end{Definition}

In the sequel, we refer to both proper and non-proper upper segments simply as ``upper segment'' and either usage is distinguishable through its notation (round or square brackets, respectively). As stated next, the upper segment of an argument in a line constitutes a (possibly non-exhaustive) argumentation line by itself. 
\ojo{Besides, given a line $\lambda = [\Brg_1, \dots, \Brg_j, \dots, \Brg_n]$, we will say that an argument $\Brg_i$ is \emph{below} (respectively, \emph{above}) $\Brg_j$ iff $j > i$ (respectively, $i > j$).}

\begin{Proposition}\label{prop.uppersegments2lines}
For any $\lambda\in\Lines{\PP}$ and any $\Brg\in\lambda$, it holds $\upsegmeq{}{\Brg}\in\Lines{\PP}$.
\end{Proposition}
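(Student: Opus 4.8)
The plan is to verify directly that $\upsegmeq{}{\Brg}$ satisfies Definition~\ref{def:argum:line} (argumentation line), namely that it is a non-empty finite sequence of arguments from $\ARGS$ in which each argument is a defeater of its predecessor, together with the acceptability conditions required for membership in $\Lines{\PP}$. So I would first fix notation: write $\lambda = [\Brg_1,\ldots,\Brg_n]\in\Lines{\PP}$ and let $\Brg = \Brg_i$ for some $1\leq i\leq n$, so that $\upsegmeq{}{\Brg}=[\Brg_1,\ldots,\Brg_i]$ by Definition~\ref{def:upper:segment}. Non-emptiness is immediate since $i\geq 1$, and finiteness is inherited from $\lambda$.

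The defeater-chain condition and the non-circularity (no argument appears twice) and no-two-consecutive-blocking-defeaters conditions are all "local" properties that hold for every consecutive pair (or every pair) of arguments in $\lambda$; since the arguments and their relative order in $\upsegmeq{}{\Brg}$ form a prefix of those in $\lambda$, each such condition holds \emph{a fortiori} for the prefix. The only conditions requiring a moment's care are the two global ones: that the set of pro arguments $\support$ of $\upsegmeq{}{\Brg}$ is non-contradictory and likewise the set of con arguments $\interf$. Here the key observation is monotonicity of contradiction: the pro (resp.\ con) arguments of $\upsegmeq{}{\Brg}$ are a subset of those of $\lambda$ (taking arguments at odd, resp.\ even, positions, and positions are preserved under taking a prefix), so the union of their defeasible rules together with $\SSet$ is a subset of the corresponding union for $\lambda$; since a superset of a non-contradictory set of rules-plus-facts need not be non-contradictory but a \emph{subset} of one certainly is (contradiction is preserved upward, hence consistency downward), non-contradiction is inherited by the prefix. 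This settles acceptability, so $\upsegmeq{}{\Brg}\in\Lines{\PP}$.

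The main (and essentially only) obstacle is making precise that taking a prefix preserves the \emph{positions} of the retained arguments, so that "pro" stays "pro" and "con" stays "con"; once that is noted, the subset/monotonicity argument for the two non-contradiction conditions is routine, and all remaining conditions are trivially inherited. I would not belabor the exhaustiveness issue, since $\Lines{\PP}$ (unlike $\exLines{\PP}$) does not require exhaustiveness — indeed the whole point of the remark following Definition~\ref{def:upper:segment} is that the upper segment is a possibly \emph{non-exhaustive} line, so no exhaustiveness claim is being made here.
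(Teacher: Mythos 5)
Your proof is correct and follows the same route as the paper, which simply declares the result straightforward from Definitions~\ref{def:argum:line} and~\ref{def:upper:segment} and the notion of acceptable argumentation line; you have merely spelled out the routine verification (prefix preserves positions, hence pro/con status, and acceptability conditions are inherited downward). Your closing remark that exhaustiveness is not at issue, since $\Lines{\PP}$ rather than $\exLines{\PP}$ is the target, is exactly the right observation.
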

\begin{proof}
Straightforward from Definitions \ref{def:argum:line} and \ref{def:upper:segment}, and the notion of acceptable argumentation line. 
\end{proof}

Many argumentation lines could arise from one argument, leading to a tree structure. In a \textit{dialectical tree}, each node (except the root) represents a defeater of its parent, and leaves correspond to arguments with no defeaters in the line. 

\begin{Definition}[Dialectical Tree]\label{def.dTree}
Given a \delp\ \PP, a \textbf{dialectical tree} \dtree{\Arg}{\PP} rooted in \Arg is built by a set $X\subseteq\Lines{\PP}$ of lines rooted in an argument $\Arg\in\ARGS$, such that an argument \Crg in $\dtree{\Arg}{\PP}$ is: (1) a \textbf{node} \ifff $\Crg\in\lambda$,  for any $\lambda\in X$; (2) a \textbf{child} of a node \Brg in $\dtree{\Arg}{\PP}$ \ifff $\Crg\in\lambda$, $\Brg\in\lambda'$, for any $\{\lambda,\lambda'\}\subseteq X$, and $\upsegmeqP{}{\Brg}=\upsegm{\lambda}{\Crg}$. A leaf of any line in $X$ is a \textbf{leaf} in $\dtree{\Arg}{\PP}$. The domain of all dialectical trees from \PP\ is noted as \allTrees{\PP}.
\end{Definition}

%

The set containing all the acceptable and exhaustive argumentation lines rooted in a common argument \Arg will determine the \emph{bundle set} for \Arg.

\begin{Definition}[Bundle Set]\label{def.bundle}
Given a \delp\ \PP, a set $\bundle$ is the \textbf{bundle set} for \Arg from \PP\ \ifff $\bundle$ contains all the lines rooted in \Arg from $\exLines{\PP}$.
\end{Definition}

The objective of a dialectical tree is to evaluate all the information that could determine the warrant status of the root argument. In addition to this, the argumentation lines included in a dialectical tree should be acceptable in order to ensure the exchange of arguments is performed in a sensible manner. This gives place to a restricted version of dialectical tree, called \textit{acceptable}. An acceptable dialectical tree provides a structure for considering all the possible acceptable argumentation lines that can be generated for deciding whether its root argument is defeated.

\begin{Definition}[\ojo{Acceptable Dialectical Tree}]\label{def.tree}
Given a \delp\ \PP, an \textbf{acceptable dialectical tree} \MT{\PP}{\Arg} rooted in an argument $\Arg\in\ARGS$ is a dialectical tree whose argumentation lines belong to the bundle set $\bundle$. We will say that \MT{\PP}{\Arg} is \textbf{determined by} $\bundle$, and identify the \textbf{domain of all acceptable dialectical trees} from \PP\ as \accTrees{\PP}.
\end{Definition}

\begin{Proposition}
Given a \delp\ \PP, $\accTrees{\PP}\subseteq\allTrees{\PP}$ holds.
\end{Proposition}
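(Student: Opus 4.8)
The plan is to unfold the definitions; there is no real content beyond bookkeeping. Fix an arbitrary acceptable dialectical tree $T\in\accTrees{\PP}$. By Definition~\ref{def.tree} there is an argument $\Arg\in\ARGS$ such that $T$ is a dialectical tree determined by the bundle set $\bundle$; that is, the set $X$ of argumentation lines out of which $T$ is built in the sense of Definition~\ref{def.dTree} is exactly $X=\bundle$, and $X$ consists of lines rooted in the common argument $\Arg$.

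First I would note that, by Definition~\ref{def.bundle}, every line in $\bundle$ belongs to $\exLines{\PP}$, and by Remark~\ref{remark.lines} we have $\exLines{\PP}\subseteq\Lines{\PP}$; therefore $X=\bundle\subseteq\Lines{\PP}$. Hence $T$ is built from a set $X\subseteq\Lines{\PP}$ of lines rooted in a single argument $\Arg\in\ARGS$, with its node/child/leaf structure exactly as prescribed by Definition~\ref{def.dTree}. This is precisely the definition of a dialectical tree, so $T\in\allTrees{\PP}$. Since $T\in\accTrees{\PP}$ was arbitrary, we conclude $\accTrees{\PP}\subseteq\allTrees{\PP}$.

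The only thing worth double-checking is that the node/child relation imposed on an acceptable dialectical tree coincides with the one in Definition~\ref{def.dTree} applied to $X=\bundle$, so that no separate verification of the tree shape is needed; this is immediate, since Definition~\ref{def.tree} declares an acceptable dialectical tree to be a dialectical tree to begin with, merely restricting its set of lines to the bundle set. Thus the proof is a one-line chain of inclusions, and I do not anticipate any obstacle.
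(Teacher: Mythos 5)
Your proof is correct and follows essentially the same route as the paper: unfold Definitions~\ref{def.tree} and~\ref{def.bundle} to see that every line of an acceptable tree lies in $\exLines{\PP}$, use Remark~\ref{remark.lines} ($\exLines{\PP}\subseteq\Lines{\PP}$), and conclude membership in $\allTrees{\PP}$ via Definition~\ref{def.dTree}. No gaps.
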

\begin{proof}
Let $\dtree{\Arg}{\PP}\in\accTrees{\PP}$. From Def.~\ref{def.tree} and Def.~\ref{def.bundle}, we know that every $\lambda\in\dtree{\Arg}{\PP}$ is an acceptable and exhaustive line from \exLines{\PP}. Since $\exLines{\PP}\subseteq\Lines{\PP}$, every such $\lambda$ also belongs to \Lines{\PP} and therefore, from Def.~\ref{def.dTree}, $\dtree{\Arg}{\PP}\in\allTrees{\PP}$.
\end{proof}

From now on, every dialectical tree will be assumed acceptable unless stated otherwise.
Observe that arguments in dialectical trees will be depicted as triangles (labeled with their names) and edges will denote defeat relations. Defeated arguments will be painted in gray, whereas undefeated ones will be white.

\begin{Example}\label{ex.bundle}
The dialectical tree determined by the bundle set $\{[\Arg,\Brg_1,\Brg_2],$ $[\Arg,\Brg_1,\Brg_3],$ $[\Arg,\Brg_1,\Brg_5],$ $[\Arg,\Brg_4,\Brg_5,\Brg_1,\Brg_3]\}$ is depicted on the right. We assume that every defeat
\begin{window}[0,r,{\mbox{\epsfig{file=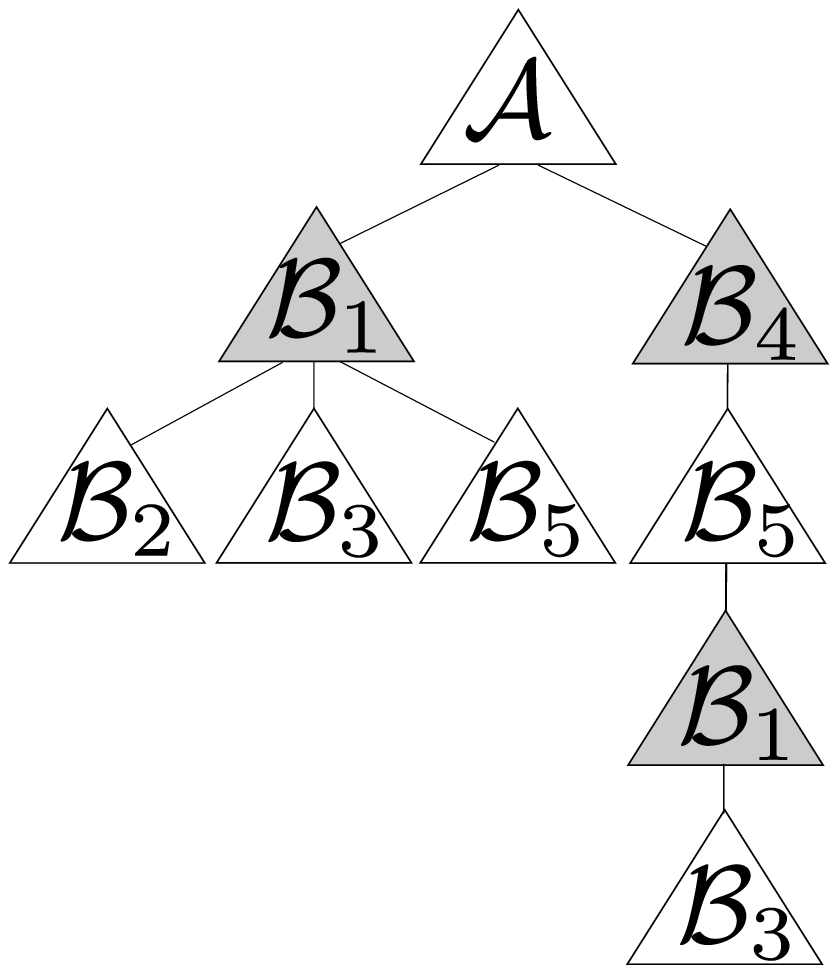,scale=.32}}},{}]
\noindent is proper excepting for $\Brg_5$ and $\Brg_1$ which are blocking defeaters, as well as $\Brg_1$ and $\Brg_2$. Observe that $[\Arg,\Brg_1,\Brg_5,\Brg_1]$ is not acceptable and thus, $\Brg_1$ cannot be child of $\Brg_5$ since it was already introduced in that line. However, nothing prevents $\Brg_1$ being child of $\Brg_5$ in the rightmost line. Note also that, since line $[\Arg,\Brg_4,\Brg_5,\Brg_1,\Brg_2]$ contains two consecutive blocking defeaters, it is not acceptable and therefore, it does not belong to the bundle set. 
\end{window}
\end{Example}


%
%

Given an argument structure \Ah, to decide whether $\alpha$ is warranted, \DLP\ follows a specific marking criterion applied over the dialectical tree  $\dtree{\Arg}{\PP}$. 
This criterion assigns a mark from the domain $\{\Dnode, \Unode\}$ to each node in the tree, where \Dnode\ stands for {\em defeated} and \Unode\ for {\em undefeated}.

\begin{enumerate}\label{marking}
\item all leaves in \MT{\PP}{\Arg} are marked as \Unode; and
\item every inner node \Barg\ of \MT{\PP}{\Arg} will be marked as \Unode\ \ifff every child of \Barg\ is marked as \Dnode; otherwise, \Barg\ is marked \Dnode.
\end{enumerate}


Thus, an argument \Barg\ will be marked as \Dnode\ \ifff it has at least one child marked as \Unode. Finally, if the root argument \Arg\ is marked as \Unode\ then we say that \MT{\PP}{\Arg} \emph{warrants} $\alpha$ and that  $\alpha$ is \emph{warranted} from \PP. When no confusion arises, we will refer to \Arg\ instead of $\alpha$, saying that \Arg\ is warranted. We call \MT{\PP}{\Arg} a \emph{warranting tree} if it warrants \Arg, otherwise we call it a \emph{non-warranting tree}. For instance, in Example~\ref{ex.bundle}, argument \Arg is warranted given that it ends up undefeated from its dialectical tree.

\begin{Example}\label{ex:tree}
From the \delp\ $\pair{\SSet_{\ref{ex:delp}}}{\DD_{\ref{ex:delp}}}$ of
Ex.~\ref{ex:delp} we can consider a new program $\PP_{\ref{ex:tree}} = \pair{\SSet_{\ref{ex:delp}}}{\DD_{\ref{ex:delp}} \cup \{\drule{a}{x}\}}$, from which we can build the arguments in Ex.~\ref{ex:arg} along with a new argument $\Ar{\Arg}{a} = \Ar{\{(\drule{a}{x}),(\drule{x}{z})\}}{a}$. The defeat relations are: $\Brg_1$, 
\begin{window}[0,r,{\mbox{\epsfig{file=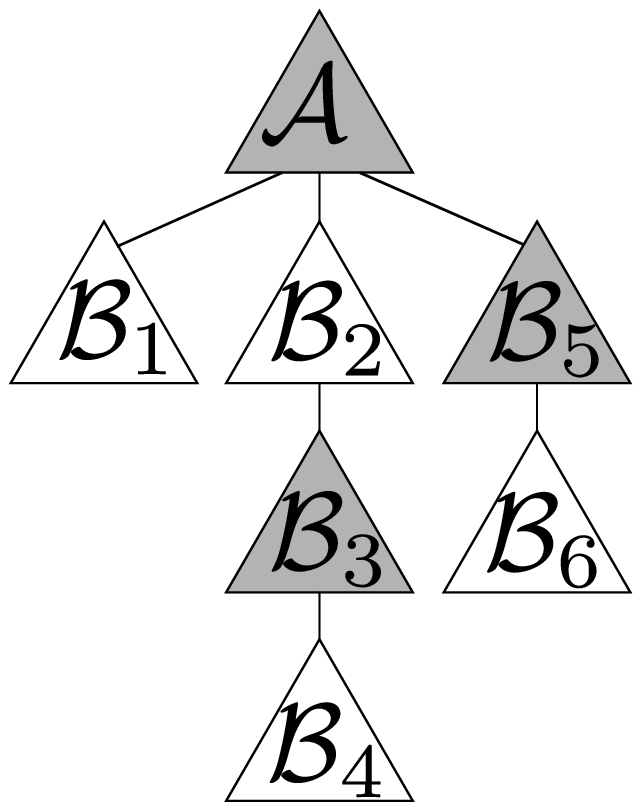,scale=.32}}},{}]
\noindent $\Barg_2$ and $\Barg_5$ defeat $\Arg$; $\Barg_3$ defeats $\Barg_2$; $\Barg_4$ defeats $\Barg_3$; and $\Barg_6$ defeats $\Barg_5$. The non-warranting tree \MT{\PP_{\ref{ex:tree}}}{\Arg} is depicted on the right.
For simplicity, those arguments that can be built from $\PP_{\ref{ex:tree}}$ but do not appear in the tree \MT{\PP_{\ref{ex:tree}}}{\Arg} are assumed to be defeated by the corresponding arguments that do appear. For instance, we assume arguments \Arg and $\Brg_3$ to be preferred over argument $\Ar{$\{\drule{\no a}{y},\drule{y}{x},\drule{x}{p}\}$}{\no a}$, which in consequence is not a defeater of \Arg\ nor $\Barg_3$.
\end{window}
\end{Example}

\section{About Argumentation Lines and Marking}\label{sec.marking}


Dialectical trees are composed by argumentation lines. In particular we are interested in distinguishing those lines that determine the defeat of the root argument, which we call \emph{attacking lines}. In \cite{ijcai09}, a characterization of the marking criteria has introduced the possibility of abstracting away from any specific marking. In that article, different specific marking criteria have been also studied, allowing the full characterization of argumentation lines, and specially \textit{attacking lines} have been defined on top of their \textit{marking sequence}. There, the specific case of the \DLP\ marking criterion, and the morphology of the \DLP\ attacking lines, were analyzed in detail. 

\begin{Definition}[Marking Sequence]\label{def:markingSeq}
Given a \delp\ \PP, and an argumentation line $\lambda = [\Barg_1,\dots,\Barg_n]$ belonging to the marked dialectical tree \dtree{\Barg_1}{\PP}; the \textbf{marking function} $\markline:\Lines{\PP}\times\allTrees{\PP}\longrightarrow\{U,D\}^n$ determines a sequence $\markline(\lambda,\dtree{\Barg_1}{\PP}) = [m_1,\dots,m_n]$ such that each $m_i$ is the mark of the corresponding argument $\Barg_i$ according to \dtree{\Barg_1}{\PP}.
\end{Definition}



Observe that the marking of a line is not considered individually, but in concordance with the context provided by the tree it belongs to. For instance, in Ex.~\ref{ex:tree} the line $[\Arg,\Barg_5,\Barg_6]$ does not have the marking sequence $[U,D,U]$ but the sequence $[D,D,U]$, since $\Barg_1$ (from line $[\Arg,\Barg_1]$) is an undefeated defeater for $\Arg$, which is thus defeated (marked as $D$). Observe that the marking sequence of a line can be simply a $U$ (never a single $D$), or begin with a $U$ or a $D$ and then include an arbitrarily long alternation of $D$s and $U$s, and even at some point it could repeat any number of $D$s (never a $U$). Finally, any marking sequence always ends with a $U$.

\begin{Proposition}\label{prop:U-rep}
Given a \delp\ \PP, and a dialectical tree $\dtree{\Barg_1}{\PP}\in\allTrees{\PP}$, the following conditions are met:
\begin{description}
\item 1. No argumentation line has a repetition of $Us$ in its marking sequence. That is, there is no $\lambda\in\dtree{\Barg_1}{\PP}$ such that $\markline(\lambda,\dtree{\Barg_1}{\PP})=[\ldots,\Unode,\Unode,\ldots]$.
\item 2. Argumentation lines may repeat $\Dnode$s in their marking sequence. That is, it may be the case that there is $\lambda\in\dtree{\Barg_1}{\PP}$ such that $\markline(\lambda,\dtree{\Barg_1}{\PP})=[\ldots,\Dnode,\Dnode,\ldots]$.
\item 3. The marking sequence of every argumentation line ends in a \Unode. That is, for every $\lambda\in\dtree{\Barg_1}{\PP}$ it holds $\markline(\lambda,\dtree{\Barg_1}{\PP})=[\ldots,\Unode]$.
\item 4. In the marking sequence of every argumentation line, a \Unode\ is followed by a \Dnode\ unless the \Unode\ stands for the leaf argument. That is, for every $\lambda\in\dtree{\Barg_1}{\PP}$ if $\markline(\lambda,\dtree{\Barg_1}{\PP})=[\dots, \Unode, m, \ldots]$ then $m=\Dnode$. 
\end{description}
\end{Proposition}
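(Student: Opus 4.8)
The plan is to prove all four items directly from the recursive marking criterion (items 1 and 2 on page~\pageref{marking}) together with the structure of dialectical trees. The key observation underpinning everything is that, for a node $\Barg_i$ in a line $\lambda=[\Barg_1,\dots,\Barg_n]$ sitting inside the tree $\dtree{\Barg_1}{\PP}$, the mark $m_i$ depends only on the marks of $\Barg_i$'s children in the whole tree; and the single child of $\Barg_i$ that lies on $\lambda$ is precisely $\Barg_{i+1}$ (when $i<n$). So if $m_{i+1}=\Unode$, then $\Barg_i$ has a child marked $\Unode$, and by the marking criterion $m_i=\Dnode$. This one implication is the workhorse.

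\textbf{Item 4.} This is the implication just stated, read contrapositively packaged: if $m_i=\Unode$ and $i<n$ (so $\Barg_i$ is not the leaf of $\lambda$), then its child $\Barg_{i+1}$ on $\lambda$ cannot be marked $\Unode$ --- otherwise rule~2 would force $m_i=\Dnode$, contradicting $m_i=\Unode$. Hence $m_{i+1}=\Dnode$. I would state this first since items 1 and 3 follow from it cheaply.

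\textbf{Item 1.} Suppose for contradiction $\markline(\lambda,\dtree{\Barg_1}{\PP})=[\dots,\Unode,\Unode,\dots]$, say $m_i=m_{i+1}=\Unode$. Then $\Barg_i$ is an inner node (it has the child $\Barg_{i+1}$), so $i<n$, and item~4 applies to give $m_{i+1}=\Dnode$, a contradiction. \textbf{Item 3.} Let $\Barg_n$ be the leaf of $\lambda$. Because every line in the tree is exhaustive / acceptable, $\Barg_n$ has no defeaters within $\lambda$, i.e.\ it is a leaf of $\dtree{\Barg_1}{\PP}$; by rule~1 of the marking criterion, $m_n=\Unode$, so the marking sequence ends in $\Unode$. \textbf{Item 2.} This is an existence (``it may be the case'') claim, so it suffices to exhibit a witness: Example~\ref{ex:tree} already provides one --- as noted in the text following Definition~\ref{def:markingSeq}, the line $[\Arg,\Barg_5,\Barg_6]$ has marking sequence $[\Dnode,\Dnode,\Unode]$, which repeats $\Dnode$. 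I would simply cite that example.

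The only step requiring care is making the ``child on the line'' argument rigorous: I must invoke Definition~\ref{def.dTree} to confirm that in a dialectical tree the successor $\Barg_{i+1}$ of $\Barg_i$ along $\lambda$ is indeed a \emph{child} of $\Barg_i$ in the tree (via the upper-segment condition $\upsegmeqP{}{\Barg_i}=\upsegm{\lambda}{\Barg_{i+1}}$), so that rule~2 of the marking criterion legitimately applies with $\Barg_{i+1}$ among the children of $\Barg_i$. I anticipate this bookkeeping about the correspondence between ``next argument in the line'' and ``child in the tree'' to be the main (though minor) obstacle; once it is pinned down, items~1, 3 and 4 are immediate and item~2 is discharged by the running example.
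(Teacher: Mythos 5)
Your proposal is correct and follows essentially the same route as the paper's own (sketched) proof: items 1, 3 and 4 are read off directly from the marking criterion via the observation that a node with a $U$-marked child must be marked $D$, and item 2 is a possibility claim. The only cosmetic difference is that you discharge item 2 by citing the witness line $[\Arg,\Barg_5,\Barg_6]$ with marking $[\Dnode,\Dnode,\Unode]$ from Example~\ref{ex:tree}, whereas the paper describes the mechanism (a $\Dnode$-marked node may have further children marked $\Dnode$); both are adequate.
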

\begin{proof}
The corresponding proof for each item follows straightforwardly from the definition of the marking criterion given in Sect.~\ref{DeLP}. That is the case of items 3) and 4). For 1) and 2), the following sketches may clarify this assertion.

1) From the adopted marking criterion, an inner node \Brg is marked \Unode\ \ifff every child of \Brg is marked \Dnode. This condition makes impossible to have a mark \Unode\ for both a node and some of its children.

2) From the adopted marking criterion, an inner node \Brg is marked \Dnode\ \ifff there is at least one child marked \Unode. However, other children of \Brg may be marked as \Dnode, thus determining marking sequences with repetitions of \Dnode s.
\end{proof}

From the proposition above, argumentation lines may be classified through regular expressions, which typify the lines according to their marking sequence. Depending on the dialectical tree being warranting or not, we will distinguish two main different sorts of regular expressions.

\begin{Definition}[Warranting and Non-warranting Lines]\label{def.w-lines}
Given a \delp\ \PP and the tree $\dtree{\Arg}{\PP}\in\allTrees{\PP}$; for any $\lambda\in\dtree{\Arg}{\PP}$, if the marking sequence $\markline(\lambda,\dtree{\Arg}{\PP})$ conforms to the regular expression $U(D^+U)^*$ (resp., $(D^+U)^+$) then line $\lambda$ is referred to as \textbf{warranting} (resp., \textbf{non-warranting}).
\end{Definition}

\begin{Proposition}\label{prop.mark.argline}
Given the \delp\ \PP and $\dtree{\Arg}{\PP}\in\allTrees{\PP}$; for any $\lambda\in\dtree{\Arg}{\PP}$, $\lambda$ is warranting (according to $\markline(\lambda,\dtree{\Arg}{\PP})$) \ifff \dtree{\Arg}{\PP} is warranting.
\end{Proposition}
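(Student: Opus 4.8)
The plan is to prove the biconditional $\markline(\lambda,\dtree{\Arg}{\PP})$ is warranting \ifff \dtree{\Arg}{\PP} is warranting by analyzing how the root mark $m_1$ of any line's marking sequence relates to the tree's marking. The key observation is that \emph{every} line $\lambda\in\dtree{\Arg}{\PP}$ is rooted in the same argument \Arg, so the first element $m_1$ of $\markline(\lambda,\dtree{\Arg}{\PP})$ is always the mark of \Arg\ in \dtree{\Arg}{\PP}, regardless of which line $\lambda$ we pick. Moreover, by the marking criterion of Section~\ref{DeLP}, \dtree{\Arg}{\PP} is warranting precisely when \Arg\ is marked \Unode, \ie when $m_1 = \Unode$. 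By Proposition~\ref{prop:U-rep}, the marking sequence of any line is either a single \Unode\ or, more generally, a string over $\{\Unode,\Dnode\}$ that ends in \Unode, never repeats \Unode, and in which a \Unode\ is always followed by a \Dnode\ (except at the leaf). So the plan is to argue that such a sequence matches $U(D^+U)^*$ exactly when it starts with \Unode, and matches $(D^+U)^+$ exactly when it starts with \Dnode; the two regular expressions partition all admissible marking sequences according to their first symbol.

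First I would fix an arbitrary $\lambda\in\dtree{\Arg}{\PP}$ and set $\markline(\lambda,\dtree{\Arg}{\PP}) = [m_1,\ldots,m_n]$. I would note that $m_1$ is the mark of \Arg\ (the common root), hence $m_1=\Unode$ iff \dtree{\Arg}{\PP} is warranting. Next I would establish the purely combinatorial fact: a sequence over $\{\Unode,\Dnode\}$ satisfying the three constraints of Proposition~\ref{prop:U-rep} (no $\Unode\Unode$ substring; ends in \Unode; every non-final \Unode\ is followed by a \Dnode) matches $U(D^+U)^*$ if $m_1=\Unode$ and matches $(D^+U)^+$ if $m_1=\Dnode$. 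This is a short induction on $n$ (or a direct parsing argument): reading the sequence left-to-right, maximal blocks of \Dnode s are each terminated by a single \Unode\ because of constraints 1 and 4, and the whole sequence ends in \Unode\ by constraint 3; if it starts with \Unode\ that \Unode\ is immediately followed by a \Dnode\ (unless $n=1$), giving the leading $U$ of $U(D^+U)^*$, whereas if it starts with \Dnode\ the sequence decomposes as one or more $D^+U$ groups, which is $(D^+U)^+$.

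For the forward direction of the biconditional, suppose \dtree{\Arg}{\PP} is warranting; then $m_1=\Unode$, so by the combinatorial fact $\markline(\lambda,\dtree{\Arg}{\PP})$ matches $U(D^+U)^*$, \ie $\lambda$ is warranting by Definition~\ref{def.w-lines}. For the converse, suppose $\lambda$ is warranting, so its marking sequence matches $U(D^+U)^*$, which forces $m_1=\Unode$; since $m_1$ is the mark of the root \Arg, the root is undefeated and \dtree{\Arg}{\PP} is warranting by the marking criterion. (Equivalently one may argue the contrapositive using the non-warranting regular expression $(D^+U)^+$, which starts with \Dnode.) A small point to check is that the two regular expressions are genuinely exclusive on admissible sequences --- they are, since one forces $m_1=\Unode$ and the other forces $m_1=\Dnode$ --- so ``$\lambda$ warranting'' and ``$\lambda$ non-warranting'' are mutually exclusive and exhaustive over the lines of any tree, which is implicitly needed for the statement to be well posed.

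The main obstacle is not conceptual but expository: one must be careful that Definition~\ref{def.w-lines} only \emph{assigns} the labels ``warranting''/``non-warranting'' to lines whose marking sequence happens to match one of the two given regular expressions, so the argument must first invoke Proposition~\ref{prop:U-rep} to guarantee that \emph{every} line's marking sequence does match exactly one of them --- otherwise the biconditional could fail vacuously. Once that coverage is established, the rest is the straightforward observation that the dichotomy between the two regular-expression classes is exactly the dichotomy on the root's mark, which is exactly the dichotomy between warranting and non-warranting trees. I would therefore present the proof as: (i) all lines share the root \Arg\ so $m_1$ is tree-independent and equals the root mark; (ii) by Prop.~\ref{prop:U-rep} the marking sequence is of one of the two forms according to whether $m_1=\Unode$ or $m_1=\Dnode$; (iii) conclude by matching definitions.
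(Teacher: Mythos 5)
Your proof is correct, and it is in fact more self-contained than the one in the paper. The paper's proof splits the claim the same way you do -- first, that every line's marking sequence falls under one of the two regular expressions of Definition~\ref{def.w-lines}, and second, that the line-level dichotomy coincides with the tree-level one -- but it handles the first part only by sketching a finite-automaton/regular-grammar construction whose details are explicitly omitted for space, and dismisses the second part as following ``straightforwardly'' from the definition of warranting trees. You replace the omitted automaton argument with a direct combinatorial parsing of the constraints already established in Proposition~\ref{prop:U-rep} (no repeated \Unode, terminal \Unode, every non-final \Unode\ followed by \Dnode), showing that such a sequence matches $U(D^+U)^*$ or $(D^+U)^+$ exactly according to its first symbol; and you make explicit the bridge the paper leaves implicit, namely that since every line of \dtree{\Arg}{\PP} is rooted in \Arg\ (Definitions~\ref{def:markingSeq} and~\ref{def.dTree}), the first symbol $m_1$ of any line's marking sequence is the root's mark, so $m_1=\Unode$ iff the tree is warranting. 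Your observation that one must first invoke Proposition~\ref{prop:U-rep} to guarantee that every line's sequence matches exactly one of the two expressions (so the warranting/non-warranting labels are exhaustive and exclusive) is exactly the coverage issue the paper's automaton sketch was meant to address. What your route buys is an elementary, fully written-out argument resting only on results already proved in the paper; what the paper's route would buy, if carried out, is a reusable formal-language characterization of the marking criterion that could serve other marking-related results, but as printed it is only a sketch.
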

\begin{proof}
To prove that those regular expressions corresponding to warranting/non-warranting lines are obtained from the \DLP\ marking criterion, we should obtain a finite automaton equivalent to the regular expression at issue. From such automaton a regular grammar can be defined. Afterwards, by induction, this grammar has to be shown to conform a given regular language which should be proved to be obtained from the marking criterion specified on page~\pageref{marking}, Sect.~\ref{DeLP}. The complete formal proof was left out from this article due to space reasons.

The proof for $\lambda\in\dtree{\Arg}{\PP}$ being a warranting line \ifff \dtree{\Arg}{\PP} is warranting follows straightforwardly from the definition of warranting trees on page~\pageref{marking} and Def.~\ref{def.w-lines}.
\end{proof}

Two different sorts of regular expressions were given in the proposition above depending on whether the dialectical tree upon consideration is warranting or not. In addition, we can go further in each case, and analyze particular configurations to isolate the relevant situations for the specification of this model of change. Regarding a warranting tree, we will make no distinction among its argumentation lines, since the objective of the change method we propose is to achieve warrant, and in such a tree there is nothing to be done. On the other hand, when we consider a non-warranting tree, as said before, we are interested in the characterization of those lines that are somehow responsible for the root argument to be defeated, \ie \textit{attacking lines}. The following proposition distinguishes the two different types of non-warranting lines according to their marking sequence: those that have at least one repetition of \Dnode, which we call \emph{D-rep lines}, and those having a perfect alternation of \Dnode s and \Unode s, which we call \textit{alternating lines}. Moreover, this characterization will be shown to be complete afterwards in Prop.~\ref{prop.mark.non-warranting.argline}.

\begin{Definition}[D-rep and Alternating Lines]\label{def.d-rep.alternating.lines}
Given a \delp\ \PP and a non-warranting tree $\dtree{\Arg}{\PP}\in\allTrees{\PP}$; for any $\lambda\in\dtree{\Arg}{\PP}$, if the marking sequence $\markline(\lambda,\dtree{\Arg}{\PP})$ conforms to the regular expression $(DU)^*(D^+(DU)^+)^+$ (resp., $(DU)^+$) then line $\lambda$ is referred to as \textbf{D-rep} (resp., \textbf{alternating}).
\end{Definition}

\begin{Definition}[\ojo{D-rep Sequence}]\label{def.d-rep.sequence}
Given a \delp\ \PP and a non-warranting tree $\dtree{\Arg}{\PP}\in\allTrees{\PP}$; for any D-rep line $\lambda\in\dtree{\Arg}{\PP}$ where $\lambda=[\Arg,\ldots,\Brg_1,\ldots,\Brg_k,\ldots]$, any subsequence $\Brg_1,\ldots,\Brg_k$ of arguments whose marking is a consecutive subsequence of \Dnode\ nodes in $\markline(\lambda,\dtree{\Arg}{\PP})$ is referred to as \textbf{D-rep sequence} if it holds that either $\Brg_1$'s parent in $\lambda$ is marked as \Unode\ or $\Brg_1=\Arg$ and that $\Brg_k$'s defeater in $\lambda$ is marked as \Unode. Argument $\Brg_1$ is referred to as the \textbf{head} of the D-rep sequence. The D-rep sequence in $\lambda$ which is closer to the root argument \Arg is referred to as the \textbf{uppermost D-rep sequence}.
\end{Definition}


\begin{Proposition}\label{prop.mark.non-warranting.argline}
Given a \delp\ \PP, and the non-warranting tree $\dtree{\Arg}{\PP}\in\allTrees{\PP}$; for any $\lambda\in\dtree{\Arg}{\PP}$, $\markline(\lambda,\dtree{\Arg}{\PP})$ conforms either to a D-rep or an alternating line.
\end{Proposition}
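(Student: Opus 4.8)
The plan is to reduce the statement to a purely language-theoretic inclusion between the regular expressions appearing in Definitions~\ref{def.w-lines} and~\ref{def.d-rep.alternating.lines}. First I would invoke Proposition~\ref{prop.mark.argline}: since $\dtree{\Arg}{\PP}$ is non-warranting, no line in it is warranting, so combining this with Proposition~\ref{prop:U-rep} (the marking sequence ends in a \Unode, contains no two consecutive \Unode s, and a \Unode\ is always followed by a \Dnode\ unless it is the leaf), every marking sequence must be of the form $U(D^+U)^*$ or $(D^+U)^+$, the former being exactly the warranting case of Definition~\ref{def.w-lines}. Hence $\markline(\lambda,\dtree{\Arg}{\PP})$ conforms to $(D^+U)^+$ for every $\lambda\in\dtree{\Arg}{\PP}$, and it remains to prove the inclusion $(D^+U)^+\subseteq (DU)^+ \cup (DU)^*(D^+(DU)^+)^+$, i.e. that every such sequence is alternating or D-rep.

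For the inclusion I would write an arbitrary word of $(D^+U)^+$ as $w = D^{a_1}U\,D^{a_2}U\cdots D^{a_n}U$ with $n\geq 1$ and every $a_i\geq 1$, and split into two cases. If $a_i=1$ for all $i$, then $w=(DU)^n$ conforms to $(DU)^+$ and $\lambda$ is alternating. Otherwise let $j$ be the least index with $a_j\geq 2$; then the prefix $D^{a_1}U\cdots D^{a_{j-1}}U$ equals $(DU)^{j-1}$, matching the $(DU)^*$ factor, and I would factor the remaining suffix $D^{a_j}U\cdots D^{a_n}U$ into blocks of shape $D^+(DU)^+$ greedily: every index $i$ with $a_i\geq 2$ opens a fresh block whose $D^+$ factor is $D^{a_i-1}$ (well defined since $a_i-1\geq 1$), followed by the single $DU$ left over from that same block, and then all the blocks $DU$ coming from the subsequent indices with $a_i=1$ are absorbed into the current block's $(DU)^+$ factor, until the next index with $a_i\geq 2$ or the end of $w$ is reached. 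This exhibits the suffix as a word of $(D^+(DU)^+)^+$, so $w$ conforms to $(DU)^*(D^+(DU)^+)^+$ and $\lambda$ is D-rep. I would also record that the two cases are disjoint: a word of $(DU)^+$ has every maximal run of \Dnode s of length $1$, whereas in any word of $(DU)^*(D^+(DU)^+)^+$ the $D^+$ factor abuts the leading \Dnode\ of the following $(DU)^+$ factor, producing a maximal run of length $\geq 2$; this is precisely the informal distinction between alternating lines (``perfect alternation'') and D-rep lines (``at least one repetition of \Dnode'').

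The step I expect to require the most care is making the greedy block decomposition of the suffix airtight, namely the boundary cases: the first block is well formed because the choice of $j$ guarantees $a_j-1\geq 1$; the $(DU)^+$ factor of every block is non-empty because each block is initiated at a long run $a_i\geq 2$, which already supplies the $DU$ formed by the remaining \Dnode\ and \Unode\ of that run; and a trailing long run $a_n\geq 2$ opens its own final block rather than being left dangling. Everything else is a routine unwinding of the \DLP\ marking criterion of Section~\ref{DeLP} and of Definition~\ref{def.d-rep.alternating.lines}. The converse inclusion --- that every alternating or D-rep word also lies in $(D^+U)^+$ --- is not needed for this proposition but follows from the same block arithmetic.
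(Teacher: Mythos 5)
Your proof is correct, but it takes a genuinely different route from the paper's. The paper argues semantically, walking down the tree: starting from the root marked $\Dnode$, it case-splits on the mark of each child (using Prop.~\ref{prop:U-rep}) and recursively observes that the line being built can only turn out alternating or D-rep; the verification that the resulting sequences actually match the regular expressions of Def.~\ref{def.d-rep.alternating.lines} is left implicit (and the grammar/automaton part is explicitly deferred, as in Prop.~\ref{prop.mark.argline}). You instead reduce everything to string combinatorics: Prop.~\ref{prop:U-rep} (no repeated $\Unode$s, terminal $\Unode$) pins every marking sequence into $U(D^+U)^*\cup(D^+U)^+$, Prop.~\ref{prop.mark.argline} (or simply the fact that the root of a non-warranting tree is marked $\Dnode$) rules out the first language, and then you prove the purely language-theoretic inclusion $(D^+U)^+\subseteq (DU)^+\cup(DU)^*\bigl(D^+(DU)^+\bigr)^+$ by writing $w=D^{a_1}U\cdots D^{a_n}U$ and giving a greedy block factorization keyed to the indices with $a_i\geq 2$; your handling of the boundary cases (least $j$ with $a_j\geq 2$, blocks closed at the next long run or at the end) is sound, and the disjointness remark correctly matches the paper's informal alternating/D-rep distinction. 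What your approach buys is an explicit, airtight verification of membership in the D-rep expression, which the paper glosses over; what the paper's approach buys is a tree-level explanation of why the repeated $\Dnode$s arise (via undefeated children in adjacent lines), which feeds directly into the subsequent Lemma~\ref{lemma.D-rep.alternating}, whereas your argument deliberately stays agnostic about the surrounding tree. Both are valid proofs of the stated proposition.
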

\begin{proof}
The proof showing that the regular expressions corresponding to D-rep/ alternating lines are obtained from the \DLP\ marking criterion, follows similarly to the proof given for Prop.~\ref{prop.mark.argline}. 

On the other hand, for the proof showing that this characterization of non-warranting lines is complete, we will asssume a non-warranting dialectical tree rooted in an argument \Arg, thus we have that \Arg is marked as \Dnode. From Prop.~\ref{prop:U-rep}.3, leaves are marked as \Unode, hence, \Arg has at least one child. Any child of an argument marked as \Dnode\ is (*) either marked  (1) \Unode, or (2) \Dnode. In the latter case, the node is not a leaf (see Prop.~\ref{prop:U-rep}.3), 
and the line would be a D-rep. Regarding (1), if it is the case of a leaf, then we have an alternating line, whereas if it is an inner node, the only option for its child is to be marked as \Dnode\ (see Prop.~\ref{prop:U-rep} items 1 and 4). In this case, its child might be marked as either \Unode, or \Dnode. By recursively following this construction from (*), we have only alternating lines or D-rep lines.
\end{proof}

In order to finally identify attacking lines, it is necessary to study the relation between these two sorts of non-warranting lines. D-rep and alternating lines are interrelated in Lemma~\ref{lemma.D-rep.alternating}, as shown below. But firstly, let us introduce the notion of \emph{adjacency} among lines to provide appropriate theoretic terminology. Two (or more) argumentation lines are referred to as \textit{adjacent} if they share a common upper segment containing one or more arguments. Finally, we refer as \textit{adjacency point} to the last argument in the common upper segment of two (or more) adjacent lines.

\begin{Definition}[Adjacency]\label{def:adjacent}
Given a \delp\ \PP, two acceptable argumentation lines $\lambda_1\in\Lines{\PP}$ and $\lambda_2\in\Lines{\PP}$ are said to be \textbf{adjacent} at \Barg\ \ifff $\lambda_1^{\uparrow}(\Barg_1) = \lambda_2^{\uparrow}(\Barg_2) = [\Arg,\dots,\Barg]$; where $\Barg_1\in\lambda_1$, $\Barg_2\in\lambda_2$, and $\Barg_1\neq\Barg_2$. Argument \Barg\ is said to be the \textbf{adjacency point} between $\lambda_1$ and $\lambda_2$.
\end{Definition}

\begin{Example}\label{ex.upSegm.adjLines}
Consider the dialectical tree depicted on the right. Argumentation lines $\lambda_1$ and $\lambda_2$ are adjacent at argument $\Barg_1$ since both upper segments $\upsegm{\lambda_1}{\Barg_2}$ and $\upsegm{\lambda_2}{\Barg_4}$ 
\begin{window}[0,r,{\mbox{\epsfig{file=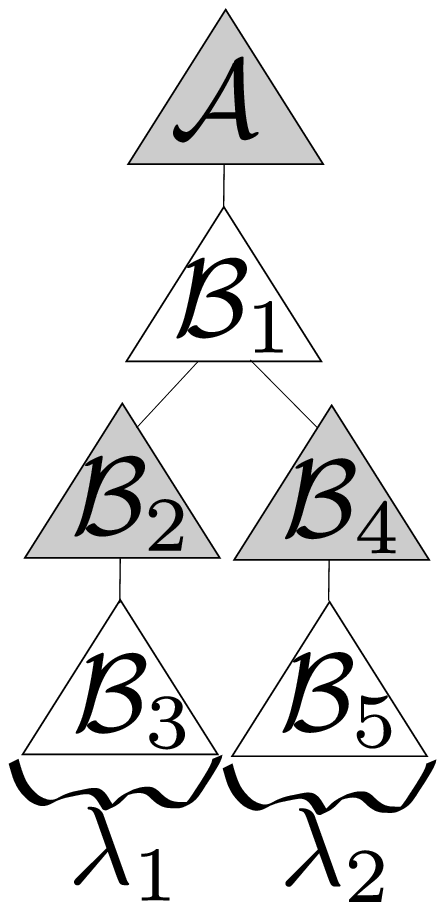,scale=.32}}},{}]
\noindent are equal $[\Arg,\Barg_1]$. Argument $\Barg_1$ is the adjacency point between $\lambda_1$ and $\lambda_2$.
Regarding Ex.~\ref{ex:tree}, the three Lines in the tree are adjacent at the root argument \Arg. 
From Ex.~\ref{ex.bundle}, four lines appear: $\lambda_1=[\Arg,\Brg_1,\Brg_2]$, $\lambda_2=[\Arg,\Brg_1,\Brg_3]$, $\lambda_3=[\Arg,\Brg_1,\Brg_5]$, and $\lambda_4=[\Arg,\Brg_4,\Brg_5,\Brg_1,\Brg_3]$. Note that line $\lambda_4$ is adjacent to lines $\lambda_1$, $\lambda_2$, and $\lambda_3$, at the root argument \Arg. However, lines $\lambda_1$, $\lambda_2$, and $\lambda_3$, are pairwise adjacent at argument $\Brg_1$.
\end{window}
\end{Example}

\begin{Lemma}\label{lemma.D-rep.alternating}
\ojo{Every D-rep line has an adjacent alternating line whose adjacency point is the head of its uppermost D-rep sequence.}
\end{Lemma}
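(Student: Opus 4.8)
The plan is to produce the required alternating line explicitly, as a branch of the tree that splits off from the given D-rep line precisely at the head of its uppermost D-rep sequence, and then to read off its marking sequence.

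Let $\dtree{\Arg}{\PP}$ be the non-warranting tree under consideration, let $\lambda\in\dtree{\Arg}{\PP}$ be a D-rep line, and let $\Brg_h$ be the head of its uppermost D-rep sequence. First I would gather the facts about $\Brg_h$ supplied by Def.~\ref{def.d-rep.sequence}: $\Brg_h$ is marked \Dnode; its successor in $\lambda$, say $\Brg_{h+1}$, is again marked \Dnode\ (a D-rep sequence comprises at least two \Dnode-marked arguments, $\Brg_h$ being its first); if $\Brg_h$ has a parent in $\lambda$, that parent is marked \Unode, and otherwise $\Brg_h=\Arg$; and, because the sequence is the \emph{uppermost} D-rep sequence, the part of $\lambda$ strictly above $\Brg_h$, i.e.\ $\upsegm{\lambda}{\Brg_h}$, contains no two consecutive \Dnode-marked arguments. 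Since $\Brg_h$ is an inner node marked \Dnode, the marking criterion of Sect.~\ref{DeLP} guarantees a child $\Crg_0$ of $\Brg_h$ in $\dtree{\Arg}{\PP}$ that is marked \Unode; and since $\Brg_{h+1}$ is marked \Dnode\ we have $\Crg_0\neq\Brg_{h+1}$. Then I would build the line $\mu$: take the upper segment $\upsegmeq{}{\Brg_h}$ of $\lambda$, append $\Crg_0$, and extend greedily inside the tree -- from a \Dnode-marked node append some \Unode-marked child (one exists by the marking criterion), and from a \Unode-marked node stop if it is a leaf and otherwise append some child (necessarily marked \Dnode). As argumentation lines are finite this halts at a leaf of $\dtree{\Arg}{\PP}$; being a full root-to-leaf path of the acceptable tree, $\mu$ is an acceptable and exhaustive argumentation line belonging to $\dtree{\Arg}{\PP}$ (Defs.~\ref{def.dTree},~\ref{def.bundle} and~\ref{def.tree}).

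It then remains to verify that $\mu$ is alternating and adjacent to $\lambda$ at $\Brg_h$. For the marking: the initial portion of $\mu$ coinciding with $\upsegm{\lambda}{\Brg_h}$ begins with $\Arg$, which is marked \Dnode\ because the tree is non-warranting, contains no two consecutive \Dnode s by the uppermost property and no two consecutive \Unode s by Prop.~\ref{prop:U-rep}.1, and -- when non-empty -- ends at the \Unode-marked parent of $\Brg_h$; hence its marking sequence is $(DU)^k$ for some $k\geq 0$, with $k=0$ exactly when $\Brg_h=\Arg$. Appending $\Brg_h$ adds a \Dnode, and the greedily built continuation $\Crg_0,\Crg_1,\ldots$ contributes a perfect alternation $\Unode,\Dnode,\ldots,\Unode$ ending in \Unode\ at the leaf (Prop.~\ref{prop:U-rep}.3); so $\markline(\mu,\dtree{\Arg}{\PP})=(DU)^m$ for some $m\geq 1$, and $\mu$ is an alternating line by Def.~\ref{def.d-rep.alternating.lines}. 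For adjacency: by construction $\lambda$ and $\mu$ share the upper segment $\upsegmeq{}{\Brg_h}=[\Arg,\ldots,\Brg_h]$ and diverge immediately afterwards, the successor of $\Brg_h$ being $\Brg_{h+1}$ in $\lambda$ but $\Crg_0\neq\Brg_{h+1}$ in $\mu$; hence $\upsegm{\lambda}{\Brg_{h+1}}=\upsegm{\mu}{\Crg_0}=[\Arg,\ldots,\Brg_h]$ with $\Brg_{h+1}\neq\Crg_0$, which by Def.~\ref{def:adjacent} means $\lambda$ and $\mu$ are adjacent with adjacency point $\Brg_h$, the head of the uppermost D-rep sequence, as required.

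The step I expect to be the main obstacle is the middle one -- arguing that the greedily assembled sequence is genuinely a line \emph{of} $\dtree{\Arg}{\PP}$ (equivalently, an exhaustive acceptable line of the bundle set determining it), which rests on the correspondence between root-to-leaf paths of an acceptable dialectical tree and the lines that determine it; the rest is essentially the bookkeeping that the portion above $\Brg_h$ reads as $(DU)^k$, for which the ``uppermost'' hypothesis together with Prop.~\ref{prop:U-rep} are exactly the ingredients needed.
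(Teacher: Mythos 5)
Your proof is correct, and it takes a genuinely different (more constructive) route than the paper's. The paper proceeds recursively over lines: it shows every \Dnode\ in the uppermost D-rep sequence must be an adjacency point with another line supplying a \Unode-marked child, notes that the line branching off at the head may itself be D-rep further down, re-applies the same reasoning to that line, and uses finiteness of the set of lines to conclude the chase ends in an alternating line still adjacent to the original one at the head. You instead build the witness in a single pass: branch off at the head through a \Unode-marked child (which exists since the head is an inner \Dnode\ node, and differs from its \Dnode-marked successor) and extend greedily, always taking a \Unode-marked child of a \Dnode\ node, so no \Dnode-repetition can arise and the recursion over adjacent D-rep lines disappears; the ``uppermost'' hypothesis plus Prop.~\ref{prop:U-rep} give the $(DU)^k$ prefix exactly as you say. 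The assumption you flag as the main obstacle --- that a maximal root-to-leaf descent is itself one of the lines determining the tree --- is used implicitly by the paper too, when it asserts that the \Unode-marked children lie on adjacent lines of the tree, so you are not assuming more than the paper's own level of rigor; likewise both proofs adopt the intended reading of Def.~\ref{def.d-rep.sequence} (a D-rep sequence has at least two consecutive \Dnode s). Your version buys a direct construction and a simpler termination argument (finite depth rather than finitely many lines); the paper's buys not having to argue that an assembled path is a genuine line of the tree.
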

\begin{proof}
Consider a \delp\ \PP, a non-warranting tree $\dtree{\Arg}{\PP}\in\allTrees{\PP}$, and a D-rep line $\lambda=[\Arg,\ldots,\Brg_1,\Brg_2,\ldots,\Brg_{k-1},\Brg_k,\ldots]$. For the marking sequence $\markline(\lambda,\dtree{\Arg}{\PP})$, let us assume, without loss of generality, that for every $1\leq i < k$, arguments $\Brg_i$ are marked as \Dnode, $\Brg_k$ as \Unode, and from the root argument \Arg down to $\Brg_1$ (both marked as \Dnode), we have a perfect alternation of \Dnode s and \Unode s (see Prop.~\ref{prop.mark.non-warranting.argline}). In the rest of the proof, we will refer to the sequences of \Dnode s such as $\Brg_1,\ldots,\Brg_{k-1}$ as a \textit{D-rep sequence} \ojo{(see Def.~\ref{def.d-rep.sequence})}. In a D-rep sequence, the last \Dnode\ argument ($\Brg_{k-1}$) is defeated because it has a child ($\Brg_k$) that is undefeated. On the other hand, for its parent ($\Brg_{k-2}$) the situation is different: since $\Brg_{k-2}$ is also marked as \Dnode\ the only option we have for $\Brg_{k-2}$ is to be the adjacency point with another line $\lambda'\in\dtree{\Arg}{\PP}$ such that $\upsegm{\lambda}{\Brg_{k-1}}=\upsegm{\lambda'}{\Brg'}$ and the mark of $\Brg'\in\lambda'$ is \Unode. Hence, since $\Brg'$ is a child of $\Brg_{k-2}$ in \dtree{\Arg}{\PP}, the adjacency point $\Brg_{k-2}$ ends up marked as \Dnode. 


The same reasoning follows for each one of the $\Brg_i$ arguments ($1\leq i < k$), \ie for each argument in the D-rep sequence. In particular, when considering the uppermost \Dnode\ in the uppermost D-rep sequence (\ie $\Brg_1$) since its child $\Brg_2$ is marked as \Dnode, we necessarily have an adjacent line $\lambda''\in\dtree{\Arg}{\PP}$ turning the adjacency point $\Brg_1$ to \Dnode. There is an argument $\Brg''\in\lambda''$ marked as \Unode\ which is a child of $\Brg_1$ in \dtree{\Arg}{\PP}. Note that the marking sequence of $\lambda''$ in its upper segment $\upsegmeqPP{\lambda''}{\Brg''}$ has a perfect alternation of \Dnode s and \Unode s. 

Below $\Brg''$, $\lambda''$'s marking sequence may contain a D-rep sequence, but in that case $\lambda''$ would be a D-rep and by following the same reasoning we necessarily have adjacent lines with adjacency points in each one of the \Dnode s of $\lambda''$'s D-rep sequence. Since dialectical trees have a finite number of argumentation lines, this process necessarily ends with a line that is not D-rep. Thus, from Def.~\ref{def.d-rep.alternating.lines} and Prop.~\ref{prop.mark.non-warranting.argline}, an alternating line $\lambda^n\in\dtree{\Arg}{\PP}$ appears,  adjacent to $\lambda''$ below $\Brg''$ with adjacency point at the first \Dnode\ of the D-rep sequence. But note that $\lambda^n$ is also adjacent to $\lambda$ at $\Brg_1$. 
Hence, for every D-rep line there is an adjacent line with adjacency point \Dnode\ (the first \Dnode\ corresponding to the head of the uppermost D-rep sequence) that is an alternating line.
\end{proof}

\ojo{Observe that reversing the Lemma~\ref{lemma.D-rep.alternating} does not hold since, for instance, an even-length single-line tree has one alternating line and no D-rep.}

\begin{Lemma}\label{lemma.adj.point.pro}
\ojo{The adjacency point between an alternating and a D-rep line is a pro argument.}
\end{Lemma}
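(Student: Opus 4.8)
The plan is to turn the statement into a parity statement about positions in the argumentation line, exploiting the marking-sequence characterisation of D-rep lines. By Lemma~\ref{lemma.D-rep.alternating} the adjacency point at issue is the head $\Brg$ of the uppermost D-rep sequence of the D-rep line $\lambda=[\Arg,\ldots,\Brg,\ldots]$; since $\Brg$ belongs to the common upper segment shared with the adjacent alternating line, its position in the line — hence whether it is pro or con — is the same in both lines, so it suffices to locate $\Brg$ inside $\lambda$ and show that it sits at an odd position, recalling that pro arguments are exactly those on the odd positions of the line.

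Concretely I would proceed as follows. First, since $\dtree{\Arg}{\PP}$ is non-warranting, the marking criterion of Sect.~\ref{DeLP} gives that $\Arg$ is marked $\Dnode$; and $\Brg$, being the head of a D-rep sequence, is marked $\Dnode$ as well (Def.~\ref{def.d-rep.sequence}). Second, I would show that the initial fragment of $\markline(\lambda,\dtree{\Arg}{\PP})$ running from $\Arg$ down to $\Brg$ is a perfect alternation of $\Dnode$s and $\Unode$s: there is no $\Unode\Unode$ anywhere (Prop.~\ref{prop:U-rep}.1), and there is no $\Dnode\Dnode$ strictly above $\Brg$, for a maximal run of consecutive $\Dnode$s is precisely a D-rep sequence (Def.~\ref{def.d-rep.sequence}) and its head would then be an argument closer to the root than $\Brg$, contradicting that $\Brg$ heads the \emph{uppermost} D-rep sequence. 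Reading the alternation from position $1$ ($\Arg$, marked $\Dnode$) therefore yields the pattern $\Dnode,\Unode,\Dnode,\Unode,\ldots$, so $\Dnode$ occupies exactly the odd positions of that fragment. Third, since $\Brg$ is marked $\Dnode$ and lies inside that fragment, it is at an odd position, i.e. it is a pro argument. I would also dispatch the degenerate case $\Brg=\Arg$ directly — then $\Brg$ is at position $1$, hence pro — while in the remaining case the parent of $\Brg$ in $\lambda$ is marked $\Unode$ (Def.~\ref{def.d-rep.sequence}), i.e. at an even position, which again forces $\Brg$ to an odd one and keeps the parity argument consistent.

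The step I expect to be the main obstacle is the claim that no $\Dnode\Dnode$ occurs strictly above $\Brg$: it rests on reading Def.~\ref{def.d-rep.sequence} as singling out the \emph{maximal} runs of $\Dnode$s (delimited by $\Unode$s or by the endpoints of the line) and on the precise meaning of ``uppermost'', so that any earlier $\Dnode\Dnode$ would display a D-rep sequence nearer the root than $\Brg$. Once that is settled, the rest is routine parity bookkeeping on the marking sequence together with Prop.~\ref{prop:U-rep}.
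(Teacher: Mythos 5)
Your proof is correct, but it reaches the conclusion by a different route than the paper. Both arguments start from Lemma~\ref{lemma.D-rep.alternating}: the adjacency point \Brg\ is the head of the uppermost D-rep sequence and is therefore marked \Dnode. The paper then finishes in one line by looking at the \emph{alternating} line: since \Brg\ lies on it and, by Def.~\ref{def.d-rep.alternating.lines}, the marking of an alternating line is $(\Dnode\Unode)^+$, only pro (odd-position) arguments carry the mark \Dnode\ there, so \Brg\ is pro. You instead never use the alternating line's marking and do all the work inside the D-rep line $\lambda$: you prove that the prefix of $\markline(\lambda,\dtree{\Arg}{\PP})$ from the root down to \Brg\ is a strict alternation starting with \Dnode\ (no $\Unode\Unode$ by Prop.~\ref{prop:U-rep}.1, no $\Dnode\Dnode$ above \Brg\ because a maximal run of $\Dnode$s there would be a D-rep sequence, per Def.~\ref{def.d-rep.sequence}, closer to the root than the uppermost one), and then conclude by parity that \Brg\ sits at an odd position. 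The step you flagged as delicate does go through: any $\Dnode\Dnode$ strictly above \Brg\ lies in a maximal run delimited by $\Unode$s (or the root), since \Brg's parent is marked \Unode, and such a run satisfies Def.~\ref{def.d-rep.sequence}, contradicting uppermost-ness; this is in fact exactly the ``perfect alternation above the uppermost D-rep sequence'' that the paper's proof of Lemma~\ref{lemma.D-rep.alternating} assumes without loss of generality, so your reading of Def.~\ref{def.d-rep.sequence} matches the paper's. What the two approaches buy: the paper's argument is shorter because the alternation it needs is definitional for alternating lines, while yours is self-contained in the D-rep line (it only uses adjacency to transfer the position of \Brg\ between the two lines) at the price of re-deriving that alternation in the prefix.
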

\begin{proof}
From Lemma~\ref{lemma.D-rep.alternating} we know the adjacency point between an alternating and a D-rep line is the head of the D-rep sequence. This means that the adjacency point is marked as \Dnode. Afterwards, it is easy to see that the adjacency point is a pro argument given that according to Def.~\ref{def.d-rep.alternating.lines}, only pro arguments are marked as \Dnode\ in the alternating lines.
\end{proof}

\begin{Theorem}\label{theorem:warrant}
Given the dialectical tree $\dtree{\Arg}{\PP}\in\allTrees{\PP}$, \dtree{\Arg}{\PP} is warranting \ifff there is no alternating line $\lambda\in\dtree{\Arg}{\PP}$.
\end{Theorem}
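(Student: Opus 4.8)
The plan is to establish the two implications separately. The ``only if'' direction will follow directly from the marking characterization of warranting lines (Prop.~\ref{prop.mark.argline}) by observing that the regular expression governing warranting lines and the one governing alternating lines (Def.~\ref{def.d-rep.alternating.lines}) are disjoint. The ``if'' direction I would prove by contraposition, using the completeness of the D-rep/alternating classification of non-warranting lines (Prop.~\ref{prop.mark.non-warranting.argline}) together with Lemma~\ref{lemma.D-rep.alternating}, which guarantees that a D-rep line always sits next to an alternating one.

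For the ``only if'' direction, I would assume $\dtree{\Arg}{\PP}$ is warranting. By Prop.~\ref{prop.mark.argline}, every $\lambda\in\dtree{\Arg}{\PP}$ is a warranting line, so $\markline(\lambda,\dtree{\Arg}{\PP})$ conforms to $U(D^+U)^*$ and in particular begins with \Unode. By Def.~\ref{def.d-rep.alternating.lines}, an alternating line's marking sequence conforms to $(DU)^+$ and hence begins with \Dnode. Since no sequence can begin with both \Unode\ and \Dnode, no line of $\dtree{\Arg}{\PP}$ is alternating.

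For the ``if'' direction I would argue the contrapositive: if $\dtree{\Arg}{\PP}$ is not warranting, then it contains an alternating line. The tree, being rooted in \Arg, is non-empty, hence contains at least one argumentation line $\lambda$. By Prop.~\ref{prop.mark.non-warranting.argline}, $\markline(\lambda,\dtree{\Arg}{\PP})$ conforms either to an alternating line or to a D-rep line. In the first case we are done. In the second case, Lemma~\ref{lemma.D-rep.alternating} yields a line $\lambda'\in\dtree{\Arg}{\PP}$, adjacent to $\lambda$ at the head of its uppermost D-rep sequence, whose marking sequence is alternating; so again $\dtree{\Arg}{\PP}$ contains an alternating line. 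Combining both directions gives the equivalence. The argument is short because the real work is already contained in Prop.~\ref{prop.mark.non-warranting.argline} and Lemma~\ref{lemma.D-rep.alternating}; the only points requiring care are the disjointness of the two regular expressions and the (immediate) remark that the alternating line supplied by Lemma~\ref{lemma.D-rep.alternating} genuinely belongs to the same tree $\dtree{\Arg}{\PP}$, so I expect no substantial obstacle.
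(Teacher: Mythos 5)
Your proposal is correct and follows essentially the same route as the paper: the forward direction rests on Prop.~\ref{prop.mark.argline} (plus the observation that the warranting and alternating marking patterns are incompatible), and the converse is obtained, in contrapositive form, from the completeness of the D-rep/alternating classification (Prop.~\ref{prop.mark.non-warranting.argline}) together with Lemma~\ref{lemma.D-rep.alternating}. The only difference is organizational: the paper's proof restates the second implication as a \emph{reductio} on the root's mark, whereas you state it directly as a contrapositive, which is a cleaner packaging of the same argument.
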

\begin{proof}
\ojo{$\Rightarrow$) According to Prop.~\ref{prop.mark.argline}, a warranting tree has no alternating line, which is a kind of non-warranting line (see Def.~\ref{def.w-lines} and Def.~\ref{def.d-rep.alternating.lines}). On the other hand, if the tree is non-warranting, only two different kinds of line may appear (see Prop.~\ref{prop.mark.non-warranting.argline}): alternating and D-rep. If we have some alternating line then we are done. On the other hand, if we assume to have some D-rep, according to Lemma~\ref{lemma.D-rep.alternating}, we necessarily have an alternating line to which the D-rep is adjacent.}


\noindent$\Leftarrow$) Assuming we have no alternating lines, the root argument is marked either as \Dnode\ or \Unode. For the latter case, it is clear that the tree is warranting. For the former, the tree is non-warranting. Thus, from Def.~\ref{def.d-rep.alternating.lines} and Prop.~\ref{prop.mark.non-warranting.argline}, if no alternating lines exist, only D-rep lines could appear. However, this is not possible, from Lemma~\ref{lemma.D-rep.alternating}. Hence, we reach an absurdity by assuming possible to have a non-warranting tree without alternating lines. On the other hand, if we have at least one alternating line, then the root argument is marked as \Dnode, and the tree is non-warranting.
\end{proof}

From Theorem~\ref{theorem:warrant}, we can finally ensure that alternating lines are the ones threatening the warrant status of dialectical trees. Thus, from now on, alternating lines will be referred to as \emph{attacking lines}.

\begin{Definition}[Attacking Line]\label{def:attackingLine}
Given a \delp\ \PP and an argumentation line $\lambda$ in $\dtree{\Arg}{\PP}\in\allTrees{\PP}$, for any argument $\Arg\in\ARGS$; $\lambda$ is an \textbf{attacking line} in \dtree{\Arg}{\PP} \ifff $\markline(\lambda,\dtree{\Arg}{\PP})$ corresponds to an alternating line.
\end{Definition}

\begin{Corollary}\label{corollary.warrant}
Given the dialectical tree $\dtree{\Arg}{\PP}\in\allTrees{\PP}$, \dtree{\Arg}{\PP} is warranting \ifff there is no attacking line $\lambda\in\dtree{\Arg}{\PP}$.
\end{Corollary}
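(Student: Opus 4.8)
The plan is to observe that this Corollary is an immediate restatement of Theorem~\ref{theorem:warrant} under the terminological identification established by Definition~\ref{def:attackingLine}. No new argument is needed; the work is purely a substitution of vocabulary.

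First I would recall Theorem~\ref{theorem:warrant}, which states that $\dtree{\Arg}{\PP}$ is warranting \ifff there is no alternating line $\lambda\in\dtree{\Arg}{\PP}$. Next I would invoke Definition~\ref{def:attackingLine}, which declares that a line $\lambda$ in $\dtree{\Arg}{\PP}$ is an attacking line precisely when $\markline(\lambda,\dtree{\Arg}{\PP})$ corresponds to an alternating line; in other words, ``attacking line'' and ``alternating line'' denote exactly the same class of lines. Substituting this equivalence into the statement of Theorem~\ref{theorem:warrant} yields directly: $\dtree{\Arg}{\PP}$ is warranting \ifff there is no attacking line $\lambda\in\dtree{\Arg}{\PP}$, which is the claim.

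There is no real obstacle here, since all the substantive content — the characterization of warranting trees via the absence of perfectly-alternating marking sequences, and the reduction of D-rep lines to adjacent alternating lines (Lemma~\ref{lemma.D-rep.alternating}) — has already been carried out in proving Theorem~\ref{theorem:warrant}. The Corollary merely records the result in the final, streamlined terminology that will be used in the remainder of the paper. Accordingly I would keep the proof to a single sentence noting that it follows from Theorem~\ref{theorem:warrant} together with Def.~\ref{def:attackingLine}.

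\begin{proof}
Immediate from Theorem~\ref{theorem:warrant} and Def.~\ref{def:attackingLine}: by the latter, a line in $\dtree{\Arg}{\PP}$ is an attacking line \ifff it is an alternating line, so the statement is just a rephrasing of Theorem~\ref{theorem:warrant}.
\end{proof}
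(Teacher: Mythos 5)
Your proposal is correct and matches the paper's treatment: the paper states this Corollary without proof, precisely because Definition~\ref{def:attackingLine} makes ``attacking line'' synonymous with ``alternating line'', so the statement is just Theorem~\ref{theorem:warrant} rephrased. Your one-sentence proof citing the theorem and the definition is exactly the intended justification.
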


\begin{Example}
Consider the dialectical tree in Ex.~\ref{ex:tree} with lines $\lambda_1=[\Arg,\Barg_1]$, $\lambda_2=[\Arg,\Barg_2,\Barg_3,\Barg_4]$, and $\lambda_3=[\Arg,\Barg_5,\Barg_6]$. Two  attacking lines appear: $\lambda_1$ and $\lambda_2$. Note that the tree containing only line $\lambda_3$ warrants \Arg, since its marking sequence would be $[U,D,U]$. However, by considering trees with either $\lambda_1$ or $\lambda_2$ (or both), none would end up warranting.
\end{Example}

Clearly, this definition for an attacking line is totally dependent on the marking criterion adopted. An intuitive notion of attacking lines may be given through a minimal set $\Lambda\subseteq X$ of argumentation lines in a dialectical tree (built with lines from $X$) such that a new dialectical tree built with lines from $X\setminus\Lambda$ ends up warranting. However, when analyzing the warrant status of a tree, we necessarily need to apply the marking procedure, and thus the recognition of attacking lines would again depend on their marking sequences. 

Note that the specific definition of attacking lines in \DLP\ is quite natural: an argumentation line is attacking if its pro (resp., con) arguments are defeated (resp., undefeated). Recall that pro (resp., con) arguments are in favor (resp., against) of the main issue being disputed: the root argument. Thus, such a line provides a reasoning chain which is entirely against the root argument, and therefore, it is sensible to consider the dialectical tree to be non-warranting. 

We pursue a theory that recognizes the changes to be applied to a \delp\ \PP in order to turn a non-warranting tree \dtree{\Arg}{\PP} into warranting. The study of argumentation lines along with their marking sequences, and the notion of attacking lines, aids the definition of our argumentative model of change. These models handle the dynamics of argumentative knowledge through the variation of the set of available arguments. In ATC, dynamics in the argumentation theory is handled through the \textit{alteration} of some argumentation lines. Alterations could be carried out in a variety of ways: a kind of alteration of a line $\lambda$ is the removal (deactivation) of an argument from $\lambda$, whereas a more complex choice is to add (activate) a defeater to some argument in $\lambda$. In this article, we assume only the former alternative.

The main intuition behind the \emph{deactivating} method for argument revision is to alter each attacking line by deactivating an argument in it. When this alteration turns the line to non-attacking, we refer to it as an \textit{effective alteration}. As was shown in Corollary~\ref{corollary.warrant}, a dialectical tree free of attacking lines is a warranting tree. Hence, after the revision, no argumentation lines will threaten the warrant of the root argument since no attacking lines will be left unaltered.



\begin{Definition}[Line Alteration]\label{def.line.alt}
Given a \delp\ $\PP$, a set $\Gamma\subseteq\PP$, and a line $\lambda\in\exLines{\PP}$; the removal $\PP\setminus\Gamma$ provokes the \textbf{alteration} of $\lambda$ on \Brg\ \ifff there is some $\Gamma'\subseteq\Gamma$ such that $\Gamma'\subseteq\Brg$, with $\Brg\in\lambda$, and for every $\Crg\in\upsegm{\lambda}{\Brg}$, it holds $\Gamma\cap\Crg=\emptyset$.
\end{Definition}

\ojo{The definition for a line alteration individualizes the argument \Brg\ that will disappear once rules in $\Gamma$ are removed. This argument is such that no other argument in \Brg's upper segment disappears. Hence, the proper upper segment of \Brg\ would be the resulting argumentation line from such an alteration. The following example illustrates the notion of \textit{line alteration} presented in Def.~\ref{def.line.alt}.}

\begin{Example}\label{ex.line.alteration}\ 
\begin{window}[3,r,{\mbox{\epsfig{file=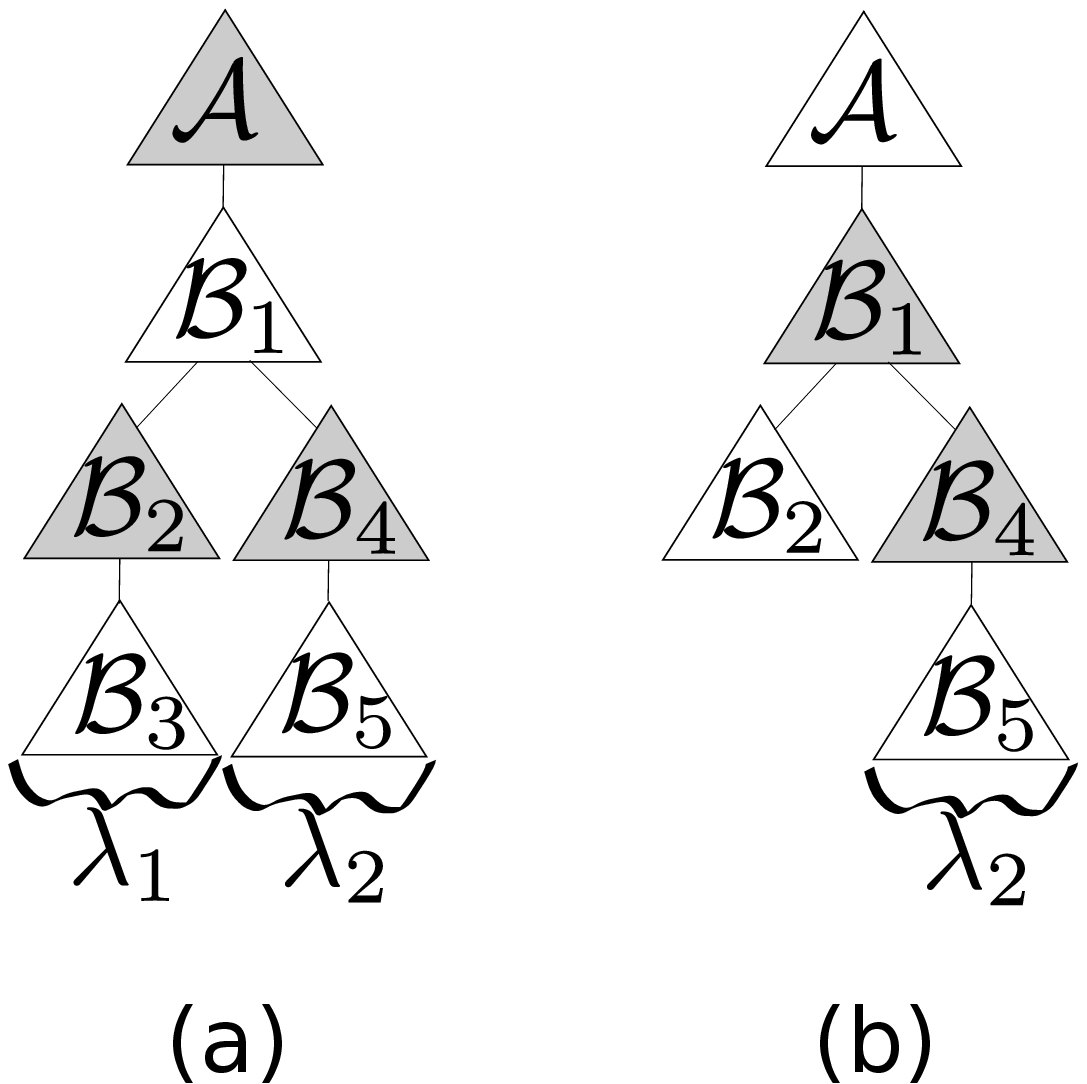,scale=.32}}},{}]
\ojo{Let us assume that the dialectical tree $\dtree{\Arg}{\PP}\in\accTrees{\PP}$, depicted on the right in fig.~(a), is built from a given \delp\ \PP. If we consider the \delp\ $\PP'$ resulting from the removal $\PP\setminus\Gamma$, for some non-empty $\Gamma\subseteq\Brg_3$; we obtain the dialectical tree $\dtree{\Arg}{\PP'}\in\accTrees{\PP'}$ depicted in fig.~(b). It is important to note that the removal $\PP\setminus\Gamma$ provokes the alteration of line $\lambda_1$ yielding the altered line $\upsegm{\lambda_1}{\Barg_3}=[\Arg,\Brg_1,\Brg_2]$ which is exhaustive in $\PP'$ but not in $\PP$.\\
\indent As a different example, if we consider $\PP'=\PP\setminus\Gamma$, where $\Gamma\subseteq\Brg_2$; the resulting dialectical tree will only contain a single argumentation line: $\lambda_2$. This is so, given that $\upsegm{\lambda_1}{\Barg_2}=[\Arg,\Brg_1]$ is not exhaustive in $\PP'$ since it is an upper segment of $\lambda_2$, \ie $\upsegm{\lambda_1}{\Barg_2}=\upsegm{\lambda_2}{\Barg_4}$.}
\end{window}
\end{Example}

Given a \delp\ $\PP$ and a line $\lambda\in\exLines{\PP}$, if $\Gamma=\emptyset$, then according to Def.~\ref{def.line.alt}, $\lambda$ might be considered \textit{altered} on any of its arguments from $\PP\setminus\Gamma$. Observe however that $\lambda$ does not modify its configuration. 
\ojo{Let us see the particular case, according to Def.~\ref{def.line.alt}, with more detail: if $\Gamma=\emptyset$ then $\PP\setminus\Gamma$ (\ie \PP) provokes the \textit{alteration} of $\lambda$ on \Brg\ \ifff there is some $\Gamma'\subseteq\Gamma$ (\ie $\Gamma'=\emptyset$) such that $\Gamma'\subseteq\Brg$, with $\Brg\in\lambda$, and for every $\Crg\in\upsegm{\lambda}{\Brg}$, it holds $\Gamma\cap\Crg=\emptyset$ (given that $\Gamma=\emptyset$). }
This kind of line alterations (in which $\Gamma = \emptyset$) will be allowed just for theoretical matters, and will be referred to as \textit{null line alterations}. 


\ojo{Recall that the objective of altering lines is to turn (potential) attacking lines into non-attacking. When this happens, the alteration is referred to as \emph{effective}. However, further considerations have to be taken into account for an alteration to be considered effective.}


\begin{Definition}[Effective Alteration]\label{def.eff.alt}
Given two \delp s \PP\ and $\PP'$, and two lines $\lambda\in\exLines{\PP}$ and $\lambda'\in\exLines{\PP'}$ such that $\lambda'$ is the line alteration of $\lambda$ on an argument \Brg; $\lambda$ is \textbf{effectively altered} on \Brg\ \ifff $\lambda'=\upsegm{\lambda}{\Brg}$ and if $\lambda'\in\dtree{\Arg}{\PP'}$, with $\dtree{\Arg}{\PP'}\in\accTrees{\PP'}$, then  $\lambda'$ is not attacking in $\dtree{\Arg}{\PP'}$.
\end{Definition}

\ojo{This definition takes a line alteration $\lambda'$ and provides the necessary conditions for it to be effective. Checking that $\lambda'$ is not an attacking line in the resulting \delp\ depends on $\lambda'$ being an argumentation line in said program. This ensures that $\lambda'$ is exhaustive, \ie that it is not totally included in another argumentation line. Whenever this happens, however, an alteration can still be effective, and ATC would take care of the argumentation line including $\lambda'$.}

An interesting question remains: given an attacking line, which is the right position in it to perform an effective alteration? Observe that every attacking line ends with a con argument. This implies that an argumentation line ending with a pro argument could never be an attacking line. However, not every line ending with a con argument is an attacking line. Finally, the removal of a con argument in an attacking line turns it into non-attacking, and the removal of a pro argument in such a line yields an upper segment that is an attacking line. That is, the only way to effectively alter an attacking line is by the removal of a con argument; removing a pro would potentially augment the threat to the root. These intuitions are formalized through the following propositions. Finally, Lemma~\ref{lemma.eff.alt} shows that any alteration necessarily needs to be applied over a con argument in order to be effective.

\begin{Proposition}\label{prop.attacking.length.even}
If $\lambda$ is an attacking line in $\dtree{\Arg}{\PP}\in\allTrees{\PP}$ then $\lambda$ has even length.
\end{Proposition}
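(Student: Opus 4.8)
The plan is to reduce the claim, essentially immediately, to the regular-expression characterization of alternating lines already on record. First I would unfold the definition of an attacking line: by Definition~\ref{def:attackingLine}, $\lambda$ being an attacking line in $\dtree{\Arg}{\PP}$ means precisely that its marking sequence $\markline(\lambda,\dtree{\Arg}{\PP})$ corresponds to an \emph{alternating} line. Then, by Definition~\ref{def.d-rep.alternating.lines} (whose soundness with respect to the \DLP\ marking criterion is guaranteed by Proposition~\ref{prop.mark.non-warranting.argline}), the marking sequence of an alternating line conforms to the regular expression $(\Dnode\Unode)^+$.

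The second step is the only arithmetical observation: any string in the language of $(\Dnode\Unode)^+$ is a concatenation of $k\geq 1$ copies of the two-symbol block $\Dnode\Unode$, and hence has length $2k$, which is even. So $\markline(\lambda,\dtree{\Arg}{\PP})$ has even length.

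The third step connects the length of the marking sequence back to the length of $\lambda$ itself. By Definition~\ref{def:markingSeq}, the marking function sends a line $\lambda=[\Brg_1,\dots,\Brg_n]$ to a sequence $[m_1,\dots,m_n]\in\{U,D\}^n$, with $m_i$ the mark of $\Brg_i$; in particular the marking sequence of an $n$-argument line has exactly $n$ entries. Combining with the previous step, $n=2k$ for some $k\geq 1$, i.e. $\lambda$ has even length, which is the claim.

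I do not anticipate a genuine obstacle here: the whole argument is a three-line chain of definitions plus the trivial remark that strings matching $(\Dnode\Unode)^+$ have even length. The only point worth stating carefully — so that the reader does not have to re-derive it — is the one-to-one correspondence between the entries of the marking sequence and the arguments of the line, which is immediate from Definition~\ref{def:markingSeq}. (As a sanity check consistent with Proposition~\ref{prop:U-rep} and Corollary~\ref{corollary.warrant}, note that a marking sequence always ends in \Unode\ and, for an attacking line, starts with \Dnode, so the minimal case is the two-argument line with sequence $[\Dnode,\Unode]$, already of even length.)
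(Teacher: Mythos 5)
Your proof is correct. It differs in mechanics from the paper's: the paper argues by \emph{reductio ad absurdum}, assuming $\lambda$ has odd length, observing that its leaf would then be a pro argument, invoking Proposition~\ref{prop:U-rep}.3 to mark that leaf \Unode, and contradicting the fact that an attacking line has all its pro arguments marked \Dnode\ (Definition~\ref{def:attackingLine}). You instead argue directly: unfold ``attacking'' to ``alternating'', read off the regular expression $(\Dnode\Unode)^+$ from Definition~\ref{def.d-rep.alternating.lines}, note that any string in that language has even length, and transfer this to $\lambda$ via the fact (immediate from Definition~\ref{def:markingSeq}) that the marking sequence has exactly as many entries as the line has arguments. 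Both proofs rest on the same marking characterization; yours is a positive counting argument that makes the parity visible at once and dispenses with the contradiction, while the paper's version only needs the two facts ``leaves are \Unode'' and ``pro arguments of attacking lines are \Dnode'' rather than the full regex form. Either route is sound and of comparable length.
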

\begin{proof}
By \textit{reductio ad absurdum}, if we assume an attacking line $\lambda$ to be of odd length, its leaf argument is a pro argument. From Prop.~\ref{prop:U-rep}.3, we know that the leaf is marked as \Unode. From Def.~\ref{def:attackingLine}, we know that an attacking line has its pro arguments marked as \Dnode. This means that $\lambda$ is not attacking, contrary to the hypothesis.
\end{proof}

\begin{Corollary}\label{coro.ends.in.pro}
A line ending with a pro argument can never be an attacking line.
\end{Corollary}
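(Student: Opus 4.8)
The plan is to obtain this as an immediate consequence of Proposition~\ref{prop.attacking.length.even} together with the parity convention that fixes pro arguments at odd positions of a line. First I would observe that, by the definition of the set of pro (con) arguments, the leaf of a line ending with a pro argument sits at an odd position; hence such a line has odd length. Proposition~\ref{prop.attacking.length.even} asserts that every attacking line has even length, so a line of odd length cannot be attacking. By contraposition, a line ending with a pro argument is never attacking, which is exactly the statement.

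Equivalently — and this simply unpacks the same reasoning — I would argue by \emph{reductio ad absurdum}: suppose $\lambda$ ends with a pro argument and is an attacking line. By Definition~\ref{def:attackingLine} the marking sequence $\markline(\lambda,\dtree{\Arg}{\PP})$ is alternating, and by Definition~\ref{def.d-rep.alternating.lines} the pro arguments of an alternating line are precisely those marked \Dnode. In particular the leaf of $\lambda$, being a pro argument, would be marked \Dnode. But Proposition~\ref{prop:U-rep}.3 guarantees that the leaf of every argumentation line is marked \Unode, a contradiction. Hence no such $\lambda$ exists.

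I do not anticipate any real obstacle: the corollary is a one-line consequence of an already-established proposition. The only point that needs care is correctly translating ``ends with a pro argument'' into a statement about length, i.e. invoking the convention that pro arguments occupy odd positions so that such a line has odd length; once that is in place, the result follows at once.
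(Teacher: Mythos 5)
Your proposal is correct and matches the paper's route exactly: the corollary is drawn as the contrapositive of Proposition~\ref{prop.attacking.length.even} (attacking lines have even length, while a line ending in a pro argument has odd length since pro arguments occupy odd positions), and your \emph{reductio} version is precisely the argument the paper uses to prove that proposition (leaf marked \Unode\ by Proposition~\ref{prop:U-rep}.3 versus pro arguments marked \Dnode\ in an attacking line). No gaps.
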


\ojo{For the following corollary, observe that the upper segment of a con argument in any line renders a line whose leaf is a pro argument. Thus, according to Corollary~\ref{coro.ends.in.pro} it cannot be attacking.}

\begin{Corollary}\label{corollary.upper.con}
\ojo{The upper segment of a con argument in any line is a non-attacking line. That is, for any $\lambda\in\dtree{\Arg}{\PP}$ and any $\Brg\in\lambda^-$, the upper segment $\upsegm{\lambda}{\Brg}$ is non-attacking.}
\end{Corollary}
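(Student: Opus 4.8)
The plan is to argue directly from the parity of the position occupied by a con argument, reducing the whole statement to Corollary~\ref{coro.ends.in.pro}. First I would fix an argumentation line $\lambda=[\Brg_1,\ldots,\Brg_n]$ in $\dtree{\Arg}{\PP}$ together with a con argument $\Brg\in\lambda^-$. Since con arguments occupy \emph{even} positions, $\Brg=\Brg_i$ for some even index $i$; in particular $i\geq 2$, because $\Brg_1$ sits on an odd position and is therefore a pro argument, so $\Brg\neq\Brg_1$ and the proper upper segment $\upsegm{\lambda}{\Brg}$ is neither $\epsilon$ nor empty.

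Next I would observe, via Definition~\ref{def:upper:segment}, that $\upsegm{\lambda}{\Brg}=[\Brg_1,\ldots,\Brg_{i-1}]$, which coincides with the (non-proper) upper segment $\upsegmeq{}{\Brg_{i-1}}$; hence by Proposition~\ref{prop.uppersegments2lines} it is an acceptable argumentation line, so it is legitimate to ask whether it is attacking in the sense of Definition~\ref{def:attackingLine}. The crux is then a trivial parity count: $\upsegm{\lambda}{\Brg}$ has length $i-1$, which is odd, and its last argument $\Brg_{i-1}$ retains its position (the prefix is unchanged), so it sits on an odd position and is thus a pro argument of $\upsegm{\lambda}{\Brg}$.

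Finally I would invoke Corollary~\ref{coro.ends.in.pro}: a line ending with a pro argument can never be attacking --- equivalently, by Proposition~\ref{prop.attacking.length.even}, attacking lines have even length whereas $\upsegm{\lambda}{\Brg}$ has odd length. Hence $\upsegm{\lambda}{\Brg}$ is non-attacking, and this conclusion is independent of the tree in which the segment is eventually placed, since the length/parity facts used are tree-independent. I do not expect any genuine obstacle; the argument is essentially bookkeeping. The only points requiring a line of care are: making explicit that $\Brg\neq\Brg_1$ (so that the proper upper segment is well defined), and the degenerate case $i=2$, where $\upsegm{\lambda}{\Brg}$ is the singleton line $[\Brg_1]$ whose marking sequence is simply $U$ and is therefore, a fortiori, non-attacking.
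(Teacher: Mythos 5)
Your proposal is correct and follows essentially the same route as the paper, which dispatches the corollary by observing that the upper segment of a con argument ends in a pro argument and then invoking Corollary~\ref{coro.ends.in.pro} (equivalently, the parity fact of Prop.~\ref{prop.attacking.length.even}). Your explicit position/length bookkeeping and the degenerate case $[\Brg_1]$ are just added detail on the same argument.
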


\begin{Proposition}\label{prop.upper.pro}
The upper segment of a pro argument in an attacking line is also an attacking line. That is, if $\lambda\in\dtree{\Arg}{\PP}$ is attacking then for any $\Brg\in\lambda^+$, the upper segment $\upsegm{\lambda}{\Brg}$ keeps being attacking.
\end{Proposition}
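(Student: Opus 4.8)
The plan is to read off the claim from the rigid shape of the marking sequence of an attacking line. By Definition~\ref{def:attackingLine} together with Definition~\ref{def.d-rep.alternating.lines}, saying that $\lambda=[\Brg_1,\dots,\Brg_n]$ is attacking in $\dtree{\Arg}{\PP}$ means that $\markline(\lambda,\dtree{\Arg}{\PP})$ conforms to $(DU)^+$, \ie it is the perfect alternation $[\Dnode,\Unode,\Dnode,\Unode,\dots,\Dnode,\Unode]$. In particular $n$ is even (this is essentially Prop.~\ref{prop.attacking.length.even}), and by the definition of pro and con arguments the arguments of $\lambda^+$ occupy exactly the odd positions and are all marked $\Dnode$, while the arguments of $\lambda^-$ occupy the even positions and are all marked $\Unode$.

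First I would fix $\Brg=\Brg_i\in\lambda^+$, so $i$ is odd. If $i=1$ then $\upsegm{\lambda}{\Brg_1}=\epsilon$ is undefined and there is nothing to prove, so assume $i\geq 3$; then $\upsegm{\lambda}{\Brg_i}=[\Brg_1,\dots,\Brg_{i-1}]$ is an argumentation line (Prop.~\ref{prop.uppersegments2lines}) of even length $i-1$ whose leaf $\Brg_{i-1}$ is a con argument. The key observation is that each $\Brg_j$ ($1\leq j\leq i-1$) carries the same mark inside $\dtree{\Arg}{\PP}$ as it does as an element of $\lambda$, so the marking sequence of $\upsegm{\lambda}{\Brg_i}$ is just the length-$(i-1)$ prefix of $\markline(\lambda,\dtree{\Arg}{\PP})$, namely $[\Dnode,\Unode,\dots,\Dnode,\Unode]$ — which, being an even-length initial segment of $(DU)^+$, is again of the form $(DU)^+$. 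Hence, by Definition~\ref{def.d-rep.alternating.lines}, $\upsegm{\lambda}{\Brg_i}$ is an alternating line, and therefore, by Definition~\ref{def:attackingLine}, it is attacking.

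The step I expect to be the main obstacle is the bookkeeping about which dialectical tree the upper segment should be read against: $\upsegm{\lambda}{\Brg_i}$ need not be exhaustive in $\PP$, so its status as an attacking line must be taken via its marking sequence (consistently with the reading of Def.~\ref{def.w-lines} and Def.~\ref{def.d-rep.alternating.lines}, and with the remark after Def.~\ref{def.eff.alt} that an alteration may still be effective when the resulting segment is absorbed into a longer line). To cover the reading in which $\upsegm{\lambda}{\Brg_i}$ genuinely becomes a line of some new tree once $\Brg_i$ is discarded, I would add the short robustness argument: $\Brg_{i-1}$, being a con argument of the attacking line, is marked $\Unode$, hence by the marking criterion of Sect.~\ref{DeLP} all its children are marked $\Dnode$; deleting the branch through $\Brg_i$ leaves $\Brg_{i-1}$ either with all remaining children marked $\Dnode$ or as a leaf, so it stays $\Unode$, and a bottom-up induction along $[\Brg_1,\dots,\Brg_{i-1}]$ shows every $\Brg_j$ keeps its mark. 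Modulo that detail, the whole argument reduces to the observation that an even-length prefix of $(DU)^+$ is again $(DU)^+$.
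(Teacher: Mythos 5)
Your proof is correct and follows essentially the same route as the paper's: an attacking line has marking sequence $(\Dnode\Unode)^+$, and cutting at a pro argument leaves an even-length prefix that is again $(\Dnode\Unode)^+$, since marks above the cut are unaffected (the paper phrases this as ``removing a \Dnode\ placed below a \Unode\ does not change the latter \Unode\ mark''). Your added bookkeeping — the vacuous case $\Brg=\Brg_1$ where the proper upper segment is $\epsilon$, and the bottom-up check that marks persist when the branch below is discarded — only spells out details the paper leaves implicit.
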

\begin{proof}
Line $\lambda$ being attacking means that its marking sequence corresponds to the regular expression $(\Dnode\Unode)^+$ (see Def.~\ref{def.d-rep.alternating.lines} and Prop.~\ref{prop.mark.non-warranting.argline}). When cutting the line on a pro argument, the resulting upper segment $\upsegm{\lambda}{\Brg}$ still conforms to the regular expression $(\Dnode\Unode)^+$, since removing a \Dnode\ placed below a \Unode\ does not change the latter \Unode\ mark. 
(Note that since $\upsegm{\lambda}{\Brg}$ ends in a con argument, from Prop.~\ref{prop:U-rep}.3, the leaf is marked as \Unode.) Hence, $\upsegm{\lambda}{\Brg}$ keeps being attacking.
\end{proof}



\begin{Lemma}\label{lemma.eff.alt}
\ojo{Given a non-warranting tree $\dtree{\Arg}{\PP}\in\allTrees{\PP}$, a line $\lambda\in\dtree{\Arg}{\PP}$, and an argument $\Brg\in\lambda$; if $\lambda$ is altered on \Brg\ and $\Brg\in\lambda^-$ then $\lambda$ is effectively altered.}
\end{Lemma}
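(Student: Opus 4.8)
The goal is to check, for the line alteration $\lambda'$ of $\lambda$ on $\Brg$ determined by the hypothesised removal $\PP\setminus\Gamma$, the two conjuncts in the definition of effective alteration (Def.~\ref{def.eff.alt}): that $\lambda'=\upsegm{\lambda}{\Brg}$, and that whenever $\lambda'$ is a line of an acceptable dialectical tree $\dtree{\Arg}{\PP'}\in\accTrees{\PP'}$ over $\PP'=\PP\setminus\Gamma$, it is not attacking there. The whole argument will turn out to be a thin wrapper around Corollary~\ref{corollary.upper.con} once the bookkeeping of the first conjunct is done; the parity of $\Brg$ (it is a con argument) is what makes everything go through.

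For the first conjunct I would unwind Def.~\ref{def.line.alt}. Since $\lambda$ is altered on $\Brg$, there is a $\Gamma'\subseteq\Gamma$ with $\emptyset\neq\Gamma'\subseteq\Brg$ (the degenerate null case $\Gamma=\emptyset$ is set aside: it leaves $\lambda$ unchanged, so $\lambda'=\lambda\neq\upsegm{\lambda}{\Brg}$ and the first conjunct already fails), and $\Gamma\cap\Crg=\emptyset$ for every $\Crg\in\upsegm{\lambda}{\Brg}$. Hence, in $\PP'=\PP\setminus\Gamma$, every argument lying strictly above $\Brg$ in $\lambda$ still keeps all of its rules and therefore persists, and so do the defeat relations chaining them (removing rules neither creates arguments nor alters disagreements among surviving ones), whereas $\Brg$ itself can no longer be built because $\Gamma'$ strips part of it away. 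Thus the remnant of $\lambda$ in $\PP'$ is exactly its proper upper segment with respect to $\Brg$, i.e. $\lambda'=\upsegm{\lambda}{\Brg}$, as required --- this is precisely the ``resulting argumentation line'' announced in the discussion following Def.~\ref{def.line.alt}.

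For the second conjunct I would invoke the parity observation. Because $\Brg\in\lambda^-$, the argument $\Brg$ sits at an even position of $\lambda$, so $\lambda'=\upsegm{\lambda}{\Brg}$ is a non-empty initial segment of $\lambda$ ending one step earlier, i.e. at an odd position --- a pro argument. By Corollary~\ref{corollary.upper.con} (equivalently, by Corollary~\ref{coro.ends.in.pro}, since a line ending with a pro argument can never be attacking), $\lambda'$ is non-attacking in every dialectical tree; in particular, if $\lambda'\in\dtree{\Arg}{\PP'}\in\accTrees{\PP'}$ then $\lambda'$ is not attacking there (and if $\lambda'$ happens to be properly contained in some exhaustive line of $\PP'$, and hence is not itself a line of $\dtree{\Arg}{\PP'}$, the conjunct holds vacuously). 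Both conjuncts of Def.~\ref{def.eff.alt} hold, so $\lambda$ is effectively altered. The only genuinely delicate point is the first conjunct --- arguing that the surviving prefix $[\Arg,\dots]$ up to but excluding $\Brg$ is still an acceptable argumentation line over $\PP'$ rather than merely a sequence of surviving arguments --- which rests on the monotone behaviour of arguments and defeats under removal of rules, together with the clause of Def.~\ref{def.line.alt} that forbids $\Gamma$ from touching any argument of $\upsegm{\lambda}{\Brg}$. Everything after that is the immediate parity step.
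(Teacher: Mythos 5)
Your proof is correct and takes essentially the same route as the paper's: identify the resulting altered line as the proper upper segment $\upsegm{\lambda}{\Brg}$ via Def.~\ref{def.line.alt}, observe that since \Brg is a con argument this segment cannot be attacking (Corollary~\ref{corollary.upper.con}), and conclude via Def.~\ref{def.eff.alt}. The additional bookkeeping you supply (persistence of the upper segment in $\PP'$, the vacuously-satisfied non-exhaustive case, the null-alteration caveat) only spells out what the paper's three-sentence proof leaves implicit.
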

\begin{proof}
%
%
Assuming $\lambda$ is altered on \Brg\ and $\Brg\in\lambda^-$, we have to prove that $\lambda$ is effectively altered on \Brg. From Def.~\ref{def.line.alt}, we know that $\upsegm{\lambda}{\Brg}$ is the resulting altered line, and since we know that \Brg is a con argument in $\lambda$, we also know that $\upsegm{\lambda}{\Brg}$ cannot be attacking (see Corollary~\ref{corollary.upper.con}). Finally, since the conditions on Def.~\ref{def.eff.alt} are fulfilled, we know $\lambda$ is effectively altered on \Brg.
\end{proof}

\begin{Proposition}\label{prop.D-rep.odd.length}
\ojo{Given a non-warranting tree, if there is a D-rep line $\lambda$, then either $\lambda$'s length is odd or there is some D-rep $\lambda'$ such that $\lambda'$ is adjacent to $\lambda$ and $\lambda'$'s length is odd.}
\end{Proposition}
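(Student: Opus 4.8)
The plan is to read off the shape of $\lambda$'s marking sequence (Def.~\ref{def.d-rep.alternating.lines}, Prop.~\ref{prop.mark.non-warranting.argline}), count a single parity, and --- unless $\lambda$ is already of odd length --- produce the required line by \emph{escaping} $\lambda$ at an internal argument of one of its D-rep sequences into an undefeated subtree. The engine is an auxiliary fact: \emph{every non-empty dialectical subtree whose root is marked} \Unode\ \emph{contains a root-to-leaf argumentation line of odd length}. This is immediate by induction on subtree size: a \Unode-marked leaf is itself a length-$1$ (odd) line; otherwise all children of the root are marked \Dnode, any such child \Crg is (being \Dnode) not a leaf and thus has some child marked \Unode, the subtree rooted at that grandchild is again \Unode-rooted and strictly smaller, by inductive hypothesis it contains an odd-length line, and prepending the root together with \Crg yields an odd-length line of the whole subtree.

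Next I would write the marking sequence of $\lambda$ in normal form $(DU)^{t_0}\prod_{j=1}^{k}D^{p_j}U(DU)^{t_j}$ with $k\geq 1$, all $p_j\geq 2$, all $t_j\geq 0$ (legitimate by Prop.~\ref{prop.mark.non-warranting.argline} and Def.~\ref{def.d-rep.alternating.lines}, the blocks $D^{p_j}$ being precisely the D-rep sequences of Def.~\ref{def.d-rep.sequence}). The length of $\lambda$ is then $2t_0+\sum_{j}(p_j+1+2t_j)\equiv\sum_{j}(p_j+1)\pmod{2}$. If $k=1$ and $p_1=2$, this is $\equiv 3$, so $\lambda$ has odd length and the first disjunct of the statement holds. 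In every remaining case I claim there is an \emph{even} depth $d$ at which both the argument $\Brg_d$ of $\lambda$ at depth $d$ and its successor $\Brg_{d+1}$ in $\lambda$ are marked \Dnode: if $p_1\geq 3$ the first D-rep sequence occupies depths $2t_0+1,\dots,2t_0+p_1$ and $d=2t_0+2$ works; otherwise $p_1=2$ and $k\geq 2$, the second D-rep sequence begins at the even depth $2t_0+2t_1+4$ and, being of length at least $2$, contains both $\Brg_d$ and $\Brg_{d+1}$ for $d=2t_0+2t_1+4$.

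For such a $d$: since $\Brg_d$ is marked \Dnode\ while one of its children, $\Brg_{d+1}$, is marked \Dnode, the argument $\Brg_d$ must have another child \Crg marked \Unode. Applying the auxiliary fact to the \Unode-rooted subtree at \Crg produces a root-to-leaf path $\ell$ of odd length in it; let $\lambda'$ be the line of $\dtree{\Arg}{\PP}$ obtained by following $\lambda$ down to $\Brg_d$ and then continuing along $\ell$ (which begins at \Crg, a child of $\Brg_d$). Then $\lambda'$ is adjacent to $\lambda$ at $\Brg_d$ --- the proper upper segments $\upsegm{\lambda}{\Brg_{d+1}}$ and $\upsegm{\lambda'}{\Crg}$ both equal $[\Arg,\dots,\Brg_d]$ while $\Brg_{d+1}\neq\Crg$ --- and its length is $d+|\ell|$, a sum of an even and an odd number, hence odd. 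An odd-length line ends in a pro argument, so by Corollary~\ref{coro.ends.in.pro} it is not an attacking (alternating) line, whence by Prop.~\ref{prop.mark.non-warranting.argline} it is D-rep. Thus $\lambda'$ is the required adjacent D-rep line of odd length.

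I expect the main obstacle to be the parity bookkeeping that makes the case split exhaustive and correct: precisely, establishing that outside the single configuration $k=1$, $p_1=2$ (in which $\lambda$ itself is already odd) there is always an even depth lying strictly inside some D-rep sequence of $\lambda$, and then checking carefully that the escape construction indeed produces a genuine exhaustive argumentation line of $\dtree{\Arg}{\PP}$ that is adjacent to $\lambda$ at exactly $\Brg_d$.
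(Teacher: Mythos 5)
Your proof is correct, and it reaches the conclusion by a noticeably different (and in places tighter) route than the paper. The paper anchors the argument on Lemma~\ref{lemma.D-rep.alternating} and Lemma~\ref{lemma.adj.point.pro}: it places itself at the head of the uppermost D-rep sequence (a pro argument), observes that the second element of that sequence is a con argument marked \Dnode, and then argues rather informally that following an undefeated defeater of that con argument either leaves $\lambda$ ending in a pro argument (odd length) or produces an adjacent odd-length D-rep line; the oddness of that escape line is essentially asserted, with only a sketchy recursive justification. You dispense with those two lemmas entirely: you normalize the marking sequence from the regular expression of Def.~\ref{def.d-rep.alternating.lines}, do the parity bookkeeping explicitly to isolate the one configuration ($k=1$, $p_1=2$) where $\lambda$ itself is odd, and in all other configurations locate an even depth with two consecutive \Dnode s, escape there into a \Unode-marked sibling child, and invoke your auxiliary induction (every \Unode-rooted subtree contains an odd root-to-leaf line) to guarantee the escape line really has odd length; Corollary~\ref{coro.ends.in.pro} and Prop.~\ref{prop.mark.non-warranting.argline} then force it to be D-rep. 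What the paper's proof buys is brevity and reuse of already-proved structure (the adjacency point being the head of the uppermost D-rep sequence); what yours buys is self-containedness and a watertight justification of exactly the step the paper glosses over, at the price of the parity case analysis and the mild care (which you correctly flag) needed to check that the spliced root-to-leaf path is a genuine exhaustive line of the acceptable tree adjacent to $\lambda$ at $\Brg_d$ --- a point the paper's own proofs treat with the same informality.
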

\begin{proof}
Since $\lambda$ is a D-rep we know it is adjacent to an attacking line $\lambda_a$ (see Lemma.~\ref{lemma.D-rep.alternating}). We also know $\lambda$ has some D-rep sequence (repeated sequence of \Dnode\ nodes) (see Def.~\ref{def.d-rep.alternating.lines} and Def.~\ref{def.d-rep.sequence}). Observe that the first \Dnode\ from such a sequence is the adjacency point, say \Brg, between $\lambda$ and $\lambda_a$ (see Def.~\ref{def:adjacent} and Lemma.~\ref{lemma.D-rep.alternating}). Since $\lambda_a$ is attacking we know that this node (\Brg) corresponds to a pro argument (see Def.~\ref{def:attackingLine}). Therefore, we know that the second \Dnode\ in the D-rep sequence in $\lambda$ corresponds to a con argument, say \Crg, and since the leaf of a line is always marked as \Unode\ (see Prop.~\ref{prop:U-rep}.3) we necessarily have that \Crg is not $\lambda$'s leaf. 

Afterwards, we know there is an extra argument defeating \Crg which is pro. 
If such an argument is $\lambda$'s leaf, then we have that $\lambda$ ends in a pro argument and therefore it is easy to see that $\lambda$'s length is odd. 

On the other hand, if it is not $\lambda$'s leaf, we necessarily have that either $\lambda$ eventually ends in a pro argument, being odd its length in such a case; or that there is some other adjacent D-rep $\lambda'$ whose adjacency point is \Crg and $\lambda'$'s length is odd. This is so, given that we know there is some \Unode\ node defeating \Crg (given that \Crg is marked as \Dnode), and since such a node corresponds to a pro argument, $\lambda'$ is necessarily an odd-length D-rep line. 
\end{proof}

\begin{Proposition}\label{prop.D-rep.upper.pro}
\ojo{The upper segment of a pro argument in a D-rep line is an attacking line if there is no other odd-length D-rep line adjacent to it. That is, if $\lambda\in\dtree{\Arg}{\PP}$ is a D-rep line then for any $\Brg\in\lambda^+$, the upper segment $\upsegm{\lambda}{\Brg}$ is an attacking line if it holds that for any $\lambda'\in\dtree{\Arg}{\PP}$, if $\lambda'$ is adjacent to $\upsegmeq{}{\Brg}$ then $\lambda'$ is not an odd-length D-rep.}
\end{Proposition}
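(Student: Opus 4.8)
The plan is to reduce everything to a comparison of marking sequences, mirroring the treatment of Prop.~\ref{prop.upper.pro}. First I would record what the hypotheses give: since $\lambda$ is a D-rep line, Prop.~\ref{prop.mark.non-warranting.argline} together with Def.~\ref{def.d-rep.alternating.lines} says $\markline(\lambda,\dtree{\Arg}{\PP})$ matches $(DU)^*(D^+(DU)^+)^+$; and by Lemma~\ref{lemma.D-rep.alternating} and Lemma~\ref{lemma.adj.point.pro} the head of its uppermost D-rep sequence (Def.~\ref{def.d-rep.sequence}) is a pro argument, so it occupies an odd position $p$ in $\lambda$ and the marks at positions $1,\dots,p-1$ are a perfect alternation $(DU)^{(p-1)/2}$ (empty when $p=1$). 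Consequently the first occurrence of a block $DD$ in $\markline(\lambda,\dtree{\Arg}{\PP})$ is at positions $p$ and $p+1$.

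Next, by Prop.~\ref{prop.uppersegments2lines}, $\upsegm{\lambda}{\Brg}$ is an argumentation line, and since marks are read off the fixed tree $\dtree{\Arg}{\PP}$ (Def.~\ref{def:markingSeq}), $\markline(\upsegm{\lambda}{\Brg},\dtree{\Arg}{\PP})$ is precisely the prefix of $\markline(\lambda,\dtree{\Arg}{\PP})$ preceding $\Brg$; because $\Brg\in\lambda^+$, $\Brg$ sits at an odd position $q$, so this prefix has even length $q-1$. I would then split according to the position of $\Brg$ relative to the head $\Brg_p$. If $q\le p$, the prefix equals $(DU)^{(q-1)/2}$, which for $q>1$ matches $(DU)^+$, so $\upsegm{\lambda}{\Brg}$ is an alternating line (Def.~\ref{def.d-rep.alternating.lines}), i.e. an attacking line (Def.~\ref{def:attackingLine}), and the conclusion holds unconditionally; the case $q=1$, where $\upsegm{\lambda}{\Brg}=\epsilon$, is vacuous.

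The substantive case is $q>p$, hence $q\ge p+2$ by parity. Now the prefix of length $q-1$ already contains the block $DD$ at positions $p,p+1$, so it does not match $(DU)^+$ and $\upsegm{\lambda}{\Brg}$ is not attacking; it therefore remains to show that the proposition's hypothesis fails here, i.e. I prove the contrapositive by exhibiting an odd-length D-rep line adjacent to $\upsegmeq{}{\Brg}$. Since $q\ge p+2$, the arguments $\Brg_p$ and $\Brg_{p+1}$ lie strictly above $\Brg$ inside $\upsegmeq{}{\Brg}$. Applying Prop.~\ref{prop.D-rep.odd.length} to $\lambda$: either some D-rep line adjacent to $\lambda$ is odd-length --- and, as in the proof of that proposition, its adjacency point may be taken to be the con argument at position $p+1$, which lies strictly above $\Brg$, so that line is adjacent to $\upsegmeq{}{\Brg}$ and we are done --- or $\lambda$ itself is odd-length, in which case I would use that $\Brg_p$ is marked \Dnode\ while its in-line child $\Brg_{p+1}$ is also marked \Dnode, which by the marking criterion forces a further child of $\Brg_p$ marked \Unode\ and hence a line $\mu$ branching off $\lambda$ at $\Brg_p$ (again strictly above $\Brg$), and then chase $\mu$'s own D-rep sequences finitely far, as in the proofs of Lemma~\ref{lemma.D-rep.alternating} and Prop.~\ref{prop.D-rep.odd.length}, to land on an odd-length D-rep line adjacent to $\upsegmeq{}{\Brg}$.

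I expect the main obstacle to be precisely that last passage: making rigorous the step from ``the cut point $\Brg$ lies below the head of the uppermost D-rep sequence'' to ``an odd-length D-rep line is adjacent to $\upsegmeq{}{\Brg}$'' --- that is, locating the adjacency point and controlling both the parity and the D-rep status of the branching line --- while keeping careful track of the precise sense in which a line can be ``adjacent to'' the possibly non-exhaustive upper segment $\upsegmeq{}{\Brg}$ (Def.~\ref{def:adjacent}). Everything else is a routine matching of marking sequences against the regular expressions of Def.~\ref{def.d-rep.alternating.lines}.
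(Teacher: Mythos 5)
The gap is in how you evaluate ``attacking'' for the upper segment. You read $\markline(\upsegm{\lambda}{\Brg},\dtree{\Arg}{\PP})$ as the prefix of $\markline(\lambda,\dtree{\Arg}{\PP})$, i.e.\ with the marks frozen in the original tree. But the proposition, as the paper uses and proves it (its proof opens by assuming that \emph{altering} $\lambda$ over $\Brg$ ends in a non-attacking upper segment, and it feeds the Remark about alterations over pro arguments), concerns the status of $\upsegm{\lambda}{\Brg}$ once $\Brg$ is deactivated, i.e.\ with the marking recomputed in the resulting tree. When the cut point is at or above the head of the uppermost D-rep sequence the two readings coincide (you remove a $\Dnode$ sitting under a $\Unode$, exactly the observation used for Prop.~\ref{prop.upper.pro}), so your first case is fine; but below the head --- your ``substantive case'' --- removing the subtree under a $\Dnode$-marked pro argument can flip its parent from $\Dnode$ to $\Unode$ and propagate upwards, so the prefix is not the right object, and the inference ``the prefix contains a $\Dnode\Dnode$ block, hence the segment is not attacking, hence the hypothesis must fail'' does not go through.

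Concretely, the contrapositive you then try to establish is false. Take a tree with exactly two exhaustive lines, $\lambda=[\Arg,\Brg_1,\Brg_2,\Brg_3,\Brg_4]$ and $[\Arg,\Crg]$ with $\Crg$ a leaf, where the marking of $\lambda$ is $[\Dnode,\Dnode,\Unode,\Dnode,\Unode]$ (a D-rep with $p=1$), and take $\Brg=\Brg_4$, so $q=5>p$. The only line adjacent to $\upsegmeq{}{\Brg}$ is $[\Arg,\Crg]$, which is alternating, so the proposition's hypothesis holds and there is no odd-length D-rep for your final ``chase'' to land on (the only odd-length D-rep is $\lambda$ itself, which is not adjacent to its own upper segment). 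Yet the proposition is true here: after deactivating $\Brg_4$ the marks recompute to $[\Dnode,\Unode,\Dnode,\Unode]$, and $\upsegm{\lambda}{\Brg}$ is attacking. So the obstacle you flag in your last paragraph is not a technicality to be tightened; the claim you are trying to prove there is simply not true, and the argument has to be run on the altered tree, as the paper does: the adjacent attacking line guaranteed by Lemma~\ref{lemma.D-rep.alternating} (whose adjacency point lies above $\Brg$ in this case) keeps the tree non-warranting, so the altered upper segment is either D-rep or alternating, and the D-rep option is ruled out by Prop.~\ref{prop.D-rep.odd.length} together with the parity of a line ending in a con argument.
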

\begin{proof}
Given a D-rep line $\lambda\in\dtree{\Arg}{\PP}$ and some $\Brg\in\lambda^+$, we will assume there is no other odd-length D-rep $\lambda'\in\dtree{\Arg}{\PP}$, such that $\lambda'$ is adjacent to $\upsegmeq{}{\Brg}$. By \textit{reductio ad absurdum}, we will assume that altering $\lambda$ over the pro argument \Brg ends in a non-attacking uppersegment $\upsegm{\lambda}{\Brg}$.

Let us assume argument \Crg to be the adjacency point between $\lambda$ and the adjacent attacking line, say $\lambda_a$ (in accordance to Lemma~\ref{lemma.D-rep.alternating}). From Lemma~\ref{lemma.adj.point.pro} we know \Crg is a pro argument, \ie $\Crg\in\lambda^+$ and $\Crg\in\lambda_a^+$. 
The following alternatives arise:

1) If $\Brg=\Crg$ or $\Brg\in\upsegm{\lambda}{\Crg}$ then the alteration affects not only to $\lambda$ but also to $\lambda_a$ (since in such a case, it holds $\Brg\in\lambda_a$). Afterwards, since \Brg is a pro argument and $\Brg\in\lambda_a$, from Prop.~\ref{prop.upper.pro} we know that $\upsegm{\lambda}{\Brg}$ ends up being attacking, reaching the absurdity.

2) On the other hand, if $\Crg\in\upsegm{\lambda}{\Brg}$ then the alteration affects only to $\lambda$ (since $\Brg\notin\lambda_a$). Afterwards, since \Crg is defeated in the adjacent attacking line $\lambda_a$ by an undefeated argument, the alteration of $\lambda$ over \Brg will not change $\lambda_a$'s attacking status. This means that the dialectical tree will keep being non-warranting, and therefore, the alteration of $\lambda$ will render an uppersegment which will be either D-rep or attacking. For the former case, we will assume $\upsegm{\lambda}{\Brg}$ as D-rep. By hypothesis, we also know that there is no odd-length D-rep adjacent to $\upsegmeq{}{\Brg}$, and afterwards, from Prop.~\ref{prop.D-rep.odd.length} we know $\upsegm{\lambda}{\Brg}$ necessarily ends being of odd length. Finally, we reach the absurdity given that \Brg is a pro argument and therefore, we know that $\upsegm{\lambda}{\Brg}$'s length is even.

Hence, the only option for $\upsegm{\lambda}{\Brg}$ is to be attacking.
\end{proof}

\ojo{The need to avoid alterating lines over pro arguments is highlighted through the following remark, which appears from the results shown by Prop.~\ref{prop.D-rep.upper.pro} and Prop.~\ref{prop.upper.pro}.}

\begin{Remark}
\ojo{The alteration of a non-warranting line over a pro argument can render an attacking upper segment.}
\end{Remark}

It is clear that any effective alteration of an attacking line needs to be performed over a \emph{con argument} in it. However, not necessarily every attacking line in a tree has to be altered in order to obtain a tree free of attacking lines. This situation is illustrated next and formalized afterwards by Lemma~\ref{lemma.adj.att}.

\begin{Example}\label{ex.adjPointU}\ 
\begin{window}[3,r,{\mbox{\epsfig{file=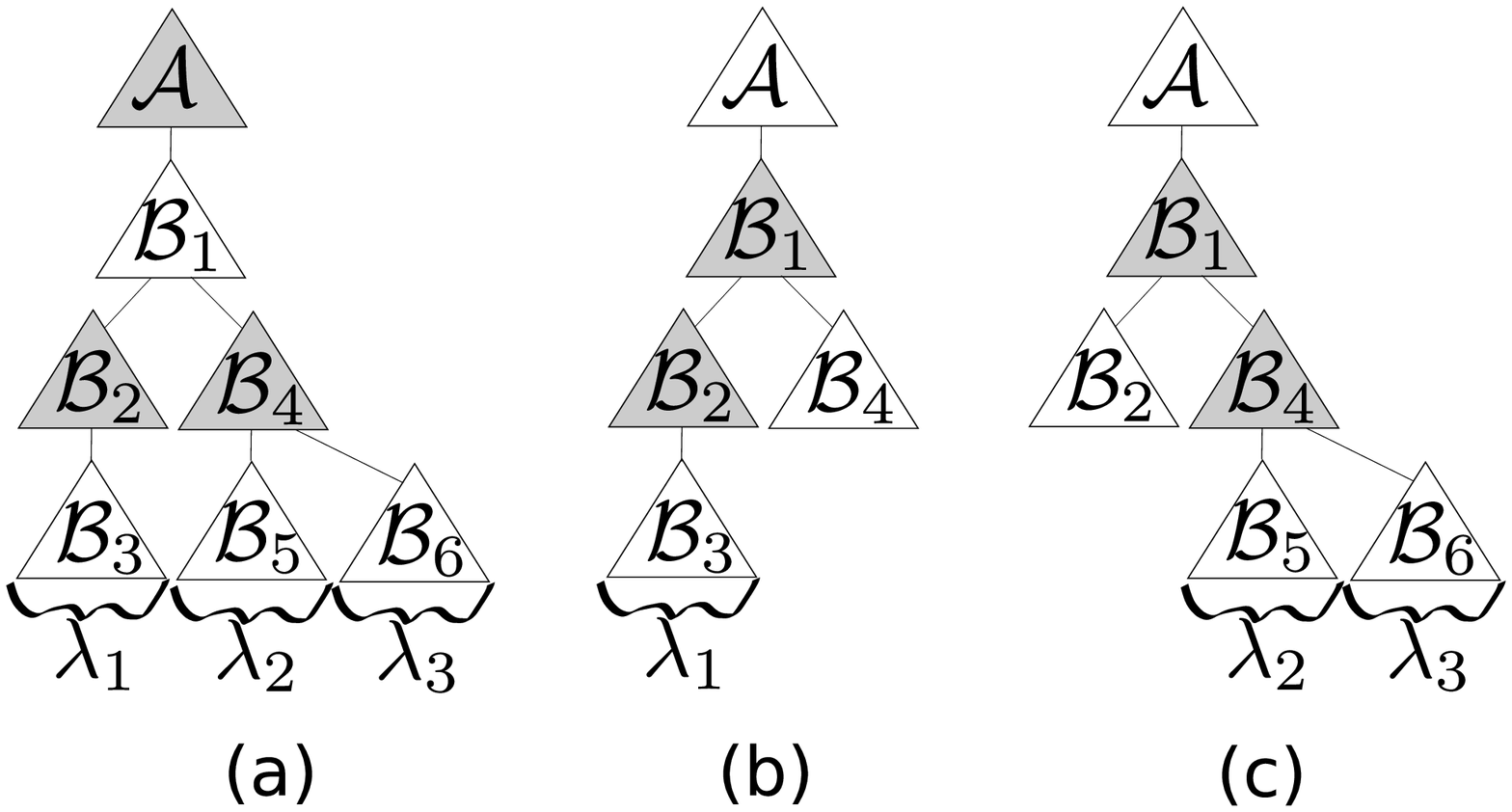,scale=.32}}},{}]
Consider the non-warranting tree depicted below in fig.~(a). The three argumentation lines $\lambda_1$, $\lambda_2$, and $\lambda_3$ are attacking. Note that $\lambda_1$ and $\lambda_2$ are adjacent at $\Brg_1$ which is marked as \Unode, and the same situation occurs regarding $\lambda_1$ and $\lambda_3$. However, $\lambda_2$ and $\lambda_3$ are adjacent at $\Brg_4$ which is placed below $\Brg_1$. To warrant the root argument, we need to alter every attacking line from the tree. Nonetheless, two different alternatives arise, either to alter both $\lambda_2$ and $\lambda_3$ and turning in consequence $\lambda_1$ to non-attacking (see fig.~(b) on the right), or to alter only $\lambda_1$ which ends up turning both $\lambda_2$ and $\lambda_3$ to non-attacking (see fig.~(c)). Note that in both cases the resulting dialectical trees end up as warranting.
\end{window}
\end{Example}

%

This example shows a particular configuration in which we can reduce the number of attacking lines to be altered: attacking lines that are adjacent at an argument marked as \Unode. This requirement comes from the following analysis: if the adjacency point is marked as $U$, the next argument in each of the (attacking) lines is necessarily marked as $D$, and an effective alteration on any of these lines would turn it into a $U$ argument, consequently changing the marking of the adjacency point.

\begin{Lemma}\label{lemma.adj.att}
Given $\dtree{\Arg}{\PP}\in\allTrees{\PP}$ and two attacking lines $\lambda_1\in\dtree{\Arg}{\PP}$ and $\lambda_2\in\dtree{\Arg}{\PP}$ adjacent at an argument \Brg marked as \Unode. Effectively altering $\lambda_1$ and every attacking line adjacent to it at any argument below \Brg, turns $\lambda_2$ to non-attacking.
\end{Lemma}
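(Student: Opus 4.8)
The plan is to show that, once the prescribed rules are removed, the adjacency point $\Brg$ changes its mark from $\Unode$ to $\Dnode$ while $\lambda_2$ itself remains a line of the resulting tree; this already forces $\lambda_2$ to be non-attacking. Indeed, since $\lambda_2$ is attacking and $\Brg$ is marked $\Unode$ in it, Definitions~\ref{def:attackingLine} and~\ref{def.d-rep.alternating.lines} imply that $\Brg$ is a con argument of $\lambda_2$, hence it lies at an even position of $\lambda_2$; therefore, if $\Brg$ becomes $\Dnode$ and $\lambda_2$ still belongs to the new tree, the marking sequence of $\lambda_2$ carries a $\Dnode$ at an even position and no longer conforms to $(DU)^+$, so $\lambda_2$ is not alternating and hence not attacking.

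First I would fix notation and dispose of the easy sub-cases. By adjacency there are arguments $P_1\in\lambda_1$ and $Q_1\in\lambda_2$ with $P_1\neq Q_1$ such that $\upsegm{\lambda_1}{P_1}=\upsegm{\lambda_2}{Q_1}=[\Arg,\ldots,\Brg]$, that is, $P_1$ (resp.\ $Q_1$) is the child of $\Brg$ in $\lambda_1$ (resp.\ $\lambda_2$); since $\Brg$ is con, $P_1$ occupies an odd position of $\lambda_1$ and so is pro and marked $\Dnode$. Each prescribed alteration is effective and is applied to an attacking line, hence --- because by Proposition~\ref{prop.upper.pro} the upper segment of a pro argument in an attacking line is again attacking, contradicting effectiveness --- it must be performed on a con argument of that line. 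If one of these con arguments lies at or above $\Brg$ in its line it also belongs to $\lambda_2$ (all the altered lines share the prefix $[\Arg,\ldots,\Brg]$ with $\lambda_1$), so the same removal turns $\lambda_2$ into the proper upper segment of that con argument, a line ending in a pro argument, which is non-attacking by Corollary~\ref{coro.ends.in.pro}; in this case the statement already holds. So I may assume that every alteration is performed strictly below $P_1$; then the prefix $[\Arg,\ldots,\Brg,P_1]$ survives in every altered line and the line $\lambda_2$ is itself not altered, so --- only defeasible rules having been removed --- it remains exhaustive and acceptable, i.e.\ $\lambda_2\in\dtree{\Arg}{\PP'}$ for the resulting program $\PP'$.

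It now suffices to prove that $P_1$ becomes marked $\Unode$ in $\dtree{\Arg}{\PP'}$: then $\Brg$ has a $\Unode$-marked child and so becomes $\Dnode$, and we conclude as above. The pivotal observation is that the altered lines are exactly the attacking lines passing through $P_1$: any line of $\dtree{\Arg}{\PP}$ containing $P_1$ has upper segment precisely $[\Arg,\ldots,\Brg,P_1]$ (upper segments of tree nodes being unique), hence agrees with $\lambda_1$ up to $P_1$, and since $P_1$ cannot be a leaf --- $\lambda_1$ witnesses a defeater of $P_1$ that can be appended to $[\Arg,\ldots,P_1]$, appendability depending only on this shared prefix --- it extends past $P_1$; therefore such a line is either $\lambda_1$ or adjacent to $\lambda_1$ at an argument following $\Brg$, that is, below $\Brg$. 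I would then establish that the subtree of $\dtree{\Arg}{\PP'}$ rooted at $P_1$ contains no attacking line, whence by Corollary~\ref{corollary.warrant} --- applied to that subtree, whose root mark coincides with the mark of $P_1$ --- it is warranting and $P_1$ is marked $\Unode$. Each attacking line through $P_1$ is among the altered ones and becomes non-attacking by effectiveness; no new attacking line is created either, since a deleted argument occupies the same (even) position in every line containing it, so every line it alters is shortened to the proper upper segment of one of its con arguments and hence ends in a pro argument, a non-attacking line (Corollary~\ref{corollary.upper.con}); and each D-rep line through $P_1$ is either shortened likewise or stays non-attacking, using that every D-rep line is adjacent, at the head of its uppermost D-rep sequence, to an attacking line (Lemma~\ref{lemma.D-rep.alternating}), which attacking line is again among those altered.

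The step I expect to be the main obstacle is exactly this last one: proving that effectively altering every attacking line of the subtree rooted at $P_1$ really does leave that subtree free of attacking lines. The delicate aspects are that a single effective alteration deletes an argument shared by several lines, and that the resulting cascade of mark changes might a priori restore a perfectly alternating marking in a line that was previously D-rep. I expect to close the argument by combining two facts already available: that a deleted argument is a con argument in \emph{every} line through it, so every collateral alteration yields an upper segment ending in a pro argument (Corollaries~\ref{coro.ends.in.pro} and~\ref{corollary.upper.con}), and the tethering of every D-rep line to an altered attacking line given by Lemma~\ref{lemma.D-rep.alternating}, linked through the warrant characterisation of Theorem~\ref{theorem:warrant}.
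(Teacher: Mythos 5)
Your overall strategy is the same as the paper's: you dispose of the case in which some alteration falls inside the shared segment $[\Arg,\ldots,\Brg]$ (where $\lambda_2$ is altered together with $\lambda_1$), and otherwise reduce the lemma to showing that the child $P_1$ of \Brg in $\lambda_1$ (the paper's \Crg) ends up marked \Unode, so that \Brg turns \Dnode and $\lambda_2$'s marking becomes D-rep, hence non-attacking. Your observation that the lines prescribed for alteration are exactly the attacking lines through $P_1$ is correct and is a nice sharpening of what the paper leaves implicit. However, the paper merely asserts that the prescribed effective alterations force $P_1$ to be marked \Unode, whereas you attempt to prove it, and that is where your argument breaks. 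The pivotal supporting claim, that ``a deleted argument occupies the same (even) position in every line containing it'', is false: one and the same argument can occur at different depths, and with different pro/con roles, in different lines of a dialectical tree (already in Example~\ref{ex.bundle}, $\Brg_1$ occurs at depth $2$ in three lines and at depth $4$ in the fourth; nothing in \DLP\ prevents an argument from being con in one line and pro in another). Consequently you cannot conclude that ``every line it alters is shortened to the proper upper segment of one of its con arguments.''

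There is a second, related problem: effective alterations are realized by removing defeasible rules (Def.~\ref{def.line.alt} and Def.~\ref{def.eff.alt}), and such removals may collaterally deactivate arguments other than the selected con arguments --- in particular pro arguments of D-rep lines through $P_1$, or arguments of $\lambda_2$ itself --- which is exactly the mechanism by which non-attacking lines can be turned into attacking ones (Prop.~\ref{prop.D-rep.upper.pro}) and which the paper only controls later through the preservation principle and the collaterality machinery of Section~\ref{sec.atc}. Hence neither ``no new attacking line is created'' nor ``$\lambda_2$ is itself not altered'' follows from your hypotheses; they are additional assumptions. The paper's proof tacitly brackets these collateral effects (it asserts that $\lambda_2$ ``results unaltered in form'' and that \Crg becomes \Unode), but it does not rest on a false positional claim; your attempt to close the gap does, so the key step --- that after the alterations the subtree below $P_1$ is free of attacking lines --- does not go through as written. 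Even in the idealized, collaterality-free situation you would still owe an argument that pure mark changes cannot turn a structurally untouched D-rep line through $P_1$ into an alternating one; the two facts you invoke at the end (the false con-everywhere claim and Lemma~\ref{lemma.D-rep.alternating}) do not supply it.
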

\begin{proof}
For the particular case in which \Brg is $\lambda_1$'s leaf this proof results trivial since $\lambda_1=\lambda_2$. Consequently, let us assume $\lambda_1=[\Arg,\ldots,\Brg,\Crg,\ldots]$. Since $\lambda_1$ is attacking and \Brg is marked as \Unode, we know \Crg is marked as \Dnode. To achieve the effective alteration of $\lambda_1$ and that of every attacking line adjacent to it at any argument below \Brg, two options arise: either changes are done (1) below \Crg or (2) above \Crg. \ojo{Note that \Crg should not be deactivated (removed) since it is a pro argument and an effective alteration is only ensured when applied over a con argument (see Lemma~\ref{lemma.eff.alt}), and moreover, since $\lambda_1$ is attacking, from Prop.~\ref{prop.upper.pro} we know that altering it over \Crg will result in a still-attacking uppersegment.} For (2) it is clear that not only $\lambda_1$ results effectively altered but also $\lambda_2$ since both share the common upper segment $[\Arg,\ldots,\Brg]$. On the other hand, for (1), the effective alteration of $\lambda_1$ and every attacking line adjacent to it at any argument below \Brg provokes \Crg to be marked as \Unode. This turns \Brg's mark to \Dnode\ independently from the existence of $\lambda_2$. Hence, $\lambda_2$ results unaltered in form but its marking sequence turns to D-rep, \ie non-attacking. 
\end{proof}

Now we know that for certain sets of attacking lines, just some of them have to be altered in order to render every line in the set into non-attacking. Consider a situation similar to the one expressed through Lemma~\ref{lemma.adj.att}, but with $\lambda_1$ and $\lambda_2$ adjacent at an argument \Brg marked as \Dnode. In this case, the effective alteration of $\lambda_1$ (along with the attacking lines adjacent to it below \Brg) would not turn $\lambda_2$ to non-attacking, since \Brg's mark \Dnode\ can only turn to \Unode\ by changing the mark of all of its children to \Dnode. Hence, $\lambda_2$ would need to be altered as well. 

\ojo{
Consequently, we should identify a subset of attacking lines, named \textit{attacking set}, in a tree such that altering only the lines it contains (no more, no less) would yield a tree free of attacking lines. For instance, in order to turn to warranting the dialectical tree depicted in figure (a) from Ex.~\ref{ex.adjPointU}, two possible subsets of attacking lines appear: $X_1=\{\lambda_1\}$ and $X_2=\{\lambda_2,\lambda_3\}$. Ideally, we would prefer the attacking set of smallest cardinality. However, this restriction is left as a minimal change criterion. That is, it is not ensured that altering only $\lambda_1$ instead of both $\lambda_2$ and $\lambda_3$, provokes less change in the \delp\ Thus, to introduce the formal definition of attacking set, we will rely upon an \textit{alteration criterion} identified as $\lessAlter$ to recognize among the subsets of lines from $\bundle$ --the bundle set determining $\dtree{\Arg}{\PP}$ (see Def.~\ref{def.bundle})-- that set whose effective alteration of every line contained in it would provoke as less change as possible in \PP with the objective to turn $\dtree{\Arg}{\PP}$ into warranting. 
}

\begin{Definition}[\ojo{Alteration Criterion}]\label{def.alteration.criterion}
Given a \delp\ \PP and an argument $\Arg\in\ARGS$, the \textbf{alteration criterion} $\lessAlter\subseteq(\Lines{\PP}\times\Lines{\PP})$ over the dialectical tree $\dtree{\Arg}{\PP}\in\allTrees{\PP}$, is the set of pairs $(X_1,X_2)$ stating that the effective alteration of every line in $X_1\subseteq\bundle$ is assumed to provoke less change than the effective alteration of every line in $X_2\subseteq\bundle$. The infix notation $X_1\lessAlter X_2$ will be used to refer to $(X_1,X_2)\in\lessAlter$.
\end{Definition}

\ojo{
Note that the alteration criterion defined above can be concretized by pursuing minimality according to set cardinality, such that for any pair of sets $X_1$ and $X_2$ of attacking lines from $\dtree{\Arg}{\PP}$, $X_1\lessAlter X_2$ holds \ifff $|X_1|< |X_2|$ holds.\label{alteration.criterion.cardinality} Nevertheless, we keep this criterion abstract in order to render a theory without unnecessary restrictions. This decision benefits the pursuit of a model of change that allows to guarantee different sorts of minimal change\footnote{Different perspectives on minimal change will be discussed later, in Section~\ref{sec.principles}.}. The notion of attacking set relying upon the alteration criterion, as is formalized next in Def.~\ref{def.alines.delp}, favors this objective by choosing minimal sets of lines to be altered from a dialectical tree (see Theorem~\ref{theorem.attset.minimal}) in order to render an altered dialectical tree which ends up warranting its root argument (see Theorem~\ref{theorem.attset.warrants}).
}

\begin{Definition}[Attacking Set]\label{def.alines.delp}
Given a \delp\ \PP, and the dialectical tree $\dtree{\Arg}{\PP}\in\allTrees{\PP}$; \ojo{the \textbf{attacking set} \ALINESPP is the set of lines} satisfying:
\vspace{-6mm}
\begin{description}
\item\indent\begin{enumerate}
\item $\ALINESPP \subseteq \{\lambda\in\dtree{\Arg}{\PP}\ |\ \lambda$ is an attacking line in $\dtree{\Arg}{\PP}\}$;
\item there is no pair of lines $\lambda$ and $\lambda'$ in \ALINESPP such that $\lambda$ is  adjacent to $\lambda'$ at an argument marked as \Unode;
\item there is no set $X$ satisfying $(1)$ and $(2)$ such that $\ALINESPP\subsetneq X$ or $X\lessAlter\ALINESPP$.
\end{enumerate}
\end{description}
\end{Definition}

\ojo{
The recognition of the attacking set \ALINESPP from a given dialectical tree \dtree{\Arg}{\PP} involves the verification of three conditions: firstly, \ALINESPP is a subset of the set of attacking lines from \dtree{\Arg}{\PP}; secondly, as stated in Lemma~\ref{lemma.adj.att}, if several adjacent attacking lines with adjacency point \Unode appear, to provoke the effective alteration of every one of them it is sufficient to alter only one of them; for the twofolded third condition we have that, the attacking set is maximal in the sense that every line included in it must be effectively altered in order to render a warranting tree (see also Theorem~\ref{theorem.attset.warrants} and Theorem~\ref{theorem.attset.minimal}) and minimal in the sense that no attacking line excluded from it needs to be explicitly altered (since it will be effectively altered as a result of the alteration of the lines in \ALINESPP) (see also Theorem~\ref{theorem.attset.warrants}). Finally, the attacking set will be that which provokes as less change as possible according to the alteration criterion \lessAlter (see Def.~\ref{def.alteration.criterion}).
}

\ojo{Clearly, the empty attacking set implies a warrating dialectical tree.}

\begin{Proposition}
If $\ALINESPP=\emptyset$ then $\dtree{\Arg}{\PP}$ is warranting.
\end{Proposition}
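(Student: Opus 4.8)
The plan is to argue by contradiction, combining Corollary~\ref{corollary.warrant} (a dialectical tree is warranting iff it contains no attacking line) with the maximality clause~(3) of Def.~\ref{def.alines.delp}. First I would assume $\ALINESPP=\emptyset$ and suppose, for contradiction, that $\dtree{\Arg}{\PP}$ is not warranting. By Corollary~\ref{corollary.warrant} there is then at least one attacking line $\lambda\in\dtree{\Arg}{\PP}$.

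The crux of the argument is to observe that the singleton $X=\{\lambda\}$ already satisfies conditions~(1) and~(2) of Def.~\ref{def.alines.delp}: it satisfies~(1) because $\lambda$ is an attacking line in $\dtree{\Arg}{\PP}$, and it satisfies~(2) vacuously, since a one-element set contains no pair of distinct lines that could be adjacent. Consequently $\ALINESPP=\emptyset\subsetneq X$, which contradicts clause~(3) of Def.~\ref{def.alines.delp} (it forbids any set $X$ satisfying~(1) and~(2) with $\ALINESPP\subsetneq X$, independently of the alteration criterion $\lessAlter$). Hence no attacking line can exist in $\dtree{\Arg}{\PP}$, and a second application of Corollary~\ref{corollary.warrant} yields that $\dtree{\Arg}{\PP}$ is warranting.

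The only step that requires a bit of care --- the main, and rather mild, obstacle --- is the claim that clause~(2) holds vacuously for $\{\lambda\}$, which amounts to checking that adjacency is irreflexive, i.e.\ that a line is never adjacent to itself. This follows directly from Def.~\ref{def:adjacent}: adjacency of $\lambda_1$ and $\lambda_2$ requires arguments $\Barg_1\in\lambda_1$ and $\Barg_2\in\lambda_2$ with $\Barg_1\neq\Barg_2$ and $\lambda_1^{\uparrow}(\Barg_1)=\lambda_2^{\uparrow}(\Barg_2)$; but two distinct arguments lying on a single argumentation line have properly nested --- hence distinct --- proper upper segments, so no such pair exists within one line. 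Everything else is immediate, so I do not anticipate further difficulties.
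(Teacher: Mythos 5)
Your proof is correct and follows essentially the same route as the paper, which simply declares the result straightforward from Def.~\ref{def.alines.delp} and Corollary~\ref{corollary.warrant}; you merely spell out the details the paper leaves implicit (that a singleton $\{\lambda\}$ containing an attacking line would satisfy conditions (1)--(2) and violate the maximality clause (3), using the irreflexivity of adjacency). No gaps.
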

\begin{proof}
Straightforward from Def.~\ref{def.alines.delp} and Corollary~\ref{corollary.warrant}.
\end{proof}

\begin{Theorem}\label{theorem.attset.warrants}
Given the non-warranting tree $\dtree{\Arg}{\PP}\in\allTrees{\PP}$ built from a set $X\subseteq\Lines{\PP}$ of argumentation lines in the context of a \delp\ \PP; the dialectical tree resulting from the effective alteration of every line in $\ALINESPP$ ends up warranting.
\end{Theorem}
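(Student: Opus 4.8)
The plan is to reduce the statement, via Corollary~\ref{corollary.warrant}, to showing that the dialectical tree obtained after effectively altering every line in $\ALINESPP$ contains \emph{no} attacking line, and to establish this by contradiction. So let $\PP'$ be the \delp\ resulting from the removals that realize the effective alterations of all the lines in $\ALINESPP$, let $\dtree{\Arg}{\PP'}\in\accTrees{\PP'}$ be the resulting tree, and suppose it contained an attacking line $\mu$. First I would observe that, since effective alterations only \emph{remove} rules from \PP\ (Def.~\ref{def.line.alt}), no new argument and no new defeat relation can arise; hence $\mu$ --as a sequence of arguments-- already occurs inside $\dtree{\Arg}{\PP}$: either $\mu$ is itself a line of $\dtree{\Arg}{\PP}$, or $\mu=\upsegm{\lambda}{\Brg}$ is a proper upper segment of some line $\lambda$ of $\dtree{\Arg}{\PP}$ that became exhaustive in $\PP'$ because an argument of $\lambda$ strictly below \Brg\ was destroyed by one of the alterations (and that destroyed argument, by Lemma~\ref{lemma.eff.alt} together with Def.~\ref{def.line.alt}, is a con argument of a line of $\ALINESPP$).

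Next I would split into two cases according to which line $\lambda$ of the original tree $\mu$ comes from. \textbf{(a)} If $\lambda\in\ALINESPP$: $\lambda$ was effectively altered on some $\Brg\in\lambda^-$, so by Def.~\ref{def.eff.alt} the altered line $\upsegm{\lambda}{\Brg}$ is non-attacking whenever it is a line of $\dtree{\Arg}{\PP'}$; and even if $\mu$ is a still shorter upper segment of it, $\mu$ ends in a con argument and so is non-attacking by Corollary~\ref{corollary.upper.con}. \textbf{(b)} If $\lambda\notin\ALINESPP$ and $\lambda$ is an attacking line of $\dtree{\Arg}{\PP}$: by the maximality clause of condition~(3) of Def.~\ref{def.alines.delp}, the set $\ALINESPP\cup\{\lambda\}$ cannot satisfy conditions~(1) and~(2); since it trivially satisfies~(1), it must violate~(2), so some line $\lambda''\in\ALINESPP$ is adjacent to $\lambda$ at an argument \Brg\ marked as \Unode. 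Then, by Lemma~\ref{lemma.adj.att}, effectively altering $\lambda''$ together with every attacking line adjacent to it below \Brg\ turns $\lambda$ non-attacking; one closes the case by checking that all those below-\Brg-adjacent attacking lines are themselves effectively altered, which follows by re-applying the argument to each of them (each either lies in $\ALINESPP$ or is in turn adjacent at a \Unode-marked point to a line of $\ALINESPP$), the recursion terminating because the tree is finite. Either case contradicts $\mu$ being attacking, so $\dtree{\Arg}{\PP'}$ has no attacking line and is therefore warranting by Corollary~\ref{corollary.warrant}.

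The delicate points --and where I expect the real work to lie-- are two. The first is the bookkeeping in case~(b): making the hypotheses of Lemma~\ref{lemma.adj.att} available requires showing that the \emph{simultaneous} alteration of all of $\ALINESPP$ realizes, for each attacking line outside $\ALINESPP$, the alteration of an adjacent line together with all the attacking lines adjacent to it below the \Unode-marked adjacency point; this calls for a careful induction on the depth of the adjacency point (or on subtree size). The second is ruling out that the alterations \emph{create} a new attacking line: here one uses that removals only shrink the stock of arguments and defeaters --so the line $\lambda$ traced back in the first paragraph is genuinely a line (attacking or D-rep) of the original tree and nothing appears ``out of nowhere''-- together with Lemma~\ref{lemma.D-rep.alternating}, by which every D-rep line of $\dtree{\Arg}{\PP}$ is anchored through the head of its uppermost D-rep sequence to an attacking line that gets effectively altered; once all attacking lines are neutralized, a former D-rep line can only shrink to a non-attacking upper segment (cf.\ Prop.~\ref{prop.upper.pro} and Prop.~\ref{prop.D-rep.upper.pro}), never become attacking.
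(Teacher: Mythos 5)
Your core argument is the paper's own: reduce via Corollary~\ref{corollary.warrant} to the absence of attacking lines, note that each line of \ALINESPP is replaced by a non-attacking upper segment (an effective alteration of an attacking line must cut a con argument, by Prop.~\ref{prop.upper.pro} and Corollary~\ref{corollary.upper.con}), and dispose of the attacking lines left outside \ALINESPP through the maximality clause of Def.~\ref{def.alines.delp}, condition (3): any such line must be adjacent at a \Unode-marked argument to a member of \ALINESPP, and Lemma~\ref{lemma.adj.att} then turns it non-attacking. Your explicit observation that Lemma~\ref{lemma.adj.att} also requires the attacking lines adjacent below the adjacency point to be handled, and your proposed induction on the depth of the adjacency point, is a legitimate (and welcome) refinement of the paper's one-line appeal to that lemma.

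The genuine problem is your closing paragraph. The claim that, once all attacking lines are neutralized, a former D-rep line ``can only shrink to a non-attacking upper segment, never become attacking'' is false, and your citation of Prop.~\ref{prop.D-rep.upper.pro} is backwards: that proposition states that the upper segment of a \emph{pro} argument in a D-rep line \emph{is} attacking (absent an adjacent odd-length D-rep). If the rule removals realizing the effective alterations of the lines in \ALINESPP happen to destroy a pro argument of a D-rep line, the surviving segment can perfectly well be attacking in the resulting tree; this is exactly the phenomenon of collateral incisions ``opening'' lines that motivates the alteration set \alteration{\Arg}{\PP} in Sect.~\ref{sec.alteration} (Def.~\ref{def.attset}, Lemma~\ref{lemma.altset.warranting}), and it cannot be excluded from the hypotheses of Theorem~\ref{theorem.attset.warrants} alone. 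The theorem, and the paper's proof of it, are read at the tree level --- the only lines that change are those of \ALINESPP, each replaced by its effectively altered upper segment --- so the issue you try to settle does not arise there; as written, however, your argument for it is unsound rather than merely superfluous. A smaller slip of the same kind occurs in your case (a): a ``still shorter upper segment'' ending in a con argument is precisely the kind that can be attacking, and Corollary~\ref{corollary.upper.con} says nothing in its favour; fortunately Def.~\ref{def.line.alt} guarantees that nothing above the cut point of an altered line is removed, so that sub-case is vacuous.
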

\begin{proof}
\ojo{Let us assume the existence of a \delp\ $\PP'\subseteq\PP$ which is obtained from $\PP$ by effectively altering every line $\lambda\in\ALINESPP$ according to Def.~\ref{def.eff.alt}.} From Corollary~\ref{corollary.warrant}, we know that any dialectical tree free of attacking lines is warranting. From Def.~\ref{def.alines.delp}, we know \ALINESPP contains all the attacking lines in \dtree{\Arg}{\PP} excepting the ones that have some attacking line within \ALINESPP adjacent at an argument marked as \Unode. Hence, we need to show that every attacking line $\lambda'\in\dtree{\Arg}{\PP}$ such that $\lambda'\not\in\ALINESPP$ is not an attacking line in $\dtree{\Arg}{\PP'}\in\allTrees{\PP'}$, which follows from Lemma~\ref{lemma.adj.att}.
\end{proof}


\begin{Theorem}\label{theorem.attset.minimal}
%
Given the non-warranting tree $\dtree{\Arg}{\PP}\in\allTrees{\PP}$ built from a subset of \Lines{\PP} of argumentation lines in the context of a \delp\ \PP; the dialectical tree resulting from the effective alteration of every line in any proper subset of \ALINESPP ends up non-warranting.
\end{Theorem}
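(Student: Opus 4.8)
The plan is to reduce the claim, via Corollary~\ref{corollary.warrant}, to producing one surviving attacking line. Write $\PP'\subseteq\PP$ for the \delp\ obtained by effectively altering every line of some proper subset $Y\subsetneq\ALINESPP$, and let $\dtree{\Arg}{\PP'}$ be the resulting dialectical tree; since a tree is warranting exactly when it has no attacking line, it suffices to exhibit an attacking line in $\dtree{\Arg}{\PP'}$. Because $Y$ is \emph{proper}, I can fix a line $\lambda\in\ALINESPP\setminus Y$, which by condition~1 of Def.~\ref{def.alines.delp} is attacking in $\dtree{\Arg}{\PP}$. The whole argument then consists in showing that $\lambda$ persists --unaltered in shape and in marking-- in $\dtree{\Arg}{\PP'}$; this is dual to Theorem~\ref{theorem.attset.warrants}, where the lines \emph{outside} $\ALINESPP$ were shown to be neutralised.

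First I would show that $\lambda$ is still an argumentation line of $\dtree{\Arg}{\PP'}$. An effective alteration of an attacking line $\mu\in Y$ is carried out by deleting a \emph{con} argument of $\mu$ and truncating $\mu$ at it (Lemma~\ref{lemma.eff.alt}, together with Prop.~\ref{prop.upper.pro} and the discussion preceding Prop.~\ref{prop.attacking.length.even}). Now $\mu$ and $\lambda$ both lie in $\ALINESPP$, so condition~2 of Def.~\ref{def.alines.delp} forbids them to be adjacent at an argument marked \Unode; hence, if they are adjacent, their adjacency point \Brg\ is marked \Dnode\ and is therefore a pro argument of both lines (in an attacking line precisely the pro arguments are marked \Dnode; see Def.~\ref{def:attackingLine}). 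Consequently the con argument at which $\mu$ is altered is distinct from \Brg\ and sits strictly below it, so the deletion affects only the subtree hanging below \Brg\ on $\mu$'s side and leaves the common upper segment $\upsegm{\mu}{\Brg}=\upsegm{\lambda}{\Brg}$ --and thus every argument of $\lambda$-- intact. Repeating this over all $\mu\in Y$, $\lambda$ remains a (full, exhaustive) line of $\dtree{\Arg}{\PP'}$.

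Next I would verify that $\lambda$'s marking sequence is still $(\Dnode\Unode)^+$, by a bottom-up induction along $\lambda$. For a pro argument \Brg\ of $\lambda$, its child on $\lambda$ is a con argument marked \Unode, which by the induction hypothesis is still marked \Unode\ in $\dtree{\Arg}{\PP'}$; hence \Brg\ still has a \Unode-child and remains \Dnode. For a con argument \Crg\ of $\lambda$, marked \Unode\ because all its children are \Dnode, condition~2 of Def.~\ref{def.alines.delp} again guarantees that no line of $Y$ branches off $\lambda$ at the \Unode-marked argument \Crg, so every child of \Crg\ other than the one on $\lambda$ --and the whole subtree beneath it-- is untouched and stays \Dnode, while the child on $\lambda$ keeps its mark inductively; hence \Crg\ remains \Unode. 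Therefore the marking of $\lambda$ is unchanged, so $\lambda$ is attacking in $\dtree{\Arg}{\PP'}$, and Corollary~\ref{corollary.warrant} gives that $\dtree{\Arg}{\PP'}$ is non-warranting.

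The step I expect to be the hard part is exactly this persistence-plus-marking argument. In general, deleting a deep subtree from a dialectical tree can flip a mark from \Dnode\ to \Unode\ and propagate the flip up to the root, and a deletion intended to alter some $\mu\in Y$ could a priori also destroy an argument shared with $\lambda$; condition~2 of Def.~\ref{def.alines.delp} is what blocks both, since it forces every adjacency point between $\lambda$ and an altered line to be a pro (\Dnode) argument, which confines each alteration strictly below that point and keeps $\lambda$'s own \Unode-marked children in place. Pinning this down rigorously --and being explicit that ``effectively altering the lines in $Y$'' means altering those lines and no others-- is where the care must go; everything else follows from the regular-expression characterisation of attacking lines and Corollary~\ref{corollary.warrant}.
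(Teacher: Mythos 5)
Your proof is correct and takes essentially the same route as the paper's, whose entire proof is the observation that every line in \ALINESPP is attacking, so leaving any one of them unaltered leaves an attacking line in the resulting tree, which is then non-warranting by Corollary~\ref{corollary.warrant}. Your additional persistence argument (that the unaltered line survives with its alternating marking intact, via condition~2 of Def.~\ref{def.alines.delp}) correctly fills in the step the paper treats as immediate, under the same idealized reading the paper uses, namely that exactly the lines in the chosen subset are altered.
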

\begin{proof}
From Def.~\ref{def.alines.delp}, every line in \ALINESPP is attacking, hence if a line $\lambda\in\ALINESPP$ is left unaltered, the resulting dialectical tree will contain an attacking line, and from Corollary~\ref{corollary.warrant}, it will be non-warranting.
\end{proof}

\section{Argument Theory Change}
\label{sec.atc}

We apply to \DLP\ the \textit{deactivating revision operator}~\cite{comma08}, that is part of ATC. In the \textit{dynamic abstract argumentation framework}~\cite{daf.comma10} only active arguments are considered by the argumentative reasoner. Thus by deactivation of an argument we refer to the reasoner no longer considering that argument. 
In its abstract form, the ATC argument revision operator revises an argumentation theory by an argument seeking for its warrant. In this article, we propose a concrete approach, from the abstract logic for arguments in past papers to the logics used for \delp s. These programs constitute the KBs from where the argumentation framework is built. The deactivating ATC approach reified to \DLP\ was preliminary introduced in~\cite{aaai08} and is extended here. 

For specifying the ATC argument revision upon \DLP, we firstly describe how to expand a \delp\ by an argument \Arg, to afterwards modify it (if necessary) by analyzing the dialectical tree rooted in \Arg, aiming at warranting \Arg. To this end, we follow the abstract deactivating approach to ATC which identifies the arguments to be \textit{deactivated} from the tree. \textit{Deactivation of arguments} in \DLP\ involves removal of rules from the worked \delp\ Conversely, the modification of the program aims at altering the tree, turning it to warranting. 
Doing this provokes change, not only regarding the \delp\ and the dialectical tree rooted in \Arg, but also regarding the set of warrants: some arguments could now be warranted, while some others could consequently lose such condition. Alternatives to control change according to different standpoints will be discussed in Section~\ref{sec.principles}.

Besides removing rules (arguments deactivation), alteration of trees could be performed through addition of rules (argument activation), in order to generate new arguments to be incorporated to the tree, in order to turn it to warranting. This approach, referred to as \textit{activating}, falls beyond the scope of this article and was treated in the context of an abstract argumentation framework in~\cite{atc.act.comma10}. Its reification to concrete logics like \DLP's is part of future work.

The main idea towards the alteration of the tree is to effectively alter each (attacking) line from the attacking set. This will be done through two main mechanisms: a \textit{selection function} which maps the appropriate argument to be deactivated from each line, and an \textit{incision function} which maps the appropriate defeasible rules inside the argument to be deleted from the \delp\ To decide which con argument is selected in a given line, we will assume a \textit{selection criterion} through which the set of con arguments --from each argumentation line-- could be ordered. A similar situation occurs among the defeasible rules inside arguments which will be addressed through a \textit{rule-based criterion}.

Deactivating a con argument from an attacking line $\lambda\in\dtree{\Arg}{\PP}$ always ends up yielding a non-attacking upper segment (see Corollary~\ref{corollary.upper.con}) and thus, the line ends up effectively altered (see Lemma~\ref{lemma.eff.alt}). However a major drawback appears: since deactivating an argument \Brg means the deletion of some defeasible rules from the \delp, other arguments \ojo{containing some of these rules would also disappear.} Particularly, a line $\lambda'\in\dtree{\Arg}{\PP}$ containing some of those disappearing arguments will be collaterally altered. This is referred to as a \textit{collateral incision} provoked by the original \textit{incision} over \Brg. Observe that a collaterally incised non-attacking line might be turned to attacking. Hence, the revision process should consider to alter not only attacking lines, but also other lines that may turn to attacking from a collateral incision. \ojo{The \textit{alteration set} is identified as the set of every line in \dtree{\Arg}{\PP} to be altered by the revision process.} Note that this new set would contain the attacking lines contained in the attacking set, while possibly adding more lines. The general outline of the revision process is given in Schema~\ref{schema}.

\floatname{algorithm}{Schema}

\begin{algorithm}
\caption{The Revision Process}\label{schema}
\begin{algorithmic}[1]
\REQUIRE A \delp\ $\PP=\SD$ and an argument \Arg
\ENSURE A revised \delp\ $\PP*\Arg$
\STATE Expand the program \PP\ by \Arg to obtain the program $\PP'=(\SSet,\DD\cup\Arg)$
\STATE Obtain the dialectical tree $\dtree{\Arg}{\PP'}\in\allTrees{\PP'}$
\STATE Define a \textit{selection function} $\deactsel: \Lines{\scriptsize\PP'} \rightarrow \ARGSP{\PP'}$ mapping each $\lambda\in\dtree{\Arg}{\PP'}$ to the con argument $\Brg\in\lambda^-$ whose deactivation would provoke less change according to a \textit{selection criterion} ``\selcrit''.
\STATE Define an \textit{incision function} $\incise: \ARGSP{\PP'} \rightarrow \powerSet{\Ld}$ to map the selected argument $\deactsel(\lambda)$ from each line $\lambda$ to some defeasible rules inside $\deactsel(\lambda)$ according to a \textit{rule-based ordering criterion} ``\lesschg''.
\STATE Define the \textit{alteration set} $\Lambda$ containing every line from the attacking set \ALINESPPprime along with those lines from $\dtree{\Arg}{\PP'}$ that would be turned into attacking by a \textit{collateral incision}.
\STATE $\PP*\Arg= (\SSet,\DD')$, where $\DD'= (\DD\cup\Arg)\setminus\bigcup_{\scriptsize\lambda\in\Lambda}\incise(\deactsel(\lambda))$
\end{algorithmic}
\end{algorithm}

\floatname{algorithm}{Algorithm}
\setcounter{algorithm}{0}



Regarding \textit{minimal change}, it is natural to look for changing a program by deleting \ojo{as few rules as possible} from it. However, each defeasible rule that is deleted has a direct impact in the resulting dialectical tree analyzed to give warrant to the new argument. Consequently, we can identify three different \textit{axes of change}: 

\begin{enumerate}\label{axes.of.change}
\item how to decide among the con arguments in each line to be altered, which will be controlled by the \textit{selection criterion};
\item how to decide among the defeasible rules inside each selected argument to be deactivated, that will be controlled through the \textit{rule-based criterion};
\item how to deal with the problem of \textit{collateral incisions}. 
\end{enumerate}

We will study in detail the first and third axes of change, whereas the second axis is abstracted away by assuming it to be given in advance. Regarding the latter axis, an unanswered question is left to be addressed throughout this section: is it necessarily mandatory to avoid collateral incisions, or is it possible to take advantage of a collateral incision to alter several lines at once? Moreover, for cases in which the latter question is true, how such pursuit would affect the first two axes? 
The reader should keep in mind that the need to manage different criteria of change is related to the inherently complex nature of the problem; furthermore, each criteria is meant to interact with the others in order to achieve an appropriate solution. 
Note that this solution would involve a compromise, since the main challenge in this model of change is to achieve a balance among the three axes of change.
In consequence, sometimes it will be necessary to update some of the initial orderings towards this balance. This discussion will be attended later in this section.




\subsection{Basic Elements of the Change Machinery}\label{sec.atc.basic.elements}

The argument revision operation will be performed over a \delp\ $\PP = \pair{\Pi}{\Delta}$ by a new argument \Aalpha. This argument will end up being warranted from the program resulting from the revision. However, \Aalpha\ is required to constitute a proper argument structure after the addition of \Arg to \PP. Thus, $\Arg\cup\Pi$ should have a defeasible derivation for $\alpha$. Since the set $\Pi$ of strict rules and facts represents (in a way) the current state of the world (which is indisputable), it is clear that \Arg does not stand by itself, but in conjunction with $\Pi$. Argument \Aalpha\ could be brought up, for instance, by an agent sensing the environment. In such a case, \Aalpha\ is going to be called \emph{\PP-external}, since it may contain defeasible rules external to the program \PP. However, if we consider the external information in \Arg along with the set $\Delta$ of defeasible rules in \PP, then \Arg should be an argument compliant with Def.~\ref{def.argument}.

\begin{Definition}[External Argument Structure] 
Let $\PP = \SD$ be a \delp, \Aalpha\ is a \textbf{\PP-external argument structure} (or simply, \textbf{external argument}) for a literal $\alpha$ from \PP\ \ifff $\Arg\not\subseteq\DD$ and \Aalpha\ is an argument structure from $(\SSet,\DD\cup\Arg)$. The domain of \PP-external arguments is identified through the set \EXT.
\end{Definition}

Once the external argument \Ah\ is added, the dialectical tree rooted in it has to be built in order to check its warrant status. The change machinery alters this tree only whenever it does not warrant \Ah. Therefore, such tree rooted in \Arg is referred to as \emph{temporary dialectical tree},\label{temporaryTree} since it will (in general) be an intermediate state during the revision process. 

Although it would be interesting to provoke \Ah\ to end up warranted only when $\alpha$ is not already warranted from another argument, it is desirable to always achieve warrant for \Ah\ in order to support the new external information brought by it. Therefore, since warrant for $\alpha$ could be easily checked beforehand, the stress is put on the complications arising of ensuring \Arg\ to end up undefeated.

Given the temporary dialectical tree, for every line $\lambda$ in it, a con argument is selected over $\interf$ on behalf of the \textit{selection criterion} ``\selcrit'', by means of an \emph{argument selection function} $\deactsel$. This criterion codifies one of the axes of change, setting an ordering among the con arguments in a given line. Afterwards, we present an example illustrating a reasonable \emph{initial setting} of the set ``\selcrit''. Later in this section it will be clear why the proposed ordering is just ``initial'', and not definitive.

\begin{Definition}[Selection Criterion (Set)]
Given a \delp\ \PP and the dialectical tree $\dtree{\Arg}{\PP}\in\allTrees{\PP}$, for any line $\lambda\in\dtree{\Arg}{\PP}$, the \textbf{selection criterion} $\selcrit\subseteq(\ARGS\times\ARGS)$ is the set of pairs $(\Brg_1,\Brg_2)$ stating that the deactivation of $\Brg_1\in\lambda^-$ is assumed to provoke less change than that of $\Brg_2\in\lambda^-$. The infix notation $\Brg_1\selcrit\Brg_2$ will be used to refer to $(\Brg_1,\Brg_2)\in\selcrit$. 
\end{Definition}

\begin{Example}\label{ex.selection.main.issue}
Assuming the root argument \Arg as the main issue being disputed, it is natural to think that the lower we go in an argumentation line in the tree, the most we move away from the main issue. Therefore, when looking for an argument in a line to be incised, the lowest the argument we deactivate is, the least change we provoke to the program in relation to the main issue in dispute. This intuition is used to initialize the selection criterion (different postures are discussed in Sect.~\ref{sec.principles}):
\begin{center}
\textit{Initial setting: }
Given a line $\lambda\in\bundle$ where \bundle determines \dtree{\Arg}{\PP},\\ 
$\selcrit=\{(\Brg_1,\Brg_2)\ |\ \Brg_1\in\interf, \Brg_2\in\interf,$ and $\Brg_2\in\upsegm{\lambda}{\Brg_1}\}$
\end{center}
\end{Example}

The selection criterion will allow us to univocally determine which argument is the right one to be deactivated in order to effectively alter any argumentation line. Recall that the choice of selecting just con arguments comes from Lemma~\ref{lemma.eff.alt}. In addition, we assume the existence of a special kind of argument, referred to as \textit{escape argument} and noted as $\epsilon$, such that $\epsilon\in\ARGS$ for any \delp\ \PP. The escape argument is used for theoretical matters only. Finally, no argument from $\ARGS$ defeats $\epsilon$ and $\epsilon$ does not defeat any argument from $\ARGS$.

\begin{Definition}[Argument Selection Function ``$\deactsel$'']\label{def:selection3}
Given a \delp\ \PP, the \textbf{argument selection function} $\deactsel: \Lines{\scriptsize\PP} \rightarrow \ARGS$ is defined as:
\[\deactsel(\lambda) = \begin{cases}
\epsilon & \text{if } \lambda^- = \emptyset\\
\Brg\in\lambda^- & \text{otherwise}
\end{cases}
\]
%
%
\ojo{such that if $\selcrit\neq\emptyset$ then there is some $\Crg\in\lambda^-$ where $(\Brg,\Crg)\in\selcrit$ and for any other $\Crg'\in\lambda^-$ such that $(\Crg',\Brg)\in\selcrit$ it holds $(\Brg,\Crg')\in\selcrit$.}
\end{Definition}




After an argument was selected for incision, a decision should be made according to which portion of the argument is going to be cut off. Since arguments are formed by defeasible rules, we provide a mechanism to set a preference over sets of defeasible rules: the \textit{rule-based criterion}. This criterion addresses the second axis of change stated before. \ojo{Rules in \delp s can be ordered in a wide variety of ways, \eg dynamically through a lexicographic ordering method. This discussion exceeds the scope of the article.} Although incisions are going to rely on the rule-based criterion, we will abstract away from it and will assume an order is given beforehand. 

\begin{Definition}[Rule-Based Criterion]
Given a \delp\ $\SD$, the \textbf{rule-based criterion} $\lesschg \subseteq \powerSet{\Ld} \times \powerSet{\Ld}$ defines a total order between pairs of subsets of $\Delta$.
\end{Definition}


\begin{Definition}[Argument Incision Function ``$\incise$'']\label{def:incision3}
Given a \delp\ \PP, a function $\incise: \ARGS \rightarrow \powerSet{\Ld}$ is an \textbf{argument incision function} such that:
\[\incise(\Brg) = \begin{cases}
\emptyset & \text{if } \Brg = \epsilon\\
\ojo{\Gamma} & \ojo{\text{otherwise}}
\end{cases}
\]
\ojo{where $\emptyset\subset\Gamma\subseteq\Brg $ such that $\Gamma\lesschg\Gamma'$ holds for any $\Gamma'\subseteq\Brg$, where $\Gamma\neq\Gamma'$.}
\end{Definition}


An \emph{argument incision function} $\incise$ is applied to the selected argument, identifying a non-empty subset of defeasible rules to be cut off from the \delp\ Once the incision over a con argument is performed, the line it belonged to ends up effectively altered.

\begin{Lemma}\label{lemma.incision.effectiveAlteration}
Given a \delp\ \PP and $\dtree{\Arg}{\PP}\in\accTrees{\PP}$, if ``\incise'' is an incision function then for any $\lambda\in\dtree{\Arg}{\PP}$,  $\PP\setminus\incise(\deactsel(\lambda))$ determines an effective alteration of $\lambda$. 
\end{Lemma}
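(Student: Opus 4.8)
The plan is to peel off the definitions of the selection function $\deactsel$ (Def.~\ref{def:selection3}) and the incision function $\incise$ (Def.~\ref{def:incision3}), reduce the claim to the notion of \emph{line alteration} of Def.~\ref{def.line.alt}, and then close with Lemma~\ref{lemma.eff.alt} (equivalently, Corollary~\ref{corollary.upper.con} together with Def.~\ref{def.eff.alt}).

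First I would dispatch the degenerate case $\lambda^{-}=\emptyset$. By Def.~\ref{def:selection3} this gives $\deactsel(\lambda)=\epsilon$, and by Def.~\ref{def:incision3} $\incise(\epsilon)=\emptyset$, so $\PP\setminus\incise(\deactsel(\lambda))=\PP$ and the alteration is the null line alteration; moreover $\lambda^{-}=\emptyset$ forces $\lambda=[\Arg]$, a line of length one, which is of odd length and hence never an attacking line (Prop.~\ref{prop.attacking.length.even}), so the statement holds vacuously (indeed this case cannot even arise once one restricts, as in the surrounding discussion, to a non-warranting temporary tree, where every line has length at least two). For the main case, put $\Brg:=\deactsel(\lambda)$. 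Since $\lambda^{-}\neq\emptyset$, Def.~\ref{def:selection3} yields $\Brg\in\lambda^{-}$, i.e.\ $\Brg$ is a con argument of $\lambda$, and Def.~\ref{def:incision3} yields $\Gamma:=\incise(\Brg)$ with $\emptyset\subsetneq\Gamma\subseteq\Brg$.

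I would then check that $\PP\setminus\Gamma$ provokes the alteration of $\lambda$ on $\Brg$ in the sense of Def.~\ref{def.line.alt}: taking $\Gamma'=\Gamma$ we have $\Gamma'\subseteq\Gamma$, $\Gamma'\subseteq\Brg$ and $\Brg\in\lambda$, so the only thing left is to argue that $\Gamma\cap\Crg=\emptyset$ for every $\Crg\in\upsegm{\lambda}{\Brg}$. Granting this, $\upsegm{\lambda}{\Brg}$ is the resulting altered line; since $\Brg\in\lambda^{-}$, Lemma~\ref{lemma.eff.alt} applies directly (alternatively: $\upsegm{\lambda}{\Brg}$ ends in a pro argument, hence is non-attacking by Corollary~\ref{coro.ends.in.pro}/Corollary~\ref{corollary.upper.con}, so the condition in Def.~\ref{def.eff.alt} is met), and we conclude that $\lambda$ is effectively altered on $\Brg$, which is the claim.

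The main obstacle is precisely the bookkeeping compressed into the previous paragraph: guaranteeing that removing $\Gamma$ ``bites'' exactly at $\Brg$ and not at an argument lying above it in $\lambda$. Because the arguments of a single argumentation line need not be pairwise disjoint as sets of defeasible rules (a con and a pro argument of the same line may well share a defeasible rule), a priori $\Gamma\subseteq\Brg$ could also meet some $\Crg$ strictly above $\Brg$; Def.~\ref{def.line.alt} then relocates the alteration to the topmost such $\Crg$, and were that $\Crg$ a pro argument its upper segment could stay attacking (Prop.~\ref{prop.upper.pro}). I would neutralize this in one of two ways: (i) show, from the acceptability structure of the line together with the selection criterion, that the topmost argument of $\lambda$ meeting $\Gamma$ is necessarily $\Brg$ itself, so condition (c) of Def.~\ref{def.line.alt} holds verbatim; or, failing that, (ii) let $\Brg^{*}$ be the topmost argument of $\lambda$ with $\Brg^{*}\cap\Gamma\neq\emptyset$ (so $\Brg^{*}\in\upsegmeq{}{\Brg}$), dispose of the subcase ``$\Brg^{*}$ con'' exactly as above, and for ``$\Brg^{*}$ pro'' observe that the surviving prefix $\upsegm{\lambda}{\Brg^{*}}$ is then a proper upper segment of another line of $\dtree{\Arg}{\PP'}\in\accTrees{\PP'}$ and hence not itself a line of that acceptable tree, so Def.~\ref{def.eff.alt}'s conditional is satisfied vacuously and $\lambda$ is still effectively altered. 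Either way, the substance of the lemma is just ``$\incise$ cuts inside a con argument'' combined with Corollary~\ref{corollary.upper.con}.
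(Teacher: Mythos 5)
Your core argument coincides with the paper's own proof: the case $\lambda^{-}=\emptyset$ is dispatched through the escape argument ($\deactsel(\lambda)=\epsilon$, $\incise(\epsilon)=\emptyset$, a null alteration of a single-argument and hence non-attacking line), and the main case reduces to ``the incision removes a non-empty part of the selected con argument'' followed by Lemma~\ref{lemma.eff.alt}. The paper's proof contains nothing beyond this; in particular it simply asserts that $\PP\setminus\incise(\deactsel(\lambda))$ alters $\lambda$ on the selected argument $\Brg=\deactsel(\lambda)$ and concludes.

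The obstacle you isolate is therefore genuine and is passed over in silence by the paper: since $\incise(\Brg)$ is merely the $\lesschg$-least non-empty subset of $\Brg$, nothing in the hypotheses of this lemma (a plain, not necessarily warranting, incision function) prevents it from meeting an argument strictly above $\Brg$ in the same line, or even the root (in Ex.~\ref{ex:cautious}, $\Brg_1\cap\Arg=\{\drule{x}{z}\}$, so if the rule-based criterion favoured that rule, the incision over the selection of $[\Arg,\Brg_1]$ would bite at \Arg). In the paper this within-line collaterality is only excluded later, by the preservation principle instantiated with $\lambda=\lambda'$ (hence by warranting incision functions, as in Lemma~\ref{lemma.warrantingIncision.effectiveAlteration}) or by cautiousness; it is not excluded here. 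Neither of your repairs closes the gap: (i) is precisely what cannot be derived from the stated hypotheses, and (ii) rests on the unjustified claim that the surviving prefix $\upsegm{\lambda}{\Brg^{*}}$ cannot belong to the new acceptable tree --- it can, because the con argument ending that prefix may have no remaining defeaters in $\PP'$, and then Prop.~\ref{prop.upper.pro} makes the prefix attacking whenever $\lambda$ was, so the alteration is not effective (and if $\Brg^{*}=\Arg$ the altered line is not even defined, which is exactly what root preservation is later introduced to forbid). So your write-up reproduces the paper's argument faithfully and correctly exposes its hidden assumption, but the extra patching does not succeed; the lemma is only safe under the implicit reading that the incision bites exactly at the selection, the reading that preservation subsequently makes explicit.
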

\begin{proof}
Considering $\deactsel(\lambda)$, from Def.~\ref{def:selection3}, two options arise: either $\lambda^-=\emptyset$, in which case $\deactsel(\lambda)=\epsilon$, or otherwise we know $\deactsel(\lambda)\in\lambda^-$. Considering $\incise(\deactsel(\lambda))$, from Def.~\ref{def:incision3}, again we have two options: either (1) $\deactsel(\lambda)=\epsilon$, in which case $\incise(\deactsel(\lambda))=\emptyset$, or otherwise, (2) $\incise(\deactsel(\lambda))=\Gamma$, where $\Gamma\subseteq\deactsel(\lambda)$. For (1), $\PP\setminus\incise(\deactsel(\lambda))=\PP$ determines a null alteration of $\lambda$ (see Def.~\ref{def.line.alt}). However, since $\lambda^-=\emptyset$, we know $\lambda$ contains only the root argument which means that $\lambda$ has odd length, and from Prop.~\ref{prop.attacking.length.even} (contrapositive), we know $\lambda$ is not attacking. Hence, the alteration of $\lambda$ is effective (see Def.~\ref{def.eff.alt}). On the other hand, for (2), $\PP\setminus\incise(\deactsel(\lambda))$ alters $\lambda$ rendering a non-attacking upper segment $\upsegm{\lambda}{\Brg}$ of $\lambda$. 
Hence, $\lambda$ is altered by deactivating one of its con arguments. From Lemma~\ref{lemma.eff.alt}, we know this ends up in an effective alteration of $\lambda$ which means that it turns to non-attacking (independently of $\lambda$ being previously attacking). 
Finally, $\PP\setminus\incise(\deactsel(\lambda))$ determines an effective alteration of $\lambda$ (Def.~\ref{def.eff.alt}).
\end{proof}

To deactivate an argument, we need to delete defeasible rules from the \delp\ at issue. These rules are mapped by the incision function applied over that argument. Moreover, the incised defeasible rules are considered to provoke the least possible change in concordance with the minimal change principle. 
Regretfully, sometimes incisions will affect more arguments than the one being incised. In order to identify this situation, we introduce the notion of \emph{collateral incision}.

\begin{Definition}[Collateral Incision]\label{def.colinc}
Given a \delp\ \PP, an incision over an argument $\Drg$ provokes a \textbf{collateral incision} over an argument $\Brg$ iff $\incise(\Drg)\cap\Brg \neq \emptyset$. For any $\lambda\in\Lines{\PP}$, if $\Brg\in\lambda$ and $\incise(\Drg)\cap\Crg = \emptyset$ for every $\Crg\in\lambda^{\uparrow}(\Brg)$, we say $\colinc{\incise(\Drg)}{\Brg}$ is the \textbf{uppermost collateral incision over $\lambda$} where $\colinc{\incise(\Drg)}{\Brg} = \incise(\Drg)\cap\Brg$.
\end{Definition}

When a collateral incision occurs over more than one argument in the same line, we will be interested in the uppermost collaterally incised argument, since its deactivation will make the arguments below it in the line to disappear from the resulting tree. Hence, non-uppermost collateral incisions will not be affecting the status of the root argument in the temporary tree.

From now on, we will make reference only to uppermost collateral incisions (though sometimes we will omit the word ``uppermost'') through the notation introduced in the above definition. Collateral incisions represent the main difficulty to overcome: the involuntary deactivation of pro arguments in non-attacking lines might turn them into attacking lines. In the case of the pro argument belonging to an attacking line, as analyzed before, its deactivation would not change the line's status (see Prop.~\ref{prop.upper.pro}). Moreover, although a collaterally incised con argument does not turn lines into attacking, it would also be a source of possibly unnecessary change. Therefore, it is desirable to select arguments in which there is a possibility of incision that never results in a collateral incision to other arguments. This is captured by the \emph{cautiousness} principle.

\begin{center}
\textbf{(Cautiousness)} $\deactsel(\lambda) \setminus \bigcup\{\Brg$ in $\dtree{\Arg}{\PP} \ | \ \Brg\neq\deactsel(\lambda)\} \neq \emptyset$, for every $\lambda \in \dtree{\Arg}{\PP}$
\end{center}

\begin{Definition}[Cautious Selections]\label{def:cautious}
A selection function $\deactsel$ is identified as \textbf{cautious} \ifff it satisfies \textbf{cautiousness}.
\end{Definition}

A cautious selection function $\deactsel$ ensures that there is some incision function $\incise$ such that for any $\lambda\in\dtree{\Arg}{\PP}$, it follows $\incise(\deactsel(\lambda))$ does not collaterally incise any other argument \Brg in the tree; namely $\colinc{\incise(\deactsel(\lambda))}{\Brg} = \emptyset$. Nonetheless, it is important to remark that the possibility for an incision function to cut rules avoiding collateral incisions will be highly dependent on the rule-based criterion. 
That is, an incision may apply for the best option given by the criterion, but this might not be the best one to avoid a collateral incision. In such cases, a possible alternative is to relax the order given by the rule-based criterion. This matter speaks about the relation between the second and third axes of change. The appropriate analysis of this subject is similar to the selection criterion's which motivates the inclusion of an update rule. This will be made clear later in this section.


\begin{Example}\label{ex:cautious}
From the tree of Ex.~\ref{ex:tree}, the only possible selection in the attacking line $[\Arg, \Barg_1]$ is $\Barg_1$, whereas for the attacking line $[\Arg, \Barg_2, \Barg_3, \Barg_4]$, the selection function could return either $\Barg_2$ or $\Barg_4$, depending on the selection criterion. Regarding the selection of $\Barg_4$, it satisfies cautiousness because it has no intersection with any other argument. In contrast, the selection of $\Barg_2$ would be non-cautious, since its two rules \drule{\no a}{y} and \drule{y}{p} belong to $\Barg_1$ and $\Barg_3$, respectively. Finally, considering $\Barg_1$ in the other attacking line, we have that $\Barg_1 \cap \Arg = \{\drule{x}{z}\}$ and $\Barg_1 \cap \Barg_2 = \{\drule{\no a}{y}\}$. However, the remaining portion of $\Barg_1$ is non-empty: $\Barg_1 \setminus \ojo{\bigcup \{\Arg,\Barg_2, \Barg_3, \Barg_4, \Barg_5, \Barg_6\}} = \{\drule{y}{x}\}$; hence, the selection of $\Barg_1$ satisfies cautiousness.
\end{Example}

Sometimes cautiousness may not be satisfied. In such a case, when a non-cautious selection is unavoidable, the incision inevitably provokes collateral incisions. Sometimes collateral incisions could be harmful: assume a line $\lambda$ is collaterally incised over $\Brg\in\lambda$, thus provoking the involuntary deactivation of $\Brg$. If $\Brg$ is placed above the selection in that line, \ie $\Brg\in\upsegm{\lambda}{\deactsel(\lambda)}$, then the regular (non-collateral) alteration of $\lambda$ --performed through the deactivation of $\deactsel(\lambda)$-- is left without effect. That is, the effective alteration of $\lambda$, leaving a non-attacking line $\lambda'$, cannot be completely trusted since collateral incisions might turn $\lambda'$ to attacking afterwards.

These situations need to be appropriately addressed to ensure the correctness of the revision operation. Hence, an additional condition should be provided in order to preserve every effective alteration from being collaterally altered. Next we introduce the \emph{preservation} principle, which avoids collateral incisions over any line to occur over arguments placed above the selected argument in that line. In addition, preservation also ensures no collateral incision to occur over the root argument. This is necessary to keep \Arg active which is paramount to pursue its warrant. 
It is important to note that the preservation principle is given as a logical formula to restrict the respective images of the selection and incision functions. Note that this principle does not intend to provide any specific procedure nor algorithm.

\begin{center}\label{preservation}
\textbf{(Preservation)}
If $\colinc{\incise(\deactsel(\lambda'))}{\Brg}\neq\emptyset$ then $\Brg\neq\Arg$ and $\deactsel(\lambda)\in\lambda^{\uparrow}[\Brg]$,\\
for every $\lambda\in\dtree{\Arg}{\PP}$, $\lambda'\in\dtree{\Arg}{\PP}$, and $\Brg\in\lambda$
\end{center}

\begin{window}[0,r,{\mbox{\epsfig{file=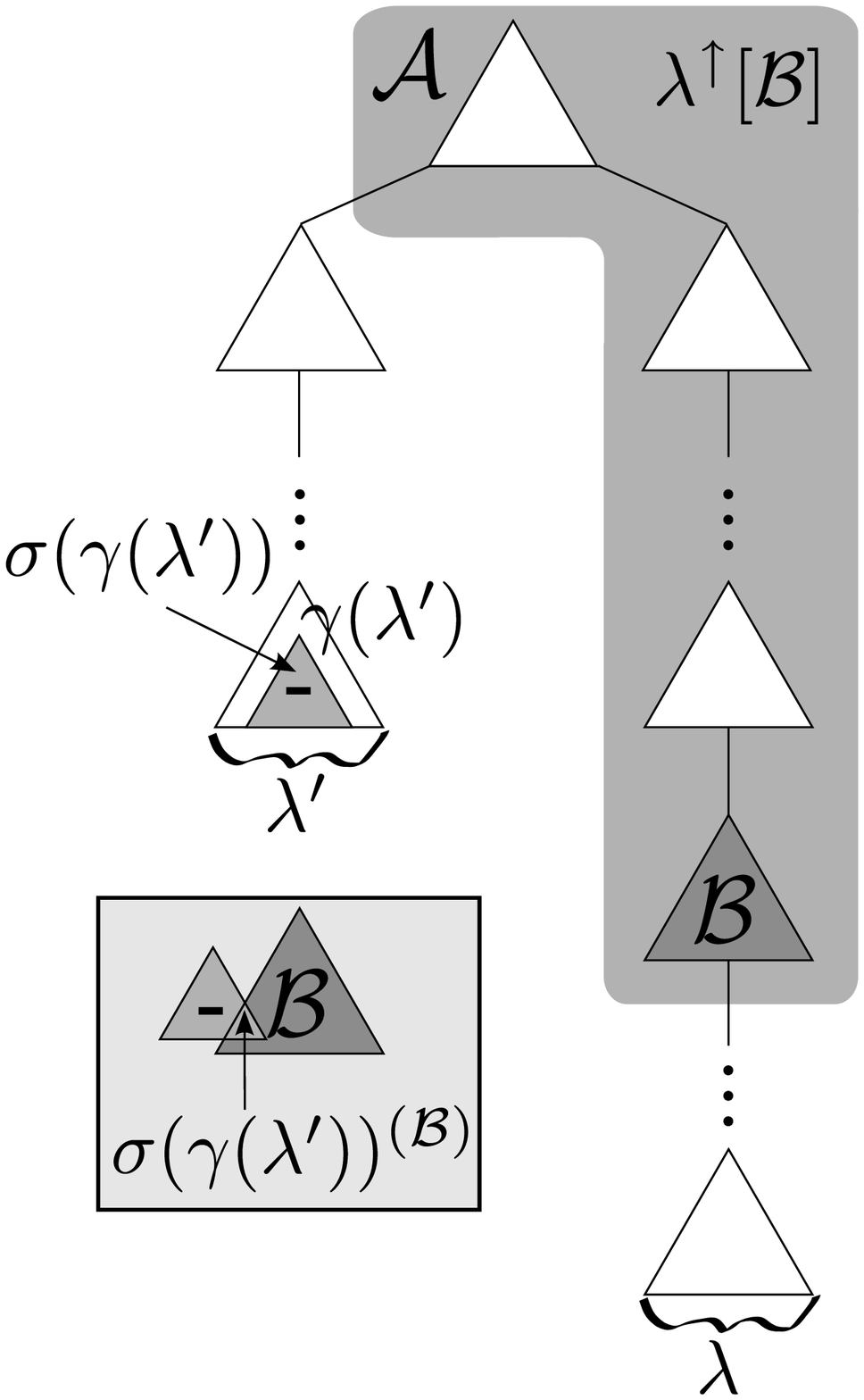,scale=.3}}},{}]
This principle is illustrated in the figure depicted on the right and exemplified in Ex.~\ref{ex.preservation}. When an incision
in line $\lambda'$ (the left branch in the figure) results in an uppermost collateral incision $\colinc{\incise(\deactsel(\lambda'))}{\Brg}$ over an argument \Brg in line $\lambda$ (right branch), it must be ensured that the selection $\deactsel(\lambda)$ in line $\lambda$ is performed on the upper segment of \Brg. 
Finally, note that if \Brg is the root node \Arg then the consequent of the preservation principle is false ($\Brg = \Arg$), which forces the antecedent to be false in order for the principle to hold. Hence, preservation requires $\colinc{\incise(\deactsel(\lambda'))}{\Arg}=\emptyset$ to be satisfied. 
We refer to this individual condition as \textit{root preservation}.
\end{window}

\begin{center}
	\textbf{(Root preservation)} $\colinc{\incise(\deactsel(\lambda))}{\Arg} = \emptyset$, for every $\lambda\in\dtree{\Arg}{\PP}$ 
\end{center}

%

\begin{Example}\label{ex.preservation}
Let us consider the dialectical tree on the right, upon which selections
\begin{window}[1,r,{\mbox{\epsfig{file=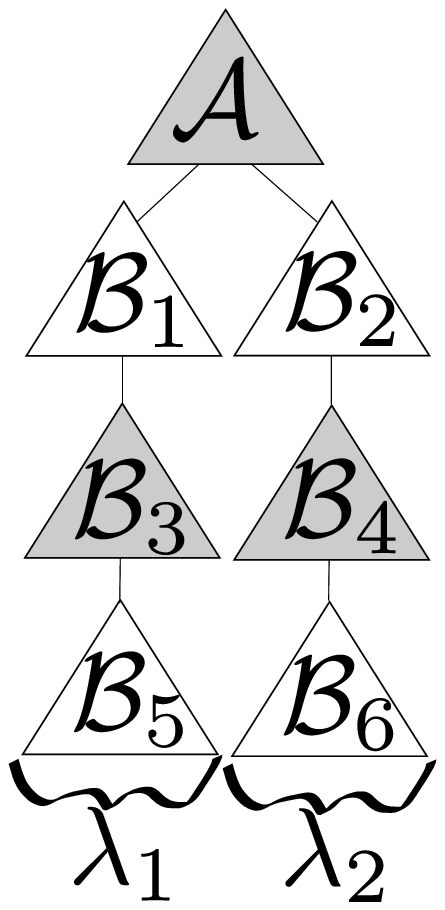,scale=.32}}},{}]
\noindent and incisions are applied. Assuming the selections $\deactsel(\lambda_1) = \Brg_5$ and $\deactsel(\lambda_2) = \Brg_6$, and that the incision over $\Brg_5$ provokes a collateral incision over $\Brg_4$, \ie $\colinc{\incise(\deactsel(\lambda_1))}{\Brg_4}\neq\emptyset$, we have that, in order for preservation to hold, the selection over $\lambda_2$ should be placed on or above $\Brg_4$, \ie $\deactsel(\lambda_2)\in\upsegmeq{2}{\Brg_4}$. The alternative would be to find a different selection function mapping to $\Brg_2$ from $\lambda_2$, and thus the alteration of $\lambda_2$ would finally result from the incision over $\Brg_2$. Note that, if the old selection were kept, the collateral incision over $\Brg_4$ would have yielded the attacking line $[\Arg,\Brg_2]$.
\end{window}
\end{Example}


\ojo{When preservation is satisfied given that every collateral incision $\colinc{\incise(\deactsel(\lambda'))}{\Brg}$ occurs solely over the selection $\Brg=\deactsel(\lambda')$ (and therefore $\lambda=\lambda'$), the principle holds for each line in the dialectical tree, since $\deactsel(\lambda)\in\upsegmeq{}{\Brg}$ is always satisfied. We refer to this particular case as \textit{strict preservation}.} 

\begin{center}
\textbf{(Strict preservation)} 
\ojo{If $\colinc{\incise(\deactsel(\lambda'))}{\Brg}\neq\emptyset$ then $\lambda=\lambda'$ and $\deactsel(\lambda)=\Brg$,\\
for every $\lambda\in\dtree{\Arg}{\PP}$, $\lambda'\in\dtree{\Arg}{\PP}$, and $\Brg\in\lambda$}
\end{center}

An incision function satisfying strict preservation ensures no argument is collaterally incised by any incision in the tree \ojo{excepting for the incision over the selected argument itself}. However, this principle is too restrictive and thus, it may sometimes be impossible to satisfy. Observe that there exists a relation between cautiousness and strict preservation: a selection function satisfying cautiousness ensures there exists some incision function free of collateral incisions, while an incision function satisfying strict preservation ensures that, with the actual configuration of incisions, no collateral incisions over an argument \ojo{different from the selected one} will occur. The following two propositions address such relation between cautiousness and strict preservation.

\begin{Proposition}\label{prop.cautious.to.strict}
Given a \delp\ \PP, a dialectical tree \dtree{\Arg}{\PP}, and a selection function ``$\deactsel$'', if cautiousness is satisfied then there is some rule-based criterion which leads to a strict-preserving incision function ``$\incise$''.
\end{Proposition}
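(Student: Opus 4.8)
The plan is to construct an explicit rule-based criterion (a total order on subsets of $\Delta$) out of the cautiousness hypothesis, and then show that the induced incision function $\incise$ — the one mandated by Def.~\ref{def:incision3} to pick the $\lesschg$-least non-empty subset of each selected argument — never produces a collateral incision over any argument other than the selected one, which is exactly strict preservation. First I would unfold what cautiousness gives us: for every line $\lambda\in\dtree{\Arg}{\PP}$, the set $R_\lambda := \deactsel(\lambda)\setminus\bigcup\{\Brg \text{ in }\dtree{\Arg}{\PP}\mid\Brg\neq\deactsel(\lambda)\}$ is non-empty. Each such $R_\lambda$ is a non-empty subset of $\deactsel(\lambda)$ consisting of defeasible rules that appear in \emph{no other argument} of the tree; hence removing $R_\lambda$ (or any non-empty subset of it) deactivates $\deactsel(\lambda)$ while touching no other argument in $\dtree{\Arg}{\PP}$.

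Next I would define the rule-based criterion so that, for every selected argument $\Brg=\deactsel(\lambda)$, a non-empty subset of $R_\lambda$ is $\lesschg$-minimal among all non-empty subsets of $\Brg$. Concretely: fix an arbitrary enumeration of all non-empty finite sets of defeasible rules, and build $\lesschg$ as any total order in which every set contained in some $R_\lambda$ precedes every set not contained in that $R_\lambda$ when comparing subsets of the corresponding $\Brg$; ties and the remaining pairs are broken arbitrarily to make the order total (as Def.~\ref{def:incision3}'s ``Rule-Based Criterion'' requires a total order on $2^{\Ld}\times 2^{\Ld}$). One must check this is coherent — i.e. the local constraints ``within $2^{\deactsel(\lambda)}$, sets $\subseteq R_\lambda$ come first'' do not contradict each other across different lines. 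They do not, because the constraint for line $\lambda$ only concerns the interval of subsets of $\deactsel(\lambda)$, and whenever two lines $\lambda,\lambda'$ have $\deactsel(\lambda)=\deactsel(\lambda')$ they also have $R_\lambda=R_{\lambda'}$, so the constraints coincide; when the selected arguments differ, the constrained intervals are over different ground sets and can be ordered independently, then amalgamated into a single total order (e.g. order first by the selected-argument index, then locally). With this $\lesschg$ in hand, Def.~\ref{def:incision3} forces $\incise(\deactsel(\lambda))\subseteq R_\lambda$ for every $\lambda$ (the escape argument case $\deactsel(\lambda)=\epsilon$ is trivial, since $\incise(\epsilon)=\emptyset$ collaterally incises nothing).

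Finally I would verify strict preservation directly. Take any $\lambda,\lambda'\in\dtree{\Arg}{\PP}$ and any $\Brg\in\lambda$ with $\colinc{\incise(\deactsel(\lambda'))}{\Brg}\neq\emptyset$, i.e. $\incise(\deactsel(\lambda'))\cap\Brg\neq\emptyset$. By the previous paragraph $\incise(\deactsel(\lambda'))\subseteq R_{\lambda'}$, and every rule in $R_{\lambda'}$ lies in no argument of the tree other than $\deactsel(\lambda')$; since $\Brg$ is an argument in $\dtree{\Arg}{\PP}$ sharing a rule with $R_{\lambda'}$, we must have $\Brg=\deactsel(\lambda')$. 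It remains to conclude $\lambda=\lambda'$ and $\deactsel(\lambda)=\Brg$: from $\Brg\in\lambda$ and $\Brg=\deactsel(\lambda')$ we need $\deactsel(\lambda)=\deactsel(\lambda')$; if the selection functions could in principle pick distinct con arguments in $\lambda$ and $\lambda'$ while $\deactsel(\lambda')\in\lambda$, one observes that $\deactsel(\lambda')=\Brg\in\lambda^-$ would by cautiousness again force its incised portion to miss all other arguments, and running the same argument with the roles of $\lambda$'s selection yields $\incise(\deactsel(\lambda))\subseteq R_\lambda$ disjoint from every other argument including $\deactsel(\lambda')$ — so in particular no collateral incision links the two lines unless the selected arguments literally coincide, whence $\lambda=\lambda'$ follows from the standard convention (used throughout Sect.~\ref{sec.operators}) that the selection identifies the line. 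Thus the consequent of strict preservation holds.

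The main obstacle I anticipate is the coherence check in the middle step: one must be careful that the family of ``local'' ordering demands $\{$subsets of $R_\lambda$ precede the rest, inside $2^{\deactsel(\lambda)}\}$, indexed over all lines, can be simultaneously realized inside one global total order on $2^{\Ld}$ — this is where the observation that equal selections force equal $R_\lambda$'s, plus a lexicographic-style amalgamation (order by selected argument first), does the work. A secondary subtlety is handling the boundary cases cleanly: lines with $\lambda^-=\emptyset$ (so $\deactsel(\lambda)=\epsilon$), for which $R_\lambda$ is irrelevant and the strict-preservation obligation is vacuous because $\incise(\epsilon)=\emptyset$.
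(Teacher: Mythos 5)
Your proposal is correct and follows essentially the same route as the paper's proof: cautiousness yields, for each line $\lambda$, a non-empty set of rules in $\deactsel(\lambda)$ shared with no other argument of the tree, a rule-based criterion is then chosen making (subsets of) these exclusive sets $\lesschg$-minimal, so that every incision falls inside them and no argument other than the selection is ever collaterally incised, which is exactly strict preservation. The extra care you take about amalgamating the local constraints into one coherent total order (and about the $\epsilon$/empty-selection boundary case) is detail the paper leaves implicit, not a different argument.
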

\begin{proof}
If \deactsel is cautious, then for any line $\lambda\in\dtree{\Arg}{\PP}$ there exists a set of rules $X_{\lambda}\subseteq\deactsel(\lambda)$ that does not overlap with any other argument in $\dtree{\Arg}{\PP}$, \ie $X_{\lambda}\cap\Brg=\emptyset$, for every $\lambda\in \dtree{\Arg}{\PP}$, every $\Brg\in\lambda'$ and every $\lambda'\in \dtree{\Arg}{\PP}$, such that $\deactsel(\lambda)\neq\Brg$. Let us assume a rule-based criterion such that for any $\Gamma\subseteq\bigcup_{\scriptsize\lambda\in\dtree{\Arg}{\tiny\PP}}X_{\lambda}$ and any $\Gamma'\subseteq\DD$ satisfying $\Gamma\cap \Gamma'=\emptyset$, it holds $\Gamma\lesschg \Gamma'$. 
Such rule-based criterion leads to an incision function satisfying $\incise(\deactsel(\lambda))\subseteq X_{\lambda}$, and then the deactivation of each selected argument $\deactsel(\lambda)$, \ojo{would not provoke any other argument to be deactivated. That is, $\colinc{\incise(\deactsel(\lambda))}{\Brg}=\emptyset$, for every $\lambda\in\dtree{\Arg}{\PP}$ and any $\Brg\neq\deactsel(\lambda)$. Observe that this condition leads to the verification of strict preservation.} Therefore, there exists a rule-based criterion leading to a strict-preserving incision function.
\end{proof}

\begin{Proposition}\label{prop.strict.to.cautious}
Given a \delp\ \PP, a dialectical tree \dtree{\Arg}{\PP}, a selection function ``$\deactsel$'', and an incision function ``$\incise$'', if strict preservation is satisfied then cautiousness is also satisfied. 
\end{Proposition}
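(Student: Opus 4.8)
The plan is to prove the statement directly: fix an arbitrary line $\lambda\in\dtree{\Arg}{\PP}$ and exhibit one defeasible rule that lies in $\deactsel(\lambda)$ but in no other argument occurring in $\dtree{\Arg}{\PP}$. That is exactly what \textbf{cautiousness} (Def.~\ref{def:cautious}) asks of that line, so since $\lambda$ is arbitrary we are done. The same reasoning can be cast as the contrapositive of the claim: if cautiousness fails at some line, then every rule of $\deactsel(\lambda)$, in particular the one chosen by the incision, is shared with another argument, and this will be seen to break strict preservation.

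First I would dispose of the degenerate case $\deactsel(\lambda)=\epsilon$. By Def.~\ref{def:selection3} this occurs only when $\lambda^-=\emptyset$, i.e.\ $\lambda$ carries no con argument and is therefore the singleton line $[\Arg]$; but then $\Arg$ has no defeater, $\dtree{\Arg}{\PP}=\{[\Arg]\}$ is warranting, and nothing is selected or altered, so cautiousness holds vacuously there (this is the purely theoretical rôle of the escape argument). In the main case $\deactsel(\lambda)\in\lambda^-$ is a genuine argument, and Def.~\ref{def:incision3} yields $\incise(\deactsel(\lambda))=\Gamma$ with $\emptyset\subset\Gamma\subseteq\deactsel(\lambda)$; pick any rule $r\in\Gamma$.

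The core step uses strict preservation. Suppose, towards a contradiction, that $r$ also belongs to some argument $\Brg$ occurring in $\dtree{\Arg}{\PP}$ with $\Brg\neq\deactsel(\lambda)$, say $\Brg\in\mu$ for a line $\mu\in\dtree{\Arg}{\PP}$. Then $\incise(\deactsel(\lambda))\cap\Brg\ni r$, so by Def.~\ref{def.colinc} the incision over $\deactsel(\lambda)$ provokes a collateral incision over $\Brg$, i.e.\ $\colinc{\incise(\deactsel(\lambda))}{\Brg}\neq\emptyset$. Instantiating strict preservation with $\lambda':=\lambda$ (hence $\deactsel(\lambda')=\deactsel(\lambda)$), the line $\mu$ in the rôle of $\lambda$, and the argument $\Brg$, we obtain $\deactsel(\lambda)=\Brg$, contradicting $\Brg\neq\deactsel(\lambda)$. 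Therefore no such $\Brg$ exists, so $r\notin\bigcup\{\Brg\text{ in }\dtree{\Arg}{\PP}\mid\Brg\neq\deactsel(\lambda)\}$ while $r\in\Gamma\subseteq\deactsel(\lambda)$, which gives $\deactsel(\lambda)\setminus\bigcup\{\Brg\text{ in }\dtree{\Arg}{\PP}\mid\Brg\neq\deactsel(\lambda)\}\neq\emptyset$, as required.

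The point demanding the most care is pinning down the exact reading of the notation $\colinc{\cdot}{\cdot}$ inside the strict-preservation schema, since Def.~\ref{def.colinc} introduces it together with the \emph{uppermost} collaterally incised argument of a line. I would handle this by restating strict preservation, at the start of the proof, in its equivalent operational form ``$\incise(\deactsel(\lambda'))$ shares no defeasible rule with any argument of $\dtree{\Arg}{\PP}$ other than $\deactsel(\lambda')$ itself'' --- exactly the informal gloss given just before the schema --- after which the witness-rule argument above applies verbatim. The remaining items (the case split on $\epsilon$ and the non-emptiness of $\Gamma$) are routine consequences of Def.~\ref{def:selection3} and Def.~\ref{def:incision3}. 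This proposition together with Prop.~\ref{prop.cautious.to.strict} then closes the loop between cautiousness of the selection and strict preservation of the incision.
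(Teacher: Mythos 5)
Your proof is correct and takes essentially the same route as the paper's: strict preservation forces the nonempty incision $\incise(\deactsel(\lambda))\subseteq\deactsel(\lambda)$ to be disjoint from every argument of the tree other than the selection itself, and that incision is precisely the witness that cautiousness requires. Your extra handling of the escape argument and of the ``uppermost'' reading of the collateral-incision notation only spells out points the paper's proof leaves implicit.
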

\begin{proof}
If \incise is strict-preserving then no incision provokes collateralities. That is, \ojo{if $\colinc{\incise(\deactsel(\lambda'))}{\Brg}\neq\emptyset$ then $\lambda=\lambda'$ and $\deactsel(\lambda)=\Brg$, for every $\lambda\in\dtree{\Arg}{\PP}$, $\lambda'\in\dtree{\Arg}{\PP}$, and $\Brg\in\lambda$. Thus,  $\colinc{\incise(\deactsel(\lambda'))}{\Brg} = \emptyset$ holds always that $\Brg\neq\deactsel(\lambda)$ is ensured.} This means that, in each $\lambda$, there is a subset of $\deactsel(\lambda)$ that does not belong to any argument \ojo{$\Brg\neq\deactsel(\lambda)$} in \dtree{\Arg}{\PP}. Therefore, \deactsel is cautious.
\end{proof}

Proposition~\ref{prop.cautious.to.strict} states that, when a selection is cautious, even though it overlaps with some argument, the incision over that selection might be performed outside this overlapping. In this case, there is no collateral incision and strict preservation holds. 
Finally, by Proposition~\ref{prop.strict.to.cautious}, it is clear that a strict-preserving incision function may be achieved only through cautious selections.


The three preservation principles are interrelated through the following proposition. Afterwards, given a dialectical tree, it is shown the utmost importance of the preservation principle regarding the controlled alteration of lines and any arising collateralities towards achieving a warranting condition for the root argument.

\begin{Proposition}\label{prop.preservation}
Given a \delp\ \PP, a dialectical tree $\dtree{\Arg}{\PP}\in\accTrees{\PP}$, and an argument incision function ``$\incise$'',
\vspace{-6mm}\begin{description}
\item\indent\begin{enumerate}
\item if preservation is satisfied then root preservation is also satisfied.
\item if strict preservation is satisfied then preservation is also satisfied.
\end{enumerate}
\end{description}
\end{Proposition}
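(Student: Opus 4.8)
The plan is to derive both items by direct instantiation of the respective principles, doing item~1 by \emph{reductio ad absurdum} and item~2 by a straightforward unfolding of the definitions.

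For item~1, I would assume that preservation holds while root preservation fails, so that there is some $\lambda\in\dtree{\Arg}{\PP}$ with $\colinc{\incise(\deactsel(\lambda))}{\Arg}\neq\emptyset$, i.e. $\incise(\deactsel(\lambda))\cap\Arg\neq\emptyset$. The key observation is that the root $\Arg$ is the topmost argument of every argumentation line, so it is vacuously the uppermost collaterally incised argument of $\lambda$ whenever it is incised at all (no argument lies above it in $\upsegm{\lambda}{\Arg}$); hence the collateral-incision notation applies to $\Arg$ and our assumption is a genuine instance of a nonempty collateral incision. Now I instantiate the preservation principle with $\lambda':=\lambda$, $\Brg:=\Arg$, and $\lambda:=\lambda'$ in the remaining quantified position (legitimate since $\Arg$ belongs to every line). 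Its antecedent holds, so its consequent must hold; but the consequent requires $\Brg\neq\Arg$, contradicting $\Brg=\Arg$. Therefore root preservation holds.

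For item~2, I would assume strict preservation and fix arbitrary $\lambda,\lambda'\in\dtree{\Arg}{\PP}$ and $\Brg\in\lambda$ with $\colinc{\incise(\deactsel(\lambda'))}{\Brg}\neq\emptyset$; the goal is to obtain the consequent of preservation, namely $\Brg\neq\Arg$ and $\deactsel(\lambda)\in\lambda^{\uparrow}[\Brg]$. Strict preservation immediately yields $\lambda=\lambda'$ and $\deactsel(\lambda)=\Brg$. Since $\incise(\deactsel(\lambda))\cap\Brg\neq\emptyset$ whereas $\incise(\epsilon)=\emptyset$ by Def.~\ref{def:incision3}, the selected argument cannot be the escape argument $\epsilon$, so by Def.~\ref{def:selection3} we get $\deactsel(\lambda)\in\lambda^-$; thus $\Brg=\deactsel(\lambda)$ is a con argument. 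Because $\Arg$ sits at the first (odd) position of $\lambda$ it is a pro argument, hence distinct from the con argument $\Brg$, giving $\Brg\neq\Arg$. Finally $\deactsel(\lambda)=\Brg$ trivially lies in the non-proper upper segment $\lambda^{\uparrow}[\Brg]$, which by Def.~\ref{def:upper:segment} contains $\Brg$ as its last element. Both conjuncts hold, so preservation is satisfied.

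I do not anticipate a substantial obstacle: the whole argument is essentially quantifier instantiation plus definition chasing. The two points that need a little care are (i) justifying in item~1 that $\Arg$, being the root, is automatically the uppermost incised argument so that $\colinc{\cdot}{\Arg}$ is well-defined and the preservation instance is legitimate, and (ii) in item~2 ruling out $\deactsel(\lambda)=\epsilon$ via $\incise(\epsilon)=\emptyset$ before invoking the con-argument clause of the selection function, and recalling that the root occupies a pro position. Everything else is routine.
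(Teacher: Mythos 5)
Your proof is correct and follows essentially the same route as the paper's: item~1 is the same instantiation of preservation at $\Brg=\Arg$ yielding the absurdity $\Arg\neq\Arg$, and item~2 unfolds strict preservation to get $\lambda=\lambda'$, $\deactsel(\lambda)=\Brg$, concludes $\Brg\neq\Arg$ because a selection is a con argument while the root is pro, and notes $\Brg\in\upsegmeq{}{\Brg}$. Your only addition is explicitly ruling out $\deactsel(\lambda)=\epsilon$ via $\incise(\epsilon)=\emptyset$, a small edge case the paper's proof passes over silently.
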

\begin{proof}
1. Assume preservation is satisfied and consider $\Brg=\Arg$, if $\colinc{\incise(\deactsel(\lambda'))}{\Arg} \neq \emptyset$ then we know $\Arg\neq\Arg$, which is absurd. Hence, $\colinc{\incise(\deactsel(\lambda'))}{\Arg} = \emptyset$, and therefore, root-preservation holds.
%


\ojo{2. Assuming strict preservation is satisfied, we know that if $\colinc{\incise(\deactsel(\lambda'))}{\Brg}\neq\emptyset$ then $\lambda=\lambda'$ and $\deactsel(\lambda)=\Brg$, for every $\lambda\in\dtree{\Arg}{\PP}$, $\lambda'\in\dtree{\Arg}{\PP}$, and $\Brg\in\lambda$. In particular this means that $\Brg\neq\Arg$, given that from Def.~\ref{def:selection3}, a selection is necessarily a con argument, and thus it cannot be the root argument which is pro. Besides, since $\deactsel(\lambda)=\Brg$ it also holds $\deactsel(\lambda)\in\upsegmeq{}{\Brg}$. Finally, preservation is satisfied.}
\end{proof}

\begin{Definition}[Warranting Incision Function]\label{def.warrantingIncision}
An argument incision function ``$\incise$'' is said to be \textbf{warranting} iff it satisfies preservation.
\end{Definition}

The fact that an incision function satisfies the preservation principle ensures that it will handle collateral incisions in a proper manner. Then, any arising attacking line will be correspondingly effectively altered. \ojo{For the following definition recall that \bundle refers to the bundle set of argumentation lines in the dialectical tree rooted in \Arg from the \delp\ \PP (see Def.~\ref{def.bundle}).}


\begin{Lemma}\label{lemma.warrantingIncision.effectiveAlteration}
Given a \delp\ \PP and a dialectical tree $\dtree{\Arg}{\PP}\in\accTrees{\PP}$, if ``\incise'' is a warranting incision function then for any set $X\subseteq\bundle$, $\PP\setminus\bigcup_{\lambda\in X}\incise(\deactsel(\lambda))$ determines the effective alteration of each $\lambda\in X$.
\end{Lemma}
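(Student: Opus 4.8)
The plan is to reduce the simultaneous-alteration statement to the single-line result already available (Lemma~\ref{lemma.incision.effectiveAlteration}) by showing that the preservation principle prevents the incisions for the various lines in $X$ from interfering destructively with one another. First I would fix an arbitrary $\lambda\in X$ and set $\Gamma = \bigcup_{\mu\in X}\incise(\deactsel(\mu))$, so that the resulting program is $\PP' = \PP\setminus\Gamma$. By Def.~\ref{def.line.alt}, to see that $\PP\setminus\Gamma$ provokes the alteration of $\lambda$ on some argument $\Brg\in\lambda$, I must exhibit a $\Gamma'\subseteq\Gamma$ with $\Gamma'\subseteq\Brg$ and $\Gamma\cap\Crg=\emptyset$ for every $\Crg\in\upsegm{\lambda}{\Brg}$. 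The natural candidate for $\Brg$ is the uppermost argument of $\lambda$ touched by $\Gamma$, i.e.\ the uppermost collaterally incised argument of $\lambda$ (Def.~\ref{def.colinc}); $\Gamma'$ is then the nonempty portion $\Gamma\cap\Brg$.

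The crux is to argue that this uppermost touched argument $\Brg$ is \emph{not strictly above} $\deactsel(\lambda)$ in $\lambda$ — equivalently, $\deactsel(\lambda)\in\lambda^{\uparrow}[\Brg]$ — and that it is not the root $\Arg$. This is exactly what preservation gives us: whichever incision $\incise(\deactsel(\mu))$ (for some $\mu\in X$) is responsible for $\Gamma\cap\Brg\neq\emptyset$, the collateral incision $\colinc{\incise(\deactsel(\mu))}{\Brg}$ is nonempty, so by preservation $\Brg\neq\Arg$ and $\deactsel(\lambda)\in\lambda^{\uparrow}[\Brg]$. Hence the deletion of $\deactsel(\lambda)$'s rules is itself part of what makes $\Brg$ (or an argument no higher than $\deactsel(\lambda)$) disappear, and in any case the alteration of $\lambda$ under $\PP\setminus\Gamma$ happens at an argument $\Brg$ that is con (either $\Brg=\deactsel(\lambda)$, a con argument by Def.~\ref{def:selection3}, or $\Brg$ lies below it, and one checks using Cor.~\ref{corollary.upper.con}/Lemma~\ref{lemma.eff.alt} that cutting there still yields a non-attacking upper segment). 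Actually the cleanest route is: since $\incise(\deactsel(\lambda))\subseteq\Gamma$ and, by Lemma~\ref{lemma.incision.effectiveAlteration}, $\PP\setminus\incise(\deactsel(\lambda))$ already alters $\lambda$ on a con argument $\Brg_0=\deactsel(\lambda)$ (or on the root with $\lambda^-=\emptyset$), preservation ensures the extra rules in $\Gamma\setminus\incise(\deactsel(\lambda))$ only touch $\lambda$ at arguments in $\upsegm{\lambda}{\deactsel(\lambda)}$ or below — never strictly above $\deactsel(\lambda)$ and never the root — so the uppermost argument of $\lambda$ actually removed under $\PP\setminus\Gamma$ is still con (it is $\deactsel(\lambda)$ or something below it), and $\upsegm{\lambda}{\Brg}$ is a non-attacking line by Cor.~\ref{corollary.upper.con}. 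Thus the alteration of $\lambda$ on $\Brg$ satisfies Def.~\ref{def.line.alt}, and since $\Brg\in\lambda^-$, Lemma~\ref{lemma.eff.alt} (together with Def.~\ref{def.eff.alt}) yields that $\lambda$ is effectively altered.

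Since $\lambda\in X$ was arbitrary, $\PP\setminus\bigcup_{\lambda\in X}\incise(\deactsel(\lambda))$ determines the effective alteration of each $\lambda\in X$, as claimed. I expect the main obstacle to be the bookkeeping around which argument of $\lambda$ is ``uppermost removed'' once \emph{all} incisions in $X$ are applied at once: one has to be careful that preservation, which is stated per pair of lines, chains correctly over the whole set $X$ so that no incision from any other line in $X$ sneaks a deletion into the protected upper segment $\lambda^{\uparrow}[\deactsel(\lambda)]$ or into the root $\Arg$. Once that is pinned down, effectiveness follows mechanically from Lemma~\ref{lemma.eff.alt} and Cor.~\ref{corollary.upper.con}.
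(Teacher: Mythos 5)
Your proposal is correct and takes essentially the same route as the paper's proof: both reduce to the single-line Lemma~\ref{lemma.incision.effectiveAlteration} and then invoke preservation (the incision function being warranting) to rule out any incision coming from another line of $X$ touching $\lambda$ strictly above $\deactsel(\lambda)$ or touching the root, the paper phrasing this step as a \emph{reductio ad absurdum} while you argue it directly through Def.~\ref{def.line.alt}. The only cosmetic slip is the hedge that the uppermost removed argument of $\lambda$ might be ``something below'' $\deactsel(\lambda)$: since $\incise(\deactsel(\lambda))\subseteq\bigcup_{\mu\in X}\incise(\deactsel(\mu))$ already touches $\deactsel(\lambda)$ itself, that uppermost argument is exactly $\deactsel(\lambda)$, which is precisely why it is a con argument and Lemma~\ref{lemma.eff.alt} applies.
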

\begin{proof}
If $|X|=1$, let $X = \{\lambda\}$ be effectively altered from $\PP\setminus\bigcup_{\lambda\in X}\incise(\deactsel(\lambda))$ (see Lemma~\ref{lemma.incision.effectiveAlteration}). We need to show that this property also holds when $|X|>1$. Suppose now $X$ also contains a line $\lambda'\in X$ and assume its effective alteration provokes a collateral alteration over argument $\Brg \in\lambda$. By \textit{reductio ad absurdum}, assume the effective alteration of $\lambda$ is affected by the collateral alteration provoked by $\lambda'$, this means that the upper segment $\lambda^{\uparrow}(\deactsel(\lambda))$ turns to attacking from the collateral incision over \Brg. This implies that $\deactsel(\lambda)\notin\lambda^{\uparrow}[\Brg]$. However, since $\incise$ is known to be warranted, from Def.~\ref{def.warrantingIncision}, it satisfies preservation, hence since  $\colinc{\incise(\deactsel(\lambda'))}{\Brg}\neq\emptyset$, we know that $\deactsel(\lambda)\in\lambda^{\uparrow}[\Brg]$, which is absurd. Afterwards, no line $\lambda'\in X$ affects the effective alteration of any $\lambda\in X$. 
Observe that this also holds for any $X\subseteq\bundle$. Finally, $\PP\setminus\bigcup_{\lambda\in X}\incise(\deactsel(\lambda))$ determines the effective alteration of each $\lambda\in X$.
\end{proof}

\ojo{Sometimes, the ordering among con arguments established by the selection criterion could make the incision function fail to be warranting. That is, since incisions are determined by selections, the only solution to this issue is for the selection criterion to propose another candidate. This involves an update of the initial order. Such update will provoke the selections to reassign some of the original mappings. Thereafter, the incisions over the new mappings will also be reassigned, and finally preservation would be satisfied, thus obtaining a warranting incision. For instance, in order to guarantee preservation, in Ex.~\ref{ex.preservation} it is suggested the selection over $\lambda_2$ to be reassigned from $\Brg_6$ to $\Brg_2$. Since Def.~\ref{def:selection3} determines the mapping $\deactsel(\lambda_2)$ to the ``best'' argument (\ie $\Brg_6$) according to the selection criterion, the alternative to provoke the selection to determine a different mapping (\ie $\Brg_2$) is to update (or change) the ordering among con arguments.}  

Note that the selection order is being re-accommodated to the detriment of the first axis of change but favoring the other two, bringing balance among the three axes. This is quite natural since the initial order is just a general posture but the updated order would finally suit the particular domain in which the line is immersed: the dialectical tree. The set determined by the selection criterion is updated through:

\begin{center}
\textbf{(Update rule)} For every $\lambda\in\dtree{\Arg}{\PP}$, $\lambda'\in\dtree{\Arg}{\PP}$, and $\Brg\in\lambda$ where \textit{preservation} does not hold for $\colinc{\incise(\deactsel(\lambda'))}{\Brg}$, then the new selection order is $\selcrit\setminus\{(\Brg_1,\Brg_2)\in\selcrit \ | \ \Brg_1\notin\upsegmeq{}{\Brg}$ or $\Brg_2\notin\upsegmeq{}{\Brg}\}$.
\end{center}

\ojo{Observe that the update rule forces a set \selcrit to be replaced by $\selcritP{\upsegmeq{}{\Brg}}$. This will prevent an argument placed below \Brg to be mapped by the selection function $\deactsel(\lambda)$, and therefore, preservation will now be satisfied for the case $\colinc{\incise(\deactsel(\lambda'))}{\Brg}$, which failed beforehand.}

\begin{Theorem}\label{theo.warranting}
There is always a selection criterion leading to a warranting incision.
\end{Theorem}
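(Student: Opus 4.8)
The plan is to start from an arbitrary selection criterion --- for concreteness the \emph{initial setting} of Ex.~\ref{ex.selection.main.issue} --- together with the selection and incision functions \deactsel\ and \incise\ it induces (Def.~\ref{def:selection3} and Def.~\ref{def:incision3}), and then to iterate the \emph{update rule} until no violation of \emph{preservation} remains; the criterion reached at that point will be the required witness. First I would note that if the current \incise\ already satisfies preservation, then it is warranting by Def.~\ref{def.warrantingIncision} and we are done. Otherwise there are lines $\lambda,\lambda'\in\dtree{\Arg}{\PP}$ and an argument $\Brg\in\lambda$ with $\colinc{\incise(\deactsel(\lambda'))}{\Brg}\neq\emptyset$ while either $\Brg=\Arg$ or $\deactsel(\lambda)\notin\lambda^{\uparrow}[\Brg]$. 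Firing the update rule for such a \Brg\ replaces \selcrit\ by $\selcritP{\upsegmeq{}{\Brg}}$ and recomputes \deactsel\ and \incise; as observed just after the update rule, this forbids any argument strictly below \Brg\ from being selected on $\lambda$, so the new selection on $\lambda$ lies on $\upsegmeq{}{\Brg}$ and preservation is restored for that particular collateral incision.

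The heart of the argument is termination, and this is the step I expect to be the real obstacle. Since \dtree{\Arg}{\PP} has finitely many argumentation lines, each of finite length, I would introduce the measure $\mu(\deactsel)=\sum_{\lambda}\mathrm{pos}_\lambda(\deactsel(\lambda))$, summing over the lines with $\lambda^-\neq\emptyset$ the position of the selected con argument inside $\lambda$ (counted from the root); this is a non-negative integer, bounded below. An application of the update rule only \emph{removes} pairs from \selcrit\ (it is a set difference), hence it can only move each line's \selcrit-selected con argument closer to the root, never farther away; on the offending line $\lambda$ it does so strictly, since the old selection was below \Brg\ and the new one is on $\upsegmeq{}{\Brg}$. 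Therefore $\mu$ strictly decreases at every step, so the iteration halts after finitely many updates (new violations created along the way merely trigger further updates, which continue to decrease $\mu$). When it halts, no violation of preservation remains, so \incise\ is a warranting incision and the current \selcrit\ is a selection criterion leading to it.

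The one delicate point that must be handled apart is the degenerate \emph{root preservation} case, where the offending \Brg\ is the root \Arg: then $\upsegmeq{}{\Arg}=[\Arg]$ contains no con argument, so restricting \selcrit\ to it is vacuous and does not by itself erase the collaterality. For this case I would argue that the offending line $\lambda'$ always admits a selectable con argument whose incision misses \Arg. The reason is that a proper sub-argument of \Arg cannot serve as a con argument of \dtree{\Arg}{\PP}: a sub-argument of \Arg would, by Def.~\ref{def.argument} and the non-contradiction and acceptability requirements underlying Def.~\ref{def:argum:line}, contradict $\SSet\cup\Arg$ if it were to disagree with (a sub-argument of) a pro argument of the tree; hence every con argument of $\lambda'$ has at least one rule outside \Arg, and a suitable rule-based criterion can be made to prefer that rule, so re-selecting on $\lambda'$ restores root preservation while $\mu$ does not increase. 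Folding this case into the induction completes the proof. (Alternatively one may take root preservation as discharged by the rule-based criterion, which is assumed given; the observation above shows the assumption is harmless.)
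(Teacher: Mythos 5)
Your route is genuinely different from the paper's (you iterate the update rule from an arbitrary criterion and argue termination by a measure; the paper simply exhibits one criterion that works), but the step you yourself flag as the crux has a real gap. Your measure argument rests on the claim that firing the update rule forces the new selection on the offending line $\lambda$ into $\upsegmeq{}{\Brg}$ ``because the update only removes pairs''. That is not what Def.~\ref{def:selection3} gives you. What is true is that, \emph{when the restricted criterion} $\selcritP{\upsegmeq{}{\Brg}}$ \emph{is non-empty}, every selectable argument is the first component of a surviving pair and hence lies in $\upsegmeq{}{\Brg}$. But when the restriction empties the criterion, the proviso ``if $\selcrit\neq\emptyset$'' in Def.~\ref{def:selection3} becomes vacuous and $\deactsel(\lambda)$ may be \emph{any} con argument of $\lambda$, including the old one below \Brg; then $\mu$ need not decrease, the violation need not be repaired, and the iteration can stall. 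Crucially, this degeneracy is not confined to the case $\Brg=\Arg$ that you treat separately: it arises whenever $\upsegmeq{}{\Brg}$ contains at most one con argument, e.g.\ when the uppermost collateral incision $\colinc{\incise(\deactsel(\lambda'))}{\Brg}$ hits the root's direct defeater or the pro argument directly below it, since then no pair with both components among the con arguments of $\upsegmeq{}{\Brg}$ survives (with the initial setting there are none to begin with). To close the gap you must either stipulate how the selection behaves under an empty criterion (e.g.\ that it picks the direct defeater of \Arg, which does lie in $\upsegmeq{}{\Brg}$ for every $\Brg\neq\Arg$) or build that choice into the criterion you construct; only then does the strict decrease of $\mu$, and hence termination in a preservation-satisfying state, follow.

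For contrast, the paper avoids the fixed-point/termination analysis altogether: it takes the criterion that selects each line's direct defeater of the root, so the selection is the topmost con argument and $\deactsel(\lambda)\in\upsegmeq{}{\Brg}$ holds automatically for every $\Brg\neq\Arg$; preservation can then only fail through a collateral incision over \Arg itself, which is ruled out because no direct defeater can be contained in \Arg (otherwise $\Arg\cup\Crg\cup\SSet=\Arg\cup\SSet$ would be contradictory), together with the assumption that the rule-based criterion spares \Arg's rules. Your handling of the root case is essentially this same argument (your stronger observation that \emph{no} con argument of an acceptable line rooted in \Arg is a sub-argument of \Arg is correct, and your reliance on the rule-based criterion mirrors the paper's), and your iteration, once repaired as above, in the worst case converges precisely to the paper's witness; the direct exhibition is simply shorter because it never has to prove termination.
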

\begin{proof}
To this end, it is sufficient to show one selection criterion that always allows for a warranting incision function: the selection of the root's direct defeaters. Hence, given a \delp\ $\PP = \SD$ and a dialectical tree $\dtree{\Arg}{\PP}\in\accTrees{\PP}$, for every $\lambda\in\dtree{\Arg}{\PP}$, it follows $\deactsel(\lambda)$ defeats \Arg, thus $\lambda=[\Arg,\deactsel(\lambda),\ldots]$. 
By \textit{reductio ad absurdum}, let us assume that such selection criterion does not lead to a warranting incision function. This means that at least one incision is not compliant with the preservation principle, \ie an incision of a selected argument over $\lambda'\in\dtree{\Arg}{\PP}$ triggers a collateral incision over an argument $\Brg\in\lambda$ where $\lambda\in\dtree{\Arg}{\PP}$, namely $\colinc{\incise(\deactsel(\lambda'))}{\Brg}\neq\emptyset$, in a way that (a) $\deactsel(\lambda)\notin\lambda^{\uparrow}[\Brg]$, and/or (b) $\Brg = \Arg$.

For a), the only option we have is $\Arg=\Brg$, hence case a) is resolved in b). Afterwards, for b), we would necessarily have a collateral incision over the root argument. Thus, for some $\lambda'\in\dtree{\Arg}{\PP}$, it follows  $\colinc{\incise(\deactsel(\lambda'))}{\Arg}\neq\emptyset$. 
Note that there cannot be an argument \Crg in \dtree{\Arg}{\PP} such that $\Crg\subseteq\Arg$ and \Crg is a direct defeater for \Arg. For this to take place, $\Arg\cup\Crg\cup\Pi = \Arg\cup\Pi$ would have to be contradictory. Therefore, for any $\lambda\in\dtree{\Arg}{\PP}$, it is never the case that $\deactsel(\lambda)\subseteq\Arg$, and thus there is some set $X_{\lambda}\subseteq\deactsel(\lambda)$ that could be incised without provoking a collateral incision over \Arg, \ie $X_{\lambda}\cap\Arg=\emptyset$. Besides, since we want to warrant \Arg, it is natural to assume that any rule-based criterion should preserve \Arg from being incised, and therefore, for any $\Gamma'\subseteq(\DD\setminus\Arg)$ and any $\Gamma\subseteq\Arg$, it holds $\Gamma'\lesschg\Gamma$. Hence, for any $\lambda\in\dtree{\Arg}{\PP}$, $\incise(\deactsel(\lambda))\subseteq X_{\lambda}$ and therefore $\colinc{\incise(\deactsel(\lambda))}{\Arg}=\emptyset$ hold, which is absurd.

Finally, it is absurd to assume that selecting the root's direct defeaters does not lead to an incision function satisfying preservation, and therefore by effect of the \textit{update rule}, there is always a selection criterion leading to a warranting incision.
\end{proof}

The alteration of every line in a dialectical tree through a given warranting incision function, renders a warranting tree as is shown next. 

\begin{Theorem}
Given a \delp\ \PP and a dialectical tree $\dtree{\Arg}{\PP}\in\accTrees{\PP}$, for any warranting incision function ``\incise'', \Arg ends up warranted from $\PP\setminus\bigcup_{\scriptsize\lambda\in\bundleSet{\Arg}{\tiny\PP}}\incise(\deactsel(\lambda))$.
\end{Theorem}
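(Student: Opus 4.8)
The plan is to exhibit the revised program $\PP' = \PP\setminus\bigcup_{\lambda\in\bundleSet{\Arg}{\PP}}\incise(\deactsel(\lambda))$ and show that its dialectical tree rooted in $\Arg$ contains no attacking line; Corollary~\ref{corollary.warrant} then yields at once that this tree is warranting, \ie $\Arg$ is warranted from $\PP'$. First I would check that $\dtree{\Arg}{\PP'}$ is well defined: since $\incise$ is warranting it satisfies \emph{preservation}, hence also \emph{root preservation} (Prop.~\ref{prop.preservation}.1), so $\colinc{\incise(\deactsel(\lambda))}{\Arg}=\emptyset$ for every line $\lambda$; and since $\incise$ only deletes defeasible rules (Def.~\ref{def:incision3}), $\Pi$ is untouched and $\Arg$ stays a minimal, $\Pi$-consistent argument of $\PP'$. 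Hence $\dtree{\Arg}{\PP'}\in\accTrees{\PP'}$ exists and is determined by $\bundleSet{\Arg}{\PP'}$.

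The engine of the proof is Lemma~\ref{lemma.warrantingIncision.effectiveAlteration} instantiated with $X=\bundleSet{\Arg}{\PP}$: forming $\PP'$ effectively alters \emph{every} line $\lambda$ of the bundle set. By Def.~\ref{def.eff.alt}, for each such $\lambda$ the deleted rules $\incise(\deactsel(\lambda))\subseteq\deactsel(\lambda)$ make the selected con argument $\deactsel(\lambda)\in\lambda^-$ vanish, and the portion of $\lambda$ still connected to the root is precisely the proper upper segment $\upsegm{\lambda}{\deactsel(\lambda)}$ --- preservation confines every collateral incision over $\lambda$ to arguments at or below $\deactsel(\lambda)$ and never to $\Arg$, so nothing above $\deactsel(\lambda)$ is disturbed --- and $\upsegm{\lambda}{\deactsel(\lambda)}$, ending in a pro argument, is non-attacking by Corollary~\ref{corollary.upper.con}. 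It remains to see that this covers \emph{all} lines of the new tree. Take any $\mu\in\dtree{\Arg}{\PP'}$. If $\mu=[\Arg]$, it has odd length, hence is non-attacking (contrapositive of Prop.~\ref{prop.attacking.length.even}). Otherwise, since incisions create no arguments, $\mu$ is also an acceptable line of $\PP$; it cannot be exhaustive in $\PP$, for then $\mu\in\bundleSet{\Arg}{\PP}$ and $\incise(\deactsel(\mu))\subseteq\deactsel(\mu)$ would be deleted, so $\deactsel(\mu)\in\mu$ could not survive in $\PP'$. Extend $\mu$ to a line $\lambda\in\bundleSet{\Arg}{\PP}$ having $\mu$ as an upper segment. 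Then $\deactsel(\lambda)\notin\mu$ (it vanishes while $\mu$'s arguments survive), so $\mu$ is an upper segment of $\upsegm{\lambda}{\deactsel(\lambda)}$; and since every argument of $\lambda$ strictly above $\deactsel(\lambda)$ survives intact (preservation again), a strict upper segment of $\upsegm{\lambda}{\deactsel(\lambda)}$ would still admit in $\PP'$ the defeater it had inside $\lambda$, contradicting exhaustiveness of $\mu$ in $\PP'$. Thus $\mu=\upsegm{\lambda}{\deactsel(\lambda)}$, which is non-attacking. With no attacking line left, $\dtree{\Arg}{\PP'}$ is warranting by Corollary~\ref{corollary.warrant}.

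The step I expect to be the main obstacle is this last one: ruling out ``phantom'' lines of $\dtree{\Arg}{\PP'}$ that are neither full bundle lines nor any of the effectively-altered segments $\upsegm{\lambda}{\deactsel(\lambda)}$ --- in particular, short upper segments that become exhaustive only because some deeper defeater was collaterally incised. The bookkeeping relies on three facts I would make explicit and lean on: incisions add no arguments, so every line of $\PP'$ embeds into a line of $\PP$; the selected con argument of any bundle line of length $\geq 2$ is always among the deleted rules, so such a bundle line cannot reappear unchanged; and preservation localises every collateral incision over a line $\lambda$ to positions at or below $\deactsel(\lambda)$ and never at the root, which both keeps $\Arg$ and the relevant shared prefixes intact and prevents an altered line from being re-lengthened by an inherited defeat. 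Together these force each line of $\dtree{\Arg}{\PP'}$ to coincide with a non-attacking segment $\upsegm{\lambda}{\deactsel(\lambda)}$ (or with $[\Arg]$), completing the argument.
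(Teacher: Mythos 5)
Your proof is correct and follows essentially the same route as the paper, which derives the result directly from Lemma~\ref{lemma.warrantingIncision.effectiveAlteration} (instantiated over the whole bundle set), Def.~\ref{def.eff.alt}, and Corollary~\ref{corollary.warrant}. The extra bookkeeping you supply --- root preservation keeping \Arg\ alive, and the argument that every exhaustive line of the revised program's tree must coincide with $[\Arg]$ or with some $\upsegm{\lambda}{\deactsel(\lambda)}$ --- is exactly the detail the paper compresses into ``straightforward'', so there is no divergence in method, only in explicitness.
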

\begin{proof}
Straightforward from Theorem~\ref{theo.warranting},  Lemma~\ref{lemma.warrantingIncision.effectiveAlteration}, Def.~\ref{def.eff.alt}, and Corollary~\ref{corollary.warrant}.
\end{proof}

Regarding the amount of change provoked to a \delp, following the theorem above, no minimality is pursued so far. In the rest of the article we will provide additional theoretical elements towards minimal change. To this end, we will study how to bring balance among the three axes of change introduced on page~\pageref{axes.of.change}.

\ojo{As suggested before, when collateral incisions are unavoidable, we could still take advantage of them. That is, a collateral incision could be ``forced'' to provoke an attacking line to turn into non-attacking by collaterally incising its selected argument. Such a side-effect would be profitable, and it is described by the following principle. (Example~\ref{ex.profitability} illustrates the verification of this principle.)}

\begin{center}
\textbf{(Profitability)}
If $\colinc{\incise(\deactsel(\lambda'))}{\Brg}\neq\emptyset$ then 
$\lambda\in\ALINESPP$ and $\deactsel(\lambda) = \Brg$,\\
for every $\lambda\in\dtree{\Arg}{\PP}$, $\lambda'\in\dtree{\Arg}{\PP}$, and $\Brg\in\lambda$ 
\end{center}

The \emph{profitability} principle validates only those cases in which every collateral incision occurs over an attacking line and coincides with the selection in that line. Therefore, by updating the selection criterion (applying the update rule) towards profitability verification, we have the chance to take advantage of collateral incisions and to effectively alter several attacking lines at once, \ojo{thus reducing the amount of deleted rules in the \delp\ } However, this principle is not always possible to satisfy. For such cases, a \textit{weak profitability} principle is proposed.

\begin{center}
\textbf{(Weak Profitability)} If $\colinc{\incise(\deactsel(\lambda'))}{\Brg}\neq\emptyset$ then $\deactsel(\lambda) = \Brg$,\\
for every $\lambda\in\dtree{\Arg}{\PP}$, $\lambda'\in\dtree{\Arg}{\PP}$, and $\Brg\in\lambda$ 
\end{center}

Weak profitability skips checking whether the collaterally altered line is contained by the attacking set or not. 
\ojo{However, it still satisfies that the collaterally incised argument is the selection in that line, \ie satisfying this principle would still help to reduce the deletion of rules from the \delp}

\begin{Example}\label{ex.profitability}\ 
\begin{window}[0,r,{\mbox{\epsfig{file=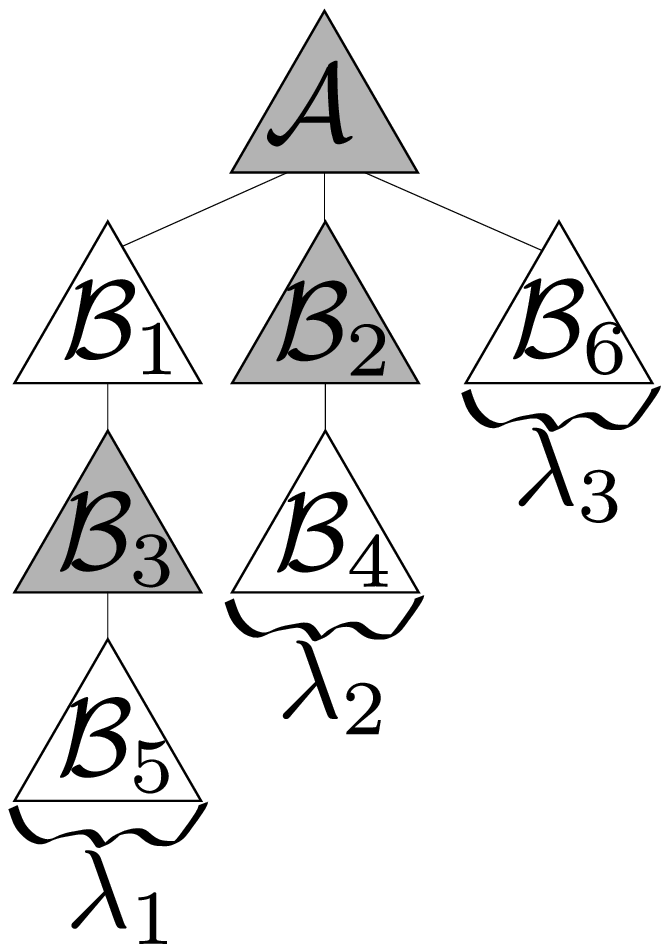,scale=.32}}},{}]
Consider the tree $\dtree{\Arg}{\PP}$ on the right, with three argumentation lines: $\lambda_1 = [\Arg,\Brg_1,\Brg_3,\Brg_5]$, $\lambda_2 = [\Arg,\Brg_2,\Brg_4]$, and $\lambda_3 = [\Arg,\Brg_6]$, of which $\lambda_1$ and $\lambda_3$ are attacking lines. Assume the selections $\deactsel(\lambda_1) = \Brg_5$, $\deactsel(\lambda_2) = \Brg_2$, and $\deactsel(\lambda_3) = \Brg_6$. The following table shows potential configurations of incisions and collateral incisions, along with their compliance with the principles of profitability and weak profitability:
\end{window}
\begin{center}
\begin{tabular}{cccc}
\hline
Incision & Coll. Inc. & Profitability & Weak Profitability\\
\hline
$\Brg_5$ & $\Brg_4$ & No, $\deactsel(\lambda_2) \neq \Brg_4$ and $\lambda_2\notin\ALINESPP$ & No, $\deactsel(\lambda_2) \neq \Brg_4$ \\
$\Brg_5$ & $\Brg_2$ & No, $\lambda_2\notin\ALINESPP$ & Yes \\
$\Brg_5$ & $\Brg_6$ & Yes & Yes\\
\end{tabular}
\end{center}
\end{Example}

\begin{Proposition}\label{prop.profitability}
Given a \delp\ \PP, a dialectical tree $\dtree{\Arg}{\PP}\in\accTrees{\PP}$, and an argument incision function ``$\incise$'':
\vspace{-7mm}\begin{description}
\item\indent
\begin{enumerate}
\item if profitability is satisfied then weak profitability is satisfied;
\item if weak profitability is satisfied then preservation is satisfied.
\end{enumerate}
\end{description}
\end{Proposition}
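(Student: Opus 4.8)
The plan is to obtain both implications directly from the statements of the three principles, using as the only external ingredients Def.~\ref{def:selection3} (a selection $\deactsel(\lambda)$ is always a con argument of $\lambda$, or the escape argument $\epsilon$, and hence never the root $\Arg$) and the square-bracket notation of Def.~\ref{def:upper:segment} (the non-proper upper segment $\lambda^{\uparrow}[\Brg]$ contains $\Brg$ as its last element). No induction on the dialectical tree is needed; the whole argument is logical weakening plus these two remarks, in the same spirit as the proof of Prop.~\ref{prop.preservation}.

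For item~1, I would fix arbitrary $\lambda,\lambda'\in\dtree{\Arg}{\PP}$ and $\Brg\in\lambda$ and assume $\colinc{\incise(\deactsel(\lambda'))}{\Brg}\neq\emptyset$. Applying profitability to this triple yields the conjunction ``$\lambda\in\ALINESPP$ and $\deactsel(\lambda)=\Brg$''; discarding the first conjunct leaves precisely $\deactsel(\lambda)=\Brg$, which is the consequent of weak profitability. As the triple was arbitrary, weak profitability holds; this step is immediate.

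For item~2, I would again fix $\lambda,\lambda',\Brg$ with $\colinc{\incise(\deactsel(\lambda'))}{\Brg}\neq\emptyset$ and apply weak profitability to get $\deactsel(\lambda)=\Brg$. It then remains to recover the two conjuncts of preservation. For $\Brg\neq\Arg$: by Def.~\ref{def:selection3} the selection $\deactsel(\lambda)$ is a con argument of $\lambda$ (the escape-argument case is vacuous, see below), hence it differs from the root $\Arg$, which occupies position~1 of $\lambda$ and is therefore a pro argument; thus $\Brg=\deactsel(\lambda)\neq\Arg$. For $\deactsel(\lambda)\in\lambda^{\uparrow}[\Brg]$: the non-proper upper segment of $\lambda$ with respect to $\Brg$ ends in $\Brg$ itself, so $\deactsel(\lambda)=\Brg\in\lambda^{\uparrow}[\Brg]$. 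This establishes preservation.

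I do not expect a genuine obstacle; the only subtlety, and a minor one, is the degenerate line $\lambda=[\Arg]$ with empty con set, where $\deactsel(\lambda)=\epsilon$. There the hypothesis of item~2's implication can never hold (it would force $\epsilon=\Brg=\Arg$), so that case is vacuous; likewise $\incise(\epsilon)=\emptyset$ by Def.~\ref{def:incision3}, so a line with empty con set never produces a collateral incision and thus never threatens the root. Hence I anticipate no real difficulty, and the proof mirrors the two-line format already used for Prop.~\ref{prop.preservation}.
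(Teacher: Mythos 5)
Your proof is correct and follows essentially the same route as the paper's: item 1 by weakening the consequent of profitability, and item 2 by noting that weak profitability forces $\Brg=\deactsel(\lambda)$, which is a con argument and hence not the root, and trivially lies in $\lambda^{\uparrow}[\Brg]$. Your extra handling of the escape-argument case is a harmless (and slightly more careful) elaboration of the same argument.
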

\begin{proof}
1. Straightforward from profitability and weak-profitability. 

2. For any $\lambda\in\dtree{\Arg}{\PP}$ and any $\Brg\in\lambda$, $\Brg=\deactsel(\lambda)$ holds. Afterwards, $\Brg\in\lambda^{\uparrow}[\Brg]$ and $\Arg\neq\Brg$ hold given that $\Brg\in\lambda^-$ (see Def.~\ref{def:incision3}). Finally, preservation is satisfied. 
\end{proof}

\subsection{Alteration Set Recognition}\label{sec.alteration}

Given $\dtree{\Arg}{\PP}\in\accTrees{\PP}$, the warrant for \Arg may be obtained by effectively altering a subset $X\subseteq\bundle$ of its lines via incisions. This would render a (altered) warranting tree. Collateral incisions may appear as the main drawback, hence the analysis will require to identify \textit{hypothetical trees} $\htree(\Arg,\Psi)$: dialectical trees that would result by removing the defeasible rules contained in a given set $\Psi\subseteq\PP$ from the \delp\ \PP. 

\begin{Definition}[Hypothetical Tree]\label{def.hyp}
Given a \delp\ $\PP = \SD$, an argument $\Arg\in\ARGS$, the tree $\dtree{\Arg}{\PP}\in\accTrees{\PP}$, and a set $\Psi\subseteq\Delta$ of defeasible rules; the \textbf{hypothetical tree} $\htree(\Arg,\Psi)$ is the dialectical tree built from the set $X_1\cup X_2$ of lines, where $X_1$ and $X_2$ are defined as follows:
\begin{description}
\item $X_1=\{\lambda\in\dtree{\Arg}{\PP}\ |\ \forall\Brg\in\lambda: \Psi\cap \Brg=\emptyset\}$
\item $X_2=\{\upsegm{\lambda}{\Brg}\ |\ \lambda\in\dtree{\Arg}{\PP}$ such that \ojo{$\exists\Brg\in\lambda,\forall\Brg'\in\upsegm{\lambda}{\Brg}:$\\ $\Psi\cap\Brg\neq\emptyset$ and $\Psi\cap\Brg'=\emptyset\}$}
\end{description}
\end{Definition}

Observe that hypothetical trees are built from sets of lines which may consider non-exhaustive lines, and therefore, $\htree(\Arg,\Psi)$ for any $\Psi$, is contained in the set $\allTrees{\PP}$ of (non-acceptable) dialectical trees from \PP. 

\begin{Proposition}\label{prop.htree.allTrees}
Given a \delp\ $\PP = \SD$ and an argument $\Arg\in\ARGS$, for any set $\Psi\subseteq\Delta$ of defeasible rules, it holds $\htree(\Arg,\Psi)\in\allTrees{\PP}$.
\end{Proposition}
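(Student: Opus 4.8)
The plan is to show that the set of lines used to build $\htree(\Arg,\Psi)$ is a subset of $\Lines{\PP}$, so that by Definition~\ref{def.dTree} the resulting structure falls in $\allTrees{\PP}$. Recall from Definition~\ref{def.hyp} that $\htree(\Arg,\Psi)$ is built from $X_1\cup X_2$. First I would handle $X_1$: any $\lambda\in X_1$ is by construction a line of $\dtree{\Arg}{\PP}\in\accTrees{\PP}$, and since $\accTrees{\PP}\subseteq\allTrees{\PP}$ (the proposition following Def.~\ref{def.tree}), every such $\lambda$ is already in $\Lines{\PP}$. So $X_1\subseteq\Lines{\PP}$ is immediate, with no work required.

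The substantive part is $X_2$. Each element of $X_2$ has the form $\upsegm{\lambda}{\Brg}$ for some $\lambda\in\dtree{\Arg}{\PP}$ and some $\Brg\in\lambda$. The key observation is that $\lambda\in\dtree{\Arg}{\PP}$ implies $\lambda\in\Lines{\PP}$ (again via $\accTrees{\PP}\subseteq\allTrees{\PP}$ and Def.~\ref{def.dTree}), and then Proposition~\ref{prop.uppersegments2lines} tells us that any upper segment of a line in $\Lines{\PP}$ is itself in $\Lines{\PP}$. Strictly speaking, Prop.~\ref{prop.uppersegments2lines} is stated for the non-proper upper segment $\upsegmeq{}{\Brg}$, so I would note that the proper upper segment $\upsegm{\lambda}{\Brg}=[\Brg_1,\dots,\Brg_{i-1}]$ equals $\upsegmeq{}{\Brg_{i-1}}$ (for $\Brg=\Brg_i$ with $i>1$), hence is also an acceptable argumentation line; the degenerate case $\upsegm{\lambda}{\Brg_1}=\epsilon$ is excluded because the defining condition for $X_2$ requires $\forall\Brg'\in\upsegm{\lambda}{\Brg}:\Psi\cap\Brg'=\emptyset$ together with $\Psi\cap\Brg\neq\emptyset$, and if $\Brg$ were the root $\Arg$ this would make $\Arg$ vanish, contradicting $\Arg\in\ARGS$ being the root of a well-defined tree — so $\Brg$ is always strictly below the root and the upper segment is non-empty. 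Thus $X_2\subseteq\Lines{\PP}$.

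Having shown $X_1\cup X_2\subseteq\Lines{\PP}$, I would conclude by invoking Definition~\ref{def.dTree}: a dialectical tree in $\allTrees{\PP}$ is precisely one built from a set $X\subseteq\Lines{\PP}$ of lines rooted in a common argument $\Arg\in\ARGS$, and $\htree(\Arg,\Psi)$ satisfies exactly this (all lines in $X_1$ are rooted in $\Arg$, and all upper segments in $X_2$ are rooted in $\Arg$ as well, since they are non-empty prefixes of lines rooted in $\Arg$). Hence $\htree(\Arg,\Psi)\in\allTrees{\PP}$.

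I do not expect a genuine obstacle here; the only point requiring a little care is the bookkeeping around proper versus non-proper upper segments and ruling out the $\epsilon$ case, which is why I would spell out that the membership condition defining $X_2$ forces the incised argument $\Brg$ to lie strictly below the root. Everything else is a direct chaining of Prop.~\ref{prop.uppersegments2lines}, the inclusion $\accTrees{\PP}\subseteq\allTrees{\PP}$, and Def.~\ref{def.dTree}.
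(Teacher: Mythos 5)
Your proposal is correct and follows essentially the same route as the paper's proof: reduce everything to showing that the lines in $X_1\cup X_2$ belong to \Lines{\PP}, using $\exLines{\PP}\subseteq\Lines{\PP}$ (Remark~\ref{remark.lines}), Proposition~\ref{prop.uppersegments2lines} together with Definition~\ref{def:upper:segment} for the upper segments in $X_2$, and then Definition~\ref{def.dTree} to conclude membership in \allTrees{\PP}. Your extra bookkeeping on proper versus non-proper upper segments and the degenerate $\epsilon$ case (when $\Psi$ meets the root) is a point the paper's proof simply glosses over, and your informal dismissal of it is acceptable at the paper's level of rigor.
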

\begin{proof}
From Def.~\ref{def.hyp}, a hypothetical tree $\htree(\Arg,\Psi)$ is built by a set $X_1\cup X_2$ of lines. If $\Psi=\emptyset$ it is easy to see that $X_1=\bundle$, where $\bundle$ is the bundle set of the tree $\dtree{\Arg}{\PP}\in\accTrees{\PP}$, and $X_2=\emptyset$. Hence, $\htree(\Arg,\Psi)=\dtree{\Arg}{\PP}$ and therefore $\htree(\Arg,\Psi)\in\allTrees{\PP}$. The same situation occurs when $\Psi$ is such that there is no defeasible rule $\beta\in\Psi$ such that $\beta\in\Brg$, where $\Brg\in\lambda$ for any $\lambda\in\dtree{\Arg}{\PP}$. When any of these alternatives hold, we know $X_2\neq\emptyset$ and therefore $\htree(\Arg,\Psi)$ is known to consider some non-exhaustive lines. This is so, given that $X_2$ will contain upper segments $\upsegm{\lambda}{\Brg}$ of lines $\lambda\in\dtree{\Arg}{\PP}$ for some $\Brg\in\lambda$. Since $\dtree{\Arg}{\PP}\in\accTrees{\PP}$, we know it is composed by acceptable and exhaustive lines from $\exLines{\PP}$. From Remark~\ref{remark.lines}, we have that $\lambda\in\Lines{\PP}$, and from Proposition~\ref{prop.uppersegments2lines} and Def.~\ref{def:upper:segment}, we have that $\upsegm{\lambda}{\Brg}\in\Lines{\PP}$ for any $\Brg\in\lambda$. Finally, $\htree(\Arg,\Psi)\in\allTrees{\PP}$ holds for any $\Psi$.
\end{proof}

As being (informally) introduced before, the \textit{alteration set} of a dialectical tree \dtree{\Arg}{\PP}, is a subset of lines from \bundle such that their effective alteration will determine a resulting warranting tree for \Arg. Ideally, the attacking set \ALINESPP will be included within this set, however, other lines could arise to be altered: those that may collaterally turn to attacking, introducing a new source of threat for the root's status of warrant. Based on the notion of hypothetical tree, \textit{collaterality functions} bring a tool for accounting on collateralities which are determined by the effective alteration of lines. The formalization of alteration set (given in Def.~\ref{def.attset}) will rely on the notion of collaterality functions introduced next.

\begin{Definition}[Collaterality Functions]\label{def.collaterality.functions}
Given a \delp\ \PP, an argument $\Arg\in\ARGS$, and a warranting incision function ``\incision''. Functions $\open:\exLines{\PP}\rightarrow 2^{\scriptsize\exLines{\PP}}$ and $\closed:\exLines{\PP}\rightarrow 2^{\scriptsize\exLines{\PP}}$, are referred to as \textbf{collaterality functions} \ifff for any $\lambda\in\exLines{\PP}$, if $\lambda\notin\dtree{\Arg}{\PP}$ then both functions map to $\emptyset$, otherwise:

$\closed(\lambda)=\{\lambda'\ |\ \lambda'\in\dtree{\Arg}{\PP}, \incise(\deactsel(\lambda))\cap\deactsel(\lambda')\neq\emptyset\}$

$\open(\lambda)=\{\lambda'\ |\ \lambda'\in\dtree{\Arg}{\PP}, \upsegm{\lambda'}{\Brg}\in\ALINES{\htree(\Arg,\incise(\deactsel(\lambda)))}$ for some $\Brg\in\lambda'\}$

\ojo{We refer to \open as \textbf{open}, and to \closed as \textbf{closed}.}
\end{Definition}


\ojo{The effective alteration of a line $\lambda\in\dtree{\Arg}{\PP}$ renders three different types of collateral alterations: those that turn effective the alteration of a line $\lambda'\in\dtree{\Arg}{\PP}$ being collaterally incised over its selected argument --included in $\closed$--, those that turn to attacking --included in $\open$--, and those that are effective, but collaterally incised over an argument placed below its selection. Collateral incisions over an argument placed above the selection of a line cannot occur given that ``\incision'' is ensured to be warranting, thus satisfying preservation. We will refer to lines in \open as \textit{open} given that they represent a still open problem: they are attacking or may collaterally turn to attacking. These lines need to be incised in order to be effectively altered. On the other hand, we refer to lines in \closed as \textit{closed} since they were already (collaterally) effectively altered: they no longer threaten the root's warrant status.}

\begin{Example}\label{ex.alteration.set}\
\begin{window}[4,r,{\mbox{\epsfig{file=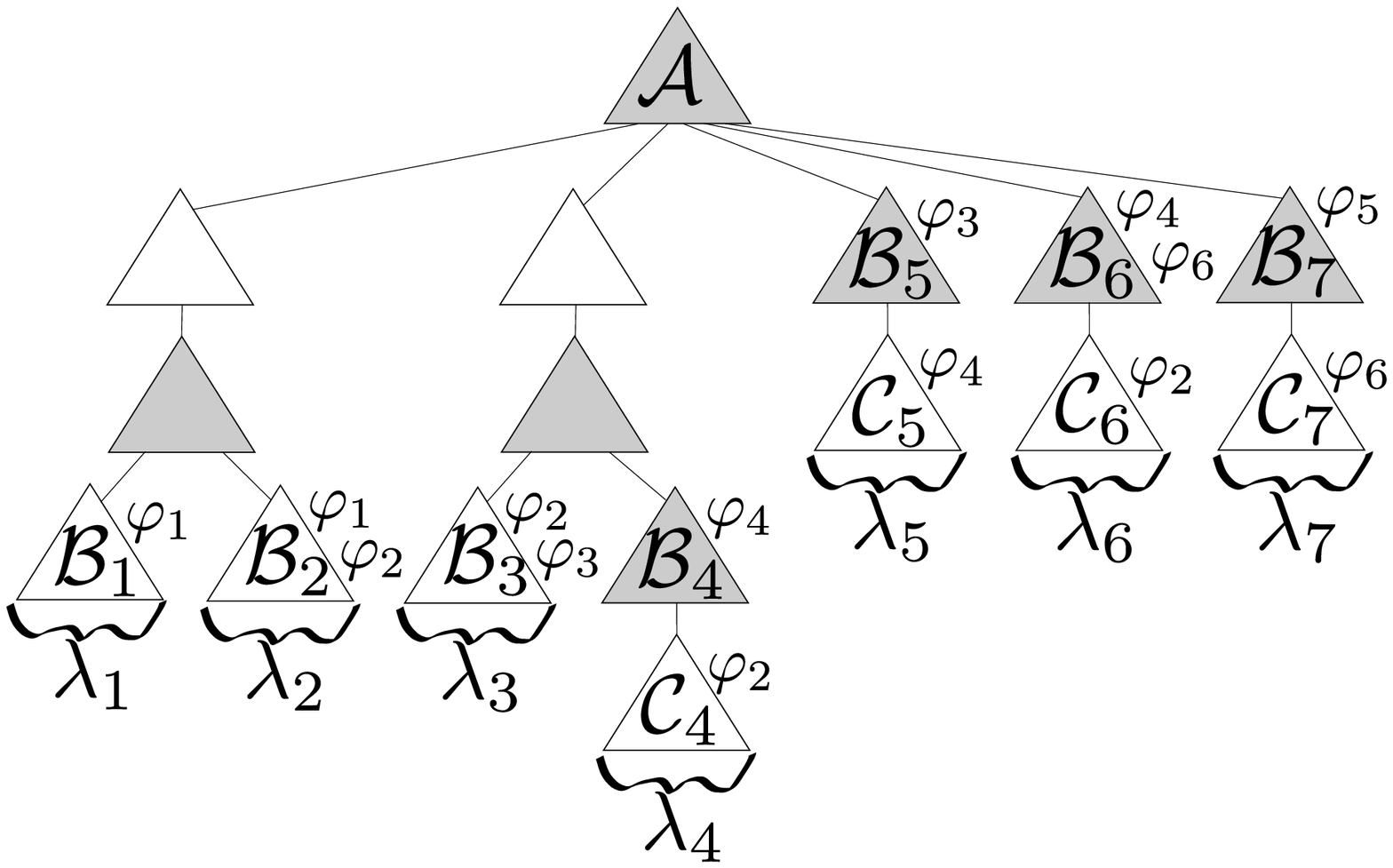,scale=.32}}},{}]
Consider the dialectical tree $\dtree{\Arg}{\PP}\in\accTrees{\PP}$ depicted below, on the right, with an attacking set $\ALINESPP=$ $\{\lambda_1, \lambda_2, \lambda_3\}$. Let us assume the selection criterion to determine the following mappings: $\deactsel(\lambda_1)=\Brg_1$, $\deactsel(\lambda_2)=\Brg_2$, $\deactsel(\lambda_3)=\Brg_3$, $\deactsel(\lambda_4)=\Brg_4$, $\deactsel(\lambda_5)=\Brg_5$, $\deactsel(\lambda_6)=\Brg_6$, and $\deactsel(\lambda_7)=\Brg_7$. Let $\varphi_1,$ $\varphi_2,$ $\varphi_3,$ $\varphi_4,$ $\varphi_5,$ and $\varphi_6$ be defeasible rules in the \delp\ \PP such that $\{\varphi_1\}\subseteq\Brg_1$, $\{\varphi_1,\varphi_2\}\subseteq\Brg_2$, $\{\varphi_2,\varphi_3\}\subseteq\Brg_3$, $\{\varphi_2\}\subseteq\Crg_4$, $\{\varphi_4\}\subseteq\Brg_4$, $\{\varphi_4\}\subseteq\Crg_5$, $\{\varphi_3\}\subseteq\Brg_5$, $\{\varphi_2\}\subseteq\Crg_6$, $\{\varphi_4,\varphi_6\}\subseteq\Brg_6$, $\{\varphi_6\}\subseteq\Crg_7$, $\{\varphi_5\}\subseteq\Brg_7$. Assume also the rule-based criterion:\\ 
$\{\varphi_6\}\lesschg\{\varphi_5\}\lesschg$ $\{\varphi_4\}\lesschg\{\varphi_3\}\lesschg\{\varphi_2\}\lesschg\{\varphi_1\}$, \\ and the incision function mappings:\\ 
$\incise(\deactsel(\lambda_1))=\{\varphi_1\}$, $\incise(\deactsel(\lambda_2))=\{\varphi_2\}$, $\incise(\deactsel(\lambda_3))=\{\varphi_3\}$, $\incise(\deactsel(\lambda_4))=\{\varphi_4\}$,  $\incise(\deactsel(\lambda_5))=\{\varphi_3\}$,  $\incise(\deactsel(\lambda_6))=\{\varphi_6\}$, and $\incise(\deactsel(\lambda_7))=\{\varphi_5\}$. \\ Observe that ``\incision'' is a warranting incision function since it satisfies preservation. The collaterality functions are defined as follows:
\end{window}

\begin{tabular}{l l}
$\open(\lambda_1)=\{\}$&$\closed(\lambda_1)=\{\lambda_1,\lambda_2\}$\\
$\open(\lambda_2)=\{\lambda_4,\lambda_6\}$&$\closed(\lambda_2)=\{\lambda_2,\lambda_3\}$\\
$\open(\lambda_3)=\{\}$&$\closed(\lambda_3)=\{\lambda_3,\lambda_5\}$\\
$\open(\lambda_4)=\{\lambda_5\}$&$\closed(\lambda_4)=\{\lambda_4,\lambda_6\}$\\
$\open(\lambda_5)=\{\}$&$\closed(\lambda_5)=\{\lambda_3,\lambda_5\}$\\
$\open(\lambda_6)=\{\lambda_7\}$&$\closed(\lambda_6)=\{\lambda_6\}$\\
$\open(\lambda_7)=\{\}$&$\closed(\lambda_7)=\{\lambda_7\}$
\end{tabular}

\ojo{The open set for $\lambda_1$ is empty because the incision over the selection in $\lambda_1$ does not ``open'' any line, \ie it does not collaterally turn any line into attacking in the context of the hypothetical tree $\htree(\Arg,\incise(\deactsel(\lambda_1)))$. The closed set for $\lambda_1$ includes lines $\lambda_1$ and $\lambda_2$, as the incision over the selection in $\lambda_1$ ``closes'' both lines, \ie turns them into non-attacking. On the other hand, if we look at $\lambda_2$, its open set is $\{\lambda_4,\lambda_6\}$, as the incision over the selection in $\lambda_2$ is $\varphi_2$ which collaterally incises $\Crg_4$ and $\Crg_6$, thus turning both $\lambda_4$ and $\lambda_6$ into attacking.  According to Def.~\ref{def.collaterality.functions}, this turns out from the analysis of the hypothetical tree $\htree(\Arg,\incise(\deactsel(\lambda_2)))=\htree(\Arg,\{\varphi_2\})$. That is, since $\upsegm{\lambda_4}{\Crg_4}\in\ALINES{\htree(\Arg,\{\varphi_2\})}$ and $\upsegm{\lambda_6}{\Crg_6}\in\ALINES{\htree(\Arg,\{\varphi_2\})}$ hold, we obtain $\open(\lambda_2)=$ $\{\lambda_4,\lambda_6\}$. All collaterality functions are obtained in a similar way.
}
\end{Example}

\ojo{The following properties for open and closed sets interrelate the collaterality functions presented in Def.~\ref{def.collaterality.functions}.}

\begin{Proposition}\label{prop.identity.closed}
Given a \delp\ \PP, a dialectical tree $\dtree{\Arg}{\PP}\in\accTrees{\PP}$, and a warranting incision function ``\incision'', for any $\lambda\in\dtree{\Arg}{\PP}$, it holds $\lambda\in\closed(\lambda)$.
\end{Proposition}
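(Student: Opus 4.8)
The plan is to unfold the definition of the closed collaterality function and reduce the claim to an elementary property of the incision function. By Def.~\ref{def.collaterality.functions}, since $\lambda\in\dtree{\Arg}{\PP}$ by hypothesis, we have $\closed(\lambda)=\{\lambda'\mid\lambda'\in\dtree{\Arg}{\PP},\ \incise(\deactsel(\lambda))\cap\deactsel(\lambda')\neq\emptyset\}$. As $\lambda$ itself belongs to $\dtree{\Arg}{\PP}$, establishing $\lambda\in\closed(\lambda)$ amounts to showing $\incise(\deactsel(\lambda))\cap\deactsel(\lambda)\neq\emptyset$; in words, the incision over the argument selected in a line always touches that very argument.

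First I would invoke Def.~\ref{def:selection3}: whenever $\lambda^-\neq\emptyset$ the selection $\deactsel(\lambda)$ is a genuine con argument of $\lambda$, in particular $\deactsel(\lambda)\neq\epsilon$. Then I would apply Def.~\ref{def:incision3} with $\Brg=\deactsel(\lambda)$: since the argument is not the escape argument, $\incise(\deactsel(\lambda))=\Gamma$ for some $\Gamma$ with $\emptyset\subset\Gamma\subseteq\deactsel(\lambda)$. Consequently $\incise(\deactsel(\lambda))\cap\deactsel(\lambda)=\Gamma\cap\deactsel(\lambda)=\Gamma\neq\emptyset$, which is exactly the membership condition for $\closed(\lambda)$; hence $\lambda\in\closed(\lambda)$.

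The one situation that needs a word of care --- and the only real obstacle --- is the degenerate one in which $\lambda^-=\emptyset$, so that $\deactsel(\lambda)=\epsilon$ and $\incise(\epsilon)=\emptyset$. But $\lambda^-=\emptyset$ can occur only when $\lambda=[\Arg]$, i.e.\ when $\Arg$ has no defeater and $\dtree{\Arg}{\PP}=\{[\Arg]\}$ is already a warranting tree; in that case no incision is ever performed, so the collaterality machinery (and hence this proposition) is invoked only on lines possessing con arguments. Thus in every case relevant to the change process $\lambda^-\neq\emptyset$, the argument above applies, and $\lambda\in\closed(\lambda)$.
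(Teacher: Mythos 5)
Your proof is correct and follows essentially the same route as the paper's: unfold Def.~\ref{def.collaterality.functions} and observe that, by Defs.~\ref{def:selection3} and~\ref{def:incision3}, the incision over the selected argument is a non-empty subset of it, so $\incise(\deactsel(\lambda))\cap\deactsel(\lambda)\neq\emptyset$ and hence $\lambda\in\closed(\lambda)$. Your explicit treatment of the degenerate case $\lambda^-=\emptyset$ (where $\deactsel(\lambda)=\epsilon$ and the incision is empty, occurring only for the single-line tree $[\Arg]$) is in fact more careful than the paper's proof, which silently assumes the intersection is non-empty.
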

\begin{proof}
From Def.~\ref{def.collaterality.functions}, the set $\closed(\lambda)$ contains every line $\lambda'\in\dtree{\Arg}{\PP}$ whose selected argument $\deactsel(\lambda')$ contains at least a rule $\varphi$ such that $\varphi\in\incise(\deactsel(\lambda))$. This holds in particular when $\lambda=\lambda'$.
\end{proof}

\begin{Proposition}\label{prop.non.identity.open}
Given a \delp\ \PP, a dialectical tree $\dtree{\Arg}{\PP}\in\accTrees{\PP}$, and a warranting incision function ``\incision'', for any $\lambda\in\dtree{\Arg}{\PP}$, it holds $\lambda\notin\open(\lambda)$.
\end{Proposition}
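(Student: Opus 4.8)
The plan is to argue by \emph{reductio ad absurdum}. Suppose $\lambda\in\open(\lambda)$; by Def.~\ref{def.collaterality.functions} there is then some $\Brg\in\lambda$ with $\upsegm{\lambda}{\Brg}\in\ALINES{\htree(\Arg,\incise(\deactsel(\lambda)))}$, and hence $\upsegm{\lambda}{\Brg}$ is an attacking line of the hypothetical tree $\htree(\Arg,\incise(\deactsel(\lambda)))$ (Def.~\ref{def.alines.delp}, item~1). First I would clear away the degenerate case $\lambda^-=\emptyset$: then $\deactsel(\lambda)=\epsilon$ (Def.~\ref{def:selection3}) and $\incise(\deactsel(\lambda))=\emptyset$ (Def.~\ref{def:incision3}), so $\htree(\Arg,\emptyset)=\dtree{\Arg}{\PP}$, while $\lambda$ is the single-argument line $[\Arg]$, which has no non-empty upper segment; this contradicts the displayed membership. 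So from here on $\deactsel(\lambda)\in\lambda^-$ and, by Def.~\ref{def:incision3}, $\emptyset\neq\incise(\deactsel(\lambda))\subseteq\deactsel(\lambda)$, so $\deactsel(\lambda)$ is itself an argument of $\lambda$ touched by the incision.

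The core of the argument uses two facts about the selected con argument $\deactsel(\lambda)$. Since ``\incision'' is warranting, by Def.~\ref{def.warrantingIncision} it satisfies preservation; preservation forbids the uppermost collateral incision over $\lambda$ produced by incising $\deactsel(\lambda)$ from falling on any argument of $\lambda$ strictly closer to the root than $\deactsel(\lambda)$, and since $\deactsel(\lambda)$ is itself touched it follows that $\deactsel(\lambda)$ is exactly the uppermost argument of $\lambda$ reached by $\incise(\deactsel(\lambda))$. Hence, by Def.~\ref{def.hyp}, the only portion of $\lambda$ retained in $\htree(\Arg,\incise(\deactsel(\lambda)))$ is the proper upper segment $\upsegm{\lambda}{\deactsel(\lambda)}$. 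Moreover, by Prop.~\ref{prop.attacking.length.even} an attacking line has even length, so an attacking $\upsegm{\lambda}{\Brg}$ has its last argument at an even position of $\lambda$ and therefore $\Brg$ at an odd position, \ie\ $\Brg$ is a pro argument; in particular $\Brg\neq\deactsel(\lambda)$, the latter being con.

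The step I expect to be the main obstacle is showing that such an attacking $\upsegm{\lambda}{\Brg}$ cannot occur as a line of $\htree(\Arg,\incise(\deactsel(\lambda)))$ at all; the delicate point is that the definition of $\open$ only asks $\upsegm{\lambda}{\Brg}$ to be \emph{some} line of the hypothetical tree, not the contribution of a designated original line, so adjacent lines have to be considered. I would split on whether $\Brg$ follows or precedes $\deactsel(\lambda)$ in $\lambda$. If $\Brg$ follows $\deactsel(\lambda)$, then $\upsegm{\lambda}{\Brg}$ contains $\deactsel(\lambda)$; but by Def.~\ref{def.hyp} no line of $\htree(\Arg,\incise(\deactsel(\lambda)))$ contains an argument touched by the incision (each such line is either an untouched full line of $\dtree{\Arg}{\PP}$ or the touched-free upper segment of a touched one), and $\deactsel(\lambda)$ is touched, so $\upsegm{\lambda}{\Brg}$ is not a line there. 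If $\Brg$ precedes $\deactsel(\lambda)$, then $\upsegm{\lambda}{\Brg}$ is a proper prefix of the retained segment $\upsegm{\lambda}{\deactsel(\lambda)}$, so the last argument of $\upsegm{\lambda}{\Brg}$ --- the predecessor of $\Brg$ in $\lambda$ --- still has $\Brg$ as a child in $\htree(\Arg,\incise(\deactsel(\lambda)))$ (because $\Brg$ lies inside $\upsegm{\lambda}{\deactsel(\lambda)}$) and hence is not a leaf there; so $\upsegm{\lambda}{\Brg}$ is not a root-to-leaf line of the hypothetical tree. In either case $\upsegm{\lambda}{\Brg}\notin\ALINES{\htree(\Arg,\incise(\deactsel(\lambda)))}$, which contradicts the assumption; therefore $\lambda\notin\open(\lambda)$.
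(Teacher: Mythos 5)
Your proof is correct and its engine is the same as the paper's: a \emph{reductio} in which the warranting hypothesis (preservation, Def.~\ref{def.warrantingIncision}) is used to show that the only cut the incision $\incise(\deactsel(\lambda))$ realizes in $\lambda$ is exactly at the selected con argument $\deactsel(\lambda)$. Where you diverge is the finishing move. The paper identifies the witness $\Brg$ with $\deactsel(\lambda)$ and then contradicts by citing the effective-alteration machinery (Lemma~\ref{lemma.warrantingIncision.effectiveAlteration}, in essence Corollary~\ref{corollary.upper.con}): a segment cut at a con argument is non-attacking, so it cannot lie in $\ALINES{\htree(\Arg,\incise(\deactsel(\lambda)))}$. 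You instead deduce $\Brg\neq\deactsel(\lambda)$ from Prop.~\ref{prop.attacking.length.even} (an attacking upper segment forces $\Brg$ to be pro) and then rule out both placements of $\Brg$ by structural facts about Def.~\ref{def.hyp}: a segment containing the incised $\deactsel(\lambda)$ uses a deleted node, and a segment stopping strictly above the retained cut point does not end at a leaf of the hypothetical tree. Your route buys extra care at two points the paper's proof passes over silently: the degenerate case $\lambda^-=\emptyset$, and the possibility that an upper segment of $\lambda$ is contributed to $\htree(\Arg,\incise(\deactsel(\lambda)))$ by an \emph{adjacent} line rather than by the truncation of $\lambda$ itself (the paper jumps directly from the membership in $\ALINES{\htree(\Arg,\incise(\deactsel(\lambda)))}$ to ``$\Brg$ is the uppermost collateral incision of $\lambda$''). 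The paper's route buys brevity and reuse of already-proved lemmas. One caveat on your second case: it relies on reading ``line of the hypothetical tree'' as a root-to-leaf path, whereas Def.~\ref{def.dTree} literally declares the endpoint of every building line a leaf; this tacit reading is, however, exactly the one the paper's own proof (and Def.~\ref{def.collaterality.functions} in its examples) depends on, so it is not a gap relative to the paper's standard of rigor.
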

\begin{proof}
By \textit{reductio ad absurdum}, assuming $\lambda\in\open(\lambda)$, from Def.~\ref{def.collaterality.functions}, we have $\upsegm{\lambda}{\Brg}\in\ALINES{\htree(\Arg,\incise(\deactsel(\lambda)))}$, for some $\Brg\in\lambda$. 
\ojo{However, since $\incise$ is warranting, we know preservation is satisfied which means that any collateral incision in a line will occur over some argument placed below the selected argument in that line (or over the selection itself). In this proof we are interested in the case in which such a line is $\lambda$ itself (given that we assumed $\lambda\in\open(\lambda)$). Hence, for any $\Brg\in\lambda$, if $\colinc{\incise(\deactsel(\lambda))}{\Brg}\neq\emptyset$ then $\deactsel(\lambda)\in\upsegmeq{}{\Brg}$ holds. Afterwards, since $\upsegm{\lambda}{\Brg}\in\ALINES{\htree(\Arg,\incise(\deactsel(\lambda)))}$, for some $\Brg\in\lambda$; we know that $\colinc{\incise(\deactsel(\lambda))}{\Brg}\neq\emptyset$ and also $\deactsel(\lambda)\in\upsegmeq{}{\Brg}$ hold. Thus, the only alternative is $\Brg=\deactsel(\lambda)$ to be held (recall that the colateral incision over $\Brg$ is the uppermost one in $\lambda$). 
}

Finally, since $\lambda$ is effectively altered through $\incise(\deactsel(\lambda))$ (see Lemma~\ref{lemma.warrantingIncision.effectiveAlteration}), we know that $\upsegm{\lambda}{\Brg}\notin\ALINES{\htree(\Arg,\incise(\deactsel(\lambda)))}$ holds reaching the absurdity.
\end{proof}

\begin{Corollary}
Given a \delp\ \PP, a dialectical tree $\dtree{\Arg}{\PP}\in\accTrees{\PP}$, and a warranting incision function ``\incision'', for any $\lambda\in\dtree{\Arg}{\PP}$, it holds $\open(\lambda)\cap\closed(\lambda)=\emptyset$.
\end{Corollary}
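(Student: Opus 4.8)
The plan is to argue by \textit{reductio ad absurdum}: assume there are $\lambda\in\dtree{\Arg}{\PP}$ and a line $\lambda'\in\open(\lambda)\cap\closed(\lambda)$. The case $\lambda'=\lambda$ is dispatched at once, since Prop.~\ref{prop.non.identity.open} already gives $\lambda\notin\open(\lambda)$, contradicting $\lambda'\in\open(\lambda)$. So the substantive case is $\lambda'\neq\lambda$, for which I would lift the argument behind Prop.~\ref{prop.non.identity.open} from the ``self'' line to an arbitrary ``closed'' line.

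From $\lambda'\in\closed(\lambda)$ and Def.~\ref{def.collaterality.functions} we have $\incise(\deactsel(\lambda))\cap\deactsel(\lambda')\neq\emptyset$; writing $\Psi$ for $\incise(\deactsel(\lambda))$, the incision over $\lambda$'s selected argument collaterally incises $\deactsel(\lambda')$, which by Def.~\ref{def:selection3} is a \emph{con} argument of $\lambda'$. Since ``\incision'' is warranting it obeys \textit{preservation} (Def.~\ref{def.warrantingIncision}); instantiating preservation with the incising line $\lambda$ and the affected line $\lambda'$, I would show that no argument strictly above $\deactsel(\lambda')$ in $\lambda'$ meets $\Psi$: if some $\Crg\in\upsegm{\lambda'}{\deactsel(\lambda')}$ had $\colinc{\incise(\deactsel(\lambda))}{\Crg}\neq\emptyset$, preservation would force $\deactsel(\lambda')\in\lambda'^{\uparrow}[\Crg]$, \ie $\deactsel(\lambda')$ at or above $\Crg$, contradicting $\Crg$ being strictly above it. Hence $\deactsel(\lambda')$ is the uppermost argument of $\lambda'$ meeting $\Psi$, so by Def.~\ref{def.hyp} the line $\lambda'$ does not survive in $\htree(\Arg,\Psi)$ through $X_1$, and through $X_2$ it contributes exactly $\upsegm{\lambda'}{\deactsel(\lambda')}$. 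I would then check this is the \emph{only} segment of $\lambda'$ occurring in $\htree(\Arg,\Psi)$: a segment reaching $\deactsel(\lambda')$ or lower contains the (now deactivated) argument $\deactsel(\lambda')$, which occurs in no line of $\htree(\Arg,\Psi)$; and a strictly higher prefix of $\lambda'$ could only reappear in $\htree(\Arg,\Psi)$ as a truncation of a line adjacent to $\lambda'$ at an adjacency point meeting $\Psi$, but every argument of $\upsegm{\lambda'}{\deactsel(\lambda')}$ is disjoint from $\Psi$. Finally, since $\deactsel(\lambda')$ is a con argument, $\upsegm{\lambda'}{\deactsel(\lambda')}$ ends in a pro argument and hence, by Corollary~\ref{coro.ends.in.pro}, is not an attacking line, so it does not belong to the attacking set $\ALINES{\htree(\Arg,\Psi)}$. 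Therefore no $\upsegm{\lambda'}{\Brg}$ with $\Brg\in\lambda'$ lies in $\ALINES{\htree(\Arg,\Psi)}$, contradicting $\lambda'\in\open(\lambda)$ and establishing the corollary.

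The step I expect to be the main obstacle is this last piece of bookkeeping: arguing that $\upsegm{\lambda'}{\deactsel(\lambda')}$ is the unique upper segment of $\lambda'$ that survives into $\htree(\Arg,\Psi)$, and in particular that no higher prefix of $\lambda'$ sneaks back in as a truncation of an adjacent line. Everything else is routine chaining of Def.~\ref{def.collaterality.functions}, Def.~\ref{def.hyp}, \textit{preservation}, and Corollary~\ref{coro.ends.in.pro}. A slightly shorter route would be to observe first that $\lambda'\in\closed(\lambda)$ means $\PP\setminus\Psi$ alters $\lambda'$ on the con argument $\deactsel(\lambda')$ (Def.~\ref{def.line.alt}, with preservation supplying the disjointness condition on $\upsegm{\lambda'}{\deactsel(\lambda')}$), so the resulting line is the non-attacking $\upsegm{\lambda'}{\deactsel(\lambda')}$ (Corollary~\ref{corollary.upper.con}), and then reuse almost verbatim the closing argument of the proof of Prop.~\ref{prop.non.identity.open} to conclude $\lambda'\notin\open(\lambda)$.
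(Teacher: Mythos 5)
Your core argument is the one the paper intends: the Corollary is stated without an explicit proof, immediately after Propositions~\ref{prop.identity.closed} and~\ref{prop.non.identity.open}, and the expected justification is exactly your ``shorter route'' --- for any $\lambda'\in\closed(\lambda)$, Def.~\ref{def.collaterality.functions} gives $\incise(\deactsel(\lambda))\cap\deactsel(\lambda')\neq\emptyset$ with $\deactsel(\lambda')$ a con argument (Def.~\ref{def:selection3}); preservation (Def.~\ref{def.warrantingIncision}) makes this the uppermost collateral incision over $\lambda'$; hence $\lambda'$ is effectively altered into the non-attacking segment $\upsegm{\lambda'}{\deactsel(\lambda')}$ (Lemma~\ref{lemma.eff.alt}, Corollary~\ref{corollary.upper.con}), so $\lambda'\notin\open(\lambda)$. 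This is the closing argument of the paper's proof of Prop.~\ref{prop.non.identity.open} transplanted from $\lambda$ to an arbitrary line in $\closed(\lambda)$, so in substance your proposal matches the paper.

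One caveat concerns precisely the step you flag as the main obstacle. Your reason for excluding a strictly higher prefix of $\lambda'$ --- that ``every argument of $\upsegm{\lambda'}{\deactsel(\lambda')}$ is disjoint from $\Psi$'' --- does not actually close that case: a prefix $\upsegm{\lambda'}{\Brg}$ with $\Brg$ strictly above $\deactsel(\lambda')$ can enter the set $X_2$ of Def.~\ref{def.hyp} through an adjacent line $\mu$ whose own argument $\Crg$ at that depth (with $\Crg\neq\Brg$ and $\Crg\notin\lambda'$) meets $\Psi$; the disjointness of $\lambda'$'s prefix from $\Psi$ says nothing about $\Crg$, so such a segment can perfectly well be a line of $\htree(\Arg,\incise(\deactsel(\lambda)))$ and, depending on the marking it receives there, attacking. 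Note, however, that the paper makes the same silent identification: in the proof of Prop.~\ref{prop.non.identity.open} it passes from $\upsegm{\lambda}{\Brg}\in\ALINES{\htree(\Arg,\incise(\deactsel(\lambda)))}$ directly to $\colinc{\incise(\deactsel(\lambda))}{\Brg}\neq\emptyset$, i.e., it reads $\open$ as referring only to segments produced by the (uppermost) collateral incision on the line itself. Under that reading your first two steps already suffice and the problematic bookkeeping is unnecessary; under the fully literal reading of Def.~\ref{def.collaterality.functions} the hole you tried to patch is genuine, but it is inherited from the paper's own treatment rather than introduced by you.
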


\ojo{Now we are able to formalize the definition of \textit{alteration set} by relying upon the open collaterality function as mentioned before.}

\begin{Definition}[Alteration Set]\label{def.attset}
Given a \delp\ \PP, a dialectical tree \linebreak$\dtree{\Arg}{\PP}\in\accTrees{\PP}$, and a warranting incision function ``\incision''; the \textbf{alteration set} \alteration{\Arg}{\PP} of \dtree{\Arg}{\PP} is the least fixed point of the operator $\attFunct$ defined as follows:
\begin{description}
\item $\attFunct^0\hspace*{11pt}=\ALINESPP$, and
\item $\attFunct^{k+1}=\attFunct^k\cup\bigcup_{\scriptsize\lambda\in\attFunct^k}\open(\lambda)$
\end{description}
\end{Definition}

\begin{Example}[Continues from Ex.~\ref{ex.alteration.set}]\label{ex.alteration.set2}
The alteration set $\alteration{\Arg}{\PP}$ is constructed as follows: 
$\attFunct^0=\{\lambda_1, \lambda_2, \lambda_3\}$, conciding with \ALINESPP; 
$\attFunct^1=\{\lambda_1, \lambda_2,$ $\lambda_3,$ $\lambda_4, \lambda_6\}$, given that both $\lambda_4$ and $\lambda_6$ are collaterally open (turned to attacking) from the effective alteration of $\lambda_2$, that is $\upsegm{\lambda_4}{\Crg_4}$ and $\upsegm{\lambda_6}{\Crg_6}$ are contained in $\ALINES{\htree(\Arg,\{\varphi_2\})}$; and analogously $\attFunct^2=\{\lambda_1, \lambda_2, \lambda_3, \lambda_4, \lambda_5, \lambda_6,\lambda_7\}$ is calculated. 
Observe that $\attFunct^2=\attFunct^3$ and hence $\attFunct^2=\alteration{\Arg}{\PP}$, determining the least fixed point of the operator \attFunct. Finally, $\alteration{\Arg}{\PP}=\{\lambda_1, \lambda_2, \lambda_3, \lambda_4, \lambda_5, \lambda_6, \lambda_7\}$. 
\end{Example}

%


\begin{Remark}\label{remark.alteration.contains.attset}
Given a \delp\ \PP and a dialectical tree $\dtree{\Arg}{\PP}\in\accTrees{\PP}$, the following conditions for an alteration set \alteration{\Arg}{\PP} with a warranting incision ``\incision'', are met:
\vspace{-6mm}\begin{description}
\item\indent\begin{enumerate}
\item $\ALINESPP\subseteq\alteration{\Arg}{\PP}$, and\label{item.remark.alines.in.alt}
\item $\bigcup_{\scriptsize\lambda\in\alteration{\Arg}{\PP}}\open(\lambda)\subseteq\alteration{\Arg}{\PP}$.\label{item.remark.open.in.alt}
\end{enumerate}
\end{description}
\end{Remark}

From now on, just for simplicity, we will rely on the operator $\Incisions:2^{\scriptsize\exLines{\tiny\PP}}\rightarrow 2^{\scriptsize\Ld}$ such that  $\Incisions(X)=\bigcup_{\scriptsize\lambda\in X}\incise(\deactsel(\lambda))$ for any $X\subseteq\bundle$ and any $\Arg\in\ARGS$, to refer to the composition of selections and incisions over lines included in the set $X$.

\begin{Lemma}\label{lemma.altset.warranting}
Given the alteration set \alteration{\Arg}{\PP} and a warranting incision function ``\incise''; if $\Psi = \Incisions(\alteration{\Arg}{\PP})$ then $\htree(\Arg, \Psi)$ is warranting.
\end{Lemma}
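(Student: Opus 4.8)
The plan is to reduce the claim, via Corollary~\ref{corollary.warrant}, to showing that the hypothetical tree $\htree(\Arg,\Psi)$ contains no attacking line. Proposition~\ref{prop.htree.allTrees} already gives $\htree(\Arg,\Psi)\in\allTrees{\PP}$, so Corollary~\ref{corollary.warrant} is applicable to it. Assume, for \emph{reductio}, that $\mu$ is an attacking line of $\htree(\Arg,\Psi)$. By Definition~\ref{def.hyp}, either $\mu=\lambda$ for some $\lambda\in\dtree{\Arg}{\PP}$ disjoint from $\Psi$, or $\mu=\upsegm{\lambda}{\Brg}$ for some $\lambda\in\dtree{\Arg}{\PP}$ with $\Brg$ the uppermost argument of $\lambda$ met by $\Psi$; call $\lambda$ the \emph{source line} of $\mu$, and split on whether $\lambda\in\alteration{\Arg}{\PP}$ or not.

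First I would dispose of the case $\lambda\in\alteration{\Arg}{\PP}$. Since $\alteration{\Arg}{\PP}\subseteq\bundle$ and $\Psi=\Incisions(\alteration{\Arg}{\PP})$, Lemma~\ref{lemma.warrantingIncision.effectiveAlteration} yields that $\PP\setminus\Psi$ effectively alters $\lambda$. As ``\incise'' is warranting it satisfies the preservation principle (Definition~\ref{def.warrantingIncision}), so no argument of $\lambda$ strictly above $\deactsel(\lambda)$ can be met by $\Psi$ --- neither by $\incise(\deactsel(\lambda))\subseteq\Psi$ (which would be a collateral incision over such an argument from $\lambda$ itself) nor by a collateral incision coming from another line, since in both cases preservation forces $\deactsel(\lambda)$ to lie at or above the incised argument. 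Hence $\deactsel(\lambda)$ is itself the uppermost argument of $\lambda$ met by $\Psi$, so the line contributed by $\lambda$ to $\htree(\Arg,\Psi)$ is exactly $\upsegm{\lambda}{\deactsel(\lambda)}$; being $\deactsel(\lambda)\in\lambda^-$ a con argument, this upper segment ends in a pro argument and is not attacking by Corollary~\ref{coro.ends.in.pro} (equivalently Corollary~\ref{corollary.upper.con}), contradicting the choice of $\mu$. (If $\deactsel(\lambda)=\epsilon$ then $\lambda^-=\emptyset$, so $\lambda=[\Arg]$ has odd length and is not attacking by Proposition~\ref{prop.attacking.length.even}.)

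It then remains to rule out $\lambda\notin\alteration{\Arg}{\PP}$, and here I would argue by induction on the stages $\attFunct^{0}\subseteq\attFunct^{1}\subseteq\cdots$ of the fixed-point operator of Definition~\ref{def.attset}, with invariant: every attacking line of $\htree(\Arg,\Incisions(\attFunct^{k}))$ has its source line in $\attFunct^{k+1}$. The base case $k=0$ combines the facts that $\attFunct^{0}=\ALINESPP$ already contains every attacking line of $\dtree{\Arg}{\PP}$ except those that Lemma~\ref{lemma.adj.att} turns non-attacking once the lines of $\ALINESPP$ are effectively altered, and that, by the definition of the open collaterality function, any line whose upper segment is turned into an attacking line by removing a single $\incise(\deactsel(\nu))$ with $\nu\in\attFunct^{0}$ lands in $\open(\nu)\subseteq\attFunct^{1}$ (Definition~\ref{def.collaterality.functions}). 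The inductive step is analogous, using preservation again so that the collateral incisions triggered at stage $k$ fall at or below the relevant selections --- hence the effective alterations already realised are not spoiled and the newly attacking upper segments are once more absorbed into $\attFunct^{k+1}$ (Remark~\ref{remark.alteration.contains.attset}). At the least fixed point, $\attFunct^{k}=\attFunct^{k+1}=\alteration{\Arg}{\PP}$, so the invariant places the source line of $\mu$ in $\alteration{\Arg}{\PP}$, contradicting the case hypothesis and reducing everything to the previous paragraph. Thus no attacking line survives and $\htree(\Arg,\Psi)$ is warranting.

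The hard part is precisely this inductive step, because $\open$ is defined by removing a \emph{single} incision from $\dtree{\Arg}{\PP}$ whereas $\htree(\Arg,\Psi)$ removes all of them simultaneously, so one has to show that the cumulative deletions create no attacking line beyond those predicted by the individual $\open$ sets. The ingredients that make this go through, and which the final write-up must pin down, are: (i) removal of rules only deletes arguments and never introduces defeaters, so a non-attacking upper segment can turn attacking only by being further shortened; (ii) preservation prevents any collateral incision from reaching above a selection, so shortening a source line in $\attFunct^{k}$ never undoes its effective alteration (this mirrors the argument in the proof of Lemma~\ref{lemma.warrantingIncision.effectiveAlteration}); and (iii) the adjacency analysis underlying Lemma~\ref{lemma.D-rep.alternating} and Lemma~\ref{lemma.adj.att}, which lets one witness ``$\mu$ attacking in $\htree(\Arg,\Psi)$'' by ``some upper segment attacking in a single-incision hypothetical tree'', so that closure of $\alteration{\Arg}{\PP}$ under $\open$ finally captures the source line $\lambda$.
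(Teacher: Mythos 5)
Your proposal is correct in substance and rests on exactly the same ingredients as the paper's proof -- Lemma~\ref{lemma.warrantingIncision.effectiveAlteration} plus preservation for lines inside \alteration{\Arg}{\PP}, Lemma~\ref{lemma.adj.att} for attacking lines of the original tree left outside \ALINESPP, closure of \alteration{\Arg}{\PP} under \open (Definition~\ref{def.attset}, Remark~\ref{remark.alteration.contains.attset}) for collaterally opened lines, and Corollary~\ref{corollary.warrant}/Theorem~\ref{theorem:warrant} to conclude -- but you organize it differently: a \emph{reductio} on a putative attacking line of $\htree(\Arg,\Psi)$, split on whether its source line lies in \alteration{\Arg}{\PP}, with the second case handled by an explicit induction on the stages $\attFunct^k$ of the fixed-point operator, whereas the paper argues directly by enumerating four properties of lines inside/outside the alteration set and reading the conclusion off them. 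What your route buys is precision about where the real work sits: the step you flag as "the hard part" -- that \open is computed from single-incision hypothetical trees while $\Psi$ removes all incisions simultaneously, so cumulative deletions must not create attacking segments beyond those the individual \open sets predict -- is exactly the content of the paper's property (4), which the paper asserts with a bare pointer to Definition~\ref{def.attset} and Definition~\ref{def.collaterality.functions} and then closes with "it is easy to see". So you have not discharged that step in full, but neither does the paper; your ingredients (i)--(iii), especially that rule removal only shortens lines and that preservation keeps collateral incisions at or below selections so realized effective alterations are never spoiled, are the right ones to make the inductive invariant go through, and your handling of the $\lambda\in\alteration{\Arg}{\PP}$ case (uppermost deletion point is the selection itself, hence the surviving segment ends in a pro argument and cannot be attacking) is a slightly sharper version of what the paper leaves implicit.
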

\begin{proof}
Since \incise is a warranting incision function and considering the dialectical tree $\dtree{\Arg}{\PP}\in\accTrees{\PP}$, from Lemma~\ref{lemma.warrantingIncision.effectiveAlteration}, we know that $\PP\setminus\incise(\deactsel(\lambda))$ effectively alters any $\lambda\in\dtree{\Arg}{\PP}$. The set $\Psi$ contains every incision $\incise(\deactsel(\lambda))$ for every $\lambda\in\alteration{\Arg}{\PP}$. Then the following properties arise:
\vspace{-6mm}\begin{description}
\item\indent\begin{enumerate}
\item for any $\lambda\in\dtree{\Arg}{\PP}$ such that $\lambda\in\ALINESPP$, $\lambda\in\alteration{\Arg}{\PP}$,\label{item.lemma.alines.in.alt} (this follows from \ref{item.remark.alines.in.alt} in Remark~\ref{remark.alteration.contains.attset})
\item for any $\lambda\in\dtree{\Arg}{\PP}$ such that $\lambda\in\alteration{\Arg}{\PP}$ and $\lambda\notin\ALINESPP$, $\lambda$ is collaterally turned to attacking by an incision $\incise(\deactsel(\lambda'))$, of some $\lambda'\in\alteration{\Arg}{\PP}$,\label{item.lemma.turned.to.att.in.alt} (this follows from \ref{item.remark.open.in.alt} in Remark~\ref{remark.alteration.contains.attset} and from Def.~\ref{def.collaterality.functions})
\item for any $\lambda\in\dtree{\Arg}{\PP}$ such that $\lambda\notin\alteration{\Arg}{\PP}$, if $\lambda$ is attacking then $\lambda\notin\ALINESPP$. That is, there is $\lambda'\in\ALINESPP$ such that $\lambda$ and $\lambda'$ are adjacent at an argument marked as \Unode\ (see Def.~\ref{def.alines.delp} and Def.~\ref{def.attset}), and\label{item.lemma.att.adj.unode.out.alt}
\item for any $\lambda\in\dtree{\Arg}{\PP}$ such that $\lambda\notin\alteration{\Arg}{\PP}$, $\lambda$ is not collaterally turned to attacking (see Def.~\ref{def.attset} and Def.~\ref{def.collaterality.functions}).\label{item.lemma.lines.out.alt}
\end{enumerate}
\end{description}

Consequently, we have that \alteration{\Arg}{\PP} contains every line in \ALINESPP (property \ref{item.lemma.alines.in.alt}) along with every line that collaterally turns to attacking (property \ref{item.lemma.turned.to.att.in.alt}). Only from property \ref{item.lemma.att.adj.unode.out.alt}, a line $\lambda$ can be attacking but not contained in \alteration{\Arg}{\PP}. In this case, $\lambda$ will be (collaterally) effectively altered by the alteration of its adjacent line $\lambda'$ (see Lemma~\ref{lemma.adj.att}). From property \ref{item.lemma.lines.out.alt}, we know that any other line outside \alteration{\Arg}{\PP} does not threaten the warrant status of the root argument. Afterwards, it is easy to see that $\htree(\Arg, \Psi)$ contains no attacking lines. Finally, from Theorem~\ref{theorem:warrant}, $\htree(\Arg, \Psi)$ is known to be a warranting tree.
\end{proof}


By effectively altering every line in the alteration set, through Lemma~\ref{lemma.altset.warranting} we ensure that the resulting hypothetical tree $\htree(\Arg,\Psi)$ ends up warranting. However, if we consider minimal change to force the set $\Psi$ of incisions to be minimal, an additional condition is required to restrict the notion of alteration set given so far. 

\begin{Example}[Continues from Ex.~\ref{ex.alteration.set2}]\label{ex.incision-aware}\ 
\begin{window}[0,r,{\mbox{\epsfig{file=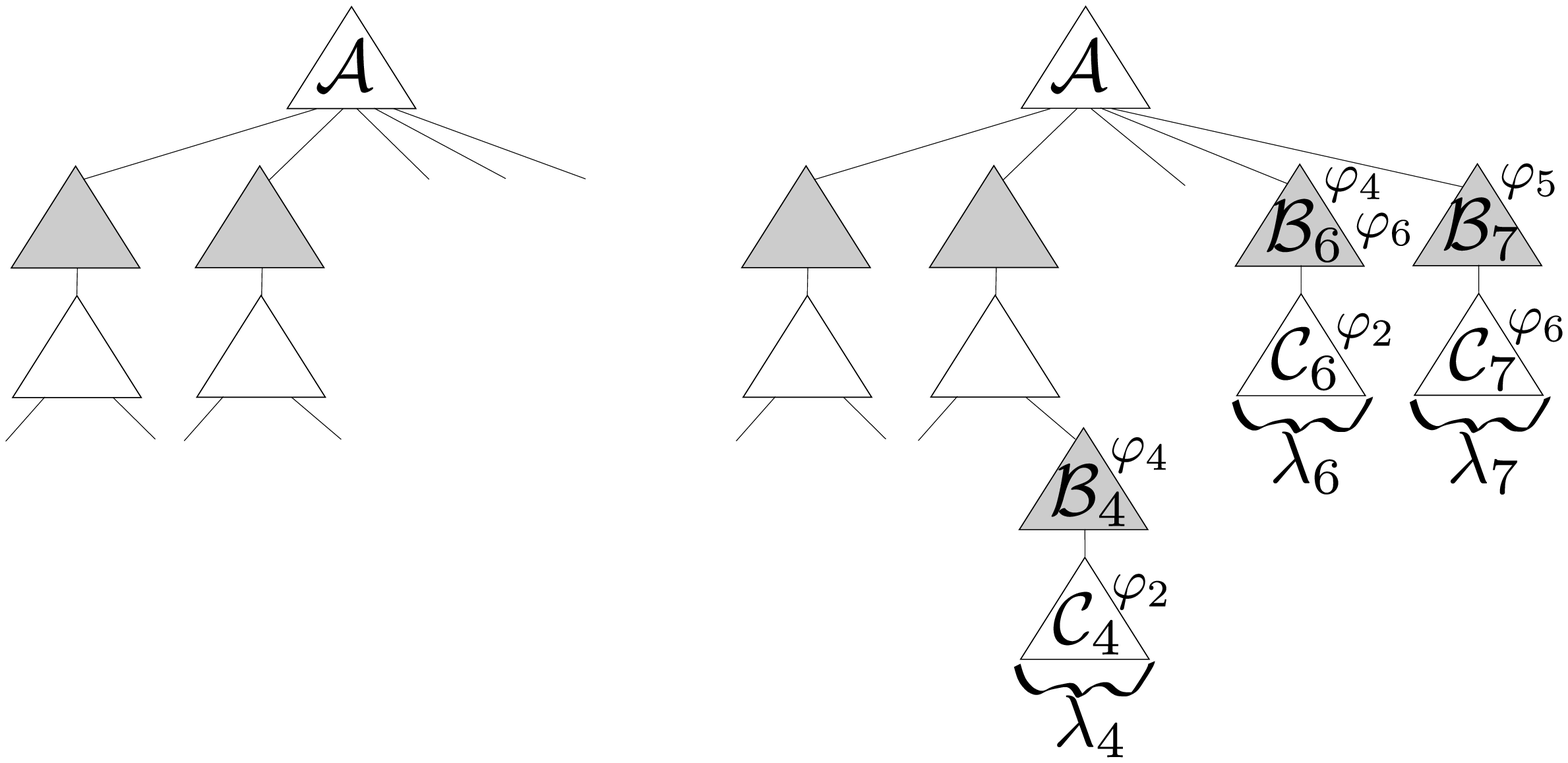,scale=.32}}},{}]
According to Lemma~\ref{lemma.altset.warranting}, the hypothetical tree $\htree(\Arg,\Psi)$ (leftmost tree depicted on the right), where $\Psi=\{\varphi_1,$ $\varphi_2,$ $\varphi_3,$ $\varphi_4,$ $\varphi_5,$ $\varphi_6\}$, ends up warranting \Arg. However, removing its subset $\Psi'=\{\varphi_1, \varphi_3\}$ from \PP\ is enough to alter every line in \ALINESPP, rendering the warranting hypothetical tree $\htree(\Arg,\Psi')$ (rightmost tree depicted on the right).
\end{window}
\end{Example}

Collaterality functions are not aware of the \textit{context} in which incisions are applied. 
That is, when a line $\lambda'$ is closed from $\closed(\lambda)$, we know it is effectively altered in a collateral way through the incision of $\lambda$. That means that the collateral incision of $\lambda'$ occurs over its selected argument, say \Brg. However, imagine that the collaterality occurs over a con argument \Crg placed below \Brg. In this case, although $\lambda'$ is not altered over its selected argument, it should be anyway considered closed (see Ex.~\ref{ex.alterationContext-sensitive}), \ojo{if it could be ensured that no other line will collaterally alter $\lambda'$ between \Brg and \Crg (we know there is no collateral alteration over an argument placed above \Brg given that we only consider a warranting incision function). In order to ensure this latter condition, we need keep track of all those lines that are going to be altered by inlcuding them in a set $X\subseteq\bundle$, for which set $X$ is identified as the \emph{context} at issue. In Ex.~\ref{ex.alterationContext-sensitive}, we illustrate a case in which a line ($\lambda_3$ in the example) could be ensured to be closed without deactivating its selected argument, if the rest of the lines to be altered are taken in consideration, \ie if we consider the context.}


\begin{Example}\label{ex.alterationContext-sensitive}
Consider the dialectical tree $\dtree{\Arg}{\PP}\in\accTrees{\PP}$ depicted on the right, with an attacking set $\ALINESPP=$ $\{\lambda_1, \lambda_2\}$. Let us assume the selection criterion to determine the following mappings: $\deactsel(\lambda_1)=\Brg_1$, $\deactsel(\lambda_2)=\Brg_2$, and $\deactsel(\lambda_3)=\Brg_3$. 
\begin{window}[0,r,{\mbox{\epsfig{file=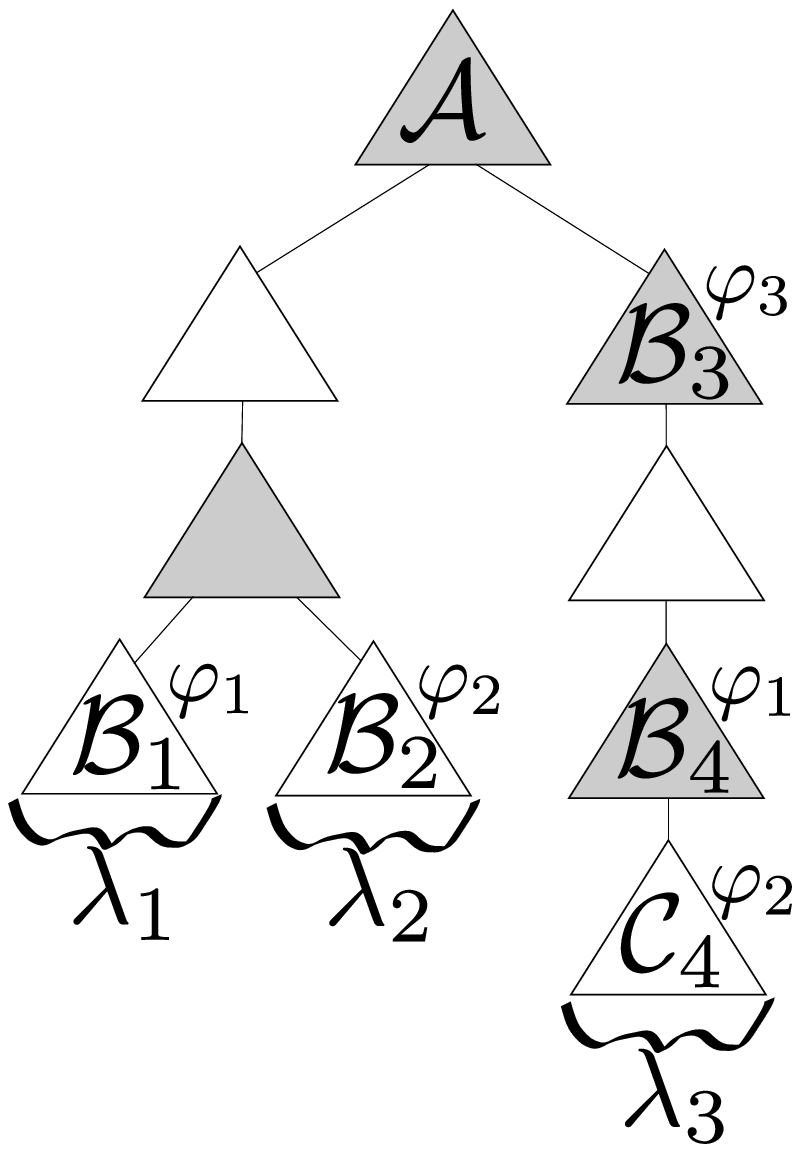,scale=.32}}},{}]
\noindent Let $\varphi_1,$ $\varphi_2,$ and $\varphi_3$ be defeasible rules in the \delp\ \PP such that $\{\varphi_1\}\subseteq\Brg_1$, $\{\varphi_2\}\subseteq\Brg_2$,  $\{\varphi_3\}\subseteq\Brg_3$, $\{\varphi_2\}\subseteq\Crg_4$, and  $\{\varphi_1\}\subseteq\Brg_4$. Assuming the rule-based criterion $\{\varphi_3\}\lesschg\{\varphi_2\}\lesschg\{\varphi_1\}$, the incision function would map as follows: $\incise(\deactsel(\lambda_1))=\{\varphi_1\}$, $\incise(\deactsel(\lambda_2))=\{\varphi_2\}$, and  $\incise(\deactsel(\lambda_3))=\{\varphi_3\}$. Observe that ``\incision'' is a warranting incision function since it satisfies preservation. The collaterality functions are defined as follows:
\end{window}
\begin{tabular}{l l}
$\open(\lambda_1)=\{\}$&$\closed(\lambda_1)=\{\lambda_1\}$\\
$\open(\lambda_2)=\{\lambda_3\}$&$\closed(\lambda_2)=\{\lambda_2\}$\\
$\open(\lambda_3)=\{\}$&$\closed(\lambda_3)=\{\lambda_3\}$\\
\end{tabular}
\end{Example}

Regarding open lines, when a line $\lambda'$ is included in $\open(\lambda)$, \ojo{we know it collaterally ends up being attacking from the incision over $\lambda$}. However, attacking lines are not considered open so far. \ojo{Next we define the \textit{context-sensitive collaterality functions} over a \textit{context set} $X\subseteq\bundle$ of lines.} As for its general version, the context-sensitive collaterality functions will include two inner functions: one for open lines and another one for closed lines. 
The context-sensitive open version will include the lines that are open from lines in the context $X$ (following Def.~\ref{def.collaterality.functions}), along with lines in the attacking set. On the other hand, the context-sensitive closed version will include every line being effectively altered as detailed before (in Ex.~\ref{ex.alterationContext-sensitive}, $\lambda_3$'s collateral alteration from the alteration of $\lambda_1$ should close $\lambda_3$), but taking into account the context. 
That is, lines which are closed by other lines contained in the context set $X$, along with those lines $\lambda'$ that are closed by being collaterally altered over a con argument $\Brg\in\lambda'^-$ placed below the selected argument in $\lambda'$, if it is the case that no other line in $X$ collaterally alters $\lambda'$ over an argument placed above \Brg.

\begin{Definition}[Context-sensitive Collaterality Functions]\label{def.context-sensitive.collaterality.functions}
Given a \delp\ \linebreak \PP, a tree $\dtree{\Arg}{\PP}\in\accTrees{\PP}$, and a warranting incision ``\incision''. Functions $\openContext:2^{\scriptsize\exLines{\PP}}\rightarrow 2^{\scriptsize\exLines{\PP}}$ and $\closedContext:2^{\scriptsize\exLines{\PP}}\rightarrow 2^{\scriptsize\exLines{\PP}}$, are referred to as \textbf{context-sensitive collaterality functions} \ifff for any set of lines $X\subseteq\exLines{\PP}$, if $X\not\subseteq\bundle$ (where $\bundle$ is the bundle set) then both functions map to $\emptyset$, otherwise:

\ojo{$\closedContext(X)=\{\lambda'\ |\ \lambda'\in\dtree{\Arg}{\PP} \wedge \exists\Crg\in\lambda'^-:(\exists\lambda\in X: \incise(\deactsel(\lambda))\cap\Crg\neq\emptyset)\wedge\linebreak\hspace*{3.2cm}(\forall \lambda''\in X:$ if $\incise(\deactsel(\lambda''))\cap\Brg\neq\emptyset$ where $\Brg\in\lambda'$ then $\Crg\in\upsegmeqP{}{\Brg})\}$}

$\openContext(X)=\ALINESPP\cup\bigcup_{\scriptsize\lambda\in X}\open(\lambda)$

We call \textbf{context-sensitive open} to \openContext, and \textbf{closed} to \closedContext.
\end{Definition}

\begin{Example}[Continues from Ex.~\ref{ex.alterationContext-sensitive}]
Note that if we consider a set of lines $X=\{\lambda_1,\lambda_2\}$ to be altered, the collaterality functions would determine  $\open(\lambda_1)\cup\open(\lambda_2)=\{\lambda_3\}$ and $\closed(\lambda_1)\cup\closed(\lambda_2)=\{\lambda_1,\lambda_2\}$, meaning that $\lambda_3$ is left unaltered. However, by following the notion of context-sensitive collaterality functions, we have 
$\openContext(X)=\{\lambda_1,\lambda_2,\lambda_3\}$, and $\closedContext(X)=\{\lambda_1,\lambda_2,\lambda_3\}$.
\end{Example}

\begin{Proposition}\label{prop.closed.in.context-sensitive.closed}
Given a \delp\ \PP, a dialectical tree $\dtree{\Arg}{\PP}\in\accTrees{\PP}$, and a warranting incision function ``\incision''; for any $\lambda\in\dtree{\Arg}{\PP}$, and any $X\subseteq\bundle$ (where \bundle is the bundle set of \dtree{\Arg}{\PP}), if $\lambda\in X$ then $\closed(\lambda)\subseteq\closedContext(X)$.
\end{Proposition}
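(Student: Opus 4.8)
The plan is to fix an arbitrary line $\lambda'\in\closed(\lambda)$ and show that it meets the membership condition for $\closedContext(X)$ in Def.~\ref{def.context-sensitive.collaterality.functions}, using the selected argument $\Crg:=\deactsel(\lambda')$ as the witnessing con argument. First I would observe that $\lambda'\in\closed(\lambda)$ gives $\lambda'\in\dtree{\Arg}{\PP}$ together with $\incise(\deactsel(\lambda))\cap\deactsel(\lambda')\neq\emptyset$. Since this intersection of sets of defeasible rules is non-empty, $\deactsel(\lambda')$ must contain at least one rule, so $\deactsel(\lambda')\neq\epsilon$; hence by Def.~\ref{def:selection3} we get $\lambda'^-\neq\emptyset$ and $\Crg=\deactsel(\lambda')\in\lambda'^-$, which makes $\Crg$ a legitimate witness for the existential ``$\exists\Crg\in\lambda'^-$''.

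Next I would discharge the two conjuncts in the definition of $\closedContext(X)$. For the first conjunct, since $\lambda\in X$ and $\incise(\deactsel(\lambda))\cap\Crg=\incise(\deactsel(\lambda))\cap\deactsel(\lambda')\neq\emptyset$, the clause ``$\exists\lambda\in X:\incise(\deactsel(\lambda))\cap\Crg\neq\emptyset$'' holds at once. For the second conjunct, I take an arbitrary $\lambda''\in X$ and an arbitrary $\Brg\in\lambda'$ with $\incise(\deactsel(\lambda''))\cap\Brg\neq\emptyset$, and I must show $\Crg\in\upsegmeqPP{\lambda'}{\Brg}$. Let $\Brg^*$ be the uppermost argument of $\lambda'$ with $\incise(\deactsel(\lambda''))\cap\Brg^*\neq\emptyset$; then $\Brg^*$ lies at or above $\Brg$ in $\lambda'$, so $\upsegmeqPP{\lambda'}{\Brg^*}$ is an initial segment of $\upsegmeqPP{\lambda'}{\Brg}$, and by Def.~\ref{def.colinc} the quantity $\colinc{\incise(\deactsel(\lambda''))}{\Brg^*}=\incise(\deactsel(\lambda''))\cap\Brg^*$ is exactly the (non-empty) uppermost collateral incision over $\lambda'$. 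Since ``$\incise$'' is warranting it satisfies \emph{preservation} (Def.~\ref{def.warrantingIncision}); instantiating preservation with $\lambda'$ in the role of the collaterally incised line and $\lambda''$ in the role of the incising line, and with $\Brg^*$ as the relevant argument, yields $\deactsel(\lambda')\in\upsegmeqPP{\lambda'}{\Brg^*}$. Combining this with $\upsegmeqPP{\lambda'}{\Brg^*}\subseteq\upsegmeqPP{\lambda'}{\Brg}$ gives $\Crg=\deactsel(\lambda')\in\upsegmeqPP{\lambda'}{\Brg}$, so the second conjunct holds. Both conjuncts holding, $\lambda'\in\closedContext(X)$; and since $\lambda'$ was an arbitrary element of $\closed(\lambda)$, we conclude $\closed(\lambda)\subseteq\closedContext(X)$.

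The step I expect to be the main obstacle is the one in the second paragraph: rewriting the raw hypothesis ``$\incise(\deactsel(\lambda''))\cap\Brg\neq\emptyset$ for some $\Brg\in\lambda'$'' into the uppermost-collateral-incision notation $\colinc{\cdot}{\cdot}$ so that the preservation principle applies verbatim, while keeping the proper versus non-proper upper-segment brackets straight. The remaining points to check are routine edge cases: the instantiation $\lambda''=\lambda'$ is covered by preservation's universal quantifiers and simply gives $\deactsel(\lambda')\in\upsegmeqPP{\lambda'}{\Brg^*}$ as expected; and the fact that $\Crg=\deactsel(\lambda')$ is a genuine con argument (not the escape argument $\epsilon$) is precisely what the non-emptiness of the intersection defining $\closed(\lambda)$ guarantees.
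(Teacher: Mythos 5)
Your proof is correct and follows essentially the same route as the paper's: take $\Crg=\deactsel(\lambda')$ as the witnessing con argument, get the existential conjunct from $\lambda\in X$, and get the universal conjunct from preservation via the warranting incision function. You merely spell out two points the paper leaves implicit (reducing an arbitrary intersected $\Brg$ to the uppermost collateral incision $\Brg^*$ before invoking preservation, and ruling out $\deactsel(\lambda')=\epsilon$), which is fine.
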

\begin{proof}
Assuming $\lambda\in X$ we need to show that for any $\lambda'\in\closed(\lambda)$ it holds that $\lambda'\in\closedContext(X)$. From Def.~\ref{def.collaterality.functions}, we know that any $\lambda'\in\closed(\lambda)$ is such that $\incise(\deactsel(\lambda))\cap\deactsel(\lambda')\neq\emptyset$. Assume $\deactsel(\lambda')=\Crg$. Observe that $\Crg\in\lambda'^-$ (see Def.~\ref{def:selection3}). Finally, $\forall \lambda''\in X:$ if $\incise(\deactsel(\lambda''))\cap\Brg\neq\emptyset$ where $\Brg\in\lambda'$ then $\Crg\in\upsegmeqP{}{\Brg}$, is trivially satisfied given that \incise is a warranting incision function and $\Crg$ is the selected argument in $\lambda'$ (see \textit{preservation} on page~\pageref{preservation}). Finally, since all the conditions in Def.~\ref{def.context-sensitive.collaterality.functions} for a context-sensitive closed function are satisfied, it holds $\lambda'\in\closedContext(X)$.
\end{proof}

\begin{Proposition}\label{prop.identity.context-sensitive.closed}
Given a \delp\ \PP, a dialectical tree $\dtree{\Arg}{\PP}\in\accTrees{\PP}$, and a warranting incision function ``\incision''; for any $X\subseteq\bundle$ (where \bundle is the bundle set of \dtree{\Arg}{\PP}), it holds $X \subseteq\closedContext(X)$.
\end{Proposition}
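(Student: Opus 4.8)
The plan is to derive the inclusion as an immediate corollary of the two propositions that precede it, since $\closedContext(X)$ was defined precisely so as to absorb the ordinary closed sets $\closed(\lambda)$ for every $\lambda$ in the context $X$. No fresh combinatorial reasoning on dialectical trees is required.

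First I would fix an arbitrary line $\lambda\in X$. By Proposition~\ref{prop.identity.closed}, $\lambda\in\closed(\lambda)$: a line always closes itself, because its own incision meets its own selected argument (indeed $\incise(\deactsel(\lambda))\subseteq\deactsel(\lambda)$ by Definition~\ref{def:incision3} whenever $\deactsel(\lambda)$ is a genuine con argument). Next, since $\lambda\in X$ and $X\subseteq\bundle$, Proposition~\ref{prop.closed.in.context-sensitive.closed} applies and gives $\closed(\lambda)\subseteq\closedContext(X)$. Chaining the two facts yields $\lambda\in\closedContext(X)$; and since $\lambda$ was arbitrary in $X$, this establishes $X\subseteq\closedContext(X)$.

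As a consistency check I would also verify the membership directly against Definition~\ref{def.context-sensitive.collaterality.functions}: for $\lambda\in X$ take the witness $\Crg=\deactsel(\lambda)$, which lies in $\lambda^-$; the existential conjunct holds with the witness line $\lambda$ itself (since $\incise(\deactsel(\lambda))$ meets $\Crg$), and the universal conjunct holds because ``\incision'' is warranting, so by the preservation principle any collateral incision touching an argument $\Brg\in\lambda$ forces the selection $\deactsel(\lambda)=\Crg$ to lie in the non-proper upper segment of $\lambda$ with respect to $\Brg$. Both routes therefore give $\lambda\in\closedContext(X)$.

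There is no real obstacle; the only point meriting a remark is the degenerate situation $\lambda^-=\emptyset$, where $\deactsel(\lambda)=\epsilon$, $\incise(\epsilon)=\emptyset$, and $\lambda$ is a single-argument (hence odd-length, hence non-attacking) line with no selected con argument. Such a line occurs only when the root $\Arg$ has no defeaters at all, so that the tree is already warranting and nothing needs altering; under the working hypotheses governing context sets this case does not arise, and the chaining argument above goes through verbatim. Accordingly, the version I would write out is the one-line deduction from Propositions~\ref{prop.identity.closed} and~\ref{prop.closed.in.context-sensitive.closed}.
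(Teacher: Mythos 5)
Your proposal is correct and takes essentially the same route as the paper, whose entire proof is the one-line chaining of Proposition~\ref{prop.identity.closed} with Proposition~\ref{prop.closed.in.context-sensitive.closed}. The additional direct verification against Definition~\ref{def.context-sensitive.collaterality.functions} and the remark on the degenerate $\epsilon$-selection case are harmless extras that the paper omits.
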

\begin{proof}
Directly from Prop.~\ref{prop.identity.closed} and Prop.~\ref{prop.closed.in.context-sensitive.closed}.
\end{proof}

Note that, contrary to $\closed$, the operation $\closedContext$ is non-monotonic: given two sets $X$ and $Y$, of lines, if $X\subseteq Y$ then $\closedContext(X)\subseteq\closedContext(Y)$ is in general not satisfied. For space reasons we will not go further into this subject.

\begin{Lemma}\label{lemma.eff.alt.context-sensitive}
Given a \delp\ \PP, a dialectical tree $\dtree{\Arg}{\PP}\in\accTrees{\PP}$, and a warranting incision function ``\incision''; for any $\lambda\in\dtree{\Arg}{\PP}$, and any $X\subseteq\bundle$ (where \bundle is the bundle set of \dtree{\Arg}{\PP}), if every $\lambda'\in X$ is altered through $\incise(\deactsel(\lambda'))$ and $\lambda\in \closedContext(X)$ then $\lambda$ is effectively altered.
\end{Lemma}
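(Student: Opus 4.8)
The plan is to unfold the definition of the context-sensitive closed function and then reduce everything to the line-alteration machinery of Def.~\ref{def.line.alt} and Def.~\ref{def.eff.alt}. By Def.~\ref{def.context-sensitive.collaterality.functions}, $\lambda\in\closedContext(X)$ yields a con argument $\Crg\in\lambda^-$ together with some $\mu\in X$ such that $\incise(\deactsel(\mu))\cap\Crg\neq\emptyset$, and moreover for every $\nu\in X$ and every $\Brg\in\lambda$ with $\incise(\deactsel(\nu))\cap\Brg\neq\emptyset$ we have $\Crg\in\upsegmeqPP{\lambda}{\Brg}$, i.e. $\Crg$ sits at or above $\Brg$ in $\lambda$. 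Put $\Gamma=\Incisions(X)\subseteq\Ld$ and $\PP'=\PP\setminus\Gamma$; note $\PP'$ is again a \delp\ (only defeasible rules are deleted) and $\lambda\in\exLines{\PP}$ because $\lambda\in\dtree{\Arg}{\PP}\in\accTrees{\PP}$.

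The first step would be to show that the removal $\PP\setminus\Gamma$ provokes the alteration of $\lambda$ on $\Crg$ in the sense of Def.~\ref{def.line.alt}. Taking $\Gamma'=\Gamma\cap\Crg$, which is non-empty by the first witnessing condition, we get $\Gamma'\subseteq\Crg$; it then remains to check that $\Gamma\cap\Drg=\emptyset$ for every argument $\Drg$ of $\lambda$ strictly above $\Crg$, i.e. every $\Drg\in\upsegm{\lambda}{\Crg}$. Suppose not: then $\incise(\deactsel(\nu))\cap\Drg\neq\emptyset$ for some $\nu\in X$, so the second witnessing condition forces $\Crg\in\upsegmeqPP{\lambda}{\Drg}$, contradicting $\Drg$ being strictly above $\Crg$. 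Hence $\Crg$ is the uppermost argument of $\lambda$ hit by $\Gamma$, the removal alters $\lambda$ on $\Crg$, and — following the discussion after Def.~\ref{def.line.alt} — the resulting altered line is $\upsegm{\lambda}{\Crg}$; this is a well-defined non-empty line, since a con argument occupies an even position and hence $\Crg\neq\Arg$.

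The second step would be to verify the two clauses of Def.~\ref{def.eff.alt}. Equality of the altered line with $\upsegm{\lambda}{\Crg}$ is exactly what was just obtained. For the acceptability clause, observe that $\upsegm{\lambda}{\Crg}$ ends at the pro argument immediately preceding $\Crg$ and therefore has odd length; by Prop.~\ref{prop.attacking.length.even} (equivalently, Corollary~\ref{coro.ends.in.pro}, hence Corollary~\ref{corollary.upper.con}) a line of odd length is never attacking in any dialectical tree, so in particular $\upsegm{\lambda}{\Crg}$ is not attacking in $\dtree{\Arg}{\PP'}$ whenever it belongs to some acceptable tree of $\PP'$. Thus $\lambda$ is effectively altered on $\Crg$. (Alternatively, once $\lambda$ is known to be altered on the con argument $\Crg\in\lambda^-$, the conclusion is immediate from Lemma~\ref{lemma.eff.alt}.)

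The hard part is the first step, namely keeping track of the \emph{global} incision $\Gamma=\Incisions(X)$. In contrast with Lemma~\ref{lemma.incision.effectiveAlteration}, where a single incision acts on a single line, here $\lambda$ may be hit collaterally by incisions originating in several distinct lines of $X$, and the whole purpose of clause (ii) in Def.~\ref{def.context-sensitive.collaterality.functions} is precisely to guarantee that the uppermost such collateral hit still lands on the con argument $\Crg$; this must be applied uniformly over all $\nu\in X$ and all touched $\Brg\in\lambda$, with due care for the proper/non-proper upper-segment notation and the boundary case $\Brg=\Crg$. The rest is routine: that deleting a rule of $\Crg$ destroys $\Crg$, that $\upsegm{\lambda}{\Crg}$ ends in a pro argument, and that preservation — already built into ``\incise'' being warranting — rules out any collateral hit strictly above $\Crg$.
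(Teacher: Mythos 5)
Your proof is correct and follows essentially the same route as the paper's: extract from the definition of $\closedContext$ the con argument $\Crg$, use the second clause to show no incision from $X$ hits any argument of $\lambda$ above $\Crg$ (so $\Crg$ is the uppermost collateral incision), and conclude via Lemma~\ref{lemma.eff.alt} (equivalently Corollary~\ref{corollary.upper.con}). The only cosmetic difference is that the paper first dispatches the lines of $X$ itself through Lemma~\ref{lemma.warrantingIncision.effectiveAlteration} and then treats the remaining lines of $\closedContext(X)$, whereas you handle all lines uniformly and spell out the Def.~\ref{def.line.alt} check for the global removal $\Incisions(X)$ in more detail.
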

\begin{proof}
From Prop.~\ref{prop.identity.context-sensitive.closed}, we know that $X\subseteq\closedContext(X)$. 
From Lemma~\ref{lemma.warrantingIncision.effectiveAlteration}, the alteration of every $\lambda'\in X$ rendering a new \delp\ $\PP'=\PP\setminus\Incisions(X)$, effectively alters every $\lambda'$ conforming to Def.~\ref{def.eff.alt}, \ie every $\lambda'\in X$ turns to non-attacking in $\PP'$. For the rest of the lines in $\closedContext(X)$, from Def.~\ref{def.context-sensitive.collaterality.functions}, if $\lambda\in\closedContext(X)$ then we know that there is some $\lambda'\in X$ such that 
$\incise(\deactsel(\lambda'))\cap\Crg\neq\emptyset$ where $\Crg\in\lambda^-$, and for any $\lambda''\in X$, if $\incise(\deactsel(\lambda''))\cap\Brg\neq\emptyset$ where $\Brg\in\lambda$ then $\Crg\in\upsegmeq{}{\Brg}$. This means that there is no line in $X$ whose incision could collaterally alter $\lambda$ in an argument placed above \Crg. Hence, \Crg is the uppermost collateral incision over $\lambda$ taking into account only the lines included in $X$. Finally, from Lemma~\ref{lemma.eff.alt}, since \Crg is a con argument in $\lambda$, we know $\lambda$ is effectively altered on \Crg.
\end{proof}

%

We finally restrict the definition of alteration set into the notion of \textit{incision-aware alteration set} to deal with situations as the ones described above. For this reduced alteration set, we will rely upon context-sensitive collaterality functions. Afterwards in Section~\ref{sec.principles}, an algorithm will be studied for future implementations.

\begin{Definition}[Incision-aware Alteration Set]\label{def.att2}
Given a dialectical tree \dtree{\Arg}{\PP} and a warranting incision function ``\incise'', the \textbf{incision-aware alteration set} of \dtree{\Arg}{\PP} is the set \aware{\Arg}{\PP} simultaneously satisfying:
\vspace{-6mm}\begin{description}
\item\indent\begin{enumerate}
\item $\aware{\Arg}{\PP}\subseteq\alteration{\Arg}{\PP}$,\label{item.aware.1}
\item $\openContext(\aware{\Arg}{\PP})\subseteq\closedContext(\aware{\Arg}{\PP})$,\label{item.aware.2}
\item there is no proper subset of $\aware{\Arg}{\PP}$ satisfying~\ref{item.aware.1} and~\ref{item.aware.2}, and\label{item.aware.4}
\item for any $X\neq\aware{\Arg}{\PP}$ satisfying conditions~\ref{item.aware.1} to \ref{item.aware.4}, it is not the case that $\Psi'\lesschg\Psi$, where $\Psi=\Incisions(\aware{\Arg}{\PP})$ and $\Psi' = \Incisions(X)$.\label{item.aware.5}
\end{enumerate}
\end{description}
\end{Definition}

\ojo{Following the four conditions of Def.~\ref{def.att2}, the incision-aware alteration set \aware{\Arg}{\PP} is (1) a subset of the alteration set $\alteration{\Arg}{\PP}$ such that (2) every line that was open was finally closed under the same context, which is \aware{\Arg}{\PP} itself, and (3) is the minimal set of lines --with regards to set inclusion-- that (4) provokes the least amount of change --according to the rule-based criterion $\lesschg$.}

\begin{Corollary}\label{corollary.inc-aware.altset.eff.alt}
Given the incision-aware alteration set \aware{\Arg}{\PP} and a warranting incision function ``\incise''; if every $\lambda\in\aware{\Arg}{\PP}$ is effectively altered through $\incise(\deactsel(\lambda))$ then every line in $\closedContext(\aware{\Arg}{\PP})$ is effectively altered.
\end{Corollary}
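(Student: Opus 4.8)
The plan is to obtain the statement as an immediate instance of Lemma~\ref{lemma.eff.alt.context-sensitive}. That lemma asserts that, for any $X\subseteq\bundle$, if every $\lambda'\in X$ is altered through $\incise(\deactsel(\lambda'))$ and $\lambda\in\closedContext(X)$, then $\lambda$ is effectively altered. So I would take $X=\aware{\Arg}{\PP}$ and merely verify that the two hypotheses of the lemma are met for this choice of $X$.

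First I would check that $\aware{\Arg}{\PP}\subseteq\bundle$, as required to instantiate the lemma. By condition~\ref{item.aware.1} of Definition~\ref{def.att2} we have $\aware{\Arg}{\PP}\subseteq\alteration{\Arg}{\PP}$, and by Definition~\ref{def.attset} the alteration set is the least fixed point of the operator built from $\ALINESPP$ by iteratively adding the open sets $\open(\lambda)$. Both $\ALINESPP$ (by Definition~\ref{def.alines.delp}) and every image of $\open$ (by Definition~\ref{def.collaterality.functions}) consist solely of lines of $\dtree{\Arg}{\PP}$, hence of lines of $\bundle$; therefore $\alteration{\Arg}{\PP}\subseteq\bundle$ and a fortiori $\aware{\Arg}{\PP}\subseteq\bundle$.

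Second, the premise of the corollary states that every $\lambda\in\aware{\Arg}{\PP}$ is effectively altered through $\incise(\deactsel(\lambda))$; since an effective alteration is in particular an alteration (Definition~\ref{def.eff.alt} is built on top of Definition~\ref{def.line.alt}), the ``altered through $\incise(\deactsel(\lambda'))$'' hypothesis of Lemma~\ref{lemma.eff.alt.context-sensitive} holds for $X=\aware{\Arg}{\PP}$. Consequently, for any $\lambda\in\closedContext(\aware{\Arg}{\PP})$, Lemma~\ref{lemma.eff.alt.context-sensitive} yields that $\lambda$ is effectively altered, which is exactly the claim.

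The argument is essentially a direct appeal to the preceding lemma, so I do not anticipate any genuine difficulty; the only thing requiring care is the bookkeeping in the first step, namely confirming that every line manipulated along the construction of $\alteration{\Arg}{\PP}$ (and, through $\closedContext$, of $\closedContext(\aware{\Arg}{\PP})$) is a bona fide exhaustive line of the dialectical tree rather than a non-exhaustive upper segment, which follows by unwinding Definitions~\ref{def.attset}, \ref{def.collaterality.functions}, \ref{def.context-sensitive.collaterality.functions} and~\ref{def.alines.delp}.
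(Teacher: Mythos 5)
Your proof is correct and is exactly the route the paper intends: the corollary is left as an immediate instance of Lemma~\ref{lemma.eff.alt.context-sensitive} with $X=\aware{\Arg}{\PP}$, whose hypotheses hold since effective alterations are in particular alterations and $\aware{\Arg}{\PP}\subseteq\alteration{\Arg}{\PP}\subseteq\bundle$. Your extra bookkeeping that all lines involved are bona fide members of the bundle set is a sound (if not strictly demanded) verification of the lemma's side condition.
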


\begin{Proposition}\label{prop.alines.in.closedContext}
$\ALINESPP\subseteq\closedContext(\aware{\Arg}{\PP})$.
\end{Proposition}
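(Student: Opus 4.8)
The plan is to derive the inclusion directly from the definition of the context-sensitive open function together with condition~\ref{item.aware.2} of Definition~\ref{def.att2}. First I would recall that, by Definition~\ref{def.context-sensitive.collaterality.functions}, for any set of lines $X\subseteq\bundle$ (where $\bundle$ is the bundle set determining $\dtree{\Arg}{\PP}$) the context-sensitive open set is $\openContext(X)=\ALINESPP\cup\bigcup_{\scriptsize\lambda\in X}\open(\lambda)$. Hence $\ALINESPP$ is always one of the two components of this union, so $\ALINESPP\subseteq\openContext(X)$ holds trivially for every admissible $X$.

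Next I would instantiate $X=\aware{\Arg}{\PP}$. To be sure that $\openContext$ and $\closedContext$ are not forced to map to $\emptyset$, I would first note $\aware{\Arg}{\PP}\subseteq\bundle$: by condition~\ref{item.aware.1} of Definition~\ref{def.att2} we have $\aware{\Arg}{\PP}\subseteq\alteration{\Arg}{\PP}$, and by Definition~\ref{def.attset} the alteration set is built (via the operator $\attFunct$) from $\ALINESPP$ together with images of the open collaterality function, all of which are argumentation lines of $\dtree{\Arg}{\PP}\in\accTrees{\PP}$, hence members of $\bundle$. Therefore $\openContext(\aware{\Arg}{\PP})$ and $\closedContext(\aware{\Arg}{\PP})$ are given by the non-trivial clauses of Definition~\ref{def.context-sensitive.collaterality.functions}, and in particular $\ALINESPP\subseteq\openContext(\aware{\Arg}{\PP})$.

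Finally I would invoke condition~\ref{item.aware.2} of Definition~\ref{def.att2}, namely $\openContext(\aware{\Arg}{\PP})\subseteq\closedContext(\aware{\Arg}{\PP})$, and chain the two inclusions: $\ALINESPP\subseteq\openContext(\aware{\Arg}{\PP})\subseteq\closedContext(\aware{\Arg}{\PP})$, which is exactly the claim.

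I do not expect any real obstacle here, since the statement is essentially a bookkeeping consequence of how $\openContext$ is defined (it always contains the attacking set) combined with the defining property that in the incision-aware alteration set every ``open'' line is eventually ``closed'' under the same context. The only point deserving a line of care is the admissibility check $\aware{\Arg}{\PP}\subseteq\bundle$ so that the context-sensitive functions are evaluated by their substantive clause rather than collapsing to $\emptyset$.
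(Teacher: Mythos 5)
Your proof is correct and follows essentially the same route as the paper: the paper also notes $\ALINESPP\subseteq\openContext(\aware{\Arg}{\PP})$ directly from Def.~\ref{def.context-sensitive.collaterality.functions} and then chains this with condition~\ref{item.aware.2} of Def.~\ref{def.att2}. Your additional check that $\aware{\Arg}{\PP}\subseteq\bundle$ (so the context-sensitive functions do not collapse to $\emptyset$) is a sensible bit of extra care the paper leaves implicit, but it does not change the argument.
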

\begin{proof}
From Def.~\ref{def.context-sensitive.collaterality.functions}, $\ALINESPP\subseteq\openContext(\aware{\Arg}{\PP})$ and from condition~\ref{item.aware.2} in Def.~\ref{def.att2}, $\openContext(\aware{\Arg}{\PP})\subseteq\closedContext(\aware{\Arg}{\PP})$ holds. Thus, $\ALINESPP\subseteq\closedContext(\aware{\Arg}{\PP})$.
\end{proof}


\begin{Theorem}\label{theo.inc-aware.altset.warranting}
Given the incision-aware alteration set \aware{\Arg}{\PP} and a warranting incision function ``\incise''; if $\Psi = \Incisions(\aware{\Arg}{\PP})$ then $\htree(\Arg, \Psi)$ is warranting.
\end{Theorem}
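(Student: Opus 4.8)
The plan is to reduce the statement to the characterisation of warranting trees by attacking lines. By Corollary~\ref{corollary.warrant}, $\htree(\Arg,\Psi)$ warrants \Arg\ if and only if it contains no attacking line, so it suffices to show that, after removing the rules in $\Psi=\Incisions(\aware{\Arg}{\PP})$ from \PP, no line of the resulting tree is attacking. The engine of the argument will be Corollary~\ref{corollary.inc-aware.altset.eff.alt}: first I would observe that, since ``\incise'' is warranting, Lemma~\ref{lemma.warrantingIncision.effectiveAlteration} guarantees that each $\lambda\in\aware{\Arg}{\PP}$ is effectively altered by $\incise(\deactsel(\lambda))$, and hence, by that corollary, \emph{every} line in $\closedContext(\aware{\Arg}{\PP})$ is effectively altered in $\htree(\Arg,\Psi)$ --- meaning it is either absent from that tree (swallowed by a longer exhaustive line, with ATC taking care of the containing line) or present as a non-attacking line.

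Next I would run a case analysis on an arbitrary line $\lambda$ of the original tree $\dtree{\Arg}{\PP}$, mirroring the structure of the proof of Lemma~\ref{lemma.altset.warranting} but with $\closedContext(\aware{\Arg}{\PP})$ playing the role that $\alteration{\Arg}{\PP}$ plays there. (i) If $\lambda\in\ALINESPP$, then Proposition~\ref{prop.alines.in.closedContext} places $\lambda$ in $\closedContext(\aware{\Arg}{\PP})$, so $\lambda$ is effectively altered and is not attacking. (ii) If $\lambda$ is attacking in $\dtree{\Arg}{\PP}$ but $\lambda\notin\ALINESPP$, then by Def.~\ref{def.alines.delp} it is adjacent, at an argument marked \Unode, to some attacking line of $\ALINESPP$; since by case~(i) that line, and every attacking line adjacent to it below that \Unode-point, is effectively altered, Lemma~\ref{lemma.adj.att} turns $\lambda$ into a non-attacking line. (iii) If $\lambda$ is not attacking in $\dtree{\Arg}{\PP}$ but is collaterally turned into an attacking line by $\Psi$, then $\lambda$ is witnessed in $\open(\mu)$ for some $\mu\in\aware{\Arg}{\PP}$ (Def.~\ref{def.collaterality.functions}), hence $\lambda\in\openContext(\aware{\Arg}{\PP})$, and by clause~(2) of Def.~\ref{def.att2} also $\lambda\in\closedContext(\aware{\Arg}{\PP})$, so it too is effectively altered. (iv) Any remaining line is neither attacking originally nor collaterally turned to attacking, and therefore stays non-attacking. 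Combining the four cases, $\htree(\Arg,\Psi)$ has no attacking line, and Corollary~\ref{corollary.warrant} finishes the proof.

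The delicate point, and the one on which I would concentrate, is case~(iii): the incision-aware set $\aware{\Arg}{\PP}$ is in general a \emph{proper} subset of the full alteration set $\alteration{\Arg}{\PP}$, so I must make sure that every line capable of collaterally becoming attacking is still neutralised. The justification is exactly clause~(2) of Def.~\ref{def.att2}, namely $\openContext(\aware{\Arg}{\PP})\subseteq\closedContext(\aware{\Arg}{\PP})$: any line that was ``opened'' is required to have been ``closed'' again under the same context, possibly via a collateral incision on a con argument lying \emph{below} its selected argument --- which, by preservation, is the only kind of uncontrolled collaterality a warranting incision can produce. That such context-sensitive closures genuinely yield effective alterations is precisely what Corollary~\ref{corollary.inc-aware.altset.eff.alt} (resting on Lemma~\ref{lemma.eff.alt.context-sensitive}) supplies, and since $\closedContext$ is non-monotonic this must be read with the fixed context $\aware{\Arg}{\PP}$ throughout rather than line by line; I would state that explicitly so as not to leave a gap. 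Everything else in the argument is bookkeeping over lemmas already established.
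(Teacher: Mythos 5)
Your proof is correct and follows essentially the same route as the paper's: it rests on Corollary~\ref{corollary.inc-aware.altset.eff.alt} (with Lemma~\ref{lemma.warrantingIncision.effectiveAlteration}), Proposition~\ref{prop.alines.in.closedContext}, and Corollary~\ref{corollary.warrant}. Your version is simply more explicit than the paper's terse argument, spelling out via cases (ii)--(iv), clause~(2) of Def.~\ref{def.att2}, and Lemma~\ref{lemma.adj.att} why no attacking line can survive, reasoning the paper compresses into a single sentence by leaning on the structure of Lemma~\ref{lemma.altset.warranting}.
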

\begin{proof}
Since \incise is a warranting incision function and considering the dialectical tree $\dtree{\Arg}{\PP}\in\accTrees{\PP}$, from Corollary~\ref{corollary.inc-aware.altset.eff.alt}, we know that the effective alteration of every line in $\aware{\Arg}{\PP}$ determines the effective alteration of every line in $\closedContext(\aware{\Arg}{\PP})$. Since $\ALINESPP\subseteq\closedContext(\aware{\Arg}{\PP})$ (see Prop.~\ref{prop.alines.in.closedContext}) hold, we know that every line in $\ALINESPP$ is effectively altered. This means that no attacking lines appear in $\htree(\Arg,\Psi)$. Finally, from Corollary~\ref{corollary.warrant}, the hypothetical tree $\htree(\Arg,\Psi)$ is warranting.
\end{proof}
%
%

\ojo{While the regular alteration set looks for the ``minimal'' set of lines that has to be altered (without accounting on the incisions needed), the incision-aware alteration set pursues the same objective while looking for a minimum amount of incisions. 
For instance, in Ex.~\ref{ex.incision-aware}, the incision-aware alteration set ends up as $\aware{\Arg}{\PP}=\{\lambda_1,\lambda_3\}$. Observe that the set $\{\lambda_1,\lambda_2,\lambda_4,\lambda_5\}$ satisfies conditions \ref{item.aware.1} to \ref{item.aware.4} from Def.~\ref{def.att2}, but not condition \ref{item.aware.5}: the effective alteration of its lines determines a set of defeasible rules $\{\varphi_1,\varphi_2,\varphi_3,\varphi_4\}$ which naturally provokes more change (according to the rule-based criterion) than the set $\{\varphi_1,\varphi_3\}$ determined by the set $\aware{\Arg}{\PP}$.}

\ojo{However, being aware of the incisions to be made, determines a set of lines to be altered that could actually end up being smaller than the one determined by the regular alteration set. 
Such a situation ocurs, given that the incision-aware alteration set considers contextual information, and thus, the alteration of some lines included in the regular alteration set (and excluded from the incision-aware one) are achieved by taking into account advantageous collateralities, \ie those collateral incisions that end up in effective alterations. Nevertheless, although the incision-aware alteration set can be smaller, it ends up altering each of the lines contained in the regular alteration set, either in a direct way --through the application of the incision function in a line-- or by effect of a collateral incision. For instance, in Ex.~\ref{ex.ia-alteration.smaller.regular-alteration}, although the incision-aware alteration set (which ends up as $\aware{\Arg}{\PP}=\{\lambda_1,\lambda_2\}$) does not contain $\lambda_3$ (which is contained in the regular alteration set $\alteration{\Arg}{\PP}=\{\lambda_1,\lambda_2,\lambda_3\}$), it ends up effectively altering $\lambda_3$ by effect of the collaterality produced by the incision of $\lambda_1\in\aware{\Arg}{\PP}$. Thus, the incision-aware alteration set ends up altering (directly or through collateralities) every line included in the regular alteration set, however, the incision-aware alteration set does it by removing (occasionally) less rules from the \delp}


\begin{Example}[Continues from Ex.~\ref{ex.alterationContext-sensitive}]\label{ex.ia-alteration.smaller.regular-alteration}
The alteration set would be $\alteration{\Arg}{\PP}=$ 
\begin{window}[0,r,{\mbox{\epsfig{file=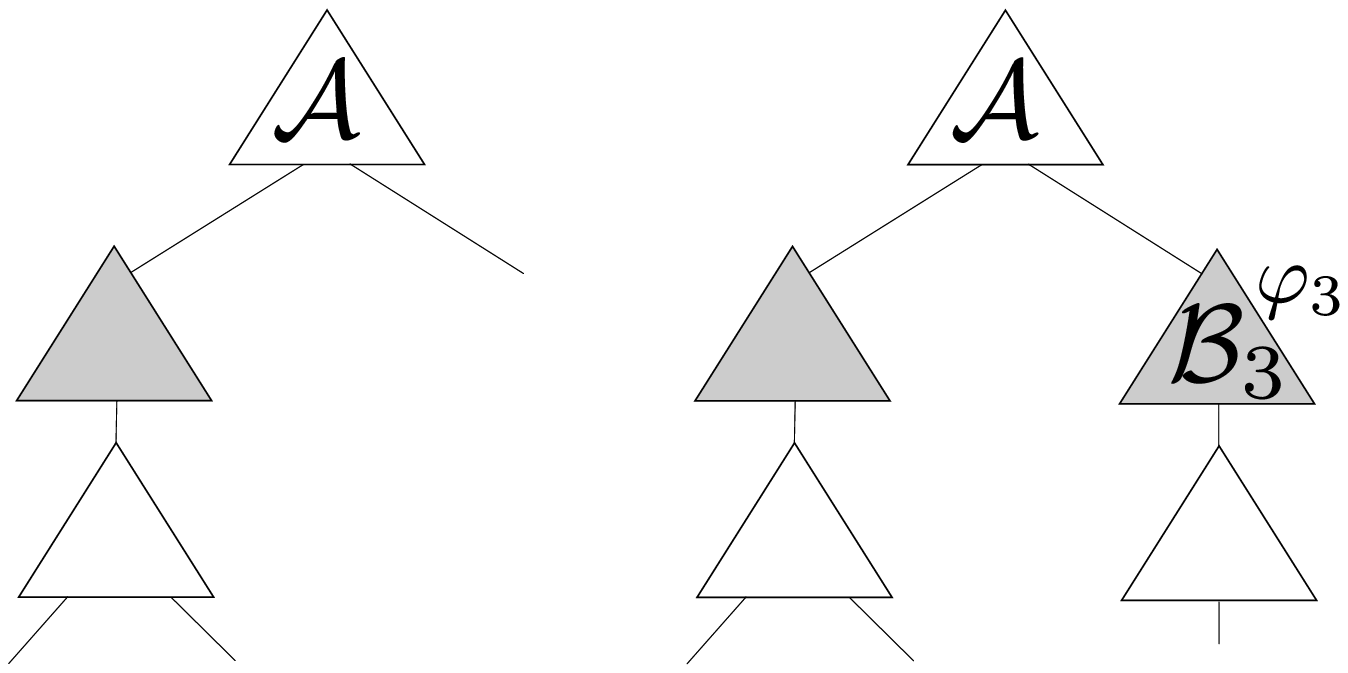,scale=.32}}},{}]
\noindent $\{\lambda_1,\lambda_2,\lambda_3\}$ determining a warranting tree $\htree(\Arg,\Psi)$ (leftmost tree depicted on the right), where $\Psi=\{\varphi_1,\varphi_2,\varphi_3\}$. However, the incision-aware alteration set would be $\aware{\Arg}{\PP}=\{\lambda_1,\lambda_2\}$ determining a warranting tree $\htree(\Arg,\Psi')$ (rightmost tree depicted on the right), where $\Psi'=\{\varphi_1,\varphi_2\}$.
\end{window}
\end{Example}


Assuming a warranting incision function ``\incision'' whose image maps only to sets of singletons, we can preserve a kind of minimality: any rule to be removed from a \delp\ is individually necessary to provide warrant to the root argument. This assertion aims at avoiding unnecessary removals from a \delp, and is required by the postulate of core-retainment, on page~\pageref{postulates}. Such a function is referred to as a \emph{minimally-warranting incision function}, and is defined next.

\begin{Definition}[Minimally-warranting Incision Function]\label{def.mwIncision}
An argument incision function ``$\incise$'' is said to be \textbf{minimally-warranting} iff it is a warranting incision function and \ojo{for all $\Brg\in\ARGS$ such that $\incision(\Brg)\neq\emptyset$ it holds $|\incision(\Brg)|=1$}.
\end{Definition}

\begin{Theorem}\label{theo.minimally-warranting.incision}
Given a \delp\ \PP\ and a warranting incision function ``\incision'', \ojo{if \incise is minimally-warranting} then $\htree(\Arg, \Psi\setminus\{\varphi\})$ is non-warranting, for any $\varphi\in\Psi$, where $\Psi= \Incisions(\aware{\Arg}{\PP})$.
\end{Theorem}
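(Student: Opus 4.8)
The goal is to establish minimality at the level of individual defeasible rules: no rule in $\Psi = \Incisions(\aware{\Arg}{\PP})$ is superfluous. The plan is to argue by contradiction. Suppose that for some $\varphi \in \Psi$, the hypothetical tree $\htree(\Arg, \Psi\setminus\{\varphi\})$ is still warranting. Since $\incise$ is minimally-warranting, every incision $\incise(\deactsel(\lambda))$ for $\lambda\in\aware{\Arg}{\PP}$ is a singleton; hence $\varphi$ is exactly the single rule $\incise(\deactsel(\lambda_\varphi))$ for at least one line $\lambda_\varphi\in\aware{\Arg}{\PP}$ (and removing $\varphi$ from $\Psi$ amounts to cancelling the incision on every such line). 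So in $\htree(\Arg, \Psi\setminus\{\varphi\})$, those lines $\lambda_\varphi$ are no longer directly incised.

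The next step is to show that at least one such $\lambda_\varphi$ is not effectively altered in $\htree(\Arg, \Psi\setminus\{\varphi\})$, contradicting the fact that the tree is warranting (via Corollary~\ref{corollary.warrant}, a warranting tree has no attacking line, and the relevant attacking lines from $\ALINESPP$ must all have been effectively altered). Here I would split into cases according to why $\lambda_\varphi\in\aware{\Arg}{\PP}$. The key tool is condition~\ref{item.aware.4} (minimality w.r.t.\ set inclusion) and condition~\ref{item.aware.5} (minimality of $\Psi$ w.r.t.\ $\lesschg$) in Def.~\ref{def.att2}, together with the effective-alteration machinery: Lemma~\ref{lemma.eff.alt} tells us an effective alteration of a line requires deactivating one of its \emph{con} arguments, and since $\incise$ is warranting, the only incisions that can help a line $\lambda$ are either the direct one $\incise(\deactsel(\lambda))$ or a collateral incision over its selected con argument or an argument below it (by \textbf{preservation}). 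If $\varphi$ were genuinely removable while keeping the tree warranting, then the set $\aware{\Arg}{\PP}$ (or a variant of it with $\lambda_\varphi$ dropped) would still satisfy conditions \ref{item.aware.1}--\ref{item.aware.4} but produce a strictly $\lesschg$-smaller incision set $\Psi\setminus\{\varphi\}\lesschg\Psi$ — contradicting condition~\ref{item.aware.5}. I would make this precise by exhibiting the competing set $X$ explicitly: take $X$ to be the context-sensitive-closed-minimal subset of $\aware{\Arg}{\PP}$ whose incisions cover $\Psi\setminus\{\varphi\}$, and verify it meets \ref{item.aware.1}--\ref{item.aware.4} using the assumption that $\htree(\Arg,\Psi\setminus\{\varphi\})$ is warranting (so $\ALINESPP\subseteq\closedContext(X)$ still holds, invoking Prop.~\ref{prop.alines.in.closedContext}-style reasoning and Lemma~\ref{lemma.altset.warranting}/Theorem~\ref{theo.inc-aware.altset.warranting}).

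The delicate point — and the main obstacle — is handling collateralities correctly: removing $\varphi$ from $\Psi$ might coincidentally leave some line effectively altered \emph{by a different collateral incision} that is still present, so one cannot argue that dropping $\varphi$ immediately breaks warrant for the line $\lambda_\varphi$ itself. The argument must instead route through the global minimality condition~\ref{item.aware.5}: if $\htree(\Arg, \Psi\setminus\{\varphi\})$ is warranting, then some subset $X$ of $\aware{\Arg}{\PP}$ realizing the smaller incision set witnesses a violation of \ref{item.aware.5} (or of \ref{item.aware.4} if $X$ can even be taken strictly smaller), giving the contradiction. So the real work is: (i) from "warranting after removing $\varphi$" reconstruct a legitimate competing alteration set $X$ with $\Incisions(X)\subseteq\Psi\setminus\{\varphi\}$, using the collaterality-function apparatus and the fact that a warranting hypothetical tree has its attacking-set lines all closed; and (ii) check $X$ satisfies \ref{item.aware.1}--\ref{item.aware.4}, so that \ref{item.aware.5} is contradicted by $\Incisions(X)\lesschg\Psi$ (strict, since $\Incisions(X)\subsetneq\Psi$ and $\lesschg$ is a total order). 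Step (i) is where the non-monotonicity of $\closedContext$ (noted after Prop.~\ref{prop.identity.context-sensitive.closed}) forces care — one must choose $X$ so that its own context-sensitive closure still covers $\ALINESPP$, not merely rely on $X\subseteq\aware{\Arg}{\PP}$.
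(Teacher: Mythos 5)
Your plan has a genuine gap at exactly the point you yourself flag as ``the real work''. Your contradiction route needs step (i): from the assumption that $\htree(\Arg,\Psi\setminus\{\varphi\})$ is warranting, reconstruct a competing set $X$ with $\Incisions(X)\subseteq\Psi\setminus\{\varphi\}$ that satisfies conditions~\ref{item.aware.1}--\ref{item.aware.4} of Def.~\ref{def.att2} — in particular $\openContext(X)\subseteq\closedContext(X)$. But the paper's machinery only gives the converse direction (conditions on a set of lines imply a warranting hypothetical tree, as in Theorem~\ref{theo.inc-aware.altset.warranting}); there is no result translating ``warranting hypothetical tree'' back into the set-theoretic closure condition for a chosen $X$, and the non-monotonicity of $\closedContext$ that you mention is precisely why this is not routine. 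Announcing the obstacle does not discharge it, so the proof is incomplete as it stands. Moreover, even granting step (i), your final contradiction leans on the inference ``$\Incisions(X)\subsetneq\Psi$ and $\lesschg$ is a total order, hence $\Incisions(X)\lesschg\Psi$''. That is a non sequitur: the rule-based criterion is an arbitrary total order on subsets of $\Ld$ and is nowhere assumed to respect set inclusion, and condition~\ref{item.aware.5} only compares via $\lesschg$, so strict inclusion alone does not contradict it.

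The paper avoids both problems by arguing directly rather than by contradiction from warranting, and by using the inclusion-minimality condition~\ref{item.aware.4} instead of~\ref{item.aware.5}. Since \incise is minimally-warranting, $\incise(\deactsel(\lambda))=\{\varphi\}$ for some $\lambda\in\aware{\Arg}{\PP}$; minimality of $\aware{\Arg}{\PP}$ forces $\lambda$ to be the unique line whose incision closes some $\lambda'\in\openContext(\aware{\Arg}{\PP})$ (otherwise $\aware{\Arg}{\PP}\setminus\{\lambda\}$ would already satisfy conditions~\ref{item.aware.1} and~\ref{item.aware.2}). Dropping $\varphi$ amounts to dropping $\lambda$'s incision, so with $X=\aware{\Arg}{\PP}\setminus\{\lambda\}$ one gets $\lambda'\in\openContext(X)$ but $\lambda'\notin\closedContext(X)$; hence $\lambda'$ is not effectively altered and persists as an attacking line in $\htree(\Arg,\Psi\setminus\{\varphi\})$, which is therefore non-warranting by Corollary~\ref{corollary.warrant}. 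If you want to keep your contradiction framing, you would have to both prove the missing ``warranting $\Rightarrow$ closure-condition'' direction and add a monotonicity assumption on $\lesschg$ (or switch the contradiction target to condition~\ref{item.aware.4}), at which point you essentially recover the paper's direct argument.
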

\begin{proof}
From the hypothesis we know that, $\varphi\in\incise(\deactsel(\lambda))$, for some $\lambda\in\aware{\Arg}{\PP}$; and since $|\incise(\deactsel(\lambda))|=1$ (see Def.~\ref{def.mwIncision}), it holds $\incise(\deactsel(\lambda))=\{\varphi\}$. 
Since $\aware{\Arg}{\PP}$ is minimal (see cond.~\ref{item.aware.4} in Def.~\ref{def.att2}), it includes $\lambda$ necessarily because it is the unique line which through $\incise(\deactsel(\lambda))$ closes another line $\lambda'\in\openContext(\aware{\Arg}{\PP})$. Thus, $\lambda'\in\closed(\lambda)$ and hence, $\lambda'\notin\closedContext(X)$, where $X=\aware{\Arg}{\PP}\setminus\{\lambda\}$. (If another line $\lambda''\in X$ were closing $\lambda'$, this would determine $\lambda'\in\closedContext(X)$, and thus minimality of $\aware{\Arg}{\PP}$ would be violated.) It is clear that $\lambda'\in\openContext(X)$, since $\lambda$ closes $\lambda'$, thus $\lambda'$ cannot be simultaneously open by $\lambda$. Afterwards, $\openContext(X)\not\subseteq\closedContext(X)$. This means that $\lambda'$ has not been effectively altered in $\htree(\Arg, \Psi\setminus\{\varphi\})$. And thus, since it contains an attacking line, $\htree(\Arg, \Psi\setminus\{\varphi\})$ is non-warranting.
\end{proof}

\begin{Corollary}
Given a \delp\ \PP\ and a warranting incision function ``\incision'', \ojo{if \incise is minimally-warranting} then $\htree(\Arg, \Psi)$ is non-warranting, where $\Psi= \Incisions(X)$, for any $X\subset\aware{\Arg}{\PP}$.
\end{Corollary}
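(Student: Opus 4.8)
The plan is to prove the claim directly from the minimality built into Definition~\ref{def.att2}, and to note a second, more delicate route via Theorem~\ref{theo.minimally-warranting.incision}. The core observation is that a proper subset of the incision-aware alteration set must fail one of the defining conditions, and the only one it can fail is the ``closure'' condition~\ref{item.aware.2}.

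First I would fix a proper subset $X\subsetneq\aware{\Arg}{\PP}$. Since $\aware{\Arg}{\PP}\subseteq\alteration{\Arg}{\PP}$ (condition~\ref{item.aware.1} of Def.~\ref{def.att2}), the set $X$ also satisfies condition~\ref{item.aware.1}. By the minimality condition~\ref{item.aware.4} of Def.~\ref{def.att2}, no proper subset of $\aware{\Arg}{\PP}$ satisfies both condition~\ref{item.aware.1} and condition~\ref{item.aware.2}; hence $X$ fails condition~\ref{item.aware.2}, i.e.\ $\openContext(X)\not\subseteq\closedContext(X)$. So there is a line $\lambda^{*}\in\openContext(X)$ with $\lambda^{*}\notin\closedContext(X)$. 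By the definition of $\openContext$ (Def.~\ref{def.context-sensitive.collaterality.functions}), either $\lambda^{*}\in\ALINESPP$, so $\lambda^{*}$ is already attacking in $\dtree{\Arg}{\PP}$, or $\lambda^{*}\in\open(\lambda)$ for some $\lambda\in X$, so the incision $\incise(\deactsel(\lambda))$ collaterally turns $\lambda^{*}$, or one of its upper segments, into an attacking line (Def.~\ref{def.collaterality.functions}).

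Next I would argue that this $\lambda^{*}$ (or the relevant upper segment) remains attacking in $\htree(\Arg,\Incisions(X))$, so that the tree is non-warranting by Corollary~\ref{corollary.warrant}. The ingredient needed is the biconditional strengthening of Lemma~\ref{lemma.eff.alt.context-sensitive}/Corollary~\ref{corollary.inc-aware.altset.eff.alt}: when the lines of $X$ are altered by their incisions, the lines that get \emph{effectively} altered are \emph{exactly} those in $\closedContext(X)$. The forward direction is already available; for the converse I would use preservation, valid since \incise is warranting (Def.~\ref{def.warrantingIncision}), to show that for any line $\mu$, every argument of $\mu$ hit by a collateral incision coming from $\Incisions(X)$ lies at or below $\deactsel(\mu)$; hence if $\Incisions(X)$ effectively altered $\mu$, its uppermost incised argument would be a con argument at or above every other $X$-incised argument of $\mu$, i.e.\ $\mu\in\closedContext(X)$. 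Applying this contrapositively to $\lambda^{*}$ yields that $\Incisions(X)$ does not effectively alter $\lambda^{*}$, so $\htree(\Arg,\Incisions(X))$ keeps an attacking line. (Alternative route: pick $\lambda_0\in\aware{\Arg}{\PP}\setminus X$; by the minimally-warranting hypothesis $\incise(\deactsel(\lambda_0))=\{\varphi_0\}$ is a singleton, and by minimality of $\aware{\Arg}{\PP}$ this $\varphi_0$ is produced by no other line's incision, so $\Incisions(X)\subseteq\Psi\setminus\{\varphi_0\}$ with $\Psi=\Incisions(\aware{\Arg}{\PP})$; then Theorem~\ref{theo.minimally-warranting.incision} applies, but only after an extra step showing that shrinking the removed rule-set keeps the witnessing attacking line unaltered, which, because $\closedContext$ is non-monotonic, again reduces to the same preservation bookkeeping.)

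The main obstacle is exactly that converse direction: proving that a line outside $\closedContext(X)$ genuinely survives as attacking once we pass to the hypothetical tree. Two failure modes must be excluded. First, a collateral incision landing strictly below one of $\lambda^{*}$'s con arguments; this is ruled out by preservation together with Prop.~\ref{prop.upper.pro}, Corollary~\ref{corollary.upper.con} and Prop.~\ref{prop:U-rep}, which keep an upper segment ending in a con argument attacking. Second, the vanishing of a sibling subtree along $\lambda^{*}$ flipping marks up towards the root; here one uses that any line adjacent to $\lambda^{*}$ at a \Unode-marked argument that could do this is already absorbed into $\ALINESPP$ and the alteration-set construction via Lemma~\ref{lemma.adj.att}, so it cannot by itself restore the root's warrant. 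Everything else --- the membership bookkeeping of $X$ against Def.~\ref{def.att2} and the final appeal to Corollary~\ref{corollary.warrant} --- is routine.
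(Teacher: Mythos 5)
Your argument is correct (at the level of rigor the paper itself works at), but it is worth noting how it sits relative to the paper: the paper gives no proof of this Corollary at all, presenting it as an immediate consequence of Theorem~\ref{theo.minimally-warranting.incision}, whereas you reprove it directly by re-running that theorem's minimality argument for an arbitrary $X\subsetneq\aware{\Arg}{\PP}$ (conditions~\ref{item.aware.1} and~\ref{item.aware.4} of Def.~\ref{def.att2} force $\openContext(X)\not\subseteq\closedContext(X)$, a witness line survives as attacking, Corollary~\ref{corollary.warrant} concludes). Your route is in fact the more faithful one: since $\Incisions(X)$ generally omits several rules of $\Psi=\Incisions(\aware{\Arg}{\PP})$ and neither warrant nor $\closedContext$ is monotonic in the set of removed rules, the theorem (which deletes a single $\varphi$) cannot be invoked verbatim --- a point you explicitly flag and which the paper glosses over. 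Two small caveats: your intermediate claim that the lines effectively altered by $\Incisions(X)$ are \emph{exactly} those in $\closedContext(X)$ is stronger than what is available (a line cut at a pro argument can in principle end up non-attacking for global-marking reasons); only the direction you actually use --- $\lambda^{*}\notin\closedContext(X)$ implies its residual segment is untouched or cut at a pro argument, hence still alternating by Corollary~\ref{corollary.upper.con}, Prop.~\ref{prop.upper.pro} and preservation --- is needed. And the remaining delicate step, ruling out that pruning sibling subtrees flips marks along $\lambda^{*}$ and restores the root's warrant, is handled by you at the same informal level (appeal to Lemma~\ref{lemma.adj.att} and the construction of $\ALINESPP$) as the paper's own proofs of Lemma~\ref{lemma.altset.warranting} and Theorem~\ref{theo.minimally-warranting.incision}, so this is not a gap relative to the paper, though neither text fully discharges it.
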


The theorem above ensures ``some kind of'' minimality regarding the set of rules to be removed from a \delp\ However, real minimality regarding removals from a \delp\ cannot be achieved without compromising the first axis of change (selection criterion) as being described on page~\pageref{axes.of.change}. 
\ojo{Moreover, by concretizing the alteration criterion \lessAlter (see Def.~\ref{def.alteration.criterion}) considering that minimality is determined according to set cardinality (see example given on page~\pageref{alteration.criterion.cardinality}), the attacking set \ALINESPP ends up being the smallest possible set satisfying Def.~\ref{def.alines.delp}, \ie there is no set $X\neq\ALINESPP$ satisfying Def.~\ref{def.alines.delp} such that $|X|<|\ALINESPP|$. By ensuring such a concretization for the alteration criterion we still cannot ensure that there is no set $\Psi'\subseteq\PP$ such that $\htree(\Arg,\Psi')$ is warranting and $|\Psi'|<|\Psi|$, where $\Psi= \Incisions(\aware{\Arg}{\PP})$.} A set like $\Psi'$ can be obtained by deactivating only direct defeaters of the root argument (this option will be discussed in Section~\ref{sec.principles}), excluding any possible selection criterion from consideration and thus compromising the first axis of change. 
This is illustrated in Ex.~\ref{ex.minimality.DeLP}. A fourth axis of change could take into account the amount of rules to be removed from a \delp\ Nonetheless, this decision would complicate even more our theory. In this paper we abstracted away from such consideration, and concentrated only on the analysis of the impact of change over the original dialectical tree by proposing three axes of change as aforementioned.

\begin{Example}\label{ex.minimality.DeLP}
Consider the tree $\dtree{\Arg}{\PP}\in\accTrees{\PP}$ depicted on the right, where \PP
\begin{window}[0,r,{\mbox{\epsfig{file=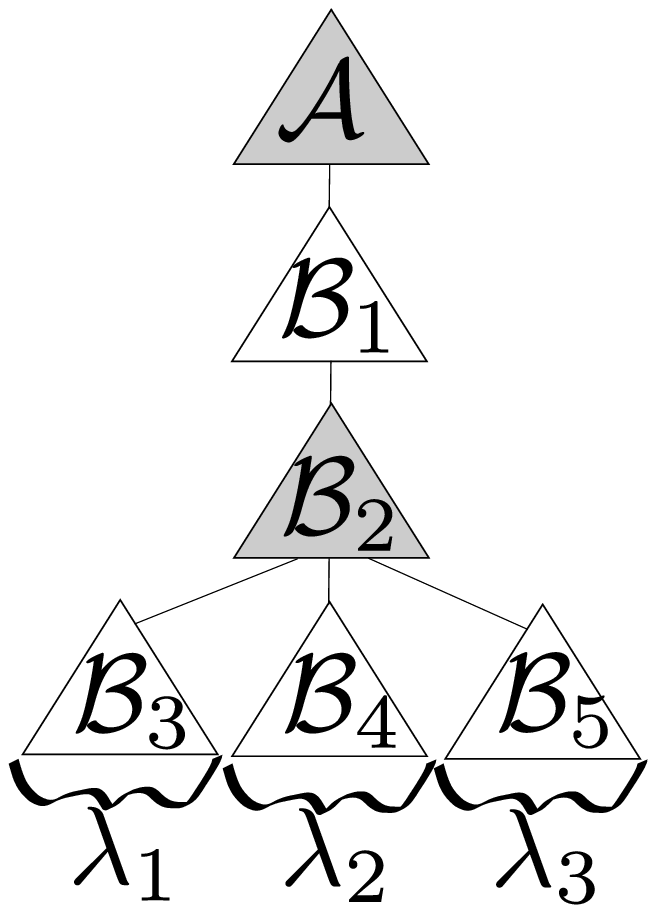,scale=.32}}},{}]
\noindent is a \delp\ Let us assume a selection criterion such that $\Brg_3\selcritsub{1}\Brg_1$, $\Brg_4\selcritsub{2}\Brg_1$, and $\Brg_5\selcritsub{3}\Brg_1$; and the support of arguments $\Brg_1$, $\Brg_3$, $\Brg_4$, and $\Brg_5$ to be singletons where $\varphi_1\in\Brg_1$, $\varphi_3\in\Brg_3$, $\varphi_4\in\Brg_4$, and $\varphi_5\in\Brg_5$. Clearly, $\aware{\Arg}{\PP}=\{\lambda_1,\lambda_2,\lambda_3\}$, and $\incise(\lambda_1)=\varphi_3$, $\incise(\lambda_2)=\varphi_4$, and $\incise(\lambda_3)=\varphi_5$. Hence, the hypothetical tree $\htree(\Arg,\Psi)$, with $\Psi=\{\varphi_3,\varphi_4,\varphi_5\}$, is warranting. Note however that a model of change disregarding not only the selection criterion, but the three axes of change, could choose a set $\Psi'=\{\varphi_1\}$ rendering a warranting hypothetical tree $\htree(\Arg,\Psi')$, and moreover, satisfying $|\Psi'|<|\Psi|$. By taking into account $\Psi'$, the resulting \delp\ would only lose a single defeasible rule. Nonetheless, the resulting dialectical tree would be smaller than the one resulting from our model of change. This is important when considering dialectical trees as explanations for the status of the root argument, as analyzed in~\cite{explanations07}.
\end{window}
\end{Example}

\subsection{Interrelating Attacking and Alteration Sets}\label{sec.atc.set.interrelation}


In the rest of this section we will study the relation among the attacking set, and the regular and incision-aware versions of the alteration set. In general, the studied properties pose restrictions upon the worked incision function such that, when satisfied they ensure $\ALINESPP=\alteration{\Arg}{\PP}$ (Theorem~\ref{theo.alines.equals.alteration.1}, Corollary~\ref{corollary.profit.then.alines.equals.alteration}, and Theorem~\ref{theo.weak.profitability}), $\alteration{\Arg}{\PP}=\aware{\Arg}{\PP}$ (Theorem~\ref{theo.aware.equals.alteration}), or even $\Incisions(\ALINESPP)=\Incisions(\alteration{\Arg}{\PP})=\Incisions(\aware{\Arg}{\PP})$ (Theorem~\ref{theo.equal.incisions}). These properties are important in the development of the optimized algorithms given in Section~\ref{sec.principles} for constructing the proposed argumentative model of change.

\ojo{We firstly state under which conditions the attacking set and the regular alteration set, coincide. Theorem~\ref{theo.alines.equals.alteration.1} shows that this happens when the lines in \ALINESPP only provoke collateral alterations (if any) over lines within \ALINESPP.} Since the same condition is part of the requirements of the profitability principle, Corollary~\ref{corollary.profit.then.alines.equals.alteration} follows afterwards. On the other hand, if every collaterality incises only selected arguments from other lines, then no new lines will be open. 
This property fulfills the requirements of an incision function satisfying weak profitability. Thus, Theorem~\ref{theo.weak.profitability}, shows $\ALINESPP=\alteration{\Arg}{\PP}$ holds if weak profitability is satisfied.

\begin{Theorem}\label{theo.alines.equals.alteration.1}
Given a warranting incision function ``$\incise$'', for any $\lambda'\in\ALINESPP$, any $\lambda\in\dtree{\Arg}{\PP}$, and any $\Brg\in\lambda$; $[$if $\colinc{\incise(\deactsel(\lambda'))}{\Brg}\neq\emptyset$ then $\lambda\in\ALINESPP]$ \ifff $\ALINESPP=\alteration{\Arg}{\PP}$.
\end{Theorem}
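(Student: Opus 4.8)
The statement is an equivalence, and I would attack both directions through the fixed-point description of $\alteration{\Arg}{\PP}$ given in Def.~\ref{def.attset}. Since the operator $\attFunct$ is inflationary with $\attFunct^0=\ALINESPP$, we have $\alteration{\Arg}{\PP}=\bigcup_{k}\attFunct^k$, and because $\ALINESPP\subseteq\alteration{\Arg}{\PP}$ always holds (Remark~\ref{remark.alteration.contains.attset}), the equality $\ALINESPP=\alteration{\Arg}{\PP}$ is equivalent to ``$\ALINESPP$ is already closed under $\open$'', i.e. $\open(\lambda')\subseteq\ALINESPP$ for every $\lambda'\in\ALINESPP$. So the theorem reduces to a bridge between the collaterality condition on incisions and $\open$-closedness of $\ALINESPP$. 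The bridge I would prove is: \emph{for $\lambda'\in\ALINESPP$, a line $\lambda$ belongs to $\open(\lambda')$ if and only if $\incise(\deactsel(\lambda'))$ provokes an uppermost collateral incision over some $\Brg\in\lambda$}, that is, $\colinc{\incise(\deactsel(\lambda'))}{\Brg}\neq\emptyset$ for some $\Brg\in\lambda$.

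For the ($\Rightarrow$) direction I would use only the ``easy half'' of the bridge. Assume the collaterality condition. Given $\lambda'\in\ALINESPP$ and $\lambda\in\open(\lambda')$, Def.~\ref{def.collaterality.functions} provides $\Brg\in\lambda$ with $\upsegm{\lambda}{\Brg}$ in the attacking set of $\htree(\Arg,\incise(\deactsel(\lambda')))$; in particular $\upsegm{\lambda}{\Brg}$ is a line of that hypothetical tree. Inspecting Def.~\ref{def.hyp}, a \emph{proper} upper segment of $\lambda$ can only arise as a line of $\htree(\Arg,\incise(\deactsel(\lambda')))$ through the set $X_2$, which forces $\Brg$ to be the first argument of $\lambda$ met by $\incise(\deactsel(\lambda'))$ --- exactly $\colinc{\incise(\deactsel(\lambda'))}{\Brg}\neq\emptyset$. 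The assumed condition then yields $\lambda\in\ALINESPP$, so $\open(\lambda')\subseteq\ALINESPP$, $\ALINESPP$ is $\open$-closed, and hence $\alteration{\Arg}{\PP}=\ALINESPP$.

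For the ($\Leftarrow$) direction I would argue contrapositively. Suppose the condition fails: there are $\lambda'\in\ALINESPP$, a line $\lambda\notin\ALINESPP$, and $\Brg\in\lambda$ with $\colinc{\incise(\deactsel(\lambda'))}{\Brg}\neq\emptyset$. Because $\incise$ is a warranting incision function it satisfies preservation, so $\Brg\neq\Arg$ and $\deactsel(\lambda)\in\lambda^{\uparrow}[\Brg]$; consequently, in $\htree(\Arg,\incise(\deactsel(\lambda')))$ the line $\lambda$ is replaced precisely by $\upsegm{\lambda}{\Brg}$, no argument of $\lambda$ above $\Brg$ being touched. Using Lemma~\ref{lemma.warrantingIncision.effectiveAlteration} to see that the remaining collateralities do not undo the effective alteration of $\lambda'$, and the characterization of attacking lines in Theorem~\ref{theorem:warrant} together with the adjacency/minimality conditions of Def.~\ref{def.alines.delp}, I would show that this truncated copy of $\lambda$ (or a further upper segment of it) belongs to the attacking set of $\htree(\Arg,\incise(\deactsel(\lambda')))$, i.e. $\lambda\in\open(\lambda')$. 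Since $\lambda'\in\ALINESPP=\attFunct^0$, this gives $\lambda\in\attFunct^1\subseteq\alteration{\Arg}{\PP}$; with $\lambda\notin\ALINESPP$ this yields $\ALINESPP\subsetneq\alteration{\Arg}{\PP}$, the desired contradiction.

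The main obstacle is precisely the ``hard half'' of the bridge used in the ($\Leftarrow$) direction: deducing $\lambda\in\open(\lambda')$ from a bare collateral incision onto $\lambda$. A priori the uppermost collaterally incised argument $\Brg$ could be a con argument of $\lambda$ lying strictly below $\deactsel(\lambda)$, in which case $\upsegm{\lambda}{\Brg}$ ends in a pro argument and is non-attacking (Corollary~\ref{coro.ends.in.pro}), so naive truncation does not by itself place $\lambda$ into $\open(\lambda')$. Handling this case requires genuinely exploiting that $\incise$ is warranting --- preservation keeps $\Arg$ active and pins every collateral incision at or above each line's own selection --- and invoking Theorem~\ref{theorem:warrant}/Corollary~\ref{corollary.warrant} to conclude that if $\lambda\notin\ALINESPP$ is collaterally hit by a line that itself must be altered, then the hypothetical tree remains non-warranting and $\lambda$ must contribute an attacking line to its attacking set. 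The delicate bookkeeping lies in reconciling this with the possibility that $\lambda$ is only adjacent at a \Unode-marked argument to a still-attacking line of $\htree(\Arg,\incise(\deactsel(\lambda')))$, which is exactly the situation governed by condition~(2) of Def.~\ref{def.alines.delp}.
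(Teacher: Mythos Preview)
Your overall reduction --- identifying $\ALINESPP=\alteration{\Arg}{\PP}$ with the $\open$-closedness of $\ALINESPP$ via $\attFunct^0=\attFunct^1$ --- is exactly the paper's approach, and your ($\Rightarrow$) direction is the paper's argument spelled out in more detail: from $\lambda\in\open(\lambda')$ extract the uppermost collaterally incised $\Brg\in\lambda$, apply the hypothesis, conclude $\lambda\in\ALINESPP$.

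For ($\Leftarrow$) the paper does \emph{not} argue contrapositively. It proceeds directly: from $\ALINESPP=\alteration{\Arg}{\PP}$ it reads off $\open(\lambda')\subseteq\ALINESPP$ for every $\lambda'\in\ALINESPP$, paraphrases this as ``no non-attacking line is turned into attacking by an incision on an attacking line'', and then simply asserts that the bracketed implication follows (``it is easy to see''). In particular, the paper never establishes the ``hard half'' of your bridge --- it does not show that a bare collateral incision $\colinc{\incise(\deactsel(\lambda'))}{\Brg}\neq\emptyset$ forces $\lambda\in\open(\lambda')$. Your contrapositive route makes this step explicit and, as you correctly diagnose, runs into the case where the uppermost collaterally hit argument $\Brg$ is a con argument of $\lambda$ (so $\upsegm{\lambda}{\Brg}$ ends in a pro argument and cannot be attacking by Corollary~\ref{coro.ends.in.pro}), whence $\lambda\notin\open(\lambda')$ even though the collaterality condition fails. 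The paper's proof does not engage with this case at all; it treats ``collaterally incised by $\lambda'$'' and ``opened by $\lambda'$'' as interchangeable without justification. So the obstacle you flag is real, and it is a gap shared with the paper's own argument rather than a defect of your strategy relative to it.
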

\begin{proof}
$\Rightarrow)$ From Remark~\ref{remark.alteration.contains.attset}, we know that $\ALINESPP\subseteq\alteration{\Arg}{\PP}$, hence we need to show that $\alteration{\Arg}{\PP}\subseteq\ALINESPP$. This is equivalent to show $\attFunct^0=\attFunct^1$ from Def.~\ref{def.attset}, which is equivalent to show $\open(\lambda')\subseteq\ALINESPP$, for any $\lambda'\in\ALINESPP$. Observe that this follows straightforwardly from the hypothesis. Hence, $\ALINESPP=\alteration{\Arg}{\PP}$ holds.

\ojo{$\Leftarrow)$ 
Analogously, assuming $\ALINESPP=\alteration{\Arg}{\PP}$ holds, implies $\attFunct^0=\attFunct^1$ from Def.~\ref{def.attset}, and thus also $\open(\lambda')\subseteq\ALINESPP$, for any $\lambda'\in\ALINESPP$. This means that the effective alteration of any attacking line will not open a non-attacking line, \ie no non-attacking line is turned into attacking. Finally, it is easy to see that the condition if $\colinc{\incise(\deactsel(\lambda'))}{\Brg}\neq\emptyset$ then $\lambda\in\ALINESPP$ holds.}
\end{proof}

\begin{Corollary}\label{corollary.profit.then.alines.equals.alteration}
If profitability is satisfied then $\ALINESPP=\alteration{\Arg}{\PP}$.
\end{Corollary}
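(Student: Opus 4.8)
The plan is to derive this directly from Theorem~\ref{theo.alines.equals.alteration.1}, whose left-to-right direction already isolates exactly the combinatorial condition we need: that incisions triggered by lines in the attacking set never ``open'' a line outside \ALINESPP. So the proof is essentially a matter of checking that the profitability principle supplies precisely this condition, together with the standing hypotheses of Theorem~\ref{theo.alines.equals.alteration.1}.

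First I would observe that Theorem~\ref{theo.alines.equals.alteration.1} is stated for a \emph{warranting} incision function, so before invoking it I must confirm that ``\incise'' is warranting. This follows from the hypothesis by the implication chain in Proposition~\ref{prop.profitability}: profitability implies weak profitability (item~1), which implies preservation (item~2); and an incision function satisfying preservation is warranting by Def.~\ref{def.warrantingIncision}. Moreover, the alteration set $\alteration{\Arg}{\PP}$ appearing in the statement is only defined relative to a warranting incision function (Def.~\ref{def.attset}), so this step also makes the statement well-posed.

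Next I would extract the needed condition from profitability. The profitability principle asserts that whenever $\colinc{\incise(\deactsel(\lambda'))}{\Brg}\neq\emptyset$, we have $\lambda\in\ALINESPP$ (and additionally $\deactsel(\lambda)=\Brg$, which we do not need here), for \emph{every} $\lambda,\lambda'\in\dtree{\Arg}{\PP}$ and every $\Brg\in\lambda$. In particular this holds for every $\lambda'\in\ALINESPP$, which is precisely the antecedent ``if $\colinc{\incise(\deactsel(\lambda'))}{\Brg}\neq\emptyset$ then $\lambda\in\ALINESPP$'' required by the $\Rightarrow$ direction of Theorem~\ref{theo.alines.equals.alteration.1}. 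Applying that direction yields $\ALINESPP=\alteration{\Arg}{\PP}$, completing the argument.

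Since everything reduces to an application of Theorem~\ref{theo.alines.equals.alteration.1} and Proposition~\ref{prop.profitability}, there is no real obstacle; the only point requiring a moment of care is the preliminary verification that the incision function is warranting, because a reader might expect the corollary to apply to arbitrary incision functions rather than only those for which $\alteration{\Arg}{\PP}$ is defined. I would state this step explicitly to keep the proof self-contained.
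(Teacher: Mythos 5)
Your proposal is correct and follows essentially the same route the paper intends: the corollary is left implicit there precisely because profitability's antecedent (quantified over all lines, hence in particular over $\lambda'\in\ALINESPP$) subsumes the condition in the $\Rightarrow$ direction of Theorem~\ref{theo.alines.equals.alteration.1}. Your explicit check that ``$\incise$'' is warranting, via Proposition~\ref{prop.profitability} and Definition~\ref{def.warrantingIncision}, is a sound addition that the paper glosses over but which is needed for both the theorem's hypothesis and the well-definedness of $\alteration{\Arg}{\PP}$.
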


\begin{Theorem}\label{theo.weak.profitability}
If weak-profitability is satisfied then $\ALINESPP=\alteration{\Arg}{\PP}$.
\end{Theorem}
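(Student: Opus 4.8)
The plan is a short fixed-point computation built on top of Theorem~\ref{theo.alines.equals.alteration.1} and Corollary~\ref{corollary.upper.con}. First I would record the standing facts: by Proposition~\ref{prop.profitability}, item~2, weak profitability entails preservation, so the incision function under consideration is a warranting incision function (Definition~\ref{def.warrantingIncision}); hence the alteration set $\alteration{\Arg}{\PP}$ and the collaterality functions of Definition~\ref{def.collaterality.functions} are well defined, and $\ALINESPP\subseteq\alteration{\Arg}{\PP}$ holds by Remark~\ref{remark.alteration.contains.attset}. It remains to prove the reverse inclusion. By Definition~\ref{def.attset} the alteration set is the least fixed point of the operator with iterates $\attFunct^{0}=\ALINESPP$ and $\attFunct^{k+1}=\attFunct^{k}\cup\bigcup_{\lambda\in\attFunct^{k}}\open(\lambda)$, so it suffices to show $\attFunct^{1}=\attFunct^{0}$; in fact I expect to prove the stronger statement that $\open(\lambda)=\emptyset$ for every $\lambda\in\ALINESPP$, which immediately yields $\attFunct^{1}=\ALINESPP=\attFunct^{0}$ and therefore $\alteration{\Arg}{\PP}=\ALINESPP$.

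The heart of the argument is to rule out $\lambda'\in\open(\lambda)$ for $\lambda\in\ALINESPP$. By Definition~\ref{def.collaterality.functions} there is $\Brg\in\lambda'$ with $\upsegm{\lambda'}{\Brg}\in\ALINES{\htree(\Arg,\incise(\deactsel(\lambda)))}$, so by Definition~\ref{def.alines.delp}, item~1, $\upsegm{\lambda'}{\Brg}$ is an attacking line and a root-to-leaf path of the hypothetical tree $\htree(\Arg,\incise(\deactsel(\lambda)))$. I would then pin $\Brg$ down: since $\upsegm{\lambda'}{\Brg}$ is the \emph{proper} upper segment $[\ldots,\Brg_{i-1}]$ of $\lambda'$ and it is a root-to-leaf path of $\htree(\Arg,\cdot)$, its last argument $\Brg_{i-1}$ is a leaf there, so the argument $\Brg=\Brg_{i}$ --- which is a child of $\Brg_{i-1}$ in $\dtree{\Arg}{\PP}$ --- has been removed by the incision $\incise(\deactsel(\lambda))$; and because $\Brg_{1},\ldots,\Brg_{i-1}$ all survive in $\htree(\Arg,\cdot)$, none of them is incised, so $\Brg$ is precisely the uppermost argument of $\lambda'$ met by $\incise(\deactsel(\lambda))$, i.e. $\colinc{\incise(\deactsel(\lambda))}{\Brg}\neq\emptyset$ is the uppermost collateral incision over $\lambda'$ (here one also uses $\Brg\neq\Arg$, since $\upsegm{\lambda'}{\Arg}$ is undefined). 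Now weak profitability --- with $\lambda$ as the incising line and $\lambda'$ as the collaterally incised one --- gives $\deactsel(\lambda')=\Brg$, so by Definition~\ref{def:selection3} $\Brg$ is a con argument of $\lambda'$. But then $\upsegm{\lambda'}{\Brg}$, being the proper upper segment of a con argument, is non-attacking by Corollary~\ref{corollary.upper.con}, contradicting that it belongs to the attacking set $\ALINES{\htree(\Arg,\incise(\deactsel(\lambda)))}$. Hence no such $\lambda'$ exists and $\open(\lambda)=\emptyset$.

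Combining the two steps gives $\alteration{\Arg}{\PP}=\attFunct^{0}=\ALINESPP$, which is the claim. The step I expect to be the main obstacle is the middle one --- identifying $\Brg$ with the uppermost collaterally incised argument of $\lambda'$; this requires unwinding the precise construction of the hypothetical tree in Definition~\ref{def.hyp} (the split into $X_{1}$ and $X_{2}$) and ruling out, via the exhaustiveness of the lines of $\dtree{\Arg}{\PP}\in\accTrees{\PP}$, the possibility that $\upsegm{\lambda'}{\Brg}$ is ``opened'' through a coincidence with the truncation of some adjacent line rather than through a collateral incision sitting on $\lambda'$ itself, which is exactly the configuration that weak profitability constrains. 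One could instead try to instantiate the left-hand side of Theorem~\ref{theo.alines.equals.alteration.1}, but that hypothesis --- every collaterally incised line already lying in $\ALINESPP$ --- is not literally forced by weak profitability, so I would prefer the direct fixed-point computation above.
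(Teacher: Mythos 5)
Your proposal takes the same route as the paper's (much terser) proof: weak profitability forces every collateral incision to land on the selected --- hence con --- argument of the affected line, so no line is opened, $\open(\lambda)=\emptyset$, hence $\attFunct^0=\attFunct^1$ in Def.~\ref{def.attset} and the least fixed point is \ALINESPP itself; your preliminary remark that weak profitability implies preservation (so ``\incise'' is warranting and the collaterality machinery of Def.~\ref{def.collaterality.functions} is applicable) is exactly what the paper's statement presupposes.

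The one sub-step that would not go through as you describe it is the one you flag yourself. You propose to \emph{rule out}, ``via exhaustiveness'', the possibility that $\upsegm{\lambda'}{\Brg}$ enters the hypothetical tree through the truncation of an adjacent line rather than through an incision sitting on $\lambda'$. Exhaustiveness only excludes $\upsegm{\lambda'}{\Brg}$ from the set $X_1$ of Def.~\ref{def.hyp}; it does not exclude $\upsegm{\lambda'}{\Brg}=\upsegm{\mu}{\Drg}$ for an adjacent line $\mu$ whose uppermost collaterally incised argument is \Drg, in which case \Brg need not be incised at all and your identification of \Brg with the uppermost collateral incision over $\lambda'$ fails. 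This coincidence case cannot be eliminated, but it is harmless: by Def.~\ref{def.colinc} the hypotheses of weak profitability hold for $\mu$ and \Drg, so $\deactsel(\mu)=\Drg$ is a con argument (Def.~\ref{def:selection3}); since \Drg occupies the same (even) position in $\mu$ as \Brg does in $\lambda'$, the shared segment $\upsegm{\mu}{\Drg}=\upsegm{\lambda'}{\Brg}$ has odd length and cannot conform to $(DU)^+$ --- the same contradiction you derive in the main case (Corollary~\ref{corollary.upper.con}). With that redirection (treat the adjacent line rather than ``rule it out''), your argument is complete and is in substance a more detailed version of the paper's proof, which simply asserts that collateralities hitting only selections cannot turn any line into attacking.
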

\begin{proof}
Since weak-profitability holds, we have that, if $\colinc{\incise(\deactsel(\lambda'))}{\Brg}\neq\emptyset$ then $\deactsel(\lambda) = \Brg$, for every $\lambda\in\dtree{\Arg}{\PP}$, $\lambda'\in\dtree{\Arg}{\PP}$, and $\Brg\in\lambda$. This means that no collaterality will open any line, \ie for every $\lambda'\in\dtree{\Arg}{\PP}$, it follows $\open(\lambda')=\emptyset$ holds \ojo{--meaning that no collaterality will turn a non-attacking line into attacking.} Hence, it also holds $\attFunct^0=\attFunct^1$, from Def.~\ref{def.attset}. Finally, $\ALINESPP=\alteration{\Arg}{\PP}$.
\end{proof}

\begin{Remark}
If $\ALINESPP=\alteration{\Arg}{\PP}$ then $\aware{\Arg}{\PP}\subseteq\ALINESPP$.
\end{Remark}

\begin{Proposition}
Given a \delp\ \PP, a warranting incision function ``\incision'', and the sets of defeasible rules $\Psi_1=\Incisions(\alteration{\Arg}{\PP})$ and $\Psi_2 = \Incisions(\aware{\Arg}{\PP})$, it holds $\Psi_2\subseteq \Psi_1$
\end{Proposition}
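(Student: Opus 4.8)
The plan is to show the inclusion $\Psi_2 \subseteq \Psi_1$ by unfolding the definition of the operator $\Incisions$ and using the set-theoretic relationship between the incision-aware alteration set $\aware{\Arg}{\PP}$ and the regular alteration set $\alteration{\Arg}{\PP}$. First I would recall from Def.~\ref{def.att2}, condition~\ref{item.aware.1}, that $\aware{\Arg}{\PP}\subseteq\alteration{\Arg}{\PP}$. Then I would expand both sides: by definition of the operator $\Incisions$ introduced just before Lemma~\ref{lemma.altset.warranting}, we have $\Psi_1=\Incisions(\alteration{\Arg}{\PP})=\bigcup_{\scriptsize\lambda\in\alteration{\Arg}{\tiny\PP}}\incise(\deactsel(\lambda))$ and $\Psi_2=\Incisions(\aware{\Arg}{\PP})=\bigcup_{\scriptsize\lambda\in\aware{\Arg}{\tiny\PP}}\incise(\deactsel(\lambda))$.

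The key step is then purely monotonicity of the union: since $\aware{\Arg}{\PP}\subseteq\alteration{\Arg}{\PP}$, the index set over which the union defining $\Psi_2$ is taken is a subset of the index set for $\Psi_1$, hence every term $\incise(\deactsel(\lambda))$ appearing in the union for $\Psi_2$ also appears in the union for $\Psi_1$. Therefore $\bigcup_{\scriptsize\lambda\in\aware{\Arg}{\tiny\PP}}\incise(\deactsel(\lambda))\subseteq\bigcup_{\scriptsize\lambda\in\alteration{\Arg}{\tiny\PP}}\incise(\deactsel(\lambda))$, i.e.\ $\Psi_2\subseteq\Psi_1$.

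I do not expect any genuine obstacle here: the statement is an immediate consequence of Def.~\ref{def.att2}.\ref{item.aware.1} plus the evident monotonicity of $\Incisions$ with respect to set inclusion of its argument. The only point requiring the slightest care is to make explicit that $\Incisions$ is monotone — which follows because it is literally a union of incision sets indexed by the lines in the argument set — and that the warranting incision function $\incise$ (and hence $\deactsel$) is the same one used to define both $\Psi_1$ and $\Psi_2$, as stipulated in the hypothesis of the proposition. With these observations in place the proof is a two-line argument.

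\begin{proof}
From condition~\ref{item.aware.1} in Def.~\ref{def.att2}, we know $\aware{\Arg}{\PP}\subseteq\alteration{\Arg}{\PP}$. By definition of the operator $\Incisions$, we have $\Psi_1 = \Incisions(\alteration{\Arg}{\PP})=\bigcup_{\scriptsize\lambda\in\alteration{\Arg}{\tiny\PP}}\incise(\deactsel(\lambda))$ and $\Psi_2 = \Incisions(\aware{\Arg}{\PP})=\bigcup_{\scriptsize\lambda\in\aware{\Arg}{\tiny\PP}}\incise(\deactsel(\lambda))$, both computed with the same warranting incision function ``\incision''. Since the union defining $\Psi_2$ ranges over a subset of the lines indexing the union defining $\Psi_1$, every set $\incise(\deactsel(\lambda))$ contributing to $\Psi_2$ also contributes to $\Psi_1$. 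Hence $\Psi_2\subseteq\Psi_1$.
\end{proof}
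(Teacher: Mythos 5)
Your proof is correct and follows exactly the paper's argument, which simply cites condition~1 of Def.~\ref{def.att2} ($\aware{\Arg}{\PP}\subseteq\alteration{\Arg}{\PP}$) as the reason; you merely spell out the monotonicity of $\Incisions$ as a union indexed by lines, which the paper leaves implicit. No gap and no difference in approach.
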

\begin{proof}
Straightforward from condition~\ref{item.aware.1} in Def.~\ref{def.att2}.
\end{proof}

The construction of the incision-aware alteration set $\aware{\Arg}{\PP}$ may be skipped if the preconditions of Theorem~\ref{theo.aware.equals.alteration} are satisfied in the construction of the regular alteration set $\alteration{\Arg}{\PP}$. That is, $\alteration{\Arg}{\PP}=\aware{\Arg}{\PP}$ holds whenever no line in $\alteration{\Arg}{\PP}$ is closed by another line in the set.

\begin{Theorem}\label{theo.aware.equals.alteration}
$\alteration{\Arg}{\PP}=\aware{\Arg}{\PP}$ \ifff $\nexists\lambda\in \alteration{\Arg}{\PP}$ such that $\lambda\in\closedContext(\alteration{\Arg}{\PP}\setminus\{\lambda\})$.
\end{Theorem}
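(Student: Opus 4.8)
The plan is to reduce the statement to a condition purely about subsets of $\alteration{\Arg}{\PP}$ satisfying conditions (1)--(2) of Def.~\ref{def.att2}, and then exploit the fact that $\alteration{\Arg}{\PP}$ is a fixed point of the operator $\attFunct$. First I would record two preliminary facts. (i) Since $\alteration{\Arg}{\PP}$ is the least fixed point of $\attFunct$ with $\attFunct^0=\ALINESPP$, we have $\ALINESPP\subseteq\alteration{\Arg}{\PP}$ and $\bigcup_{\mu\in\alteration{\Arg}{\PP}}\open(\mu)\subseteq\alteration{\Arg}{\PP}$, while a trivial induction shows every $\attFunct^k$ lies in $\ALINESPP\cup\bigcup_{\mu\in\alteration{\Arg}{\PP}}\open(\mu)$; hence $\openContext(\alteration{\Arg}{\PP})=\alteration{\Arg}{\PP}$. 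Combining this with Prop.~\ref{prop.identity.context-sensitive.closed} gives $\openContext(\alteration{\Arg}{\PP})\subseteq\closedContext(\alteration{\Arg}{\PP})$, so $\alteration{\Arg}{\PP}$ always satisfies conditions (1) and (2) of Def.~\ref{def.att2}. (ii) A uniqueness remark: if $\alteration{\Arg}{\PP}$ moreover satisfies condition (3), then it is the \emph{unique} set satisfying (1)--(3) --- any other such set is, by (1), a subset of $\alteration{\Arg}{\PP}$, and a \emph{proper} one would violate (3) for $\alteration{\Arg}{\PP}$ --- so condition (4) is then vacuous and $\alteration{\Arg}{\PP}=\aware{\Arg}{\PP}$; conversely $\alteration{\Arg}{\PP}=\aware{\Arg}{\PP}$ forces (3). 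Thus the theorem is equivalent to: $\alteration{\Arg}{\PP}$ satisfies (3) \ifff there is no $\lambda\in\alteration{\Arg}{\PP}$ with $\lambda\in\closedContext(\alteration{\Arg}{\PP}\setminus\{\lambda\})$.

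The key auxiliary step is then a small lemma: for $\lambda\in\alteration{\Arg}{\PP}$, \emph{the set $\alteration{\Arg}{\PP}\setminus\{\lambda\}$ satisfies condition (2) iff $\lambda\in\closedContext(\alteration{\Arg}{\PP}\setminus\{\lambda\})$.} Using $\openContext(A\cup B)=\openContext(A)\cup\bigcup_{\mu\in B}\open(\mu)$ together with fact (i), one gets $\openContext(\alteration{\Arg}{\PP}\setminus\{\lambda\})\cup\open(\lambda)=\alteration{\Arg}{\PP}$, and since $\lambda\notin\open(\lambda)$ by Prop.~\ref{prop.non.identity.open}, this yields $\lambda\in\openContext(\alteration{\Arg}{\PP}\setminus\{\lambda\})\subseteq\alteration{\Arg}{\PP}$. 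For the ``if'' part of the lemma, take $\mu\in\openContext(\alteration{\Arg}{\PP}\setminus\{\lambda\})\subseteq\alteration{\Arg}{\PP}$: if $\mu\neq\lambda$ then $\mu\in\alteration{\Arg}{\PP}\setminus\{\lambda\}\subseteq\closedContext(\alteration{\Arg}{\PP}\setminus\{\lambda\})$ by Prop.~\ref{prop.identity.context-sensitive.closed}, and if $\mu=\lambda$ the hypothesis applies; the ``only if'' part is immediate from $\lambda\in\openContext(\alteration{\Arg}{\PP}\setminus\{\lambda\})$ just established. By this lemma the right-hand side of the theorem is exactly ``no singleton-removal from $\alteration{\Arg}{\PP}$ satisfies (1)--(2)'' (condition (1) being automatic for subsets), and by fact (ii) the left-hand side is ``$\alteration{\Arg}{\PP}$ has no \emph{proper} subset satisfying (1)--(2)''. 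So the theorem becomes: $\alteration{\Arg}{\PP}$ has no proper subset satisfying (1)--(2) \ifff it has no singleton-removal satisfying (1)--(2).

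The forward implication of this last equivalence --- and hence the ``$\Rightarrow$'' direction of the theorem --- is trivial, because every singleton-removal is a proper subset. For the converse I would argue by contraposition: assume some proper subset $Z\subsetneq\alteration{\Arg}{\PP}$ satisfies (1)--(2), and exhibit $\lambda$ with $\lambda\in\closedContext(\alteration{\Arg}{\PP}\setminus\{\lambda\})$. Assign to each $\lambda\in\alteration{\Arg}{\PP}$ the rank $r(\lambda)=\min\{k:\lambda\in\attFunct^k\}$; then $r(\lambda)=0$ means $\lambda\in\ALINESPP$, and $r(\lambda)>0$ means $\lambda\in\open(\mu)$ for some $\mu$ with $r(\mu)<r(\lambda)$ (and $\mu\neq\lambda$ by Prop.~\ref{prop.non.identity.open}). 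Pick $\lambda\in\alteration{\Arg}{\PP}\setminus Z$ of minimum rank. If $r(\lambda)=0$ then $\lambda\in\ALINESPP\subseteq\openContext(Z)\subseteq\closedContext(Z)$ by (2); if $r(\lambda)>0$ then its strictly-lower-rank parent $\mu$ must lie in $Z$ (minimality of $r(\lambda)$ among $\alteration{\Arg}{\PP}\setminus Z$), whence $\lambda\in\open(\mu)\subseteq\openContext(Z)\subseteq\closedContext(Z)$. Either way $\lambda\in\closedContext(Z)$ with $Z\subseteq\alteration{\Arg}{\PP}\setminus\{\lambda\}$; it remains to promote this to $\lambda\in\closedContext(\alteration{\Arg}{\PP}\setminus\{\lambda\})$.

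That promotion is the main obstacle, precisely because $\closedContext$ is \emph{not} monotone in its context argument (as noted just before the theorem). I would tame it through the \textbf{preservation} property guaranteed by the warranting incision function ``$\incise$'': every collateral incision on a line $\nu$ lands on $\deactsel(\nu)$ or strictly below it, so the uppermost argument of $\lambda$ incised by \emph{any} context $X$ with $Z\subseteq X\subseteq\alteration{\Arg}{\PP}\setminus\{\lambda\}$ lies in the finite segment of $\lambda$ between the con argument $\Crg$ witnessing $\lambda\in\closedContext(Z)$ and $\deactsel(\lambda)$. One then has to show this uppermost argument is still a con argument: using that $\deactsel(\lambda)$ itself is con, and that the alteration of every line of $Z$ is already effective (Lemma~\ref{lemma.eff.alt.context-sensitive}), no line of the larger context can contribute a ``higher'' effective collateral point of pro parity inside $\lambda$ without $\lambda$ having been a member of $Z$ in the first place. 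Granting this parity bookkeeping, $\lambda\in\closedContext(\alteration{\Arg}{\PP}\setminus\{\lambda\})$ and the contraposition is complete. I expect this reconciliation of the three relevant contexts ($Z$, $\alteration{\Arg}{\PP}\setminus\{\lambda\}$, and the full $\alteration{\Arg}{\PP}$) under preservation to be the delicate point, with everything else routine given facts (i)--(ii) and Props.~\ref{prop.identity.closed}, \ref{prop.non.identity.open}, \ref{prop.closed.in.context-sensitive.closed}, \ref{prop.identity.context-sensitive.closed} and Lemma~\ref{lemma.eff.alt.context-sensitive}.
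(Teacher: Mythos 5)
Your reduction to Def.~\ref{def.att2}, the identity $\openContext(\alteration{\Arg}{\PP})=\alteration{\Arg}{\PP}$, the singleton lemma, and the ``$\Rightarrow$'' direction are all sound, and that half essentially reproduces the paper's argument (the paper also derives the contradiction by showing that deleting such a $\lambda$ leaves a set still satisfying condition 2, violating the minimality condition of $\aware{\Arg}{\PP}$); your rank-based extraction of a line $\lambda$ with $\lambda\in\closedContext(Z)$ from a proper subset $Z$ satisfying conditions 1 and 2 is also correct. The genuine gap is exactly where you flag it: the promotion from $\lambda\in\closedContext(Z)$ to $\lambda\in\closedContext(\alteration{\Arg}{\PP}\setminus\{\lambda\})$, which you leave at ``granting this parity bookkeeping.'' Preservation does not supply that bookkeeping. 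It only guarantees that every collateral incision on $\lambda$ lands on or below $\deactsel(\lambda)$; it does not prevent some line in $\alteration{\Arg}{\PP}\setminus(Z\cup\{\lambda\})$ from collaterally incising a \emph{pro} argument of $\lambda$ lying strictly above the con witness $\Crg$ provided by the context $Z$ (yet still below $\deactsel(\lambda)$). In that configuration the uppermost collaterally incised argument of $\lambda$ in the enlarged context is pro, so by Def.~\ref{def.context-sensitive.collaterality.functions} $\lambda\notin\closedContext(\alteration{\Arg}{\PP}\setminus\{\lambda\})$, and your minimum-rank witness simply fails; nothing in Lemma~\ref{lemma.eff.alt.context-sensitive} or the effectiveness of the alterations of $Z$ rules this out. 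Because $\closedContext$ is non-monotone in its context (as noted just before the theorem), the implication ``some proper subset satisfies conditions 1--2 $\Rightarrow$ some co-singleton does'' is not a bookkeeping matter but the actual mathematical content of the ``$\Leftarrow$'' direction, and your proposal does not establish it.

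It is worth noting how this compares with the paper: the paper's ``$\Leftarrow$'' proof never passes through an arbitrary proper subset $Z$ at all. It argues by reductio that any $\lambda\in\alteration{\Arg}{\PP}$ with $\lambda\notin\aware{\Arg}{\PP}$ must already satisfy $\lambda\in\closedContext(\alteration{\Arg}{\PP}\setminus\{\lambda\})$, appealing directly to Def.~\ref{def.att2} (i.e., to the fact that a line of the alteration set excluded from the incision-aware set is one whose effective alteration is achieved collaterally by the remaining lines). Your route is more ambitious --- it tries to manufacture the witness from an arbitrary $Z$ via the fixed-point ranks --- and that is precisely what forces you into the non-monotone promotion you could not close. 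To repair your argument you would either need to argue about the full context $\alteration{\Arg}{\PP}\setminus\{\lambda\}$ from the start (choosing the witness by the position of the uppermost collateral incision it induces, not by rank), or fall back on the paper's definitional argument for lines outside $\aware{\Arg}{\PP}$.
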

\begin{proof}
$\Rightarrow)$ By \textit{reductio ad absurdum} assume there is some $\lambda\in \alteration{\Arg}{\PP}$ such that $\lambda\in\closedContext(\alteration{\Arg}{\PP}\setminus\{\lambda\})$. 
Since $\alteration{\Arg}{\PP}=\aware{\Arg}{\PP}$, \ojo{from Def.~\ref{def.att2}} it is clear that $\lambda\in\openContext(\aware{\Arg}{\PP})$. From Prop.~\ref{prop.non.identity.open} and Def.~\ref{def.context-sensitive.collaterality.functions}, we know that $\lambda\in\openContext(\aware{\Arg}{\PP}\setminus\{\lambda\})$. Afterwards, $\openContext(\aware{\Arg}{\PP}\setminus\{\lambda\})\subseteq\closedContext(\aware{\Arg}{\PP}\setminus\{\lambda\})$ holds, satisfying condition~\ref{item.aware.2} from Def.~\ref{def.att2}. Thus, since condition~\ref{item.aware.4} ends up violated, the set $\aware{\Arg}{\PP}$ does not conform Def.~\ref{def.att2}, reaching the absurdity.

$\Leftarrow)$ We have that for every $\lambda\in \alteration{\Arg}{\PP}$, it holds $\lambda\notin\closedContext(\alteration{\Arg}{\PP}\setminus\{\lambda\})$. From Def.~\ref{def.att2}, we know $\aware{\Arg}{\PP}\subseteq\alteration{\Arg}{\PP}$, thus we need to show that $\alteration{\Arg}{\PP}\subseteq\aware{\Arg}{\PP}$. By \textit{reductio ad absurdum}, we assume there is some $\lambda\in\alteration{\Arg}{\PP}$ such that $\lambda\notin\aware{\Arg}{\PP}$. But then, from Def.~\ref{def.att2}, we know $\lambda\in\closedContext(\alteration{\Arg}{\PP}\setminus\{\lambda\})$, which is absurd.
\end{proof}

\begin{Lemma}\label{lemma.singleton.closed}
For any $\lambda\in\alteration{\Arg}{\PP}$ and any $\lambda'\in\alteration{\Arg}{\PP}$, if $[\lambda'\in\closed(\lambda)]\rightarrow[\incise(\deactsel(\lambda))=\incise(\deactsel(\lambda'))]$ and $\lambda\in\aware{\Arg}{\PP}$ then $\closed(\lambda)\cap\aware{\Arg}{\PP}=\{\lambda\}$. 
\end{Lemma}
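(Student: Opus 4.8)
The goal is to show that under the stated hypothesis --- namely that whenever a line $\lambda'$ is closed by $\lambda$ (i.e. $\lambda'\in\closed(\lambda)$) the two incisions coincide, $\incise(\deactsel(\lambda))=\incise(\deactsel(\lambda'))$ --- and given $\lambda\in\aware{\Arg}{\PP}$, the only line from $\aware{\Arg}{\PP}$ that $\lambda$ closes is $\lambda$ itself, that is $\closed(\lambda)\cap\aware{\Arg}{\PP}=\{\lambda\}$. The plan is to split the proof into the two inclusions. The inclusion $\{\lambda\}\subseteq\closed(\lambda)\cap\aware{\Arg}{\PP}$ is immediate: $\lambda\in\closed(\lambda)$ holds by Prop.~\ref{prop.identity.closed}, and $\lambda\in\aware{\Arg}{\PP}$ is part of the hypothesis. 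So the substance lies entirely in the reverse inclusion $\closed(\lambda)\cap\aware{\Arg}{\PP}\subseteq\{\lambda\}$.

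For the reverse inclusion, I would argue by \textit{reductio ad absurdum}: assume there is some $\lambda'\in\closed(\lambda)\cap\aware{\Arg}{\PP}$ with $\lambda'\neq\lambda$. Since $\lambda'\in\closed(\lambda)$, the hypothesis gives $\incise(\deactsel(\lambda))=\incise(\deactsel(\lambda'))$. The idea is then to show that $\lambda'$ is redundant in $\aware{\Arg}{\PP}$, contradicting the minimality condition~\ref{item.aware.4} of Def.~\ref{def.att2}. Concretely, let $X=\aware{\Arg}{\PP}\setminus\{\lambda'\}$; I want to show $X$ still satisfies conditions~\ref{item.aware.1} and~\ref{item.aware.2} of Def.~\ref{def.att2}, which would contradict condition~\ref{item.aware.4} (no proper subset satisfies both). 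Condition~\ref{item.aware.1} ($X\subseteq\alteration{\Arg}{\PP}$) is trivial since $X\subseteq\aware{\Arg}{\PP}\subseteq\alteration{\Arg}{\PP}$. The work is in condition~\ref{item.aware.2}: I must show $\openContext(X)\subseteq\closedContext(X)$.

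The key observation is that removing $\lambda'$ from the context does not change the set of incised rules, because $\incise(\deactsel(\lambda'))=\incise(\deactsel(\lambda))$ and $\lambda\in X$; hence $\Incisions(X)=\Incisions(\aware{\Arg}{\PP})$. Because $\openContext$ and $\closedContext$ are defined purely in terms of the incisions $\incise(\deactsel(\cdot))$ of the lines in the context set (see Def.~\ref{def.context-sensitive.collaterality.functions}), having the same multiset of incised rules available --- with $\lambda$ still present to "carry" the incision that $\lambda'$ would have contributed --- means that every line closed under context $\aware{\Arg}{\PP}$ is still closed under context $X$: for $\closedContext$, the witness line $\lambda''\in\aware{\Arg}{\PP}$ providing $\incise(\deactsel(\lambda''))\cap\Crg\neq\emptyset$ can be taken to be $\lambda$ itself whenever it was $\lambda'$, and the universally-quantified clause over $\lambda''\in X$ is only weakened by dropping $\lambda'$. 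For $\openContext$, we have $\openContext(X)=\ALINESPP\cup\bigcup_{\mu\in X}\open(\mu)$, and since $\lambda'\in\closed(\lambda)$ it was not opening anything essential --- indeed $\open(\lambda')$ depends only on $\incise(\deactsel(\lambda'))=\incise(\deactsel(\lambda))$, so $\open(\lambda')=\open(\lambda)\subseteq\bigcup_{\mu\in X}\open(\mu)$, giving $\openContext(X)=\openContext(\aware{\Arg}{\PP})$. Combining, $\openContext(X)=\openContext(\aware{\Arg}{\PP})\subseteq\closedContext(\aware{\Arg}{\PP})\subseteq\closedContext(X)$, where the last inclusion uses that everything closed in the larger context remains closed in $X$ by the incision-identity argument above. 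Thus $X$ satisfies~\ref{item.aware.1} and~\ref{item.aware.2}, contradicting the minimality of $\aware{\Arg}{\PP}$ (condition~\ref{item.aware.4}). Hence no such $\lambda'$ exists and $\closed(\lambda)\cap\aware{\Arg}{\PP}=\{\lambda\}$.

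The main obstacle I anticipate is the careful verification that $\closedContext$ behaves monotonically \emph{enough} under the replacement of $\lambda'$ by the incision-equivalent $\lambda$ --- recall the paper explicitly warns (just after Prop.~\ref{prop.identity.context-sensitive.closed}) that $\closedContext$ is \emph{non-monotonic} in general, so one cannot simply invoke monotonicity. The argument must instead hinge on the precise fact that $\Incisions(X)=\Incisions(\aware{\Arg}{\PP})$ and on unwinding the definition of $\closedContext$ clause by clause, checking that both the existential witness and the universal guard survive the swap. A secondary subtlety is making sure condition~\ref{item.aware.2} is really what gets violated and that $X$ is a genuine \emph{proper} subset (which it is, since $\lambda'\in\aware{\Arg}{\PP}\setminus X$); this rules out the degenerate possibility that the contradiction fails to materialize.
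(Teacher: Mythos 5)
Your proposal is correct and follows essentially the same route as the paper's proof: the same reductio assuming some $\lambda'\neq\lambda$ in $\closed(\lambda)\cap\aware{\Arg}{\PP}$, the same use of Prop.~\ref{prop.identity.closed} for the easy inclusion, and the same key step that equality of incisions makes $\openContext$ and $\closedContext$ invariant under dropping $\lambda'$, so that $\aware{\Arg}{\PP}\setminus\{\lambda'\}$ still satisfies conditions~\ref{item.aware.1} and~\ref{item.aware.2} of Def.~\ref{def.att2}, contradicting minimality. If anything, your clause-by-clause verification of $\closedContext$ (witness swap plus weakened universal guard) makes explicit a step the paper dismisses with ``it is clear that''.
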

\begin{proof}
From Prop.~\ref{prop.identity.closed}, we know $\lambda\in\closed(\lambda)$, thus $\lambda\in(\closed(\lambda)\cap\aware{\Arg}{\PP})$. By \textit{reductio ad absurdum}, let us assume there is some line $\lambda'\in(\closed(\lambda)\cap\aware{\Arg}{\PP})$, such that $\lambda\neq\lambda'$. Since $\lambda\in\aware{\Arg}{\PP}$, from Def.~\ref{def.att2}, $\lambda\in\openContext(\aware{\Arg}{\PP})$ and $\lambda\in\closedContext(\aware{\Arg}{\PP})$. We also have that $\lambda'\in\closedContext(\aware{\Arg}{\PP})$. From hypothesis we know that $\incise(\deactsel(\lambda))=\incise(\deactsel(\lambda'))$ holds. It is clear that every line that is open (resp., closed) by $\lambda'$ is also open (resp., closed) by $\lambda$. 
Hence, $\closedContext(\aware{\Arg}{\PP})=\closedContext(\aware{\Arg}{\PP}\setminus\{\lambda'\})$, and if $\openContext(\aware{\Arg}{\PP})\subseteq\closedContext(\aware{\Arg}{\PP})$ holds so it does $\openContext(\aware{\Arg}{\PP}\setminus\{\lambda'\})\subseteq\closedContext(\aware{\Arg}{\PP}\setminus\{\lambda'\})$. Since this is contrary to condition~\ref{item.aware.4} from Def.~\ref{def.att2}, we reach an absurdity. Finally $\closed(\lambda)\cap\aware{\Arg}{\PP}=\{\lambda\}$.
\end{proof}

The following theorem states under which conditions the attacking set, alteration set, and incision-aware alteration set, determine the same sets of rules to be removed. That is, $\Incisions(\ALINESPP)=\Incisions(\alteration{\Arg}{\PP})=\Incisions(\aware{\Arg}{\PP})$ holds when both conditions \ref{theo.cond.1} and \ref{theo.cond.2} from Theorem~\ref{theo.equal.incisions} are satisfied. Condition~\ref{theo.cond.1} states that if there is a line in $\alteration{\Arg}{\PP}$ closed through a regular collaterality function (Def.~\ref{def.collaterality.functions}) by another line in $\alteration{\Arg}{\PP}$, then their incisions coincide. Observe that this is part of the preconditions required in Lemma~\ref{lemma.singleton.closed}. On the other hand, condition~\ref{theo.cond.2}, states that if a line in $\alteration{\Arg}{\PP}$ is closed through a context-sensitive collaterality function (Def.~\ref{def.context-sensitive.collaterality.functions}) by another line in $\alteration{\Arg}{\PP}$, then it is necessarily closed by a regular collaterality function. Thus, the collaterality occurs over the selected argument in that line.

\begin{Theorem}\label{theo.equal.incisions}
Given the following two conditions:
\vspace{-7mm}\begin{description}
\item\indent
\begin{enumerate}
\item $\forall\lambda'\in\alteration{\Arg}{\PP},\forall\lambda\in\alteration{\Arg}{\PP}$; if $\lambda'\in\closed(\lambda)$ then $\incise(\deactsel(\lambda'))=\incise(\deactsel(\lambda))$\label{theo.cond.1}
\item $\forall\lambda'\in\alteration{\Arg}{\PP},\exists\lambda\in\alteration{\Arg}{\PP}$; if $\lambda'\in\closedContext(\alteration{\Arg}{\PP}\setminus\{\lambda'\})$ then $\lambda'\in\closed(\lambda)$ and $\lambda\neq\lambda'$\label{theo.cond.2}
\end{enumerate}
\end{description}
If both \ref{theo.cond.1} and \ref{theo.cond.2} hold then $\Incisions(\ALINESPP)=\Incisions(\alteration{\Arg}{\PP})=\Incisions(\aware{\Arg}{\PP})$.
\end{Theorem}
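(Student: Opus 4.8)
The plan is to prove the chain of equalities by reducing it to two non-trivial inclusions. Since $\ALINESPP\subseteq\alteration{\Arg}{\PP}$ (Remark~\ref{remark.alteration.contains.attset}) and $\aware{\Arg}{\PP}\subseteq\alteration{\Arg}{\PP}$ (condition~\ref{item.aware.1} of Def.~\ref{def.att2}), and since $\Incisions$ is monotone with respect to set inclusion (it is a union of per-line incisions), we immediately get $\Incisions(\ALINESPP)\subseteq\Incisions(\alteration{\Arg}{\PP})$ and $\Incisions(\aware{\Arg}{\PP})\subseteq\Incisions(\alteration{\Arg}{\PP})$. It therefore suffices to establish $\Incisions(\alteration{\Arg}{\PP})\subseteq\Incisions(\ALINESPP)$ and $\Incisions(\alteration{\Arg}{\PP})\subseteq\Incisions(\aware{\Arg}{\PP})$; chaining these with the easy inclusions yields $\Incisions(\ALINESPP)=\Incisions(\alteration{\Arg}{\PP})=\Incisions(\aware{\Arg}{\PP})$.

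For $\Incisions(\alteration{\Arg}{\PP})\subseteq\Incisions(\ALINESPP)$ I would induct on the stages $\attFunct^k$ of the least-fixed-point construction of $\alteration{\Arg}{\PP}$ (Def.~\ref{def.attset}), proving $\Incisions(\attFunct^k)\subseteq\Incisions(\ALINESPP)$ for all $k$. The base $\attFunct^0=\ALINESPP$ is immediate. For the step, a line $\lambda$ appearing in $\attFunct^{k+1}$ but not in $\attFunct^k$ lies in $\open(\lambda_0)$ for some $\lambda_0\in\attFunct^k$, so by Def.~\ref{def.collaterality.functions} the incision $\incise(\deactsel(\lambda_0))$ produces an uppermost collateral incision over some $\Brg\in\lambda$ with $\upsegm{\lambda}{\Brg}$ attacking in the corresponding hypothetical tree. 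Using that $\incise$ is warranting --hence satisfies preservation, so $\deactsel(\lambda)$ lies in $\lambda^{\uparrow}[\Brg]$-- together with condition~\ref{theo.cond.2} (which forces a line that ends up collaterally closed in the context of $\alteration{\Arg}{\PP}$ to be \emph{regularly} closed, i.e. forces the relevant collaterality to fall on a selected argument), one derives $\lambda\in\closed(\lambda_0)$. Condition~\ref{theo.cond.1} then gives $\incise(\deactsel(\lambda))=\incise(\deactsel(\lambda_0))$, and the latter is contained in $\Incisions(\attFunct^k)\subseteq\Incisions(\ALINESPP)$ by the inductive hypothesis. Since $\alteration{\Arg}{\PP}=\bigcup_k\attFunct^k$, the inclusion follows.

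For $\Incisions(\alteration{\Arg}{\PP})\subseteq\Incisions(\aware{\Arg}{\PP})$ I would first use condition~\ref{theo.cond.1} to note that ``$\lambda'\in\closed(\lambda)$'' is an equivalence relation on $\alteration{\Arg}{\PP}$ whose classes carry a constant incision: if $\lambda'\in\closed(\lambda)$ then $\incise(\deactsel(\lambda'))=\incise(\deactsel(\lambda))$, whence $\closed(\lambda')=\closed(\lambda)$. By Lemma~\ref{lemma.singleton.closed} --whose hypothesis is exactly condition~\ref{theo.cond.1}-- $\aware{\Arg}{\PP}$ meets each such class in at most one line, so $\Incisions(\aware{\Arg}{\PP})$ is the union over the classes hit by $\aware{\Arg}{\PP}$ of their common incision. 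It then remains to show that every class occurring in $\alteration{\Arg}{\PP}$ is hit by $\aware{\Arg}{\PP}$ (equivalently, that each $\lambda'\in\alteration{\Arg}{\PP}\setminus\aware{\Arg}{\PP}$ has its incision already in $\Incisions(\aware{\Arg}{\PP})$): minimality of $\aware{\Arg}{\PP}$ (conditions~\ref{item.aware.2}--\ref{item.aware.4} of Def.~\ref{def.att2}) forces such a $\lambda'$ to lie in $\closedContext(\alteration{\Arg}{\PP}\setminus\{\lambda'\})$, so condition~\ref{theo.cond.2} supplies $\lambda\in\alteration{\Arg}{\PP}$ with $\lambda\neq\lambda'$ and $\lambda'\in\closed(\lambda)$, and condition~\ref{theo.cond.1} equates their incisions; iterating over the finite set $\alteration{\Arg}{\PP}\setminus\aware{\Arg}{\PP}$ reaches a line of $\aware{\Arg}{\PP}$ carrying the same incision.

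The step I expect to be the main obstacle is the identification, in the inductive step of the first inclusion, of an \emph{opened} line with one that is \emph{regularly closed} by the line that opened it. Condition~\ref{theo.cond.1} only speaks about the regular $\closed$ relation --collaterality hitting a \emph{selected} argument-- whereas a line is opened precisely when a collateral incision hits one of its (pro) arguments, which need not be the selected one; bridging the gap needs a careful interplay of preservation, the ``uppermost collateral incision'' clause of Def.~\ref{def.colinc}, and condition~\ref{theo.cond.2}. A secondary subtlety is that $\closedContext$ is non-monotone (as noted after Prop.~\ref{prop.identity.context-sensitive.closed}), so the minimality argument for $\aware{\Arg}{\PP}$ used in the second inclusion must be run with care rather than by naive set manipulation. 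Everything else --the easy inclusions, the equivalence-class bookkeeping, and the finite iteration-- is routine.
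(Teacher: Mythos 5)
Your decomposition into two non-trivial inclusions and the easy monotonicity step are fine, but the inductive step of your first inclusion rests on a claim that cannot be established: you cannot get $\lambda\in\closed(\lambda_0)$ from $\lambda\in\open(\lambda_0)$. The two are in fact incompatible -- the corollary following Prop.~\ref{prop.non.identity.open} states $\open(\lambda_0)\cap\closed(\lambda_0)=\emptyset$ -- and the reason is structural. By Def.~\ref{def.collaterality.functions} and Def.~\ref{def.hyp}, $\lambda$ is opened when the uppermost collateral incision of $\incise(\deactsel(\lambda_0))$ over $\lambda$ falls on an argument \Brg whose proper upper segment $\upsegm{\lambda}{\Brg}$ is attacking, which forces $\Brg\in\lambda^+$ (Corollary~\ref{coro.ends.in.pro}); by contrast $\lambda\in\closed(\lambda_0)$ requires that same incision to meet the con argument $\deactsel(\lambda)$, while preservation places $\deactsel(\lambda)$ in $\upsegmeq{}{\Brg}$ and the uppermost clause of Def.~\ref{def.colinc} forbids any hit strictly above \Brg; both could therefore only hold if the pro argument \Brg coincided with the con selection. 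Condition 2 does not bridge this: its antecedent $\lambda\in\closedContext(\alteration{\Arg}{\PP}\setminus\{\lambda\})$ need not hold for a newly opened line, and even when it does it yields $\lambda\in\closed(\mu)$ for some unspecified $\mu\in\alteration{\Arg}{\PP}$, not for $\lambda_0$, so the induction does not close. The paper never attempts this per-line matching: for the $\ALINESPP$ part it reads condition 1 as supplying the precondition of Theorem~\ref{theo.weak.profitability}, so that the set identity $\ALINESPP=\alteration{\Arg}{\PP}$ holds (no line is opened at all), after which $\Incisions(\ALINESPP)=\Incisions(\alteration{\Arg}{\PP})$ is immediate.

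Your second inclusion also leans on an unjustified step: minimality of $\aware{\Arg}{\PP}$ does not force every $\lambda'\in\alteration{\Arg}{\PP}\setminus\aware{\Arg}{\PP}$ into $\closedContext(\alteration{\Arg}{\PP}\setminus\{\lambda'\})$. A line excluded from $\aware{\Arg}{\PP}$ that is opened only by lines outside $\aware{\Arg}{\PP}$ need not even belong to $\openContext(\aware{\Arg}{\PP})$, and since $\closedContext$ is non-monotonic (see the remark after Prop.~\ref{prop.identity.context-sensitive.closed}), membership in $\closedContext(\aware{\Arg}{\PP})$ does not transfer to the context $\alteration{\Arg}{\PP}\setminus\{\lambda'\}$. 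The paper's proof of $\Incisions(\alteration{\Arg}{\PP})=\Incisions(\aware{\Arg}{\PP})$ proceeds instead by a dichotomy: if there exist $\lambda\neq\lambda'$ in $\alteration{\Arg}{\PP}$ with $\lambda'\in\closed(\lambda)$, then condition 1 together with Lemma~\ref{lemma.singleton.closed} shows the line dropped from $\aware{\Arg}{\PP}$ contributes no incision beyond the retained one; otherwise no line of $\alteration{\Arg}{\PP}$ is context-closed by the others and Theorem~\ref{theo.aware.equals.alteration} gives $\aware{\Arg}{\PP}=\alteration{\Arg}{\PP}$ outright. Your equivalence-class bookkeeping (which is sound under condition 1, precisely by Lemma~\ref{lemma.singleton.closed}) would need to be re-anchored to that dichotomy rather than to the unsupported $\closedContext$ claim.
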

\begin{proof}
From Def.~\ref{def.att2}, we know $\aware{\Arg}{\PP}\subseteq\alteration{\Arg}{\PP}$, and hence, it is easy to see that $\Incisions(\aware{\Arg}{\PP})\subseteq\Incisions(\alteration{\Arg}{\PP})$ holds. Thus, we need to show that $\Incisions(\alteration{\Arg}{\PP})\subseteq\Incisions(\aware{\Arg}{\PP})$ also holds. We will assume (a) there is some $\lambda\in\alteration{\Arg}{\PP}$ and some $\lambda'\in\alteration{\Arg}{\PP}$ such that $\lambda'\in\closed(\lambda)$ and $\lambda\neq\lambda'$. 
From Lemma~\ref{lemma.singleton.closed}, we know that if $\lambda\in\aware{\Arg}{\PP}$ then $\lambda'\notin\aware{\Arg}{\PP}$ holds. However, since $\incise(\deactsel(\lambda'))=\incise(\deactsel(\lambda))$, we know that  $\incise(\deactsel(\lambda'))\subseteq\Incisions(\aware{\Arg}{\PP})$. On the other hand, if $\lambda\notin\aware{\Arg}{\PP}$ and $\lambda'\notin\aware{\Arg}{\PP}$ then from cond.~\ref{theo.cond.2} we know for every $\lambda''\in\alteration{\Arg}{\PP}$ there is some $\lambda'''\in\alteration{\Arg}{\PP}$ such that 
if $\lambda''\in\closedContext(\alteration{\Arg}{\PP}\setminus\{\lambda''\})$ then $\lambda''\in\closed(\lambda''')$ and $\lambda''\neq\lambda'''$. But then, it follows that either (1) $\lambda''\notin\closedContext(\alteration{\Arg}{\PP}\setminus\{\lambda''\})$ or (2) $\lambda''\notin\aware{\Arg}{\PP}$ and $\lambda'''\notin\aware{\Arg}{\PP}$. 
For the latter case, observe that for every $\lambda\in\aware{\Arg}{\PP}$ it follows that  $\lambda\notin\closedContext(\aware{\Arg}{\PP}\setminus\{\lambda\})$ and hence $\aware{\Arg}{\PP}=\alteration{\Arg}{\PP}$ (see Theorem~\ref{theo.aware.equals.alteration}). The former case is similar, and in case (a) is not satisfied, then we will also be satisfying $\aware{\Arg}{\PP}=\alteration{\Arg}{\PP}$. Finally, $\Incisions(\alteration{\Arg}{\PP})=\Incisions(\aware{\Arg}{\PP})$ holds.

Besides, since cond.~\ref{theo.cond.1} conforms the preconditions of Theorem~\ref{theo.weak.profitability}, $\ALINESPP=\alteration{\Arg}{\PP}$ holds, and hence, $\Incisions(\ALINESPP)=\Incisions(\alteration{\Arg}{\PP})$ is also satisfied.
\end{proof}

\begin{Example}
Considering Ex.~\ref{ex.minimality.DeLP}, and assuming $\varphi_3=\varphi_4=\varphi_5$, the conditions of Theorem~\ref{theo.equal.incisions} are satisfied. Observe that $\incise(\deactsel(\lambda_1))=\incise(\deactsel(\lambda_2))=\incise(\deactsel(\lambda_3))$. It is clear that $\ALINESPP=\alteration{\Arg}{\PP}$, and that $\aware{\Arg}{\PP}$ may be any singleton containing either $\lambda_1$, $\lambda_2$, or $\lambda_3$. As stated by Theorem~\ref{theo.equal.incisions}, the set of rules to remove from the \delp\ would be the same for any of the sets, $\ALINESPP$, $\alteration{\Arg}{\PP}$, or $\aware{\Arg}{\PP}$.
\end{Example}


As a consequence of the properties shown, from now on we will only rely on the incision-aware alteration set to formalize the upcoming change operations.

\subsection{Argument Change Operators}\label{sec.operators}

The argument expansion can be defined in a simple manner by just adding the necessary rules to activate the desired argument; formally:

\begin{Definition}[Argument Expansion]
An \textbf{argument expansion operation} $\PP\ \argExp \Arg$ over a \delp\ $\PP = \SD$ by an argument \Arg from either \ARGS or \EXT, is defined as follows:
\begin{center}
$\PP\ \argExp\Arg = (\Pi, \DD\cup\Arg)$
\end{center}
\end{Definition}

Note that not only argument $\Arg$ is activated, but the addition of \Arg's rules to $\Delta$ could cause the automatic activation of many other arguments. This is part of the dynamism of the theory. Moreover, the definition of the argument expansion has the inherent implications to expansions within any non-monotonic formalism: despite of the set of arguments $\ARGSP{\scriptsize(\PP\argExp\Arg)}$ being increased, the amount of warranted consequences from $\PP\ \argExp\Arg$ could be diminished.

Regarding contractions, we are looking for an operator that provides warrant for an argument $\Arg\in\ARGS$ by turning every attacking line in \dtree{\Arg}{\PP} to a non-attacking line through an argument incision function $\incise$. That is, we are going to drop arguments towards \Arg's warrant. This is the reason why we call it \emph{argument defeating contraction}. Considering that the notion of consequence is warrant, we are taking advantage of the non-monotonic nature of argumentation.

\begin{Definition}[Argument Defeating Contraction]
An \textbf{argument defeating contraction operation} $\PP\ \OPnwaccc \Arg$ of a \delp\ $\PP = \SD$ by an argument $\Arg\in\ARGS$, is defined by means of a minimally-warranting incision function ``$\incise$'' applied over selections
$\deactsel(\lambda)$ for each $\lambda\in\aware{\Arg}{\PP}$ in the incision-aware alteration set of $\dtree{\Arg}{\PP}$, as follows:
\[\PP\OPnwaccc\Arg = (\Pi,\DD\setminus\Incisions(\aware{\Arg}{\PP}))\]
\end{Definition}




An argument revision operator should firstly add to the program the new argument for which warrant is to be achieved. Afterwards, a warrant contraction should be applied. Note that in case the argument was already warranted, the contraction would produce no change since the alteration set would be empty. The operation is called \emph{argument warranting revision}.

\begin{Definition}[Argument Warranting Revision]\label{def.rev}
Given a DeLP program $\PP = (\Pi,\Delta)$, and an argument \Arg from either \ARGS or \EXT, an operator ``\OPwparp'' is an \textbf{argument warranting revision} \ifff
\[\PP\OPwparp\Arg = (\Pi,\Delta'\setminus\Incisions(\aware{\Arg}{\PP'})),\]
where ``\incise'' is a minimally-warranting incision, $\aware{\Arg}{\PP'}$ is the incision-aware alteration set of $\dtree{\Arg}{\PP'}$, and $\PP' = (\Pi,\Delta')$ with  $\Delta' = \Delta\cup\Arg$.
\end{Definition}

In belief revision, revisions and contractions may be defined one in terms of the other by means of the \textit{Levy identity}~\cite{Lev77}. In this model of change, Definition~\ref{def.rev} can be rewritten in terms of an argument expansion and a defeating contraction as an analogy of the \textit{reversed Levi identity}~\cite{HAN05}, which we have called the \textit{argument change identity}.

\begin{center}
\textbf{(Argument Change Identity)} \ $\PP\ \OPwparp \Arg = (\PP\ \argExp \Arg)\ \OPnwaccc \Arg$
\end{center}

In this revision the expansion has to be performed first because otherwise there would be no argument to warrant. Besides, inconsistent intermediate states are not an issue in this formalism, since it is based on argumentation.

Given a knowledge base \PP and an argument \Arg, the next postulates stand for the principles of inclusion, success, and minimal change, for an argument revision operator ``$*$'' based on alteration of dialectical trees such as ATC. 
A complete study about postulates in ATC can be referred to \cite{jigpal}.

\begin{description}\label{postulates}
\item[(inclusion)] $\PP*\Arg\subseteq\PP\cup\Arg$.
\item[(success)] \Arg is warranted from $\PP*\Arg$.
\item[(core-retainment)] If $\varphi\in(\PP\setminus \PP*\Arg)$ then there is some $\PP'\subseteq\PP$ such that \Arg is warranted from $\PP'\cup\Arg$ but not from $\PP'\cup\Arg\cup\{\varphi\}$.\label{core-ret}
\end{description}

Inclusion aims at guaranteeing that no other new information beyond the one conforming argument \Arg will be included to the \delp\ 
Success states that the new information to be incorporated should be accepted by the worked argumentation semantics, \ie the new argument should end up warranted. 
Core-retainment was originally introduced in \cite{HanssonRecovery} and then it was adapted for revision in \cite{HAN97,LocalChangeHW02}. 
Through this postulate, the amount of change is controlled by avoiding removals that are not related to the revision, \ie every rule $\varphi$ lost serves to the acceptation of the new argument. This means that $\varphi$ is removed in order to achieve an effective alteration.

%

Assuming the knowledge base \PP as a \delp, the argument \Arg corresponding to either \ARGS or \EXT, and associating the abstract argument revision operator ``$*$'' as the one given in Definition~\ref{def.rev}; the proposed argument revision operator ``$\OPwparp$'' upon \delp s is shown to satisfy the given postulates.


\begin{Theorem}\label{theo.construction.to.post}
Given a \delp\ $\PP = \SD$ and the external argument $\Arg\in\EXT$, if ``\OPwparp'' is an argument revision operator then $\PP\OPwparp\Arg$ satisfies inclusion, success, and core-retainment.
\end{Theorem}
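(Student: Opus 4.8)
The plan is to verify the three postulates one at a time, drawing almost entirely on results already established in the excerpt. The revision operator in Definition~\ref{def.rev} is $\PP\OPwparp\Arg = (\Pi,\Delta'\setminus\Incisions(\aware{\Arg}{\PP'}))$ with $\PP' = (\Pi,\Delta\cup\Arg)$, so each postulate reduces to a statement about how $\Incisions(\aware{\Arg}{\PP'})$ sits inside $\Delta' = \Delta\cup\Arg$, and about the warrant status of \Arg in the hypothetical tree $\htree(\Arg,\Incisions(\aware{\Arg}{\PP'}))$.

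\textbf{Inclusion.} First I would observe that $\PP\OPwparp\Arg = (\Pi,\Delta'\setminus\Incisions(\aware{\Arg}{\PP'}))$ where $\Delta' = \Delta\cup\Arg$. Since removing rules only shrinks the program, $\Delta'\setminus\Incisions(\aware{\Arg}{\PP'})\subseteq\Delta\cup\Arg$, and the strict part $\Pi$ is untouched; hence $\PP\OPwparp\Arg\subseteq\PP\cup\Arg$. This is immediate from the set-difference form of the definition.

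\textbf{Success.} Here the plan is to identify $\PP\OPwparp\Arg$ with the hypothetical tree machinery: $\PP\OPwparp\Arg = \PP'\setminus\Incisions(\aware{\Arg}{\PP'})$, so the dialectical tree for \Arg from $\PP\OPwparp\Arg$ is exactly $\htree(\Arg,\Psi)$ with $\Psi = \Incisions(\aware{\Arg}{\PP'})$. Since ``\incise'' is a minimally-warranting incision function — in particular a warranting one — Theorem~\ref{theo.inc-aware.altset.warranting} applies directly: $\htree(\Arg,\Psi)$ is warranting, so \Arg is warranted from $\PP\OPwparp\Arg$. One subtlety to address is that \Arg must genuinely be an argument in $\PP'$: for $\Arg\in\EXT$ this holds by definition of external argument (it is an argument structure from $(\Pi,\Delta\cup\Arg)$), and for $\Arg\in\ARGS$ the expansion leaves it an argument as well, so the temporary tree $\dtree{\Arg}{\PP'}$ is well-defined and Theorem~\ref{theo.inc-aware.altset.warranting} is applicable.

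\textbf{Core-retainment.} This is the step I expect to be the main obstacle, since it requires producing, for each removed rule $\varphi$, a witness subprogram $\PP''\subseteq\PP$ with \Arg\ warranted from $\PP''\cup\Arg$ but not from $\PP''\cup\Arg\cup\{\varphi\}$. The natural candidate is $\PP'' = \PP\setminus\{\varphi\}$, equivalently to take the subprogram whose defeasible part is $\Delta'\setminus\{\varphi\}$ relative to the expanded program and then restore $\varphi$. The key tool is Theorem~\ref{theo.minimally-warranting.incision}: since \incise is minimally-warranting and $\varphi\in\Psi = \Incisions(\aware{\Arg}{\PP'})$, the tree $\htree(\Arg,\Psi\setminus\{\varphi\})$ is non-warranting. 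I would then argue that $\htree(\Arg,\Psi)$ corresponds to $\PP''\cup\Arg$ (warrant holds there by Success) while $\htree(\Arg,\Psi\setminus\{\varphi\})$ corresponds to $\PP''\cup\Arg\cup\{\varphi\}$ (warrant fails there by Theorem~\ref{theo.minimally-warranting.incision}), where $\PP''$ is the program obtained from $\PP'$ by deleting all of $\Psi\setminus\{\varphi\}$ --- and since $\PP'\setminus\Arg = \PP$, this $\PP''$ is indeed a subprogram of $\PP$. The care needed is in matching ``removing a set of defeasible rules from a \delp'' with ``the hypothetical tree indexed by that set'': Definition~\ref{def.hyp} builds $\htree(\Arg,\Psi)$ precisely as the dialectical tree surviving the removal of $\Psi$, so the correspondence is definitional, but I would spell out that $\Psi\subseteq\Delta'$ and $\varphi\notin\Arg$ (the latter from root preservation / the fact that incisions target con arguments, never \Arg) so that adding $\varphi$ back is legitimate and lands us in $\htree(\Arg,\Psi\setminus\{\varphi\})$. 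Assembling these three verifications completes the proof.
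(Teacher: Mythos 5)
Your route is essentially the paper's: inclusion is read off from the set-difference form of Definition~\ref{def.rev}, success is Theorem~\ref{theo.inc-aware.altset.warranting} applied to $\htree(\Arg,\Psi)$ with $\Psi=\Incisions(\aware{\Arg}{\PP'})$, and core-retainment comes from Theorem~\ref{theo.minimally-warranting.incision}; the paper's own proof is little more than this citation (it routes inclusion through preservation, whereas your direct set-theoretic observation is if anything cleaner). The identification of the hypothetical tree with the dialectical tree of the reduced program is left implicit in the paper just as in your proposal, so no complaint there.

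The one concrete defect is the witness for core-retainment. You describe $\PP''$ as ``the program obtained from $\PP'$ by deleting all of $\Psi\setminus\{\varphi\}$''. That program still contains $\varphi$ (and, until you strip out \Arg's rules, also \Arg), so $\PP''\cup\Arg\cup\{\varphi\}=\PP''\cup\Arg$ and the two conditions of core-retainment cannot both hold of it; as literally written the step fails. The witness you actually need --- and the only one consistent with the correspondences you yourself state, namely $\htree(\Arg,\Psi)\leftrightarrow\PP''\cup\Arg$ and $\htree(\Arg,\Psi\setminus\{\varphi\})\leftrightarrow\PP''\cup\Arg\cup\{\varphi\}$ --- is $\PP''=\PP\setminus\Psi$, i.e.\ delete \emph{all} of $\Psi$, including $\varphi$. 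Since the incision function is warranting, preservation gives root preservation, hence $\Psi\cap\Arg=\emptyset$ and $\Psi\subseteq\DD\subseteq\PP$; therefore $\PP''\subseteq\PP$, $\PP''\cup\Arg=\PP'\setminus\Psi$ (warranting, by your success step) and $\PP''\cup\Arg\cup\{\varphi\}=\PP'\setminus(\Psi\setminus\{\varphi\})$ (non-warranting, by Theorem~\ref{theo.minimally-warranting.incision}, using $\varphi\in\PP$). With that correction your argument goes through and matches the paper's intent.
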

\begin{proof}
We know ``\incise'' is minimally-warranting (Def.~\ref{def.rev}) thus satisfying preservation (Def.~\ref{def.mwIncision} and  Def.~\ref{def.warrantingIncision}) and thus strict preservation (see Prop.~\ref{prop.preservation}). Hence, inclusion is satisfied. 
Success and core-retainment follow from Theorem~\ref{theo.inc-aware.altset.warranting} and Theorem~\ref{theo.minimally-warranting.incision}.
\end{proof}


%
%
%
%
%

\section{Towards an Implementation for ATC}
\label{sec.principles}

In this section we present several examples of minimal change criteria, and afterwards introduce a \prolog-like algorithm that illustrates an implementation of the argument revision operator, as defined in Section~\ref{sec.atc}.

\subsection{Minimal Change Criteria Exemplified}

We have defined how ATC relies on the minimal change principle, which specifies the way change is evaluated. Following this principle, particular minimal change criteria can be developed in order to establish a specific way of measuring change. In this section we propose some of these criteria. Additionally, we will address the third axis of change mentioned in Section~\ref{sec.atc}, by considering restrictions over the relation between selections and incisions for each proposed criterion. These restrictions make use of the properties previously defined in this article, like cautiousness, (weak) profitability and strict preservation. The attachment of a restriction to each criterion represents just an example, and is not intended to be formal nor definitive. That is, additional restrictions could be posed to achieve the desired behavior for these criteria, as well as none.



\subsubsection*{Preserving Program Rules}

In general, when looking to remove as few rules as possible, selecting direct defeaters of the root argument ensures a minimal deletion of defeasible rules from the \delp\ This is so because the deletion of a root's defeater eliminates a whole subtree. Trying to achieve the same result by deleting rules from ``lower'' arguments in the tree would affect a greater amount of arguments, due to possible branching. We will make incisions only over those direct defeaters that are undefeated, \ie those belonging to attacking lines, since the ones that are defeated do not compromise the warrant of the root.

\emph{\textbf{Rules-Preserving Selection Criterion.} Given a line $\lambda\in\bundle$ where \bundle determines \dtree{\Arg}{\PP}, $\selcrit=\{(\Brg_1,\Brg_2)\ |\ \Brg_1\in\interf, \Brg_2\in\interf,$ and $\Brg_1\in\upsegm{\lambda}{\Brg_2}\}$}

An interesting \emph{restriction} is to seek for profitable incisions, \ie those that have a collateral incision with a selection in another line. Such an incision is desirable, since it would not only save a future incision, but would also collaborate with the criterion by preventing the deletion of extra rules.

Note that, in pursuit of profitability, this criterion could be relaxed to allow selections to be mapped to arguments placed at lower positions in the line. This is performed by updating the order, as shown before. However, the question remains about how much effort should be put on this re-ordering. Should we go for the best combination (select as high as possible capturing shared incisions) risking to end up deleting more rules? Or should there be specific boundaries beyond which dropping the search ends up being worthier? Since computational tractability is also at stake, a definite answer remains a matter of implementation.

\subsubsection*{Preserving the Dialectical Tree Structure}

When trees are treated as an explanation for the answer given to a query \cite{explanations07}, they are of utmost importance, since their structure turns out to be the main source of information. Provided that dialectical trees are the most suitable tool to trust and understand the interrelation among arguments and their influence to the final answer, we will define a selection criterion that determines a revision operation making minimal changes in the structure of the temporary tree \ojo{(recall this notion was introduced on page~\pageref{temporaryTree})} in order to render its root undefeated. Therefore, like in Ex.~\ref{ex.selection.main.issue}, the selection criterion will be determined by the level of the argument in the argumentation line; the lower an argument is, the less is its impact in the structure of the tree, making the argument more suitable for selection. Hence, this criterion specifies the opposite order than the rules-preserving one. 
In this case, an interesting restriction would be to identify those strict-preserving incisions, that is, incisions that do not collide with any other argument in the tree. This would collaborate with the preservation of the tree structure. Again, strict preservation should not be sought blindly, at the expense of the original ordering specified by the selection criterion, but some sort of balance must be pursued instead.

\subsubsection*{Preserving Rules Without Compromising Tree Structure}

Following the two principles given above, a combined approach can be studied in order to preserve the rules of the program while minimizing the pruning of the dialectical tree. This approach takes advantage of adjacency among attacking lines, incising one of the arguments acting as a ``common factor'' for them, \ie belonging to the shared upper segment. In this way, the deactivation of fewer arguments is encouraged. The principle also attempts to go as deep as possible when selecting arguments. Thus, the amount of arguments disappearing from prunes in the tree is diminished. Ex.~\ref{ex:principles2} illustrates the usage of this criterion.

\emph{\textbf{Tree-and-Rules-Preserving Selection Criterion.} Given the dialectical tree \dtree{\Arg}{\PP} and two lines $\lambda_1\in\ALINESPP$ and $\lambda_2\in\ALINESPP$ belonging to its attacking set; if $\lambda_1$ and $\lambda_2$ are adjacent at an argument $\Brg$ such that there is no adjacency point in $\upsegm{\lambda_1}{\Brg}$ shared with a line in \ALINESPP, then \ojo{$\selcritsub{1}=\selcritsub{2}=\{(\Brg_1,\Brg_2)\ |\ \Brg_1\in\interf_1, \Brg_2\in\interf_1, \Brg_1\in\upsegmeq{1}{\Brg},$ and $\Brg_2\in\upsegm{\lambda_1}{\Brg_1}\}$}}

\subsubsection*{Preserving Semantics}

In addition to the principles given above, we could consider to produce the least possible modifications to the semantics of the defeasible logic program. The set of warranted arguments would be preserved at the highest possible degree, while satisfying some minimal change criterion. A way to implement this would be not consider warranted arguments as candidates for deactivation. It might be the case that the deactivation of some arguments could be unavoidable. In such a case, there would be a compromise between the chosen criterion and the preservation of semantics, leading to an update of the selection criterion.
For instance, if we attempt to preserve the structure of the tree while not harming the set of warranted arguments, we define:

\emph{\textbf{Semantics-Preserving Selection Criterion.} Given a line $\lambda\in\bundle$ where \bundle determines \dtree{\Arg}{\PP}, $\selcrit=\{(\Brg_1,\Brg_2)\ |\ \Brg_1\in\interf, \Brg_2\in\interf,$ and $\Brg_2\in\upsegm{\lambda}{\Brg_1}$ and neither $\Brg_1$ nor $\Brg_2$ are warranted from $\PP\}$}


\begin{Example}\label{ex:principles1}\
\begin{window}[0,r,{\mbox{\epsfig{file=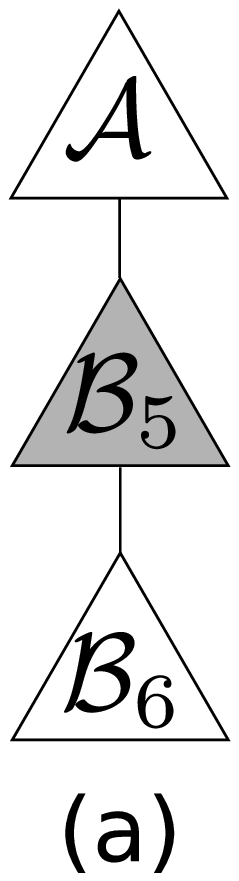,scale=.32}}},{}]
Consider the program $\PP_{\ref{ex:delp}}$ being revised by argument \Arg\ and the corresponding temporary tree of Ex.~\ref{ex:tree}. The criterion trying to \textbf{preserve program rules} would select arguments $\Brg_1$ and $\Brg_2$. From Ex.~\ref{ex:cautious}, we know that there is a way of incising $\Brg_1$ while collaterally incising $\Brg_2$, which is $\incise(\Brg_1) = \{\drule{\no a}{y}\}$. Therefore, the resulting tree is as depicted on the right, and the revised program would lose just one rule: $\PP^1_R = \pair{\SSet_{\ref{ex:delp}}}{\DD_{\ref{ex:delp}}\cup\{\Arg\}\setminus\{\drule{\no a}{y}\}}$.
%
\end{window}
\begin{window}[0,r,{\mbox{\epsfig{file=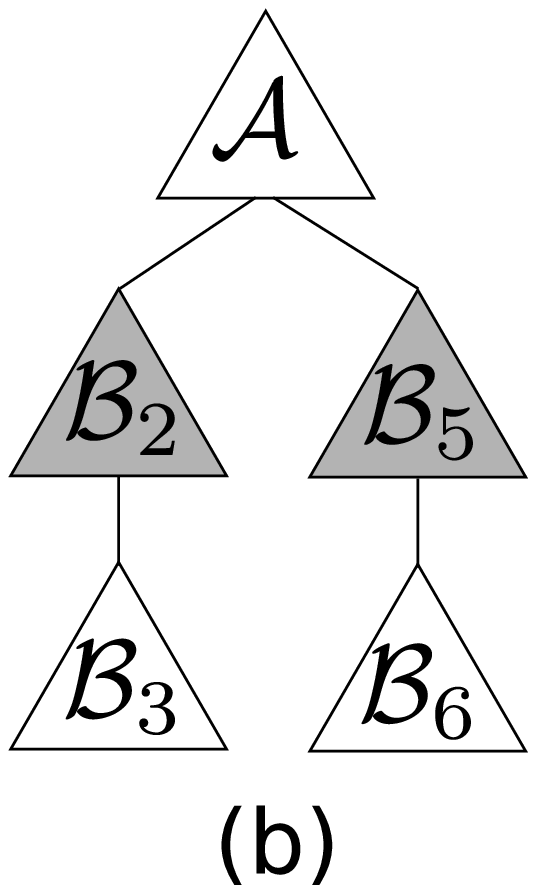,scale=.32}}},{}]
Following the \textbf{tree-preserving} minimal change principle, lower selections are considered first, thus the selected arguments are $\Brg_1$ and $\Brg_4$. Now the incision over $\Brg_1$ will avoid collateral incisions, \ie will be strict-preserving; hence, $\incise(\Brg_1) = \{\drule{y}{x}\}$. Since $\Brg_4$ is a cautious selection (see Ex.~\ref{ex:cautious}) and has one rule, the only possible incision is: $\incise(\Brg_4) = \{\drule{\no w}{t}\}$. Finally, the resulting tree is as depicted on the right, and its corresponding program is: $\PP^2_R = \pair{\SSet_{\ref{ex:delp}}}{\DD_{\ref{ex:delp}}\cup\{\Arg\}\setminus\{(\drule{y}{x}),(\drule{\no w}{t})\}}$.
%
%
\end{window}
%
\end{Example}

\begin{Example}
\label{ex:principles2}
Let consider a modification of the program $\PP_{\ref{ex:delp}}$ used in Ex.~\ref{ex:principles1}:

\begin{center}
$\PP_{\ref{ex:principles2}} = (\SSet_{\ref{ex:delp}}, \DD_{\ref{ex:delp}}\cup\left\{\begin{array}{c}
(\drule{a}{x}),(\drule{x}{z}),\\
(\drule{b}{\no a}),(\drule{\no a}{p}),\\
(\drule{\no b}{t}),(\drule{b}{z})
\end{array}
\right\})$
\end{center}

If we revise $\PP_{\ref{ex:principles2}}$ by $\Arg = \Ar{\{\drule{\no b}{p}\}}{\no b}$, we can build the temporary dialectical tree depicted below, annotated with the defeasible rules used in each argument. Following the \textbf{rules-preserving} criterion, the argument to be incised would be $\Brg_1 = \Ar{\{\drule{b}{z}\}}{b}$, that is, the revision would consist of adding the rule from \Arg and deleting the single rule from $\Brg_1$.

\begin{window}[0,r,{\mbox{\epsfig{file=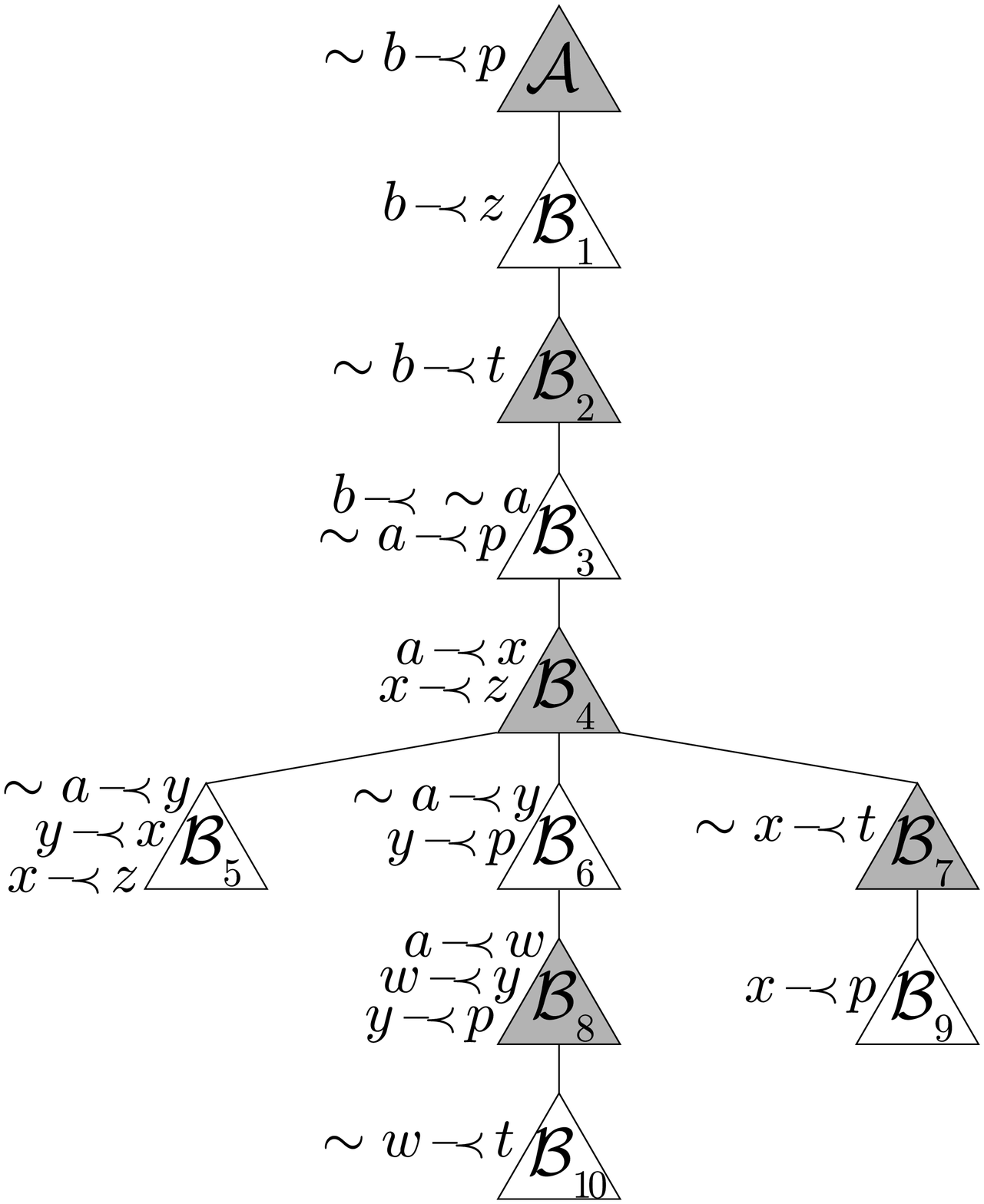,scale=.3}}},{}]
If the chosen minimal change criterion attempts to preserve the \textbf{tree structure}, $\Brg_5 = \Ar{\{(\drule{\no a}{y}),(\drule{y}{x}),(\drule{x}{z})\}}{\no a}$ and $\Brg_{10} = \Ar{\{\drule{\no w}{t}\}}{\no w}$ are selected. The unique choice to incise $\Brg_{10}$ does not collide with any other argument, that is, it satisfies the requirement of being strict-preserving. The only possible incision over $\Brg_5$ is $(\drule{y}{x})$, as the other rules are shared with $\Brg_4$ and $\Brg_6$.

\indent When considering the combined criterion that attempts to preserve both the \textbf{tree structure and program rules}, there would be two choices: $\Brg_1$ and $\Brg_3$. These are the only con arguments belonging to the shared segment of the two attacking lines in the tree. Finally, the criterion chooses the lowest one in the line aiming to preserve the tree structure; that is, $\Brg_3$. Note that this argument does not intersect with any other argument in the tree and, therefore, this is a clean incision.
\end{window}
\end{Example}

\subsection{An Algorithm for Argument Revision}

Next, we present a \prolog-like program as an approach for an implementation of argument revision. \ojo{The given algorithms constitute part of the implementation that we are currently working in towards the first fully implemented ATC approach. A computational complexity analysis is underway. However, we believe that such a detailed analysis would fall out of the scope of this article: the main objective in this section is to show how the proposed theory can be easily implemented in \prolog-like programs by taking advantage of some distinctive characteristics of the logic paradigm like backtracking.}

\begin{algorithm}
	[!h]
	\begin{footnotesize}
		\caption{Argument Revision}\label{algo}
		\begin{algorithmic}
\STATE
		\REQUIRE \delp\ $\PP = \SD$ and an argument \Aalpha
		\ENSURE Revised \delp\ $\PP \OPwparp \Arg = (\Pi,\DD_R)$
\STATE 
			\STATE $revise(\SD,\Aalpha,(\Pi,\DD_R)) \leftarrow$
			\STATE \hspace{0.5cm} $union(\DD,\Arg,\DD_{\tiny \Arg}),$
			\STATE \hspace{0.5cm} $assert\_lines((\SSet,\DD_{\tiny \Arg}),\Arg),$ \ \ \texttt{\scriptsize \%facts line/1}
			\STATE \hspace{0.5cm} $initialize\_selection\_orders,$ \ \ \texttt{\scriptsize \%facts order/2}
			\STATE \hspace{0.5cm} $assert\_att\_set,$ \ \ \texttt{\scriptsize \%facts attacking/1}
			\STATE \hspace{0.5cm} $get\_incisions,$
			\STATE \hspace{0.5cm} $get\_inc\_aware\_alteration\_set(\aware{\Arg}{\PP}),$
			\STATE \hspace{0.5cm} $findall(\sigma,(member(\lambda,\aware{\Arg}{\PP}), incision(\sigma,\_,\lambda)),\Sigma),$
			\STATE \hspace{0.5cm} $subtract(\DD_{\tiny \Arg},\Sigma,\DD_R).$
			\STATE
			\STATE $get\_incisions \leftarrow$
			\STATE \hspace{0.5cm} $retractall(incision(\_,\_,\_)),$
			\STATE \hspace{0.5cm} $forall(line(\lambda), get\_alteration(\lambda)),$
 			\STATE \hspace{0.5cm} $forall(line(\lambda), preservation(\lambda)), !,$
			\STATE \hspace{0.5cm} $(not(update\_order\_wrt\_upper(\_,\_));
			update\_orders,get\_incisions).$
			\STATE
			\STATE $get\_alteration(\lambda) \leftarrow$
			\STATE \hspace{0.5cm} $select(\gamma,\lambda),$
			\STATE \hspace{0.5cm} $incise(\sigma,\gamma),$
			\STATE \hspace{0.5cm} $assert(incision(\sigma,\gamma,\lambda)).$
			\STATE
			\STATE $preservation(\lambda') \leftarrow$
			\STATE \hspace{0.5cm} $incision(\sigma',\_,\lambda'),$
			\STATE \hspace{0.5cm} $forall(get\_upmost\_collateral(\sigma',\lambda,\Brg),$
			\STATE \hspace{0.5cm} $(incision(\sigma,\gamma,\lambda), in\_upper\_segment(\gamma,\Brg,\lambda)$
			\STATE \hspace{0.5cm} $;$
			\STATE \hspace{0.5cm} $assert(update\_order\_wrt\_upper(\lambda,\Brg))).$
			\STATE
			\STATE $select(\lambda,\gamma) \leftarrow$
			\STATE \hspace{5mm} $order(\lambda,[\gamma|\_]).$
		\end{algorithmic}
	\end{footnotesize}
\end{algorithm}

In Algorithm~\ref{algo}\footnote{Note that every symbol in the program is a variable, \ie there are no atoms.}, the main predicate is $revise/3$, which takes a program and an (possibly external) argument, performs the revision, and returns the revised program. 
The algorithm begins by inserting argument \Arg into the set of defeasible rules of \PP, obtaining a set $\Delta_{\scriptsize\Arg}$. Next, it asserts facts $line/1$ (through predicate $assert\_lines/2$), one per line in the tree rooted in \Arg, each of which holds a list representing a sequence of arguments. Then, from those facts, it initializes the selection orders according to the criterion through predicate $initialize\_selection\_orders/0$, which asserts facts $order/2$ mapping lines to the ordering assigned to their interference sets. Afterwards, the algorithm recognizes the subset of lines belonging to the attacking set of the tree and asserts facts $attacking/1$, through predicate $assert\_att\_set/0$.

Predicate $get\_incisions/0$ first gathers all incisions and selections by asserting facts $incision/3$, one per line, through the call to predicate $get\_alteration/1$. Then, selections and incisions in every line are checked to satisfy the preservation principle. Whenever some line does not satisfy preservation, the order there has to be updated through predicate $update\_order\_wrt\_upper/2$, which removes from $order/2$ those pairs including an argument below the collateral incision, so that the selection is restricted to the collateral incision's upper segment, and $get\_incisions/0$ is invoked again. This iterative process ends when the current selection orders yield incisions satisfying preservation. The algorithm always terminates because there is always a configuration of incisions and selections that satisfies this principle (see Theorem~\ref{theo.warranting}).

Once all the selections and incisions are verified against preservation, the incision-aware alteration set of the tree at issue is calculated through the predicate\linebreak $get\_inc\_aware\_alteration\_set/1$, and for each line in it, the incisions are gathered into a set $\Sigma$, which is afterwards removed from the set $\Delta_{\scriptsize\Arg}$ of defeasible rules, thus obtaining the revised program $(\Pi,\Delta_R)$.

Algorithm~\ref{algo.inc-aware} shows the predicates to obtain the incision-aware alteration set of a dialectical tree \dtree{\Arg}{\PP} through predicate $get\_inc\_aware\_alteration\_set/1$. This predicate firstly computes the alteration set $\Theta$ ($get\_alteration\_set/2$) and then, for every candidate $X$ in $\Theta$'s power set (which is sorted by cardinality), checks whether they comply with the condition of having the context\_sensitive open set within the context-sensitive closed set. When this property is satisfied, $X$'s supersets are marked in order to avoid its evaluation. After all the candidate sets of lines (minimal wrt. $\subseteq$) are obtained, their sets of incisions are calculated and then compared among them to get the one that yields the least amount of change, according to the rule-based criterion adopted. Regarding the computation of the power set of the alteration set, a few optimizations can be done, following properties from Section~\ref{sec.atc.set.interrelation}. For instance, this would allow us to greatly simplify the computation of the incision-aware alteration set, whenever we recognise the conditions stated by Theorems~\ref{theo.alines.equals.alteration.1},~\ref{theo.weak.profitability},~\ref{theo.aware.equals.alteration} and~\ref{theo.equal.incisions}.


Predicates $context\_open/3$ and $context\_closed/2$ respond to their corresponding definitions. The former gathers all lines that are open by lines in $X$ and then performs the union with the attacking set; the latter includes every line in the tree such that it receives a collateral incision over its selection, and any other collateral incision does not affect the selection's upper segment.

\begin{algorithm}[!h]
\begin{footnotesize}\caption{Incision-aware Alteration Set}\label{algo.inc-aware}
\begin{algorithmic}
\STATE $get\_inc\_aware\_alteration\_set(IncAwareSet) \leftarrow$
	\STATE \hspace{0.5cm} $retractall(avoid\_supersets(\_)),$
	\STATE \hspace{0.5cm} $get\_open\_set(\open),$
	\STATE \hspace{0.5cm} $get\_alteration\_set(\open, \alteration{\Arg}{\PP}),$
	\STATE \hspace{0.5cm} $powerset(\alteration{\Arg}{\PP}, P),$
	\STATE \hspace{0.5cm} $findall(X,$
		\STATE \hspace{1.0cm} $(member(X,P),$
		\STATE \hspace{1.0cm} $inc\_aware(\open,X),$
		\STATE \hspace{1.0cm} $assert(avoid\_supersets(X))),$
		\STATE \hspace{1.0cm} $MinSubsets),$
	\STATE \hspace{0.5cm} $minimal\_change(MinSubsets,[IncAwareSet|\_]).$
	\STATE
\STATE $inc\_aware(\open,X) \leftarrow$
	\STATE \hspace{0.5cm} $not((avoid\_supersets(S),subset(S,X))),$
	\STATE \hspace{0.5cm} $context\_open(\open,X,\openContext),$
	\STATE \hspace{0.5cm} $context\_closed(X,\closedContext),$
	\STATE \hspace{0.5cm} $subset(\openContext,\closedContext).$
\STATE
\STATE $context\_open(\open,X,CO) \leftarrow$
	\STATE \hspace{0.5cm} $findall(OL,$
		\STATE \hspace{1.0cm} $(member(o(O,OL),\open),member(O,X)),$
		\STATE \hspace{1.0cm} $F),$
	\STATE \hspace{0.5cm} $flatten(F,XOpen), attacking\_set(Att), append(Att,XOpen,CO).$
\STATE
\STATE $context\_closed(X,CC) \leftarrow$
	\STATE \hspace{0.5cm} $lines(Lines),$
	\STATE \hspace{0.5cm} $findall(\lambda',$
		\STATE \hspace{1.0cm} $(member(\lambda',Lines),$
		\STATE \hspace{1.0cm} $member(\lambda,X),$
		\STATE \hspace{1.0cm} $selection(\lambda,S),incision(S,I),$
		\STATE \hspace{1.0cm} $con\_args(\lambda',Con),member(\Crg,Con),rules\_in(\Crg,Cr),$
		\STATE \hspace{1.0cm} $intersection(I,Cr,NonEmpty),NonEmpty \backslash= [],$
		\STATE \hspace{1.0cm} $forall(member(\lambda'',X),$
		 	\STATE \hspace{1.5cm} $(selection(\lambda'',S2),incision(S2,I2),$
			\STATE \hspace{1.5cm} $args(\lambda',LPargs),member(\Brg,LPargs),rules\_in(\Brg,Br),$
		 	\STATE \hspace{1.5cm} $intersection(I2,Br,NonEmpty2),$
		 	\STATE \hspace{1.5cm} $(NonEmpty2 \backslash= [],$
		 	\STATE \hspace{1.5cm} $upper\_segment(B,\lambda',U),member(C,U)$
		 	\STATE \hspace{1.5cm} $;true))$
		 	\STATE \hspace{1.5cm} $)),CC).$
\end{algorithmic}
\end{footnotesize}
\end{algorithm}

\begin{algorithm}[!h]
\begin{footnotesize}\caption{Strict Preservation}\label{algo.strict}
\begin{algorithmic}
  \STATE $get\_incisions \leftarrow$
  \STATE \hspace{5mm} $retractall(incision(\_,\_,\_)),$
  \STATE \hspace{0.5cm} $forall(line(\lambda), get\_alteration),$ \STATE \hspace{0.5cm} $forall(line(\lambda), strict\_preservation(\lambda)),$
  \STATE \hspace{0.5cm} $(not(update\_to\_next\_selection(\_))$
  \STATE \hspace{0.5cm} $;$
  \STATE \hspace{0.5cm} $update\_orders\_wrt\_strict\_preservation,get\_incisions).$
  \STATE
  \STATE $strict\_preservation(\lambda') \leftarrow$
  \STATE \hspace{8mm} $incision(\sigma',\gamma',\lambda'),$
  \STATE \hspace{8mm} $forall((get\_upmost\_collateral(\sigma',\lambda,\Brg),\Brg\neq\gamma'),$
  \STATE \hspace{8mm} $(\Brg=[]$
  \STATE \hspace{8mm} $;$
  \STATE \hspace{8mm} $assert(update\_to\_next\_selection(\lambda')))).$
\end{algorithmic}
\end{footnotesize}
\end{algorithm}

In addition to preservation, we give the alternative to pursue extra restrictions to control the third axis of change, regarding the desired behavior of incisions and selections. Algorithm~\ref{algo.strict} shows another rule for predicate $get\_incisions/0$, intending to achieve a selection plus incision satisfying strict preservation, \ie an incision which does not collaterally affect any argumentation line. In this case, the convention is that $get\_upmost\_collateral/3$ returns in \Brg an empty list. Whenever the original order does not meet this condition (\ie $\Brg\neq []$), the order is updated and strict preservation is checked again, until an order satisfies this principle or an update is no longer possible. That is, unlike preservation, the strict preservation principle could fail to be satisfied. It is important to note that once strict preservation is satisfied, so is preservation (see Proposition~\ref{prop.preservation}.2) and thus, there is no need to check if the latter holds. In case strict preservation fails, the revision procedure should be restarted to meet only preservation. 

Algorithm~\ref{algo.profitability} implements profitability, which requires collateral incisions to affect selections in lines belonging to the attacking set. Again, if a given selection in a line does not satisfy profitability, an update of the order is asserted for that line.

\begin{algorithm}[!h]
\begin{footnotesize}\caption{Profitability}\label{algo.profitability}
\begin{algorithmic}
  \STATE $get\_incisions \leftarrow$
  \STATE \hspace{0.5cm} $retractall(incision(\_,\_,\_)),$
  \STATE \hspace{0.5cm} $forall(line(\lambda), get\_alteration),$ \STATE \hspace{0.5cm} $forall(line(\lambda), profitability(\lambda)),$
  \STATE \hspace{0.5cm} $(not(update\_selection\_to\_collinc(\_,\_))$
  \STATE \hspace{0.5cm} $;$
  \STATE \hspace{0.5cm} $update\_orders\_wrt\_profitability,get\_incisions).$
  \STATE
  \STATE $profitability(\lambda') \leftarrow$
  \STATE \hspace{5mm} $incision(\sigma',\gamma',\lambda'),$
  \STATE \hspace{5mm} $forall((get\_upmost\_collateral(\sigma',\lambda,\Brg),\Brg\neq\gamma'),$
  \STATE \hspace{5mm} $(attacking(\lambda),incision(\sigma,\Brg,\lambda)$
  \STATE \hspace{5mm} $;$
  \STATE \hspace{5mm} $assert(update\_selection\_to\_collinc(\lambda,\Brg)))).$
\end{algorithmic}
\end{footnotesize}
\end{algorithm}

\section{Related Work} \label{sec:related.work}

In general, there is no literature directly related to Argument Theory Change, although some authors have developed systems that relate belief revision and argumentation \cite{Cayrol,Boella}. One of the papers closely related to our approach studies revision of logic programs \cite{Delgrande08}. Next, we will describe several approaches and their relation to our work. Afterwards, we will briefly introduce the article~\cite{jigpal} which presents a variant of ATC applied to propositional argumentation.

Regarding ideas from the classic belief revision theory applied to non-monotonic theories, in~\cite{gov.revision.nmr}, the authors study the dynamics of a simpler variant of \emph{defeasible logic} through the definition of expansion, revision and contraction operators. Here, a \emph{defeasible theory} contains facts, defeasible rules and \emph{defeaters}. The first two elements are similar to those in \DLP, whereas defeaters are rules that, instead of being used to draw conclusions, they prevent their achievement. The focus of the paper, unlike the approach we presented, is to provide a full account of postulates, which are closely related to those from the AGM model. The intuitions behind each operator do not need any special consideration, and each one of them is formally checked to comply with the corresponding set of postulates.

\cite{Benferhat95howto} presented an article primarily oriented towards the
treatment of inconsistency caused by the use of multiple sources of information.
Knowledge bases are stratified, namely each formula in the knowledge
base is associated with its level of certainty corresponding to the
layer to which it belongs.
They suggest that it is not necessary to restore consistency
in order to make sensible inferences from an inconsistent knowledge base.
Likewise, argumentation-based inference can derive conclusions supported by reasons
to believe in them, independently of the consistency of the knowledge base.

\cite{BRandEpist} studied the dynamic of a belief revision system
considering relations among beliefs in a ``derivational approach'' trying to obtain
a theory of belief revision from a more concrete epistemological theory.
According to them, one of the goals of belief revision is to generate a knowledge base
in which each piece of information is justified (by perception) or warranted by arguments containing previously held beliefs. 
The difficulty is that the set of justified beliefs can exhibit all kinds of
logical incoherences because it represents an intermediate stage in reasoning.
Therefore, they propose a theory of belief revision concerned with warrant rather
than justification.

\cite{FKS02} proposed a kind of non-prioritized revision
operator based on the use of explanations.
The idea is that an agent, before incorporating information that is
inconsistent with its knowledge, requests an explanation supporting it.
They presented a framework oriented to defeasible reasoning.
One of the most interesting ideas of this work is the generation of
defeasible conditionals from a revision process.
This approach preserves consistency in the strict knowledge
and it provides a mechanism to dynamically
qualify the beliefs as strict or defeasible.

\cite{Paglieri2006} joined argumentation and belief revision in the same conceptual framework,
highlighting the important role played by Toulmin's layout of argument in fostering such integration.
They consider argumentation as ``persuasion to believe'' and this restriction is useful to make more explicit
the connection with belief revision.
They propose a model of belief dynamics alternative to the AGM approach: \emph{data-oriented belief revision} (DBR).
Two basic informational categories (data and beliefs) are put forward in their model, to account for
the distinction between pieces of information that are simply gathered and stored by
the agent (\emph{data}), and pieces of information that the agent considers (possibly up to a
certain degree) truthful representations of states of the world (\emph{beliefs}).
Whenever a new piece of evidence is acquired through perception or communication,
it affects directly the agent's data structure and only indirectly his beliefs.
Belief revision is often triggered by information update either on a fact or on a
source: the agent receives a new piece of information, rearranges his data structure
accordingly, and possibly changes his beliefs.

\cite{Boella} showed a direct relation between argumentation
and belief revision.
They consider argumentation as persuasion to believe and that persuasion should be related
to belief revision.
More recently, \cite{Boellaplus} presented the interrelation
between argumentation and belief revision on multi-agent systems.
When an agent uses an argument to persuade another one,
he must consider not only the proposition supported by the argument, but also the
overall impact of the argument on the beliefs of the addressee.

\cite{Cayrol} proposed a revision theory upon Dung-style abstract argumentation systems. 
The main issue of any argumentation system is the selection of acceptable sets of arguments.
An argumentation semantics defines the properties required for a set of arguments to be acceptable.
The selected sets of arguments under a given semantics are called \emph{extensions} of that semantics.
Then, by considering how the set of extensions is modified under the revision process,
they propose a typology of different revisions: decisive revision and expansive revision. 
A strong restriction is posed: the newly added argument must have at most one interaction (via attack) with an argument in the system. This restriction greatly simplifies the revision problem, as multiple interactions with the original system are more common to occur, and could become difficult to handle. In ATC, this is addressed with the inclusion of subarguments and through the handle of collateralities. Moreover, the objective of \cite{Cayrol} differs from ours in that we apply (assuming it is allowed) additional change to the original argumentative framework (and consequently, to the \delp) pursuing warrant of a single argument through the analysis of dialectical trees, whereas they study how the addition of a given argument would affect the set of extensions, by looking at an arguments graph.

\cite{Delgrande08} address the problem of belief revision in (non-monotonic)
logic programming under answer set semantics: given two logic
programs $P$ and $Q$, the goal is to determine a program $R$ that
corresponds to the revision of $P$ by $Q$, denoted $P \ast Q$.
They proposed formal techniques analogous to those of distance-based
belief revision in propositional logic.
They investigate two specific operators: (logic program)
expansion and a revision operator based on the distance
between the SE models of logic programs.
However, our approach is very different.
First, we use defeasible logic programs instead of logic programs:
it is clear that defeasible logic programs are more general and more expressive than logic programs.
Second, since we want an external argument $\mathcal{A}$ to end up undefeated after the revision,
we must modify the defeasible logic program so that the conclusion of $\mathcal{A}$ is warranted.

\subsection{\ojo{ATC Applied to Propositional Argumentation}}

In the recently published article~\cite{jigpal}, ATC is applied to a propositional argumentation framework (AF) with the objective of dealing with the dynamics of knowledge of an underlying inconsistent propositional KB from where the AF is built. Thus, similarly to the proposal given in the present article, handling dynamics of arguments of the AF allows to deal with the dynamics of knowledge of the underlying inconsistent KB. The main difference regarding the utilized KBs is that in this article \delp s are used as a kind of KB, while in~\cite{jigpal}, a potentially inconsistent propositional KB is given in a more classical way.

A set of rationality postulates adapted to argumentation is also given, and therefore, the proposed model of change is related to the postulates through the corresponding representation theorem. \cite{jigpal} constitutes the main ATC approach given that it is fully axiomatizated within the theory of belief revision. Nevertheless, the theory proposed in the present article introduces an important result regarding the application of ATC to an implemented sort of argumentation system: \DLP.

In contrast to the ATC model upon which we rely in this article, the alteration of dialectical trees in~\cite{jigpal} is achieved according to an alternative but more general viewpoint: incisions are applied globally to the dialectical tree, and therefore, no selection function is needed to determine a precise argument from each argumentation line to which the incision is applied. Hence, a global incision function determines a possible set of beliefs to be removed in order to effectively alter all the necessary lines at once.

The usage of a selection function in the present article, allows to specify different criteria of minimal change as has been introduced in Section~\ref{sec.principles}: removing as few beliefs as possible from the \delp, altering as few argumentation lines as possible from the tree, and preserving the tree structure as much as possible by removing arguments placed as low as possible in each line, getting closer to the leaves.

In addition, the model presented in~\cite{jigpal} does not pursue such an extensive variety of minimal change criteria as the ones discussed here, but only avoids to lose beliefs that are not related to the revision through the postulate of \textit{core-retainment} (see page~\pageref{core-ret}). Moreover, it is important to remark that the notion of minimality is usually subjective: most approaches in classic belief revision do not obtain real minimality, but approximations to it by specifying different criteria interpreting the meaning of minimal change as we have done in this article.

\section{Conclusions}\label{sec.conclusion}

Argument Theory Change is an abstract formalism that applies the concept of revision from classic theory change to argumentation. Concretely, ATC looks for the incorporation of a new argument to the current argumentation theory, upon which it performs the necessary modifications in order for the newly inserted argument to end up warranted. In this article we focus on an implemented, working argumentation system: Defeasible Logic Programming. In \DLP, arguments are built from sets of rules, checked for minimality and consistency, and warrant of an argument is determined by building and evaluating a dialectical tree. All these elements were taken into account in this reification of ATC, yielding a very detailed version of it. Given the specific nature of this approach, Section~\ref{sec.marking} was devoted to study the properties of the \DLP\ marking procedure utilized to evaluate dialectical trees. These results constitute the foundations for elements presented afterwards. 

The complete change machinery was addressed in Section~\ref{sec.atc}: the classical notions of \emph{selection} and \emph{incision} were redefined in terms of ATC, and the argumentation-related difficulties (namely, \emph{collateral incisions}), controlled by proper, concrete principles. Desirable properties were also analyzed, characterizing certain combinations of selections/incisions. Special attention was paid to the determination of what argumentation lines to alter within the dialectical tree at issue. This alteration set was thoroughly investigated from the somewhat \emph{na\"ive} notion of \emph{attacking set} up to the evolved concept of \emph{incision-aware alteration set}, which minimizes the amount of incisions performed to the tree. Interrelations among these different kinds of sets of lines to be altered were studied, and also their relation to several properties; some of these results would be useful in the implementation. 

The necessary change operations composing the argument warranting revision were provided in Section~\ref{sec.atc} for an external argument (an argument that cannot be built from the worked \delp\ \PP and that only derives its claim when considering the set of strict rules from \PP). Regarding the main formal results, the paper provides justifications for the classification of argumentation lines, and also to ensure the correctness of the revision operators. The latter assertion refers to the two main objectives pursued throughout this article: (1) change \delp s in a controlled manner (through some kind of minimal change) towards (2) achieving warrant for the claim of the newly inserted argument. We proposed both objectives to refer to two well-known principles of change in the classic theory of belief revision: persistence of prior knowledge and primacy of new information, respectively, as originally introduced in~\cite{Dal88}. Both principles were addressed through the proposal of two of the usual postulates from belief revision readapted to argumentation theory: core-retainment and success. 

Finally, Section~\ref{sec.principles} addresses the implementation of ATC over \DLP. Several minimal change principles are proposed and discussed, clarifying the intuitive ideas given throughout the article. Most importantly, a \prolog-like algorithm is provided, showing a possible implementation for the argument revision operator. The main operations are given in detail and optimizations are suggested, by following the properties established in Section~\ref{sec.atc}, specially those relieving the potential exhaustiveness when looking for the subset of lines representing the incision-aware alteration set. Within certain conditions this computation could be even avoided.

\bibliographystyle{acmtrans}
\bibliography{my-references,related,grsrefs}

\newcommand{\aparte}[1]{\\ \noindent{{\sf
  #1}}}\newcommand{\Sefranek}[1]{}\newcommand{\Se}{\v{S}e}\newcommand{\Sef}{\v%
{S}ef}\newcommand{\Gardenfors}[1]{}\newcommand{\Ga}{G\"{a}}\newcommand{\Gar}{G%
\"{a}r}
\begin{thebibliography}{}

\bibitem[\protect\citeauthoryear{Alchourr{\'o}n, G{\"a}rdenfors, and
  Makinson}{Alchourr{\'o}n et~al\mbox{.}}{1985}]{AGM85}
{\sc Alchourr{\'o}n, C.}, {\sc G{\"a}rdenfors, P.}, {\sc and} {\sc Makinson,
  D.} 1985.
\newblock {\sl On the Logic of Theory Change: Partial Meet Contraction and
  Revision Functions}.
\newblock {\em The Journal of Symbolic Logic\/}~{\em 50}, 510--530.

\bibitem[\protect\citeauthoryear{Baroni, Cerutti, Giacomin, and Simari}{Baroni
  et~al\mbox{.}}{2010}]{DBLP:conf/comma/2010}
{\sc Baroni, P.}, {\sc Cerutti, F.}, {\sc Giacomin, M.}, {\sc and} {\sc Simari,
  G.~R.}, Eds. 2010.
\newblock {\em {Computational Models of Argument: Proceedings of COMMA 2010,
  Italy, September 8-10, 2010}}. Frontiers in Artificial Intelligence and
  Applications, vol. 216. IOS Press.

\bibitem[\protect\citeauthoryear{Baroni and Giacomin}{Baroni and
  Giacomin}{2007}]{baroni.semantics}
{\sc Baroni, P.} {\sc and} {\sc Giacomin, M.} 2007.
\newblock {On Principle-Based Evaluation of Extension-Based Argumentation
  Semantics}.
\newblock {\em Artificial Intelligence\/}~{\em 171,\/}~10-15, 675--700.

\bibitem[\protect\citeauthoryear{Benferhat, Dubois, and Prade}{Benferhat
  et~al\mbox{.}}{1995}]{Benferhat95howto}
{\sc Benferhat, S.}, {\sc Dubois, D.}, {\sc and} {\sc Prade, H.} 1995.
\newblock {How to infer from inconsistent beliefs without revising}.
\newblock In {\em Proceedings of IJCAI'95}. 1449--1455.

\bibitem[\protect\citeauthoryear{Billington, Antoniou, Governatori, and
  Maher}{Billington et~al\mbox{.}}{1999}]{gov.revision.nmr}
{\sc Billington, D.}, {\sc Antoniou, G.}, {\sc Governatori, G.}, {\sc and} {\sc
  Maher, M.} 1999.
\newblock {Revising Nonmonotonic Theories: The Case of Defeasible Logic}.
\newblock {\em KI-99: Advances in Artificial Intelligence\/}, 695--695.

\bibitem[\protect\citeauthoryear{Black and Hunter}{Black and
  Hunter}{2009}]{BlackHunter09}
{\sc Black, E.} {\sc and} {\sc Hunter, A.} 2009.
\newblock {An Inquiry Dialogue System}.
\newblock {\em Autonomous Agents and Multi-Agent Systems\/}~{\em 19,\/}~2,
  173--209.

\bibitem[\protect\citeauthoryear{Boella, Costa~Perera, Tettamanzi, and van~der
  Torre}{Boella et~al\mbox{.}}{2008a}]{Boella}
{\sc Boella, G.}, {\sc Costa~Perera, C.~d.}, {\sc Tettamanzi, A.}, {\sc and}
  {\sc van~der Torre, L.} 2008a.
\newblock {Dung Argumentation and AGM Belief Revision}.
\newblock In {\em Fifth International Workshop on Argumentation in Multi-Agent
  Systems, ArgMAS 2008}.

\bibitem[\protect\citeauthoryear{Boella, Costa~Perera, Tettamanzi, and van~der
  Torre}{Boella et~al\mbox{.}}{2008b}]{Boellaplus}
{\sc Boella, G.}, {\sc Costa~Perera, C.~d.}, {\sc Tettamanzi, A.}, {\sc and}
  {\sc van~der Torre, L.} 2008b.
\newblock {Making Others Believe What They Want}.
\newblock {\em Artificial Intelligence in Theory and Practice II\/}, 215--224.

\bibitem[\protect\citeauthoryear{Cayrol, de~Saint~Cyr, and
  Lagasquie~Schiex}{Cayrol et~al\mbox{.}}{2008}]{Cayrol}
{\sc Cayrol, C.}, {\sc de~Saint~Cyr, F.~D.}, {\sc and} {\sc Lagasquie~Schiex,
  M.~C.} 2008.
\newblock {Revision of an Argumentation System}.
\newblock In {\em Proceedings of The International Conference on Principles of
  Knowledge Representation and Reasoning, KR 2008}. 124--134.

\bibitem[\protect\citeauthoryear{Ches{\~n}evar, Maguitman, and
  Simari}{Ches{\~n}evar et~al\mbox{.}}{2007}]{2007:bookchap:argum}
{\sc Ches{\~n}evar, C.}, {\sc Maguitman, A.}, {\sc and} {\sc Simari, G.} 2007.
\newblock {\em {Emerging Artificial Intelligence Applications in Computer
  Engineering}}. Frontiers in Artificial Intelligence and Applications, vol.
  160.
\newblock IOS Press (Amsterdam, Netherlands), Chapter {Recommender Systems
  based on Argumentation}, 53--70.

\bibitem[\protect\citeauthoryear{Ches{\~n}evar, Maguitman, and
  Loui}{Ches{\~n}evar et~al\mbox{.}}{2000}]{Carlos}
{\sc Ches{\~n}evar, C.~I.}, {\sc Maguitman, A.~G.}, {\sc and} {\sc Loui, R.~P.}
  2000.
\newblock {L}ogical {M}odels of {A}rgument.
\newblock {\em ACM Computing Surveys\/}~{\em 32,\/}~4 (dec), 337--383.

\bibitem[\protect\citeauthoryear{Dalal}{Dalal}{1988}]{Dal88}
{\sc Dalal, M.} 1988.
\newblock {Investigations into a Theory of Knowledge Base Revision}.
\newblock In {\em AAAI}. 475--479.

\bibitem[\protect\citeauthoryear{Delgrande, Schaub, Tompits, and
  Woltran}{Delgrande et~al\mbox{.}}{2008}]{Delgrande08}
{\sc Delgrande, J.}, {\sc Schaub, T.}, {\sc Tompits, H.}, {\sc and} {\sc
  Woltran, S.} 2008.
\newblock Belief revision of logic programs under answer set semantics.
\newblock In {\em Proceedings of the Eleventh International Conference on
  Principles of Knowledge Representation and Reasoning}. 411--421.

\bibitem[\protect\citeauthoryear{Dung}{Dung}{1995}]{Dung95}
{\sc Dung, P.} 1995.
\newblock {On the Acceptability of Arguments and its Fundamental Role in
  Nonmonotonic Reasoning and Logic Programming and $n$-person Games}.
\newblock {\em AIJ\/}~{\em 77}, 321--357.

\bibitem[\protect\citeauthoryear{Falappa, Kern-Isberner, and Simari}{Falappa
  et~al\mbox{.}}{2002}]{FKS02}
{\sc Falappa, M.}, {\sc Kern-Isberner, G.}, {\sc and} {\sc Simari, G.} 2002.
\newblock {Explanations, Belief Revision and Defeasible Reasoning}.
\newblock {\em Artificial Intelligence Journal\/}~{\em 141(1-2)}, 1--28.

\bibitem[\protect\citeauthoryear{Garc\'{\i}a, Dix, and Simari}{Garc\'{\i}a
  et~al\mbox{.}}{2009}]{GDS2009}
{\sc Garc\'{\i}a, A.~J.}, {\sc Dix, J.}, {\sc and} {\sc Simari, G.~R.} 2009.
\newblock {Argument-based Logic Programming}.
\newblock In {\em Argumentation in Artificial Intelligence}, {I.~Rahwan} {and}
  {G.~R. Simari}, Eds. Springer, New York, Chapter~8, 153--172.

\bibitem[\protect\citeauthoryear{Garc\'{\i}a, Rotstein, and Simari}{Garc\'{\i}a
  et~al\mbox{.}}{2007}]{explanations07}
{\sc Garc\'{\i}a, A.~J.}, {\sc Rotstein, N.~D.}, {\sc and} {\sc Simari, G.~R.}
  2007.
\newblock {Dialectical Explanations in Defeasible Argumentation}.
\newblock In {\em ECSQARU}. 295--307.

\bibitem[\protect\citeauthoryear{Garc{\'i}a and Simari}{Garc{\'i}a and
  Simari}{2004}]{GS04delp}
{\sc Garc{\'i}a, A.~J.} {\sc and} {\sc Simari, G.} 2004.
\newblock {Defeasible Logic Programming: An Argumentative Approach}.
\newblock {\em TPLP\/}~{\em 4,\/}~1-2, 95--138.

\bibitem[\protect\citeauthoryear{G{\"a}rdenfors}{G{\"a}rdenfors}{1981}]{Gar81}
{\sc G{\"a}rdenfors, P.} 1981.
\newblock {An Epistemic Approach to Conditionals}.
\newblock {\em American Philosophical Quarterly\/}~{\em 18,\/}~3, 203--211.

\bibitem[\protect\citeauthoryear{G\'omez, Ches{\~n}evar, and Simari}{G\'omez
  et~al\mbox{.}}{2010}]{Gomez2010}
{\sc G\'omez, S.~A.}, {\sc Ches{\~n}evar, C.~I.}, {\sc and} {\sc Simari, G.~R.}
  2010.
\newblock {Reasoning with Inconsistent Ontologies Through Argumentation}.
\newblock {\em Applied Artificial Intelligence\/}~{\em 24,\/}~1, 102--148.

\bibitem[\protect\citeauthoryear{Hansson}{Hansson}{1991}]{HanssonRecovery}
{\sc Hansson, S.~O.} 1991.
\newblock {Belief Contraction Without Recovery}.
\newblock {\em Studia Logica\/}~{\em 50}, 251--260.

\bibitem[\protect\citeauthoryear{Hansson}{Hansson}{1993}]{HAN05}
{\sc Hansson, S.~O.} 1993.
\newblock {Reversing the Levi Identity}.
\newblock {\em Journal of Philosophical Logic\/}~{\em 22,\/}~6, 637--669.

\bibitem[\protect\citeauthoryear{Hansson}{Hansson}{1994}]{HAN94}
{\sc Hansson, S.~O.} 1994.
\newblock {Kernel Contraction}.
\newblock {\em Journal of Symbolic Logic\/}~{\em 59}, 845--859.

\bibitem[\protect\citeauthoryear{Hansson}{Hansson}{1997}]{HAN97}
{\sc Hansson, S.~O.} 1997.
\newblock {Semi-Revision}.
\newblock {\em Journal of Applied Non-Classical Logic\/}~{\em 7}, 151--175.

\bibitem[\protect\citeauthoryear{Hansson}{Hansson}{1999}]{Han99}
{\sc Hansson, S.~O.} 1999.
\newblock {\em {A Textbook of Belief Dynamics: Theory Change and Database
  Updating. Springer}}.

\bibitem[\protect\citeauthoryear{Hansson and Wassermann}{Hansson and
  Wassermann}{2002}]{LocalChangeHW02}
{\sc Hansson, S.~O.} {\sc and} {\sc Wassermann, R.} 2002.
\newblock {Local Change}.
\newblock {\em Studia Logica\/}~{\em 70,\/}~1, 49--76.

\bibitem[\protect\citeauthoryear{Levi}{Levi}{1977}]{Lev77}
{\sc Levi, I.} 1977.
\newblock {Subjunctives, Dispositions, and Chances}.
\newblock {\em Synth\`ese\/}~{\em 34}, 423--455.

\bibitem[\protect\citeauthoryear{Moguillansky, Rotstein, Falappa, Garc{\'i}a,
  and Simari}{Moguillansky et~al\mbox{.}}{2008}]{aaai08}
{\sc Moguillansky, M.}, {\sc Rotstein, N.}, {\sc Falappa, M.}, {\sc Garc{\'i}a,
  A.}, {\sc and} {\sc Simari, G.} 2008.
\newblock {Argument Theory Change Applied to DeLP}.
\newblock In {\em AAAI}. 132--137.

\bibitem[\protect\citeauthoryear{Moguillansky, Wassermann, and
  Falappa}{Moguillansky et~al\mbox{.}}{2011}]{jigpal}
{\sc Moguillansky, M.}, {\sc Wassermann, R.}, {\sc and} {\sc Falappa, M.} 2011.
\newblock {Inconsistent-tolerant Base Revision through Argument Theory Change}.
\newblock {\em Logic Journal of the IGPL (JIGPAL)\/}, to appear.

\bibitem[\protect\citeauthoryear{Moguillansky, Rotstein, Falappa, Garc{\'i}a,
  and Simari}{Moguillansky et~al\mbox{.}}{2010}]{atc.act.comma10}
{\sc Moguillansky, M.~O.}, {\sc Rotstein, N.~D.}, {\sc Falappa, M.~A.}, {\sc
  Garc{\'i}a, A.~J.}, {\sc and} {\sc Simari, G.~R.} 2010.
\newblock {Argument Theory Change Through Defeater Activation}.
\newblock See \citeN{DBLP:conf/comma/2010}, 359--366.

\bibitem[\protect\citeauthoryear{Paglieri and Castelfranchi}{Paglieri and
  Castelfranchi}{2006}]{Paglieri2006}
{\sc Paglieri, F.} {\sc and} {\sc Castelfranchi, C.} 2006.
\newblock {\em {The Toulmin Test: Framing argumentation within belief revision
  theories}}.
\newblock Berlin, Springer, 359--377.

\bibitem[\protect\citeauthoryear{Pollock and Gillies}{Pollock and
  Gillies}{2000}]{BRandEpist}
{\sc Pollock, J.~L.} {\sc and} {\sc Gillies, A.~S.} 2000.
\newblock {Belief Revision and Epistemology}.
\newblock {\em Synthese\/}~{\em 122,\/}~1--2, 69--92.

\bibitem[\protect\citeauthoryear{Prakken and Vreeswijk}{Prakken and
  Vreeswijk}{2000}]{PraVree}
{\sc Prakken, H.} {\sc and} {\sc Vreeswijk, G.} 2000.
\newblock {Logical Systems for Defeasible Argumentation}.
\newblock In {\em Handbook of Philosophical Logic, 2nd ed.}, {D.Gabbay}, Ed.
  Kluwer Academic Pub.

\bibitem[\protect\citeauthoryear{Rotstein, Moguillansky, Garc{\'i}a, and
  Simari}{Rotstein et~al\mbox{.}}{2010}]{daf.comma10}
{\sc Rotstein, N.}, {\sc Moguillansky, M.}, {\sc Garc{\'i}a, A.}, {\sc and}
  {\sc Simari, G.} 2010.
\newblock {A Dynamic Abstract Argumentation Framework}.
\newblock See \citeN{DBLP:conf/comma/2010}, 427--438.

\bibitem[\protect\citeauthoryear{Rotstein, Moguillansky, and Simari}{Rotstein
  et~al\mbox{.}}{2009}]{ijcai09}
{\sc Rotstein, N.}, {\sc Moguillansky, M.}, {\sc and} {\sc Simari, G.} 2009.
\newblock {Dialectical Abstract Argumentation: A Characterization of the
  Marking Criterion}.
\newblock In {\em Proceedings of the Twenty-first International Joint
  Conference on Artificial Intelligence (IJCAI 2009)}. 898--903.

\bibitem[\protect\citeauthoryear{Rotstein, Garc\'{\i}a, and Simari}{Rotstein
  et~al\mbox{.}}{2007}]{BDIaaai}
{\sc Rotstein, N.~D.}, {\sc Garc\'{\i}a, A.~J.}, {\sc and} {\sc Simari, G.~R.}
  2007.
\newblock {Reasoning from Desires to Intentions: A Dialectical Framework}.
\newblock In {\em AAAI}. 136--141.

\bibitem[\protect\citeauthoryear{Rotstein, Moguillansky, Falappa, Garc\'{\i}a,
  and Simari}{Rotstein et~al\mbox{.}}{2008}]{comma08}
{\sc Rotstein, N.~D.}, {\sc Moguillansky, M.~O.}, {\sc Falappa, M.~A.}, {\sc
  Garc\'{\i}a, A.~J.}, {\sc and} {\sc Simari, G.~R.} 2008.
\newblock {Argument Theory Change: Revision Upon Warrant}.
\newblock In {\em COMMA (Computational Models of Argument)}, {P.~Besnard},
  {S.~Doutre}, {and} {A.~Hunter}, Eds. Frontiers in Artificial Intelligence and
  Applications, vol. 172. IOS Press, 336--347.

\bibitem[\protect\citeauthoryear{Thimm and Kern-Isberner}{Thimm and
  Kern-Isberner}{2008}]{Thimm:2008a}
{\sc Thimm, M.} {\sc and} {\sc Kern-Isberner, G.} 2008.
\newblock {A Distributed Argumentation Framework using Defeasible Logic
  Programming}.
\newblock In {\em COMMA}. IOS Press, 381--392.

\end{thebibliography}

\end{document}